\documentclass[runningheads]{llncs}
\usepackage{algorithmic}
\usepackage[linesnumbered,ruled,vlined]{algorithm2e}
\usepackage{multirow}
\usepackage{eccv}
\usepackage{filecontentsdef}


\usepackage{eccvabbrv}

\usepackage{graphicx}
\usepackage{booktabs}

\usepackage[accsupp]{axessibility}  


%

\usepackage{hyperref}

\usepackage{orcidlink}

\begin{document}

\title{A Theoretical and Empirical Study on the Convergence of Adam with an \emph{"Exact"} Constant Step Size in Non-Convex Settings} 

\titlerunning{Study on the Convergence of Adam}

\author{Alokendu Mazumder\inst{1} \and
Rishabh Sabharwal\inst{2}\thanks{denotes equal contribution} \and
Manan Tayal\inst{1,*} \and Bhartendu Kumar\inst{3,*} \and Punit Rathore\inst{1}}

\authorrunning{Mazumder et al.}

\institute{Robert Bosch Centre for Cyber-Physical Systems, Indian Institute of Science, Bengaluru, India \and
Netaji Subhas University of Technology, New Delhi, India
\\
\and
TCS Research, Bengaluru, India\\
}

\maketitle

\begin{abstract}
  In neural network training, RMSProp and Adam remain widely favoured optimisation algorithms. One of the keys to their performance lies in selecting the correct step size, which can significantly influence their effectiveness. Additionally, questions about their theoretical convergence properties continue to be a subject of interest. In this paper, we theoretically analyse a constant step size version of Adam in the non-convex setting and discuss why it is important for the convergence of Adam to use a fixed step size. This work demonstrates the derivation and effective implementation of a constant step size for Adam, offering insights into its performance and efficiency in non-convex optimisation scenarios. \textbf{(i)} First, we provide proof that these adaptive gradient algorithms are guaranteed to reach criticality for smooth non-convex objectives with constant step size, and we give bounds on the running time. Both deterministic and stochastic versions of Adam are analysed in this paper. We show sufficient conditions for the derived constant step size to achieve asymptotic convergence of the gradients to zero with minimal assumptions. Next, \textbf{(ii)} we design experiments to empirically study Adam's convergence with our proposed constant step size against state-of-the-art step size schedulers on classification tasks. Lastly, \textbf{(iii)} we also demonstrate that our derived constant step size has better abilities in reducing the gradient norms, and empirically, we show that despite the accumulation of a few past gradients, the key driver for convergence in Adam is the non-increasing step sizes. \emph{Source code will be available upon acceptance.}
  \keywords{Adam \and step size \and non-convex \and convergence \and optimisation}
\end{abstract}

\section{Introduction}
\label{sec:intro}
Optimisation problems in machine learning are often structured as minimizing a finite sum \( \min_x f(x) \), where \( f(x) = \frac{1}{k} \sum_{i=1}^{k} f_i(x) \). Each \( f_i \) typically exhibits non-convex behaviour, particularly in neural network domains. A prominent method for tackling such problems is \emph{Stochastic Gradient Descent} (SGD), where updates occur iteratively as \( x_{t+1} := x_t - \alpha \nabla \tilde{f}_{i_t}(x_t) \), with \( \alpha \) denoting the step size and \( \tilde{f}_{i_t} \) being a randomly chosen function from \( \{ f_1, f_2, ..., f_k \} \) at each iteration \( t \). SGD is favoured for training deep neural networks for its computational efficiency, especially with mini-batch training, even with large datasets~\cite{bottou2012stochastic}.

Adaptive variants of SGD, which incorporate past gradients through averaging, have gained traction due to their ability to track gradient scaling on an individual parameter basis, as highlighted by \emph{Bottou et al.}~\cite{bottou2009curiously}. These methods are favoured for their perceived ease of tuning compared to traditional SGD. Adaptive gradient methods typically update using a vector obtained by applying a linear transformation, often referred to as "diagonal pre-conditioner," to the linear combination of all previously observed gradients. This pre-conditioning is believed to enhance algorithm robustness to hyperparameter choices, making them less sensitive to initial settings.

Adagrad, introduced by \emph{Duchi et al.} in~\cite{duchi2011adaptive, mcmahan2010adaptive}, showed superior performance, especially in scenarios featuring sparse or small gradients. However, its effectiveness diminishes in situations with non-convex loss functions and dense gradients due to rapid learning rate decay. To address this, variants like RMSProp~\cite{tieleman2012rmsprop}, Adam~\cite{kingma2015adam} (\textbf{Algorithm}~\ref{alg:Adam_training}), Adadelta~\cite{zeiler2012adadelta}, and Nadam~\cite{dozat2016incorporating} have been proposed. These methods mitigate learning rate decay by employing exponential moving averages of squared past gradients, thereby limiting reliance on all past gradients during updates. While these algorithms have found success in various applications~\cite{vaswani2019fast}, they have also been observed to exhibit non-convergence in many settings~\cite{sashank2018convergence}, especially in deterministic environments where noise levels are controlled during optimisation. Moreover, there is a significant inclination to study these algorithms in deterministic settings, especially when noise levels are regulated during optimisation. This regulation is accomplished either by employing larger batches~\cite{martens2015optimizing, de2017automated, babanezhad2015stopwasting} or by integrating variance-reducing techniques~\cite{johnson2013accelerating, defazio2014saga}.

Despite their widespread adoption in solving neural network problems, adaptive gradient methods like RMSProp and Adam often lack theoretical justifications in non-convex scenarios, even when dealing with exact or deterministic gradients~\cite{bernstein2018signsgd}. Several sufficient conditions have been proposed to guarantee the global convergence of Adam, which can be further classified into the following two categories:

\begin{algorithm}
\caption{Adam Algorithm Pseudocode~\cite{kingma2015adam}}
\label{alg:Adam_training}
\KwIn{Learning rate: $\alpha \in (0,1]$, $\beta_{1}, \beta_{2} \in [0,1)$, initial starting point $\textbf{w}_{0} \in \mathbb{R}^{d}$, a constant vector $\rho\textbf{1}_{d} > \textbf{0} \in \mathbb{R}^{d}$, we assume we have access to a noisy oracle for gradients of $f:\mathbb{R}^{d}$ $\rightarrow$ $\mathbb{R}$}
\textbf{Initialization: $\textbf{m}_{0} = 0$, $\textbf{v}_{0} = 0$}\\
\For{$t$ \textbf{from} 1 \textbf{to} $T$:}{
    $\textbf{g}_{t} = \nabla f(\textbf{w}_{t})$\\
    $\textbf{m}_{t} = \beta_{1}\textbf{m}_{t-1} + (1 - \beta_{1})\textbf{g}_{t}$\\
    $\textbf{v}_{t} = \beta_{2}\textbf{v}_{t-1} + (1 - \beta_{2})\textbf{g}_{t}^{2}$\\
    $\textbf{V}_{t} = \texttt{diag}(\textbf{v}_{t})$\\
    
    $\textbf{w}_{t+1} = \textbf{w}_{t} - \alpha(\textbf{V}_{t}^{1/2} + \texttt{diag}\left(\rho\textbf{1}_{d})\right)^{-1}\textbf{m}_{t}$

}
\textbf{End}\\
\end{algorithm}


\textbf{(B1) Learning rate decay}: 
\emph{Reddi et al.}~\cite{sashank2018convergence} demonstrated that the main cause of divergences in Adam and RMSProp predominantly arises from the disparity between the two successive learning rates.
    \begin{equation}
        \Gamma_t = \frac{\sqrt{\textbf{v}_t}}{\alpha_t} - \frac{\sqrt{\textbf{v}_{t-1}}}{\alpha_{t-1}}
    \end{equation}
If the positive definiteness property of \( \Gamma_t \) is violated, Adam and RMSProp may experience divergence. Barakat and Bianchi in~\cite{pmlr-v129-barakat20a} relaxed the \( \Gamma_t > 0 \) constraint and demonstrated that Adam can converge when \( \frac{\sqrt{\textbf{v}_t}}{\alpha_t} \geq \frac{c\sqrt{\textbf{v}_{t-1}}}{\alpha_{t-1}} \) holds for all \( t \) and some positive real \( c \). Prior research efforts~\cite{shi2020rmsprop, tian2022amos, chen2018convergence, luo2019adaptive, defossez2020simple, zou2019sufficient} have focused on establishing convergence properties of optimisation algorithms such as RMSProp and Adam. These investigations typically utilize step size schedules of the form \( \alpha_{t} = \frac{\alpha}{t^a} \) for all \( t \in \{1,2,\dots,T\} \), \( \alpha \in (0,1) \), and \( a > 0 \), or other time-dependent variations like \( \alpha_t = \alpha(1-\beta_1)\sqrt{\frac{1-\beta_{2}^t}{1-\beta_2}} \). While a non-increasing (diminishing) step size is crucial for convergence, empirical analysis suggests that rapid decay of the step size can lead to sub-optimal outcomes, as discussed in the next section.

\textbf{(B2) Temporal decorrelation}: 
\emph{Zhou et al.} in~\cite{zhou2018adashift} emphasized that the divergence observed in RMSProp stems from imbalanced learning rates rather than the absence of \( \Gamma_t > 0 \), as previously suggested by \emph{Reddi et al.}~\cite{sashank2018convergence}. Leveraging this insight, \emph{Zhou et al.}~\cite{zhou2018adashift} proposed AdaShift, which incorporates a temporal decorrelation technique to address the inappropriate correlation between \( \textbf{v}_{t} \) and the current second-order moment \( \textbf{g}^{2}_{t} \). This approach requires the adaptive learning rate \( \alpha_{t} \) to be independent of \( \textbf{g}^{2}_{t} \). However, it's worth noting that the convergence analysis of AdaShift was primarily limited to RMSProp for resolving the convex counterexample presented by \emph{Reddi et al.}~\cite{sashank2018convergence}.

In contrast to the aforementioned modifications and restrictions, we introduce an alternative sufficient condition (abbreviated as \textbf{SC}) to ensure the global convergence of the original Adam. The proposed \textbf{SC} (refer to \textbf{Section}~\ref{sec:SC}) depends on the parameter \( \beta_1 \) and a constant learning rate \( \alpha \). Our \textbf{SC} doesn't necessitate rapidly decaying step sizes and positive definiteness of \( \Gamma_t \). It's easier to verify and more practical compared to (\textbf{B2}).
    
\textbf{NOTE}: A recent study by \emph{Chen et al.}~\cite{chen2022towards} proposes a similar convergence rate to ours using a constant step size approach for mini-batch Adam, where they employ \( \alpha_t = \frac{\alpha}{\sqrt{T}} \), with \( \alpha \) being any positive real number. However, our work specifies the \emph{exact} learning rate that ensures convergence with fewer assumptions. A comparison with this step size is provided in the Appendix, where we evaluate its performance against our approach for various values of \( \alpha \).

\section{Motivation \& Contributions}
\label{sec:motivation}
In practical scenarios, such as learning latent variables from vast datasets with unknown distributions, the goal is to solve the optimisation problem:
\begin{equation}
    min_{\textbf{w} \in \mathbb{R}^d} \{\mathcal{L}(\textbf{w}) = \mathbb{E}_{\zeta \sim \mathcal{P}}[\tilde{\mathcal{L}}(\zeta,\textbf{w})]\}
\end{equation}
Here \( \mathcal{L}(\textbf{w}) \) is a non-convex loss function and \( \zeta \) is a random variable with an unknown distribution \( \mathcal{P} \). Common iterative optimisation algorithms like Adam and RMSProp are often employed to solve this problem. It's observed that using a more aggressive constant step size often leads to favourable outcomes~\cite{NEURIPS2020_a9078e86, li2021contrastive}. However, determining the optimal learning rate requires an exhaustive search. Specifically, when optimizing for a parameter set \( \textbf{w} \in \mathbb{R}^d \), the goal is to find a parameter \( \textbf{w}^{*} \) aligning with a minimum in the loss landscape \( \mathcal{L}(\textbf{w}) \), where \( \mathcal{L}: \mathbb{R}^d \rightarrow \mathbb{R} \). Gradient descent-based algorithms, including those with momentum, iterati                     vely update parameters using \( \textbf{w}_{t+1} = \textbf{w}_{t} - \alpha_{t} g(\nabla_{\textbf{w}}\mathcal{L}(\textbf{w}_{t})) \), where \( \alpha_{t} \) is the step size at iteration \( t \) and \( g: \mathbb{R}^d \rightarrow \mathbb{R}^d \) is a function incorporating the gradient and momentum terms. Convergence of the parameter sequence \( \{\textbf{w}_0, \textbf{w}_1, \dots, \textbf{w}_{T-1}\} \) to \( \textbf{w}^* \) necessitates the sequence \( \{ (\textbf{w}_{t+1} - \textbf{w}_{t}) \} \) to converge to \( \boldsymbol{0} \) as \( t \rightarrow T \). This corresponds to the sequence \( \{ \alpha_{t} g(\nabla_{\textbf{w}}\mathcal{L}(\textbf{w}_{t})) \} \) converging to \( \boldsymbol{0} \in \mathbb{R}^d \). The literature often opts for a non-increasing step size \( \alpha_t \), such as \( \alpha_t \propto \frac{1}{t^a} \), where \( a > 0 \), to theoretically prove convergence. Despite non-zero gradient norms, \( \alpha_t \) can cause \( \{ \alpha_{t} g(\nabla_{\textbf{w}}\mathcal{L}(\textbf{w}_{t})) \} \) to approach \( \boldsymbol{0} \), even if \( \{ \nabla_{\textbf{w}}\mathcal{L}(\textbf{w}_{t}) \} \) does not. Although accumulating past gradients helps mitigate the decay of the learning rate, there's a risk that learning rates with rapid decay may dominate. Our empirical analysis in \textbf{Section}~\ref{sec:exp} demonstrates that Adam, with non-increasing learning rates, does not aggressively drive the gradient norm of the loss towards zero. With a fixed step size \( \alpha_t = \alpha > 0 \), if \( \textbf{w}_t \) $\rightarrow$ \( \textbf{w}^* \), then $\| \nabla_{\textbf{w}}\mathcal{L}(\textbf{w}_{t}) \|_2$ $\rightarrow$ \( \boldsymbol{0} \). This ensures convergence to a saddle point of \( \mathcal{L}(\textbf{w}) \) with a fixed step size, an assurance not available with a non-increasing and iteration-dependent step size.

Consequently, our research aims to identify the optimal constant step size, ensuring convergence in both deterministic and stochastic Adam iterations beyond a time threshold $t > T(\beta_1, \rho)$ where $T(\beta_1, \rho)$ is a natural number.

\textbf{A summary of our contributions}: In this work, we present the following primary contributions:
\begin{enumerate}
   \item We derive an exact constant step size to guarantee convergence of deterministic as well as stochastic Adam. To the best of our knowledge, this study is the first to theoretically guarantee Adam's global convergence (gradient norm of loss function converges to 0) with an exact constant step size. 
   
    \item Our study offers runtime bounds for deterministic and stochastic Adam to achieve approximate criticality with smooth non-convex functions.
    \item We introduced a simple method to estimate the Lipschitz constant\footnote{Our derived step size, vital for convergence, depends on the Lipschitz constant.} of the loss function \emph{w.r.t} the network parameters. We offer a probabilistic guarantee for the convergence of our estimated Lipschitz constant to its true value.
    \item We empirically show that even with past gradient accumulation, deterministic Adam can be affected by rapidly decaying learning rates. This indicates that these rapid decay rates play a dominant role in driving convergence.
    \item We also demonstrate empirically that with our analysed step size, Adam quickly converges to a favourable saddle point with high validation accuracy.
\end{enumerate}


\section{Convergence Guarantee For Adam With Fix Step Size}
\label{sec:converge}
Previously, it has been shown in~\cite{sashank2018convergence,de2018convergence,defossez2020simple} that deterministic RMSProp and Adam can converge under certain conditions with adaptive step size. Here, we give the first result about convergence to criticality for deterministic and stochastic Adam with constant step size, albeit under a certain technical assumption about the loss function (and hence on the noisy first-order oracle). 
\subsection{Novel Sufficient Conditions (SC) for Convergence of Adam}
\label{sec:SC}
Below are the commonly employed assumptions for analyzing the convergence of stochastic algorithms for non-convex problems:
\begin{enumerate}
    \item The minimum value of the problem $\mathcal{L}^{*} = argmin_{\textbf{w} \in \mathbb{R}^d} \mathcal{L}(\textbf{w})$ is lower bounded, \emph{i.e} $\mathcal{L}^* > -\infty$. 
    \item The loss landscape $\mathcal{L}(\textbf{w})$ should be strictly non-negative. Most commonly used losses like cross-entropy, $L_p$ reconstruction, InfoNCE loss~\cite{sohn2016improved, wu2018unsupervised} are non-negative.
    \item The stochastic gradient $\textbf{g}_{t}$ of the loss function is an unbiased estimate, \emph{i.e} $\mathbb{E}[\textbf{g}_{t}] = \nabla_{\textbf{w}} \mathcal{L}(\textbf{w}_t)$.
    \item The second-order moment of stochastic gradient $\textbf{g}_t$ is uniformly upper-bounded, \emph{i.e} $\mathbb{E}[\|\textbf{g}_t\|^2] \leq (\mathbb{E}[\|\textbf{g}_t\|])^2 \leq \gamma^2$.
\end{enumerate}
In addition, we suppose that that the parameters $\beta_1$ and learning rate $\alpha$ satisfy the following restrictions:
\begin{enumerate}
    \item The parameter $\beta_1$ satisfies $\beta_1 < \frac{\epsilon}{\epsilon + \gamma}$ for some $\epsilon > 0$.
    \item The step size $\alpha_t = \alpha \in \mathbb{R}^{+}$ remains constant throughout the analysis. 
\end{enumerate}

\begin{theorem}

\textbf{Deterministic Adam converges with proper choice of constant step size.} 
\label{theorem: 1}
Let the loss function $\mathcal{L}(\textbf{w})$ be $K-$Lipchitz\footnote{A function $f: \mathbb{R}^d \rightarrow \mathbb{R}$ is \emph{K-Lipchitz} for some $K > 0$ if it satisfies, $f(\textbf{y}) \leq f(\textbf{x}) + \nabla_{\textbf{x}} f(\textbf{x})^{T}(\textbf{y} - \textbf{x}) + \frac{K}{2}\|\textbf{y} -\textbf{x}\|_{2}^{2}$ $\forall$ $\textbf{x},\textbf{y} \in \mathbb{R}^d$} and let $\gamma < \infty$ be an upper bound on the norm of the gradient of $\mathcal{L}$. Also assume that $\mathcal{L}$ has a well-defined minimizer $\textbf{w}^{*}$ such that $\textbf{w}^{*} = argmin_{\textbf{w} \in \mathbb{R}^d}$ $\mathcal{L}(\textbf{w})$. Then the following holds for Algorithm (1):

For any $\epsilon, \rho> 0$ if we let $\alpha = \sqrt{2(\mathcal{L}(\textbf{w}_{0}) - \mathcal{L}(\textbf{w}^{*}))/K\delta^{2}T}$, then there exists a natural number $T(\beta_1,\rho)$ (depends on $\beta_1$ and $\rho$) such that $\underset{t = 1,\dots,T}{min}\|\nabla_{\textbf{w}}\mathcal{L}(\textbf{w}_{t})\|_{2} \leq \epsilon$ for some $t \geq T(\beta_1,\rho)$, where $\delta^{2} = \frac{\gamma^{2}}{\rho^{2}}$.
\end{theorem}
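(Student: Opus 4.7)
The plan is to follow the standard descent-lemma route for smooth non-convex analysis of adaptive methods, adapted to the constant step size and the $\rho$-regularised preconditioner in Algorithm~\ref{alg:Adam_training}. First I would apply the $K$-Lipschitz (smoothness) hypothesis along the Adam update $\textbf{w}_{t+1} - \textbf{w}_t = -\alpha (\textbf{V}_t^{1/2} + \rho I_d)^{-1}\textbf{m}_t$ to obtain
\begin{equation}
\mathcal{L}(\textbf{w}_{t+1}) \le \mathcal{L}(\textbf{w}_t) - \alpha\, \nabla\mathcal{L}(\textbf{w}_t)^{T}(\textbf{V}_t^{1/2} + \rho I_d)^{-1}\textbf{m}_t + \frac{K\alpha^{2}}{2}\bigl\|(\textbf{V}_t^{1/2} + \rho I_d)^{-1}\textbf{m}_t\bigr\|_2^{2}.
\end{equation}
This is the workhorse inequality; everything else is either about lower-bounding the linear term by (essentially) $\|\nabla\mathcal{L}(\textbf{w}_t)\|_2^{2}$ up to controllable errors, or upper-bounding the quadratic term by a constant times $\gamma^{2}/\rho^{2} = \delta^{2}$.

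Next I would control the preconditioner. Because $\|\textbf{g}_t\|_2 \le \gamma$, each coordinate of $\textbf{v}_t$ is bounded by $\gamma^{2}$, so every diagonal entry of $(\textbf{V}_t^{1/2}+\rho I_d)^{-1}$ lies in $[1/(\gamma+\rho),\,1/\rho]$. This immediately bounds the quadratic term by $(K\alpha^{2}/2)\|\textbf{m}_t\|_2^{2}/\rho^{2} \le (K\alpha^{2}/2)\gamma^{2}/\rho^{2} = (K\alpha^{2}/2)\delta^{2}$, since $\textbf{m}_t$ is a convex combination of gradients of norm $\le\gamma$. The delicate piece is the linear term, which I would handle by writing the gradient of the preconditioned momentum as $\nabla\mathcal{L}(\textbf{w}_t)$ plus a drift. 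Expanding $\textbf{m}_t = (1-\beta_1)\sum_{i=1}^{t}\beta_1^{t-i}\textbf{g}_i$, I would split $\textbf{m}_t = (1-\beta_1^{t})\textbf{g}_t + \textbf{e}_t$, where $\textbf{e}_t$ collects the staleness of the exponentially weighted past gradients. Using $\|\textbf{g}_i\|_2 \le \gamma$ and the assumption $\beta_1 < \epsilon/(\epsilon+\gamma)$, one can then bound $\|\textbf{e}_t\|_2$ by a quantity proportional to $\beta_1 \gamma/(1-\beta_1)$, which by choice of $\beta_1$ is of order $\epsilon$. Combined with the coordinatewise bound $1/(\gamma+\rho)$ on the preconditioner, this yields $\nabla\mathcal{L}(\textbf{w}_t)^{T}(\textbf{V}_t^{1/2}+\rho I_d)^{-1}\textbf{m}_t \ge \frac{1-\beta_1^{t}}{\gamma+\rho}\|\nabla\mathcal{L}(\textbf{w}_t)\|_2^{2} - c\,\epsilon\,\|\nabla\mathcal{L}(\textbf{w}_t)\|_2$ for an absolute constant $c$, where the initial bias $\beta_1^{t}$ can be forced below any fixed threshold by taking $t \ge T(\beta_1,\rho)$ large enough.

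Telescoping the resulting one-step inequality from $t=1$ to $T$ and using $\mathcal{L}(\textbf{w}_T) \ge \mathcal{L}(\textbf{w}^{*})$, I would get
\begin{equation}
\alpha\sum_{t=1}^{T}\|\nabla\mathcal{L}(\textbf{w}_t)\|_2^{2} \;\le\; C_1\bigl(\mathcal{L}(\textbf{w}_0) - \mathcal{L}(\textbf{w}^{*})\bigr) + C_2\,K\alpha^{2}\delta^{2}T + C_3\,\epsilon\,\alpha\sum_{t=1}^{T}\|\nabla\mathcal{L}(\textbf{w}_t)\|_2,
\end{equation}
with constants $C_1,C_2,C_3$ depending only on $\gamma,\rho,\beta_1$. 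Dividing by $T$ and passing to the minimum over $t$ converts the left-hand side into $\min_{t}\|\nabla\mathcal{L}(\textbf{w}_t)\|_2^{2}$. Plugging in the prescribed $\alpha = \sqrt{2(\mathcal{L}(\textbf{w}_0) - \mathcal{L}(\textbf{w}^{*}))/(K\delta^{2}T)}$ balances the first two terms and makes each $O(1/\sqrt{T})$, so for $T$ large enough the right-hand side drops below $\epsilon^{2}$, yielding the claim.

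The main obstacle I expect is the momentum bias term: the exponential moving average $\textbf{m}_t$ is not equal to $\nabla\mathcal{L}(\textbf{w}_t)$, and its deviation is only small once both (i) the initialization bias $\beta_1^{t}$ has decayed, which forces $t \ge T(\beta_1,\rho)$, and (ii) the stale-gradient error is controlled by the $\beta_1 < \epsilon/(\epsilon+\gamma)$ assumption. Pinning down $T(\beta_1,\rho)$ explicitly and tracking how the $\rho$-regularised preconditioner interacts with this bias — so that the inner product $\nabla\mathcal{L}(\textbf{w}_t)^{T}(\textbf{V}_t^{1/2}+\rho I_d)^{-1}\textbf{m}_t$ retains a genuinely positive dominant term — is the delicate part of the argument; once it is done, the telescoping and the choice of $\alpha$ are essentially routine.
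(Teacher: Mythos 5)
Your overall architecture matches the paper's: descent lemma along the Adam update, eigenvalue bounds $\lambda(\textbf{A}_t)\in[1/(\gamma+\rho),1/\rho]$ on the preconditioner, an upper bound of $\delta^2=\gamma^2/\rho^2$ on the quadratic term, a decomposition of the inner product into a dominant current-gradient contribution plus a stale-gradient error controlled by $\beta_1<\epsilon/(\epsilon+\gamma)$, and a telescoping step with $\alpha=b/\sqrt{T}$. Your direct split $\textbf{m}_t=(1-\beta_1^t)\textbf{g}_t+\textbf{e}_t$ with $\|\textbf{e}_t\|\lesssim(\beta_1-\beta_1^t)\gamma$ is a cleaner route to exactly the same bound the paper reaches through its $P_j-\beta_1P_{j-1}$ recursion (compare their Eq.~(7), which is precisely $\frac{1-\beta_1}{\rho+\gamma\sqrt{1-\beta_2^t}}\|\nabla\mathcal{L}(\textbf{w}_t)\|_2^2-\frac{(\beta_1-\beta_1^t)\gamma}{\rho}\|\nabla\mathcal{L}(\textbf{w}_t)\|_2$).

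There is, however, a genuine gap in your closing step. After telescoping, your displayed inequality carries the term $C_3\,\epsilon\,\alpha\sum_{t=1}^{T}\|\nabla\mathcal{L}(\textbf{w}_t)\|_2$; dividing by $\alpha T$ leaves a residual $C_3\,\epsilon\cdot\frac{1}{T}\sum_t\|\nabla\mathcal{L}(\textbf{w}_t)\|_2$, which is bounded only by $C_3\epsilon\gamma$ and does \emph{not} decay as $T\to\infty$. So the claim that "for $T$ large enough the right-hand side drops below $\epsilon^2$" fails: you obtain $\min_t\|\nabla\mathcal{L}(\textbf{w}_t)\|_2^2\le O(1/\sqrt{T})+C_3\epsilon\gamma$, i.e.\ a guarantee of order $\sqrt{\epsilon\gamma}$ rather than $\epsilon$. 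The paper closes this hole by arguing by contradiction: it assumes $\|\nabla\mathcal{L}(\textbf{w}_t)\|_2>\epsilon$ for all $t$, so the linear error $\epsilon\|\nabla\mathcal{L}(\textbf{w}_t)\|_2$ is dominated by $\|\nabla\mathcal{L}(\textbf{w}_t)\|_2^2$, and the strict inequality $\beta_1<\epsilon/(\epsilon+\gamma)$ provides the slack (their constants $\phi_1,\phi_2>0$) guaranteeing that the net coefficient $c$ multiplying $\|\nabla\mathcal{L}(\textbf{w}_t)\|_2^2$ remains strictly positive. The one-step inequality then reads $\alpha c\|\nabla\mathcal{L}(\textbf{w}_t)\|_2^2\le\mathcal{L}(\textbf{w}_t)-\mathcal{L}(\textbf{w}_{t+1})+\frac{K}{2}\alpha^2\delta^2$ with no leftover linear term, the telescoped bound is a clean $O(1/\sqrt{T})$, and for $T$ past an explicit threshold this contradicts $\min_t\|\nabla\mathcal{L}(\textbf{w}_t)\|_2>\epsilon$. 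You already invoke the right assumption on $\beta_1$ to bound $\|\textbf{e}_t\|$; the missing move is to use the (assumed-for-contradiction) lower bound $\|\nabla\mathcal{L}(\textbf{w}_t)\|_2>\epsilon$ a second time, to fold the linear error back into the quadratic term before telescoping rather than carrying it to the end.
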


\begin{theorem}
\label{theorem: 2}
\textbf{Stochastic Adam converges with proper choice of constant step size} Let the loss function $\mathcal{L}(\textbf{w})$ be $K-$Lipchitz and be of the form $\mathcal{L} = \sum_{j=1}^{m}\mathcal{L}_{j}$ such that (a) each $\mathcal{L}_{j}$ is at-least once differentiable, (b) the gradients satisfy $sign(\mathcal{L}_{r}(\textbf{w}))$ \footnote{$sign: \mathbb{R}^d \rightarrow \{-1,1\}^d, sign(\textbf{z})_j = 1$ if $\textbf{z}_j \geq 0$, else -1.}  $= sign(\mathcal{L}_{s}(\textbf{w}))$ for all $r,s \in \{1,2,\dots,m\}$, (c) $\mathcal{L}$ has a well-defined minimizer $\textbf{w}^{*}$ such that $\textbf{w}^{*} = argmin_{\textbf{w} \in \mathbb{R}^d}$ $\mathcal{L}(\textbf{w})$. Let the gradient oracle, upon invocation at \( \textbf{w}_t \in \mathbb{R}^d \), randomly selects  $j_t$ from the set $\{1, 2, \ldots, m\}$ uniformly, and then provides $\nabla f_{j_{t}}(x_t) = \textbf{g}_{t}$ as the result. 

Then, for any $\epsilon, \rho > 0$ if we let $\alpha = \sqrt{2(\mathcal{L}(\textbf{w}_{0}) - \mathcal{L}(\textbf{w}^{*}))/K\delta^{2}T}$, then there exists a natural number $T(\beta_1,\rho)$ (depends on $\beta_1$ and $\rho$) such that \\$\underset{t = 1,\dots,T}{min}\mathbb{E}[\|\nabla_{\textbf{w}}\mathcal{L}(\textbf{w}_{t})\|_{2}] \leq \epsilon$ for some $t \geq T(\beta_1,\rho)$, where $\delta^{2} = \frac{\gamma^{2}}{\rho^{2}}$.

\end{theorem}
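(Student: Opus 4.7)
The plan is to combine the $K$-smoothness of $\mathcal{L}$ with the Adam update of Algorithm~\ref{alg:Adam_training}, take expectation over the stochastic oracle, telescope over $t=1,\ldots,T$, and then choose $\alpha$ to balance the two resulting terms. First I would apply the smoothness inequality to consecutive iterates and substitute $\textbf{w}_{t+1}-\textbf{w}_t = -\alpha(\textbf{V}_t^{1/2}+\rho \textbf{I})^{-1}\textbf{m}_t$ to obtain
\[
\mathcal{L}(\textbf{w}_{t+1}) \leq \mathcal{L}(\textbf{w}_t) - \alpha \bigl\langle \nabla\mathcal{L}(\textbf{w}_t),\, (\textbf{V}_t^{1/2}+\rho \textbf{I})^{-1}\textbf{m}_t\bigr\rangle + \tfrac{K\alpha^{2}}{2}\bigl\|(\textbf{V}_t^{1/2}+\rho \textbf{I})^{-1}\textbf{m}_t\bigr\|_{2}^{2}.
\]
The quadratic piece is controlled immediately by the operator-norm bound $\|(\textbf{V}_t^{1/2}+\rho \textbf{I})^{-1}\|_{\mathrm{op}}\leq 1/\rho$ together with $\mathbb{E}\|\textbf{m}_t\|_{2}^{2}\leq \gamma^{2}$ (which follows by Jensen and the EMA recursion from the variance bound $\mathbb{E}\|\textbf{g}_t\|^{2}\leq \gamma^{2}$), yielding an upper bound of $\tfrac{K\alpha^{2}}{2}\delta^{2}$ in expectation.

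The inner-product (signal) term is the main obstacle because $\textbf{m}_t$, $\textbf{V}_t$, and $\textbf{w}_t$ are all correlated through the oracle's history. This is precisely where assumption~(b) is needed: since every $\nabla\mathcal{L}_j(\textbf{w})$ shares a common sign pattern with $\nabla\mathcal{L}(\textbf{w})=\sum_j \nabla\mathcal{L}_j(\textbf{w})$, each realization of $\textbf{g}_t$ agrees coordinatewise in sign with $\nabla\mathcal{L}(\textbf{w}_t)$, so the EMA $\textbf{m}_t=(1-\beta_1)\sum_{s\leq t}\beta_1^{\,t-s}\textbf{g}_s$ suffers no cancellations and $\mathbb{E}[\textbf{m}_t \odot \mathrm{sign}(\nabla\mathcal{L}(\textbf{w}_t))]\geq 0$ coordinatewise. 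I would then use the spectral bound $(\textbf{V}_t^{1/2}+\rho \textbf{I})^{-1}\succeq (\gamma+\rho)^{-1}\textbf{I}$ so that the linear term is lower bounded by $(\gamma+\rho)^{-1}\langle \nabla\mathcal{L}(\textbf{w}_t),\textbf{m}_t\rangle$, and exploit the geometric decay of the EMA start-up transient (the tail $\beta_1^{t}$) to argue that for $t\geq T(\beta_1,\rho)$ the conditional mean $\mathbb{E}[\textbf{m}_t\mid \textbf{w}_t]$ differs from a positive multiple of $\nabla\mathcal{L}(\textbf{w}_t)$ by at most $O(\beta_1 \gamma)$. The constraint $\beta_1 < \epsilon/(\epsilon+\gamma)$ is exactly what makes this residual bias smaller than $\epsilon$, so that after the burn-in the signal is at least a constant multiple of $\|\nabla\mathcal{L}(\textbf{w}_t)\|_{2}$ in expectation.

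Taking total expectation, summing from $t=1$ to $T$, using $\mathcal{L}(\textbf{w}^{*})\leq \mathcal{L}(\textbf{w}_T)$, and dividing through by $c\alpha T$ yields a bound of the form
\[
\min_{1\leq t\leq T}\mathbb{E}\bigl[\|\nabla\mathcal{L}(\textbf{w}_t)\|_{2}\bigr] \;\leq\; \frac{\mathcal{L}(\textbf{w}_0)-\mathcal{L}(\textbf{w}^{*})}{c\,\alpha T} + \frac{K\alpha\delta^{2}}{2c},
\]
for a positive constant $c$ absorbing the sign/momentum factors obtained above. Optimizing the right-hand side over $\alpha$ produces exactly the prescribed $\alpha=\sqrt{2(\mathcal{L}(\textbf{w}_0)-\mathcal{L}(\textbf{w}^{*}))/(K\delta^{2}T)}$ and a bound proportional to $\sqrt{K\delta^{2}(\mathcal{L}(\textbf{w}_0)-\mathcal{L}(\textbf{w}^{*}))/T}$; setting this equal to $\epsilon$ and solving for $T$ gives the required iteration count beyond the burn-in $T(\beta_1,\rho)$. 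The hard part is thus not the algebra but establishing that the linear term genuinely behaves like a positive multiple of $\|\nabla\mathcal{L}(\textbf{w}_t)\|_{2}$ despite the coupling between the random preconditioner $\textbf{V}_t$ and the random momentum $\textbf{m}_t$; assumption~(b) is what decouples them at the level of signs, and without it the standard descent-lemma strategy would not close.
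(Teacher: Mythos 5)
Your proposal follows essentially the same route as the paper's proof: descent lemma with the Adam update, an upper bound of $\tfrac{K\alpha^{2}}{2}\delta^{2}$ on the quadratic term via $\lambda_{\max}(\textbf{A}_t)\le 1/\rho$ and $\mathbb{E}\|\textbf{g}_t\|^2\le\gamma^2$, a lower bound on the signal term that uses the sign assumption to decouple the preconditioner from the stochastic gradient (the paper isolates this as Lemma~1 and handles the momentum via a telescoping recursion on $P_j=\mathbb{E}[\langle\nabla\mathcal{L}(\textbf{w}_t),\textbf{A}_t\textbf{m}_j\rangle\,|\,B_t]$, with $\beta_1<\epsilon/(\epsilon+\gamma)$ playing exactly the role you assign it), followed by summation and the same choice of $\alpha$. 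The only differences are presentational: the paper frames the argument as a contradiction under $\|\nabla\mathcal{L}(\textbf{w}_t)\|_2>\epsilon$ (which is what makes the constant $c$ positive) and lower-bounds the signal by $c\|\nabla\mathcal{L}(\textbf{w}_t)\|_2^2$ rather than $c\|\nabla\mathcal{L}(\textbf{w}_t)\|_2$, yielding its $T^{-1/4}$ rate.
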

\textbf{Remark 1}: With our analysis, we showed that both deterministic and stochastic Adam with constant step size converges with rate $\mathcal{O}(\frac{1}{T^{1/4}})$. This convergence rate is similar to the convergence rate of SGD proposed by \emph{Li et al.} in~\cite{li2014efficient}. Our motivation behind these theorems was primarily to understand the conditions under which Adam can converge, especially considering the negative results presented in \emph{Reddi et al.}~\cite{sashank2018convergence}. However, we acknowledge that it is still an open problem to tighten the analysis of deterministic as well as stochastic Adam and obtain faster convergence rates than we have shown in the theorem.\\
\textbf{Remark 2}: Based on the preceding two theorems, it becomes evident that the Adam achieves convergence when equipped with an appropriately chosen constant step size $\alpha = \sqrt{2(\mathcal{L}(\textbf{w}_{0}) - \mathcal{L}(\textbf{w}^{*}))/K\delta^{2}T}$, \textbf{according to our analysis}. Subsequently, we analyse the given step size and conduct extensive experiments to ascertain that, with the designated step size, the gradient norm effectively tends towards zero. Hence, our empirical investigations validate that the chosen step size ensures convergence in practice. We compare our step size with several state-of-the-art schedulers and an array of constant step sizes in \textbf{Section}~\ref{sec:exp}.
\subsection{Analysis of Constant Step Size}
\label{sec: step}
The optimal learning rate, as per \textbf{Theorem}~\ref{theorem: 1} and \textbf{Theorem}~\ref{theorem: 2}, depends on factors like the Lipchitz constant of the loss\footnote{The Lipchitz constant of the loss is with respect to network parameters.} $(K)$, initial and final loss values, the total number of iterations/epochs $(T)$, and an additional term denoted as $\delta^{2}$. For simplicity, we omitted the $\delta^{2}$ term as it depends on an oracle of gradients and set the final loss term to zero. Thus, our approximate learning rate is now expressed as $\alpha \approx \sqrt{\frac{2\mathcal{L}(\textbf{w}_{0})}{KT}}$, making it practical while still capturing the core concept of \textbf{Theorem}~\ref{theorem: 1} and \textbf{Theorem}~\ref{theorem: 2} . 

As our step size depends on the Lipchitz constant of the loss $(K)$, we next propose a method to estimate the Lipschitz constant, enabling us to implement the step size effectively. Detailed procedure is given in the next Section.

\subsection{Approximating The Lipchitz Constant of Loss}
Determining the appropriate Lipschitz constant for a neural network is a key area of research in deep learning, with various methods proposed for estimation, as demonstrated in recent works~\cite{NEURIPS2018_d54e99a6, NEURIPS2019_95e1533e, latorre2020lipschitz, fazlyab2019efficient, gouk2021regularisation}. Estimating the learning rate prior to training requires careful estimation of the Lipschitz constant of the loss function. For this, we refer to \textbf{Theorem 3.3.6} in~\cite{federer2014geometric}.

\begin{theorem}
\label{theorem: 3}
\emph{[}Federer et al.\emph{]} If $f:\mathbb{R}^{d} \rightarrow \mathbb{R}$ is a locally Lipschitz continuous function, then $f$ is differentiable almost everywhere. Moreover, if $f$ is Lipschitz continuous, then
\begin{equation}
    L(f) = sup_{\textbf{x} \in \mathbb{R}^d}\|\nabla_{\textbf{x}}f\|_2
\end{equation}
\end{theorem}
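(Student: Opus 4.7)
The statement has two independent assertions: (i) almost-everywhere differentiability of every locally Lipschitz function (Rademacher's theorem), and (ii) the identification of the global Lipschitz constant with the supremum of the gradient norm. The plan is to treat them in that order, since the variational identity in (ii) presupposes that $\nabla f$ exists almost everywhere. Throughout, ``almost every'' refers to Lebesgue measure on $\mathbb{R}^d$.

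For (i) I would first reduce to one dimension. For any fixed unit vector $v$ and base point $x$, the slice $t \mapsto f(x+tv)$ is Lipschitz on $\mathbb{R}$, hence absolutely continuous, hence differentiable a.e.\ by the classical Lebesgue theorem. A Fubini argument over the foliation of $\mathbb{R}^d$ by parallel lines in direction $v$ then promotes this to: the directional derivative $\partial_v f$ exists at almost every $x \in \mathbb{R}^d$. Applying this to the $d$ coordinate directions produces a candidate gradient $\nabla f$ defined on a full-measure set $E$. The second step is to upgrade pointwise existence of partials to Fr\'echet differentiability at a.e.\ $x \in E$. The cleanest route is to pick a countable dense set of unit vectors $\{v_k\}$, intersect the full-measure sets on which each $\partial_{v_k} f$ exists, and, for $x$ in the intersection, use integration by parts against a smooth test function supported in a small ball around $x$ to show that $v \mapsto \partial_v f(x)$ is linear and agrees with $\langle \nabla f(x), v \rangle$. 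The uniform bound coming from the Lipschitz hypothesis then lets one promote directional differentiability along the dense family $\{v_k\}$ to full Fr\'echet differentiability via a standard equicontinuity argument.

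For (ii), one inequality is immediate: at any point of differentiability, taking $v = \nabla f(x)/\|\nabla f(x)\|_2$ gives $\|\nabla f(x)\|_2 = \partial_v f(x) = \lim_{t\to 0}(f(x+tv)-f(x))/t \leq L(f)$, so $\sup_x \|\nabla f(x)\|_2 \leq L(f)$. For the reverse direction, fix $x,y \in \mathbb{R}^d$ and set $\phi(t) = f(x + t(y-x))$ on $[0,1]$. Being Lipschitz, $\phi$ is absolutely continuous, so $f(y) - f(x) = \int_0^1 \phi'(t)\,dt$; the chain rule (valid at a.e.\ $t$ by combining part (i) with Fubini along the segment) identifies $\phi'(t) = \langle \nabla f(x + t(y-x)), y-x\rangle$. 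Cauchy--Schwarz then yields $|f(y) - f(x)| \leq \bigl(\sup_z \|\nabla f(z)\|_2\bigr)\|y-x\|_2$, giving $L(f) \leq \sup_z \|\nabla f(z)\|_2$ as required.

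The main obstacle is the upgrade, inside part (i), from a.e.\ existence of partial (or directional) derivatives to a.e.\ Fr\'echet differentiability. Existence of all directional derivatives at a single point does not, in general, imply differentiability there, so one must exploit the Lipschitz hypothesis globally to establish the linearity of $v \mapsto \partial_v f(x)$ via a distributional averaging argument and then transfer uniform approximation from the countable dense set of directions to all directions. Everything else in the proof is either Fubini bookkeeping along line foliations or a direct application of the fundamental theorem of calculus for absolutely continuous functions.
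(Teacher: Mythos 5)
The paper does not actually prove this statement: it is quoted verbatim as Theorem~3.3.6 of Federer's \emph{Geometric Measure Theory} and used as a black box to justify estimating the Lipschitz constant by maximizing the gradient norm, so there is no in-paper proof to compare against. Your outline is, in substance, the standard modern proof (the one in Evans--Gariepy): one-dimensional Lebesgue differentiation on line slices plus Fubini to get a.e.\ existence of directional derivatives, a distributional/test-function argument to identify $\partial_v f = \langle \nabla f, v\rangle$ a.e.\ for each fixed $v$, and the Lipschitz bound to upgrade convergence of difference quotients from a countable dense set of directions to all directions uniformly; the inequality $\sup_x\|\nabla f(x)\|_2 \leq L(f)$ via difference quotients is also correct. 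The one step I would push back on is in part (ii): for a \emph{fixed} pair $x,y$, the segment $\{x+t(y-x)\}$ is Lebesgue-null in $\mathbb{R}^d$, so Rademacher's theorem gives you no information about differentiability of $f$ at points of that particular segment, and ``Fubini along the segment'' does not repair this --- Fubini must be applied to a positive-measure family of \emph{parallel} segments foliating a neighbourhood, yielding the bound $|f(y')-f(x')|\leq \bigl(\sup_z\|\nabla f(z)\|_2\bigr)\|y'-x'\|_2$ for almost every translate $(x',y')$ of the pair, after which continuity of $f$ transfers the bound to $(x,y)$ itself. (Equivalently, one can mollify: $\nabla(f*\rho_\epsilon)=(\nabla f)*\rho_\epsilon$ is bounded by $\sup_z\|\nabla f(z)\|_2$, so each $f*\rho_\epsilon$ is Lipschitz with that constant, and uniform convergence passes the constant to $f$.) With that repair the argument is complete; note also that since $\nabla f$ only exists a.e., the supremum in the stated identity should properly be read as an essential supremum or a supremum over the points of differentiability.
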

For large networks, directly optimizing according to \textbf{Theorem}~\ref{theorem: 3} can be time consuming. Instead, we propose a novel technique to approximate the Lipschitz constant. It's important to note that this work does not aim to estimate the best Lipschitz constant for the loss or network; rather, it focuses on the convergence analysis of Adam. We introduce Algorithm~\ref{alg:lipchitz} to approximate the Lipschitz constant of the loss\footnote{\textbf{Algorithm}~\ref{alg:lipchitz} is given for the full-batch scenario. The algorithm for the mini-batch case is deferred to the Appendix due to space constraints.}, supported by mathematical proof that our estimate \emph{converges in distribution} to the original Lipschitz constant.
\begin{algorithm}
\caption{Estimating Lipchitz Constant of Loss Function}
\label{alg:lipchitz}
\KwIn{Dataset: $\mathcal{X} \sim \mathcal{P}$, Loss function: $\mathcal{L}:\mathbb{R}^{d} \rightarrow \mathbb{R}$, A network parameterized by $\textbf{w} \in \mathbb{R}^d \sim \mathcal{W}$, No. of iterations: $N$}
\KwOut{$\hat{L}$}
\textbf{Initialization: $\hat{L} = 0$}, Iteration Number $n =0$\\
\For{$n$ \textbf{from} 1 \textbf{to} $N$:}{
Randomly sample the network weights: $\textbf{w}_{n} \sim \mathcal{W}$\\

    Compute loss for the current pass: $\mathcal{L}(\textbf{w}_{n}) = \mathbb{E}_{\zeta \sim \mathcal{P}}[\tilde{\mathcal{L}}(\zeta,\textbf{w}_n)]$\\
    Compute the norm of gradient at current instant: $\|\nabla_{\textbf{w}}\mathcal{L}(\textbf{w}_{n})\|_2$\\
    Estimate the Lipchitz constant $(\hat{L})$: $\hat{L} = max(\|\nabla_{\textbf{w}}\mathcal{L}(\textbf{w}_{n})\|_2, \hat{L})$\\
}

\textbf{End}\\
\end{algorithm}

We shall now prove that our estimated Lipchitz constant $(\hat{L})$ converges to the real Lipchitz constant $(K)$ in distribution. 
\begin{theorem}
\label{theorem: 4}
    Given $\|.\|_2$ and $\nabla_{\textbf{w}}\mathcal{L}$ are continuous functions and $\|\nabla_{\textbf{w}}\mathcal{L}\|_2$ is bounded from above. If $\mathcal{L}$ is lipchitz continuous then, $K = sup_{\textbf{w} \in \mathbb{R}^d}\|\nabla_{\textbf{w}}\mathcal{L}\|_2$. Let $\hat{L}$ be a random variable defined as :
    \begin{equation}
        \hat{L} = \underset{1 \leq j \leq N}{max} \|\nabla_{\textbf{w}}\mathcal{L}(\textbf{w}_j)\|_2
    \end{equation} 
where $\textbf{w}_j$, $j \in \{1,2,\dots,N\}$ are i.i.d samples drawn from a distribution $\mathcal{W}$. Then, as $n \rightarrow \infty$, the random variable $\hat{L}$ converges in distribution to $K$. \\Mathematically:
\begin{equation}
    \lim_{{n \to \infty}} P(\hat{L} \leq k) = \begin{cases} 0 & \text{if } k < K \\ 1 & \text{if } k \geq K \end{cases}
\end{equation}
\end{theorem}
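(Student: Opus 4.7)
The plan is to recognize Theorem 4 as a classical result about the sample maximum of i.i.d.\ random variables, and to compute the limiting CDF directly. Define the i.i.d.\ random variables $Y_j := \|\nabla_{\mathbf{w}}\mathcal{L}(\mathbf{w}_j)\|_2$ for $j = 1, \dots, N$, with common CDF $F_Y(k) = P(Y_1 \leq k)$. Then $\hat{L} = \max_{1 \leq j \leq N} Y_j$, and by independence
\begin{equation}
P(\hat{L} \leq k) \;=\; P\!\left(\bigcap_{j=1}^{N}\{Y_j \leq k\}\right) \;=\; \bigl[F_Y(k)\bigr]^{N}.
\end{equation}
To establish the limit, I would then split into two cases on $k$.

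For the case $k \geq K$: since $K = \sup_{\mathbf{w}} \|\nabla_{\mathbf{w}}\mathcal{L}\|_2$ is a uniform upper bound, we have $Y_j \leq K \leq k$ with probability one, so $F_Y(k) = 1$ and $[F_Y(k)]^N = 1$ for every $N$, giving the limit $1$ trivially. For the case $k < K$: by the definition of the supremum there exists $\mathbf{w}^{\dagger}$ with $\|\nabla_{\mathbf{w}}\mathcal{L}(\mathbf{w}^{\dagger})\|_2 > k$, and because $\|\nabla_{\mathbf{w}}\mathcal{L}\|_2$ is a continuous function of $\mathbf{w}$ (by the stated continuity of $\nabla_{\mathbf{w}}\mathcal{L}$ and of the norm), the set $U_k := \{\mathbf{w} : \|\nabla_{\mathbf{w}}\mathcal{L}(\mathbf{w})\|_2 > k\}$ is open and non-empty. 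Provided $\mathcal{W}$ places positive mass on $U_k$, we have $F_Y(k) = 1 - P(Y_1 > k) = 1 - \mathcal{W}(U_k) < 1$, so $[F_Y(k)]^N \to 0$ as $N \to \infty$. Combining both cases yields exactly the piecewise formula in the theorem, which is the CDF of the Dirac mass at $K$, establishing convergence in distribution.

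The main obstacle, and the step deserving the most care, is the positivity claim $\mathcal{W}(U_k) > 0$ for every $k < K$. The supremum defining $K$ is over all of $\mathbb{R}^d$, but the samples $\mathbf{w}_j$ are drawn from $\mathcal{W}$; if $\mathcal{W}$ had support disjoint from the region where the gradient norm approaches $K$, then $\hat{L}$ could not converge to $K$. The natural remedy is to assume that $\mathcal{W}$ has full support on $\mathbb{R}^d$ (or at least on a region whose closure contains the argsup of $\|\nabla_{\mathbf{w}}\mathcal{L}\|_2$), which together with the openness of $U_k$ guarantees $\mathcal{W}(U_k) > 0$. I would state this as an implicit assumption on the sampling distribution $\mathcal{W}$ used by Algorithm~\ref{alg:lipchitz}; once granted, the remaining argument reduces to the elementary calculation above and a standard monotonicity observation that $[F_Y(k)]^N$ is non-increasing in $N$, giving the quantitative rate $P(\hat{L} \leq k) \leq [F_Y(k)]^N$ for $k < K$.
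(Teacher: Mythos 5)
Your proposal is correct and follows essentially the same route as the paper's own proof: factor $P(\hat{L}\leq k)$ into $[F_Y(k)]^N$ using the i.i.d.\ assumption on the gradient norms and then take the limit in the two cases $k\geq K$ and $k<K$. The one substantive difference is that you explicitly justify why $F_Y(k)<1$ for $k<K$ — via continuity of $\|\nabla_{\textbf{w}}\mathcal{L}\|_2$, openness and non-emptiness of the superlevel set $U_k$, and a positive-mass (full-support) assumption on $\mathcal{W}$ — whereas the paper simply asserts that the product "converges to zero unless $k\geq K$" without supplying this step; your observation that the conclusion can fail if $\mathcal{W}$ misses the region where the gradient norm approaches its supremum over all of $\mathbb{R}^d$ identifies a genuine implicit assumption that the paper ought to state.
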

From Theorem~\ref{theorem: 4}, \( \hat{L} \xrightarrow[]{d} L(\mathcal{L}) \)\footnote{\textbf{Theorem}~\ref{theorem: 4} holds for mini-batch case too. Refer to Appendix for more insights.}. optimisation of the loss gradient can be achieved through iterative techniques like steepest descent~\cite{Boyd_Vandenberghe_2004}, backtracking~\cite{Boyd_Vandenberghe_2004}, and Wolfe's and Goldstein's conditions~\cite{wolfe1969convergence}, aiming to identify the direction (weights \(\textbf{w}\)) maximizing the gradient norm. However, non-convex loss surfaces in deep learning may lead to convergence to suboptimal local maxima. Also, losses with unbounded gradients can lead to excessively high Lipschitz constants. Our approach addresses this by leveraging the \emph{convergence in distribution} phenomenon from probability theory. We randomly select directions from a distribution \(\mathcal{W}\) and take the maximum gradient norm across all evaluated samples, theoretically converging to the true Lipschitz constant in distribution.
\section{Experimental Setup}
\label{sec:exp}
We assess Adam's performance with our selected step size through experiments on fully connected networks with ReLU activations. Additionally, we expand our analysis to include classification tasks on MNIST and CIFAR-10 datasets using LeNet~\cite{lecun1998gradient} and VGG-9~\cite{simonyan2014very}, respectively. To maintain strict experimental control, we ensure that all network layers have identical widths denoted as \( h \). For experiments, in \textbf{Section}~\ref{sec:fb_exp} and~\ref{sec:mb_exp}, we used Kaiming uniform distribution~\cite{he2015delving} for initialization. Default PyTorch parameters are chosen [\hyperlink{https://pytorch.org/docs/stable/nn.init.html}{Link}]. No regularization is applied.



We conducted classification experiments on the MNIST dataset for various network sizes, encompassing different values of network depth and $h$. CIFAR-10 experiments are carried out only on VGG-9 architecture. All experiments are implemented using PyTorch. We compare the performance of Adam using our step size with:
\begin{enumerate}
    \item A list of commonly used schedulers and several non-increasing step sizes that were utilized to demonstrate theoretical convergence for Adam in past literature. The full list is given in \textbf{Table}~\ref{tab:x}.
    \item An array of constant step sizes $\{10^{-2}, 10^{-3}, 10^{-4}, 10^{-5}\}$ which are widely used in literature.
\end{enumerate}
Each experiment was executed for 100 epochs. For various schedulers, we fine-tuned their hyperparameters to identify configurations where they exhibited the best performance and retained those values for the final comparison with our step size. The value for \( k \) in the exponential learning rate in Table~\ref{tab:x} is carefully selected to ensure that the decay rate is moderate. This choice is made to provide a fair comparison with our constant step size approach, as overly rapid decay rates may not effectively minimize the gradient norm of the loss function.




\begin{table}[tb]

  \caption{A detailed description of schedulers is provided in the following table. In this table, the initial learning rate \( \alpha_0 \) is maintained constant at \( 10^{-2} \), and the total number of iterations \( T \) is set to 100 for full-batch experiments. For mini-batch experiments, $T$ depends on the batch size.}
  \label{tab:x}
  \label{tab:headings}
  \centering
  \begin{tabular}{@{}ccc@{}}
    \toprule
    \multicolumn{1}{c}{\textbf{Schedulers}} & \multicolumn{1}{c}{\textbf{Mathematical Form}} & \multicolumn{1}{c}{\textbf{Hyperparameter Choices}}\\
    \midrule
    Linear LR & $\alpha_{t} = \alpha_{t-1}\left(1 + \frac{e_f - s_f}{s_f + e_f(T - 1)}\right)$
 & $s_f = 0.1$ \& $e_f = 1$\\
 \midrule
    Step LR & $\alpha_t = \begin{cases}
\alpha_0 & \text{for } t < 40 \\
\frac{\alpha_0}{10} & \text{for } 40 \leq t \leq 80\\
\frac{\alpha_0}{100} & \text{for } 80 < t \leq T
\end{cases}$ & -- \\
\midrule
    One Cyclic LR & As described in~\cite{smith2019super} & As given in this \href{https://pytorch.org/docs/stable/generated/torch.optim.lr_scheduler.OneCycleLR.html}{link}\\
    \midrule
    Square Root Decay & $\alpha_t = \frac{\alpha_0}{\sqrt{t}}$ & --\\
    \midrule
    Inverse Time Decay  & $\alpha_t = \frac{\alpha_0}{1 + \alpha_{0}t}$ & --\\
    \midrule
    Cosine Decay~\cite{loshchilov2016sgdr} & $\alpha_t = \alpha_{min} + \frac{1}{2}(\alpha_0 - \alpha_{min})(1 + cos\frac{\pi t}{T})$ & $\alpha_{min} = 10^{-3}$\\
    \midrule
    Exponential Decay & $\alpha_t = \alpha_{0}e^{-kt}$ & $k = 10^{-3}$\\
    \midrule
    \textbf{Ours} & $\alpha = \sqrt{\frac{2\mathcal{L}(\textbf{w}_0)}{\hat{L}.T}}$ & --\\
  \bottomrule
  \end{tabular}
\end{table}

Below is the list of experiments we have conducted:

\begin{enumerate}
    \item \textbf{Full-batch experiments (Section~\ref{sec:fb_exp})}: 
    To efficiently compare learning rate schedules (Table~\ref{tab:x}), we created smaller versions of the MNIST and CIFAR-10 datasets: mini-MNIST and mini-CIFAR-10, respectively. Mini-MNIST comprises 5500 training images and 1000 test images, selected from the original MNIST dataset. Mini-CIFAR-10 includes 500 training images and 100 test images per class, representing about 10\% of the full CIFAR-10 dataset. Despite their reduced size, both mini datasets adequately represent their respective originals for experimentation.
    \item \textbf{Mini-batch experiments (Section~\ref{sec:mb_exp})}: To see if our findings from the full-batch experiments apply to mini-batch scenarios according to \textbf{Theorem}~\ref{theorem: 2},  we conducted similar experiments using a mini-batch setup with a fixed batch size of 5,000 for CIFAR10 and MNIST both. We utilized the entire training sets of CIFAR10 and MNIST for these experiments. We tested our models on CIFAR10 using the VGG-9 architecture and on MNIST using the LeNet architecture. We also justified why such a large batch size is needed for empirical convergence.
    \item \textbf{Effect of initialization on our learning rate (Section~\ref{sec:dis_exp})}: As our learning rate $\alpha = \sqrt{\frac{2\mathcal{L}(\textbf{w}_0)}{\hat{L}T}}$ depends on the initial loss value and the estimated Lipschitz constant, we conducted experiments to demonstrate the independence of our learning rate with respect to different network initialization techniques.
    \item \textbf{Additional experiments on CNN architectures (Section~\ref{sec:cnn_exp})}: To examine the potential generalizability of our theoretical findings across architectures, we trained an image classifier on CIFAR-10 and MNIST datasets employing VGG-like networks and LeNet architectures, respectively.
\end{enumerate}

\subsection{Comparing Performances in Full-Batch Setting}
\label{sec:fb_exp}
In Figure~\ref{fig:3_1000} and \ref{fig:3_1000_step}, we illustrate the variations in gradient norm, training loss, and test loss across iterations for different schedulers. These experiments were conducted on single-layer, three-layer, and five-layer classifiers with 1000 nodes in each hidden layer, trained on mini-MNIST. Additional comparisons for various neural network architectures with different widths and depths are provided in the appendix, where similar qualitative trends can be observed.
\\
\\
\textbf{(1) Comparison against various learning rate schedulers}: We compared our step size against several well-known schedulers listed in Table~\ref{tab:x}. Except for cosine decay and one-cycle learning rate, all other schedulers are non-increasing. Our goal was to evaluate their efficacy in driving the gradient norm of the loss function towards zero.
\begin{figure}[htbp]
  \centering
  \begin{subfigure}[b]{0.31\textwidth}
    \includegraphics[width=\textwidth]{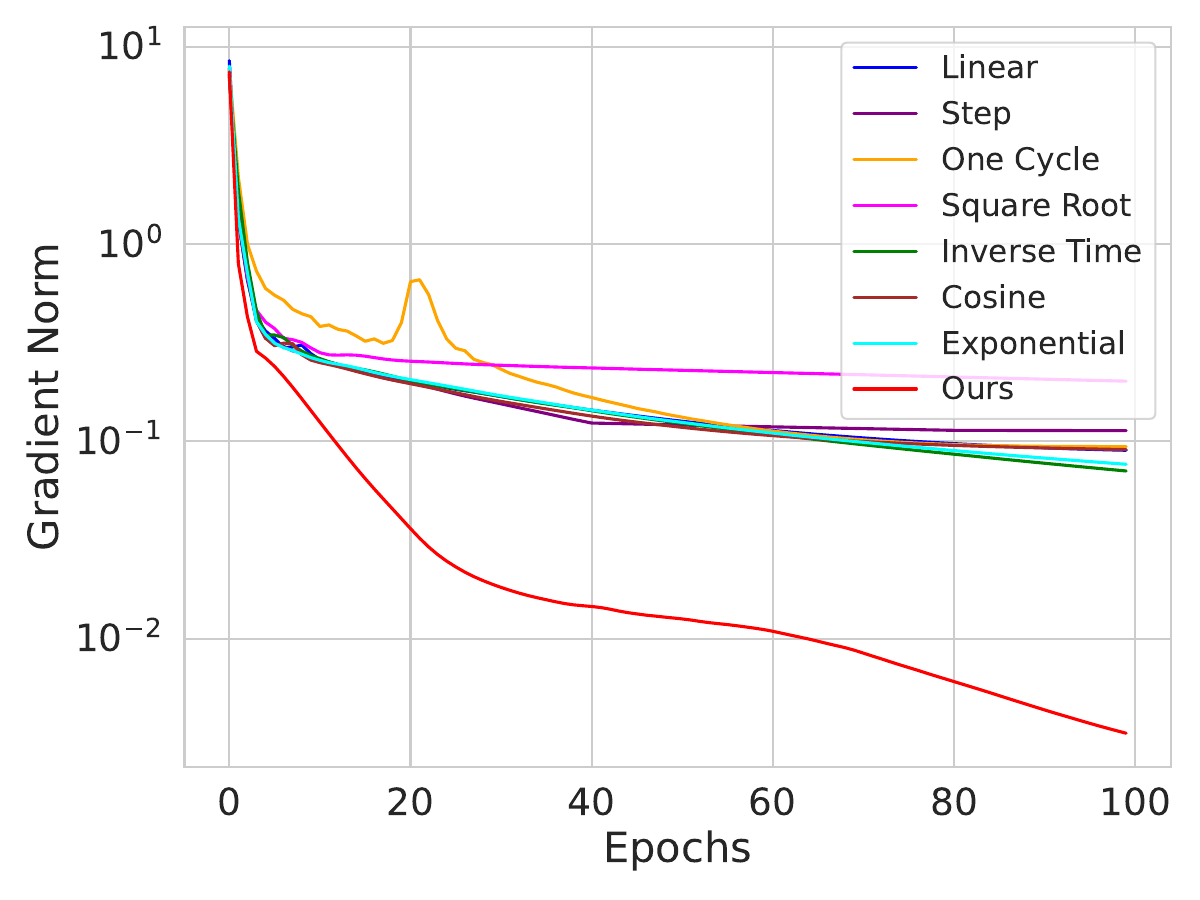}
    \caption{Gradient norm v/s Epochs}
  \end{subfigure}
  \hfill
  \begin{subfigure}[b]{0.31\textwidth}
    \includegraphics[width=\textwidth]{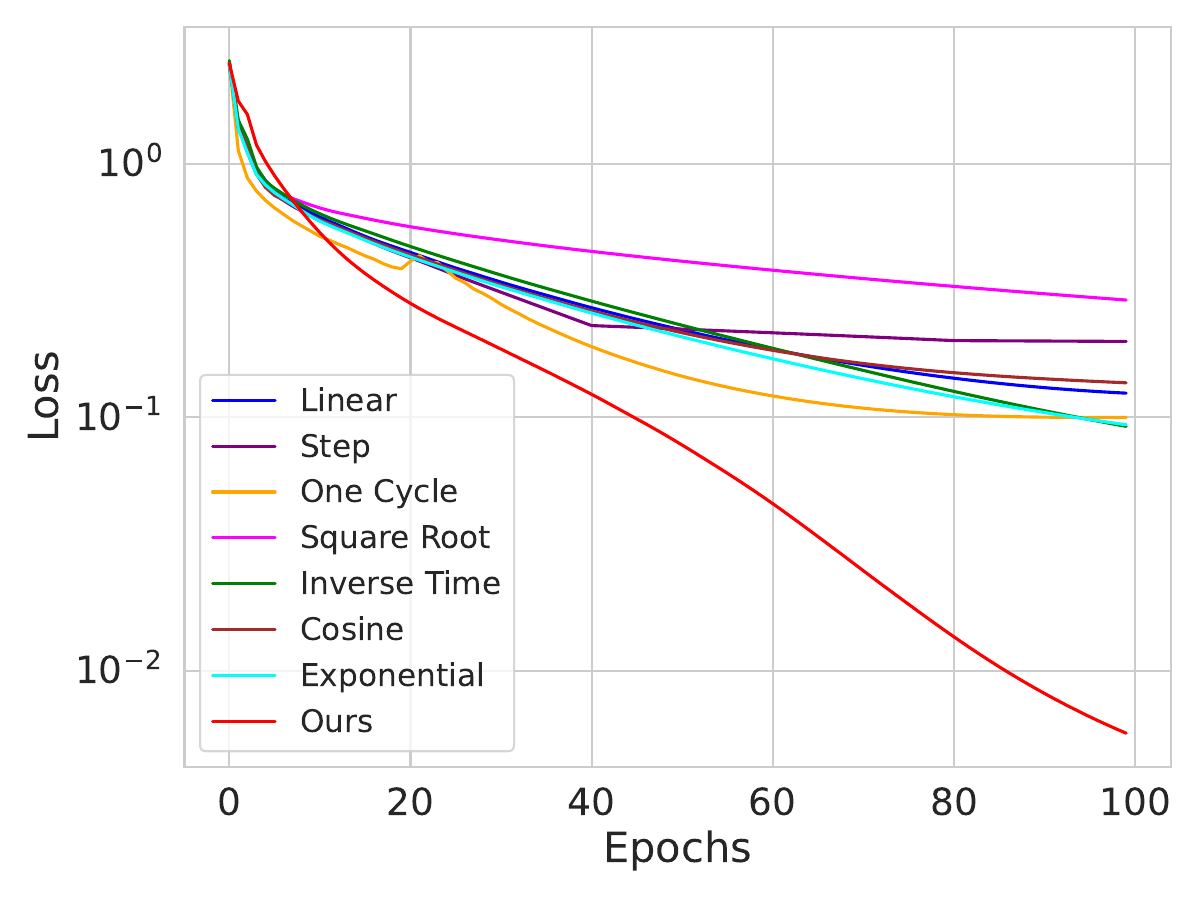}
    \caption{Training loss v/s Epochs}
  \end{subfigure}
  \hfill
  \begin{subfigure}[b]{0.31\textwidth}
    \includegraphics[width=\textwidth]{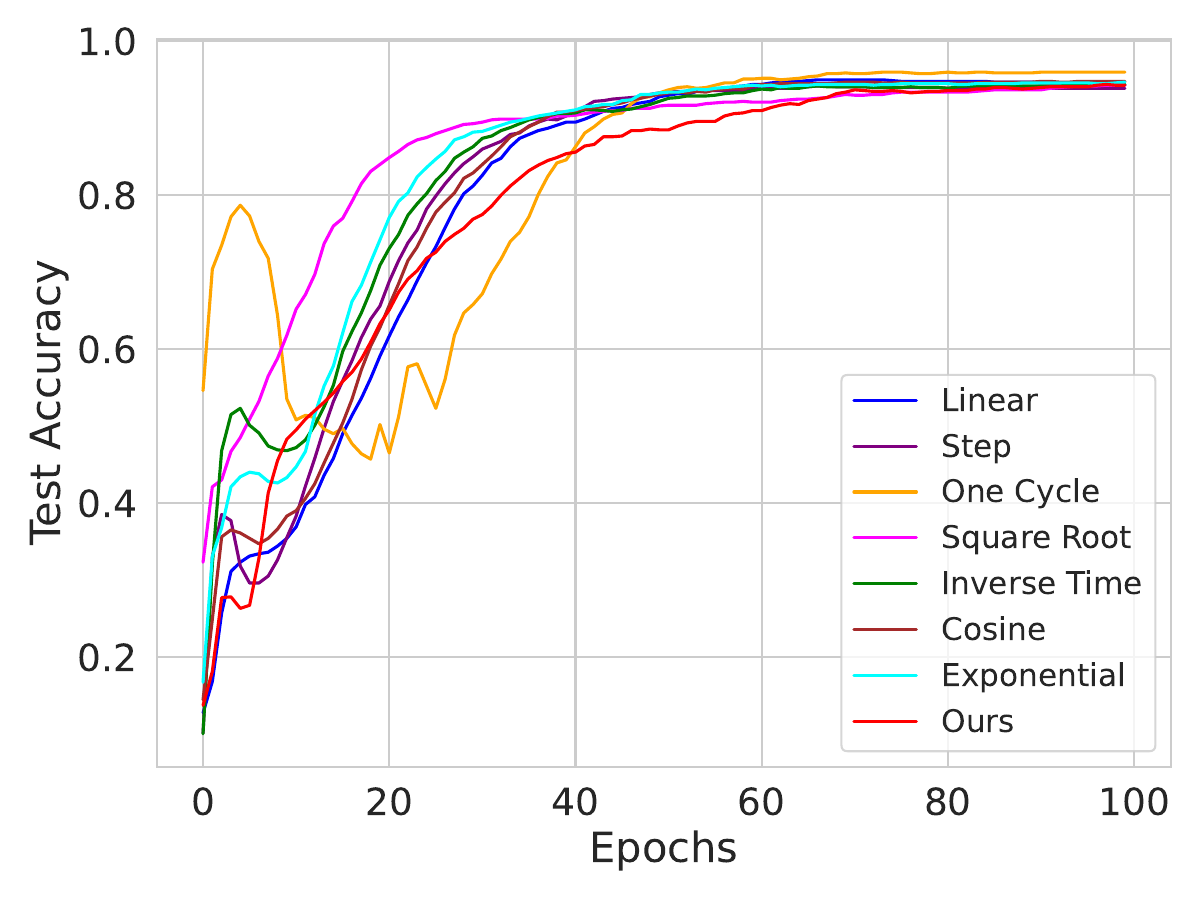}
    \caption{Validation acc. v/s Epochs}
  \end{subfigure}
  \caption{Full-batch experiments on a 3 layer network with 1000 nodes in each layer, trained on MNIST.}
\label{fig:3_1000}
\end{figure}
\\
\textbf{(2) Comparison against various fixed learning rates}: We tested our learning rate against several other fixed learning rates, $\{10^{-2}, 10^{-3}, 10^{-3}, 10^{-4}, 10^{-5}\}$. We wanted to demonstrate that instead of searching extensively for the best learning rate to train a model, one could just use our learning rate and achieve decent results.

\begin{figure}[htbp]
  \centering
  \begin{subfigure}[b]{0.31\textwidth}
    \includegraphics[width=\textwidth]{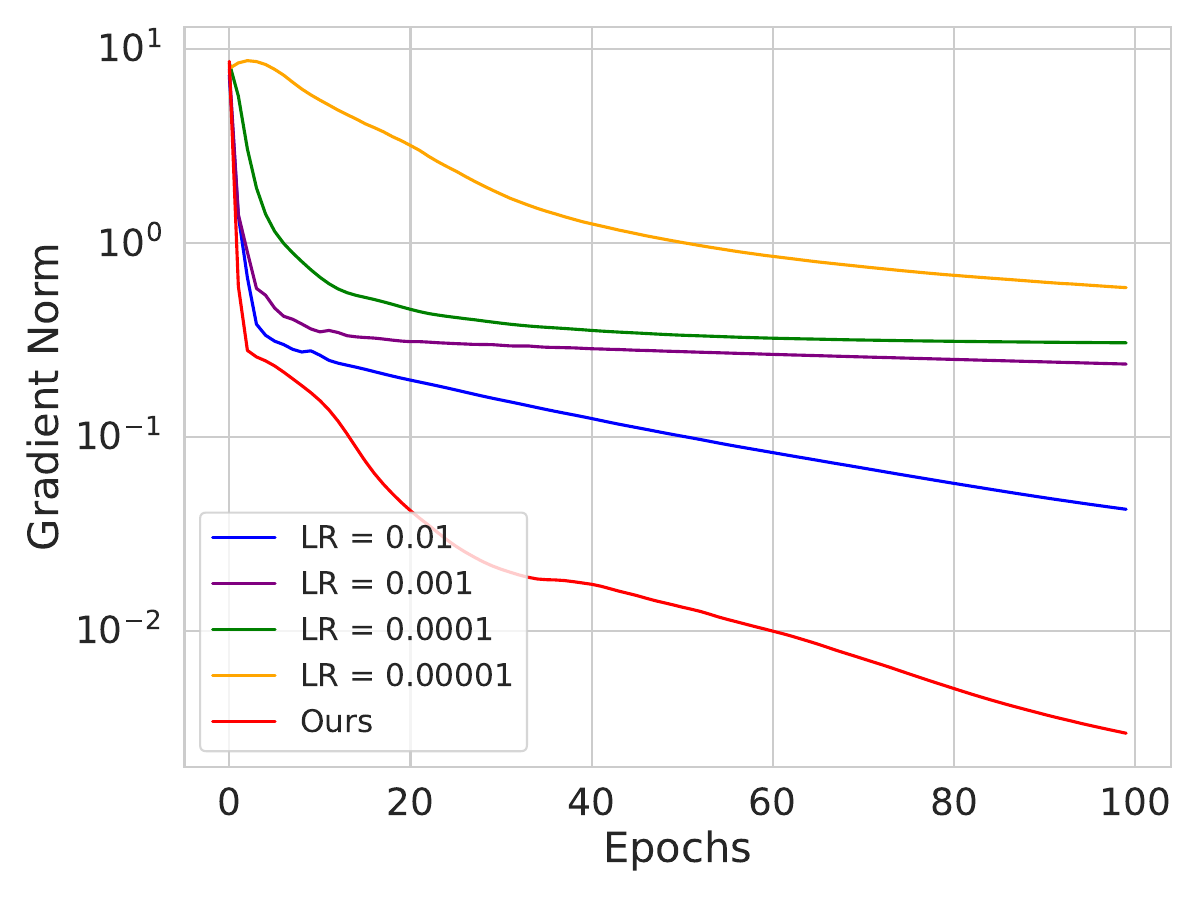}
    \caption{Gradient norm v/s Epochs}
  \end{subfigure}
  \hfill
  \begin{subfigure}[b]{0.31\textwidth}
    \includegraphics[width=\textwidth]{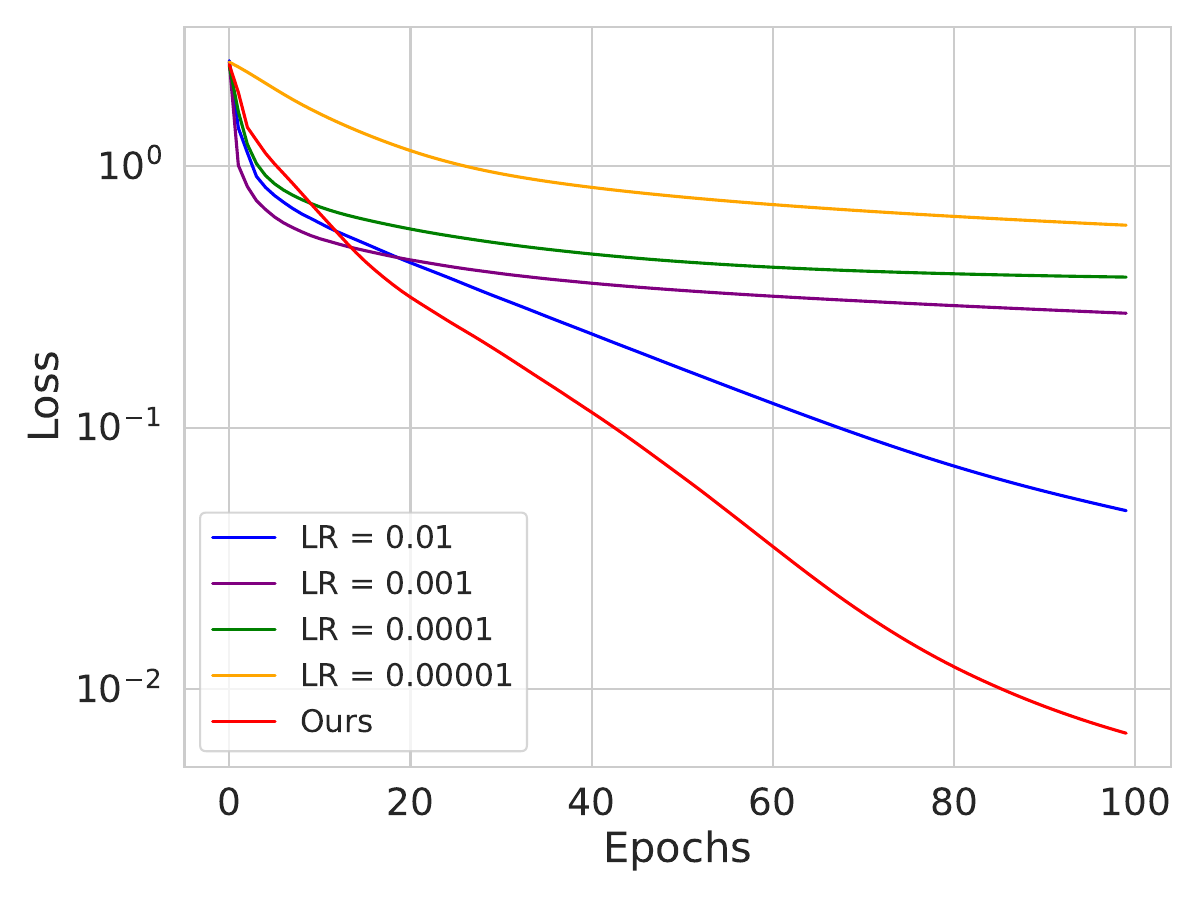}
    \caption{Training loss v/s Epochs}
  \end{subfigure}
  \hfill
  \begin{subfigure}[b]{0.31\textwidth}
    \includegraphics[width=\textwidth]{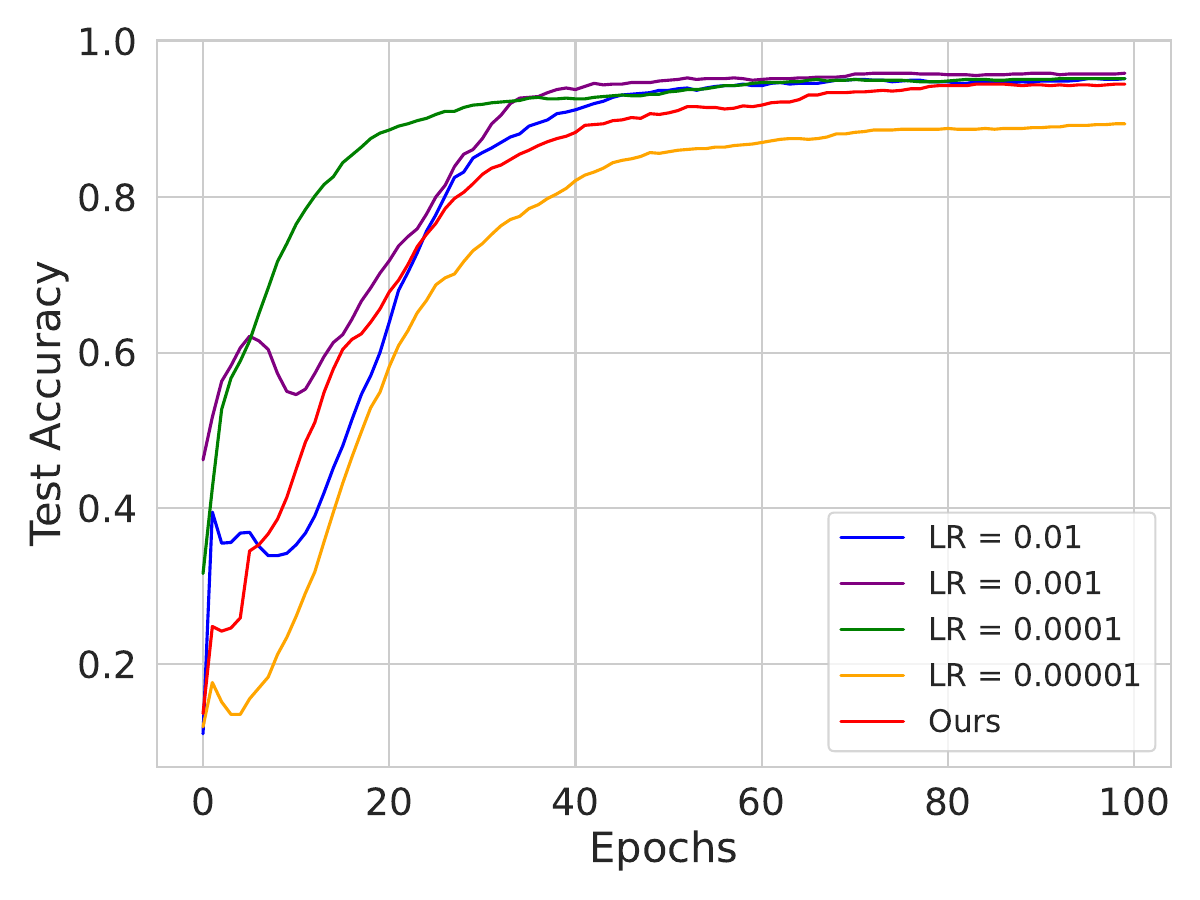}
    \caption{Validation acc. v/s Epochs}
  \end{subfigure}
  \caption{Full-batch experiments on a 3 layer network with 1000 nodes in each layer, trained on MNIST.}
\label{fig:3_1000_step}
\end{figure}

Looking at Figures~\ref{fig:3_1000} and~\ref{fig:3_1000_step}, it is clear that our suggested learning rate helps decrease the gradient norm of loss more aggressively towards zero as compared to other schedulers and non-increasing learning rates, supporting our theory. Even with the accumulation of a few past gradients, deterministic Adam proves ineffective in mitigating the rapid decay of the learning rate. This is evident from Figure~\ref{fig:3_1000}, where one can observe a notable difference in gradient norms after 100 epochs between the best-performing scheduler (Exponential in this case) and our learning rate is in order of $10^{-2}$. 

Additionally, these figures demonstrate that our learning rate not only reaches any stationary point but tends to approach a local minimum or its neighbourhood where the model achieves good validation accuracy. The plot of loss versus epoch indicates that our proposed learning rate leads to faster convergence in practice as compared to other schedulers. However, it's worth noting that achieving faster convergence in general is still a challenge. Nevertheless, using this learning rate results in quicker convergence than other schedulers and a range of fixed learning rates.

\subsection{Comparing Performances in Mini-Batch Setting}
\label{sec:mb_exp}
In this section, we replicate the same set of experiments in mini-batch setting. We choose 5,000 as batch size for both the CIFAR10 and MNIST datasets.

\begin{figure}[htbp]
  \centering
  \begin{subfigure}[b]{0.31\textwidth}
    \includegraphics[width=\textwidth]{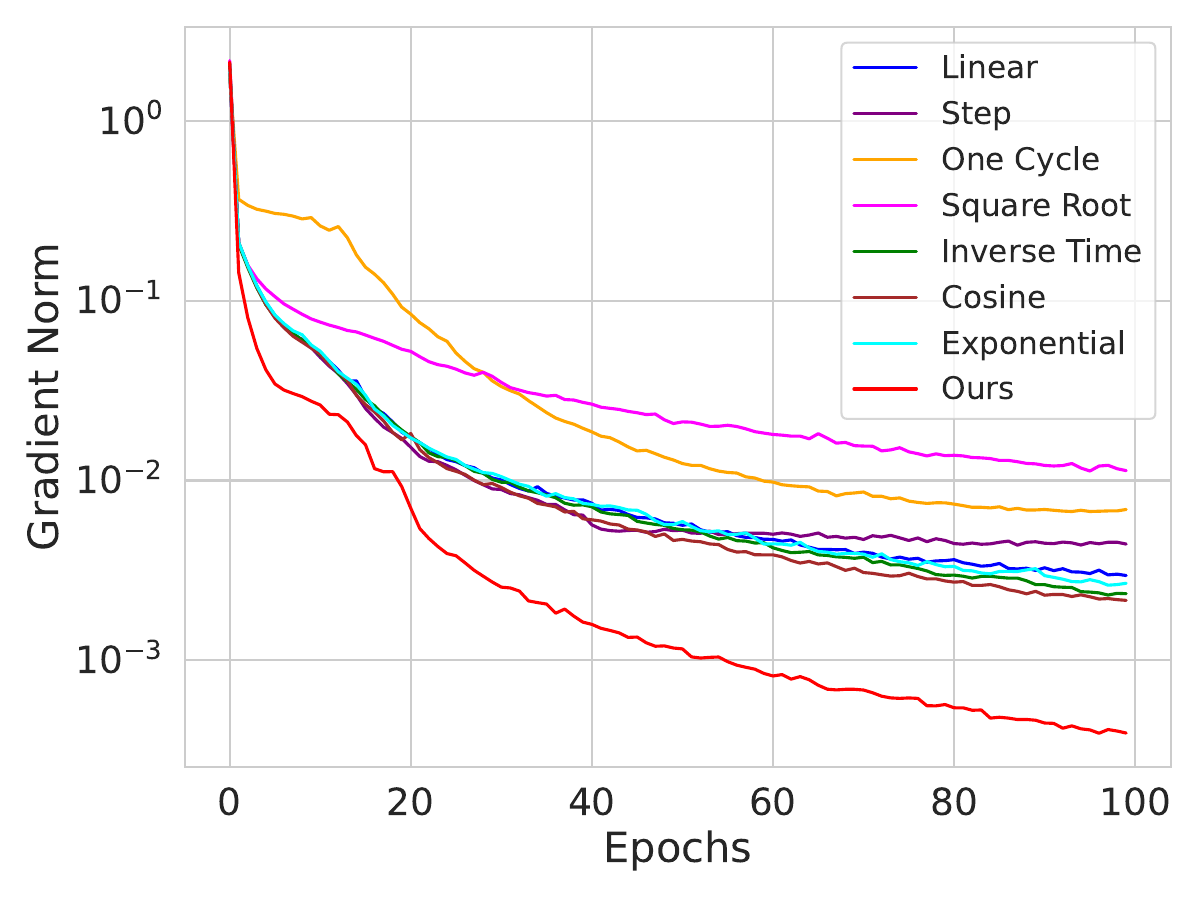}
    \caption{Gradient norm v/s Epochs}
  \end{subfigure}
  \hfill
  \begin{subfigure}[b]{0.31\textwidth}
    \includegraphics[width=\textwidth]{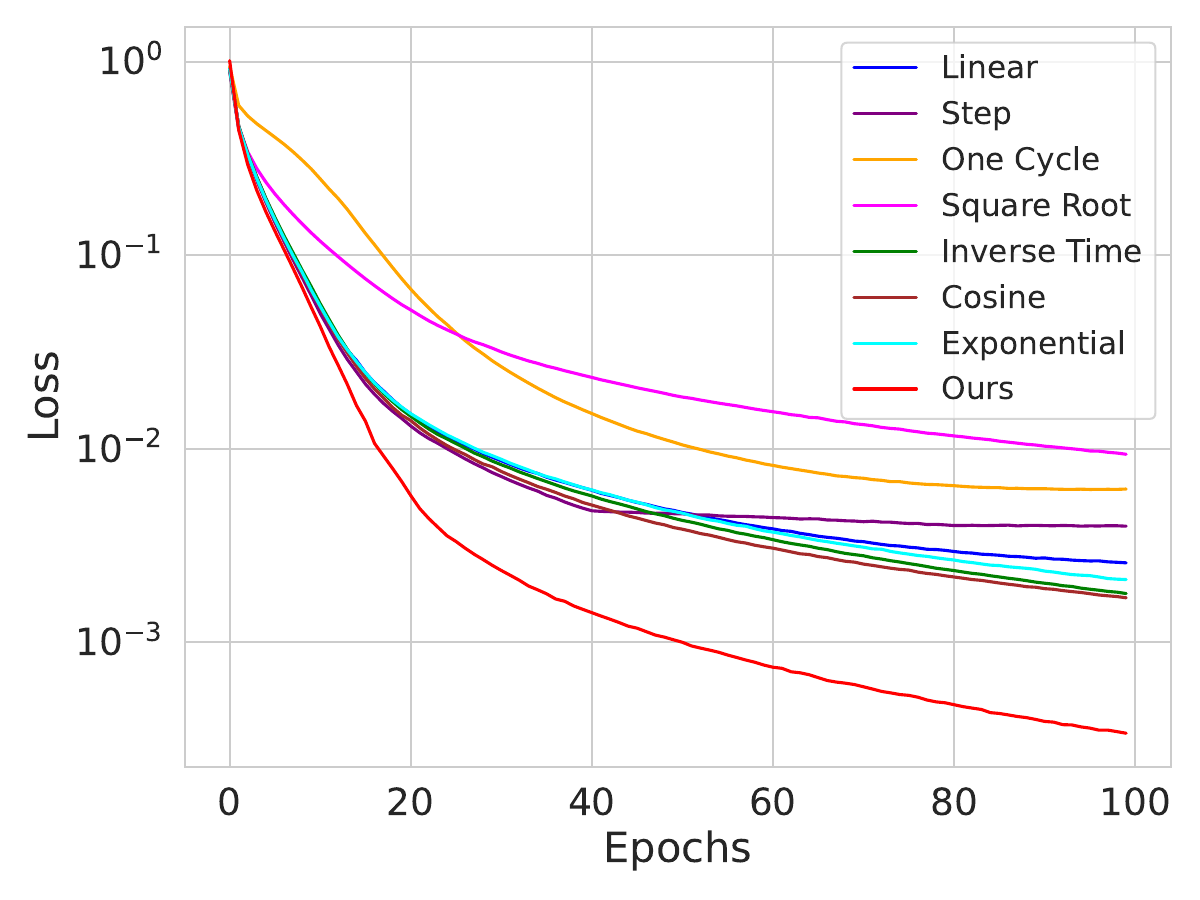}
    \caption{Training loss v/s Epochs}
  \end{subfigure}
  \hfill
  \begin{subfigure}[b]{0.31\textwidth}
    \includegraphics[width=\textwidth]{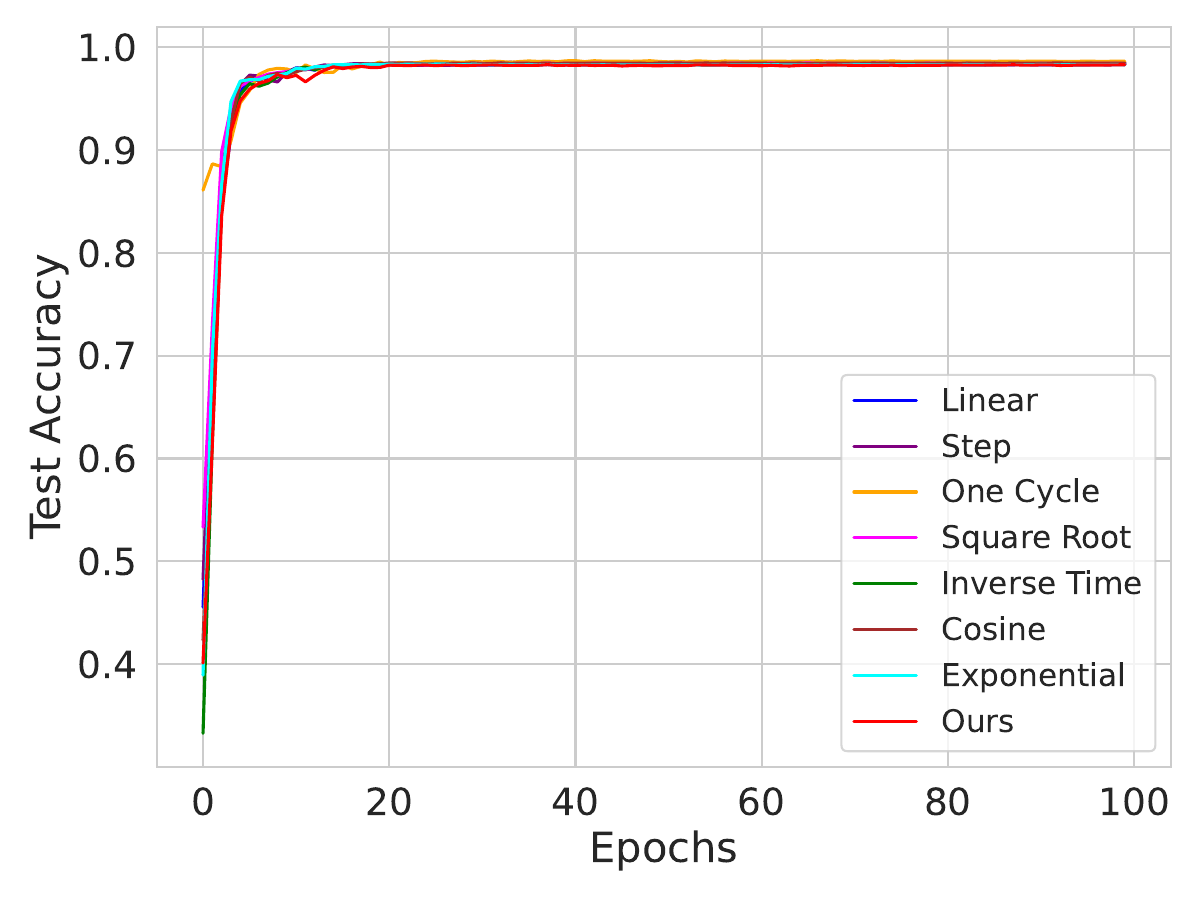}
    \caption{Validation acc. v/s Epochs}
  \end{subfigure}
  \caption{\textbf{Comparision with schedulers}: Mini-batch experiments on a 3 layer network with 1000 nodes in each layer, trained on MNIST.}
\label{fig:3_1000_mini}
\end{figure}
\begin{figure}[htbp]
  \centering
  \begin{subfigure}[b]{0.31\textwidth}
    \includegraphics[width=\textwidth]{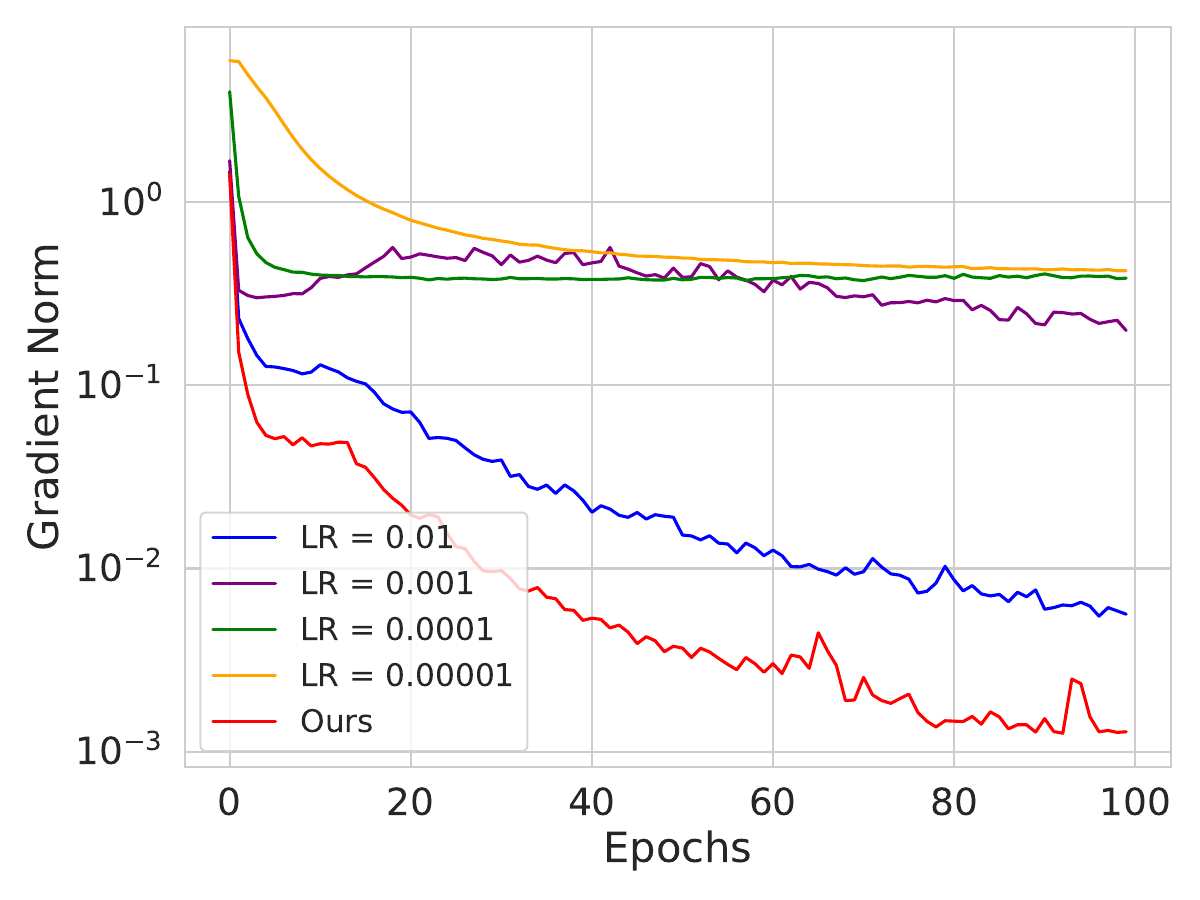}
    \caption{Gradient norm v/s Epochs}
  \end{subfigure}
  \hfill
  \begin{subfigure}[b]{0.31\textwidth}
    \includegraphics[width=\textwidth]{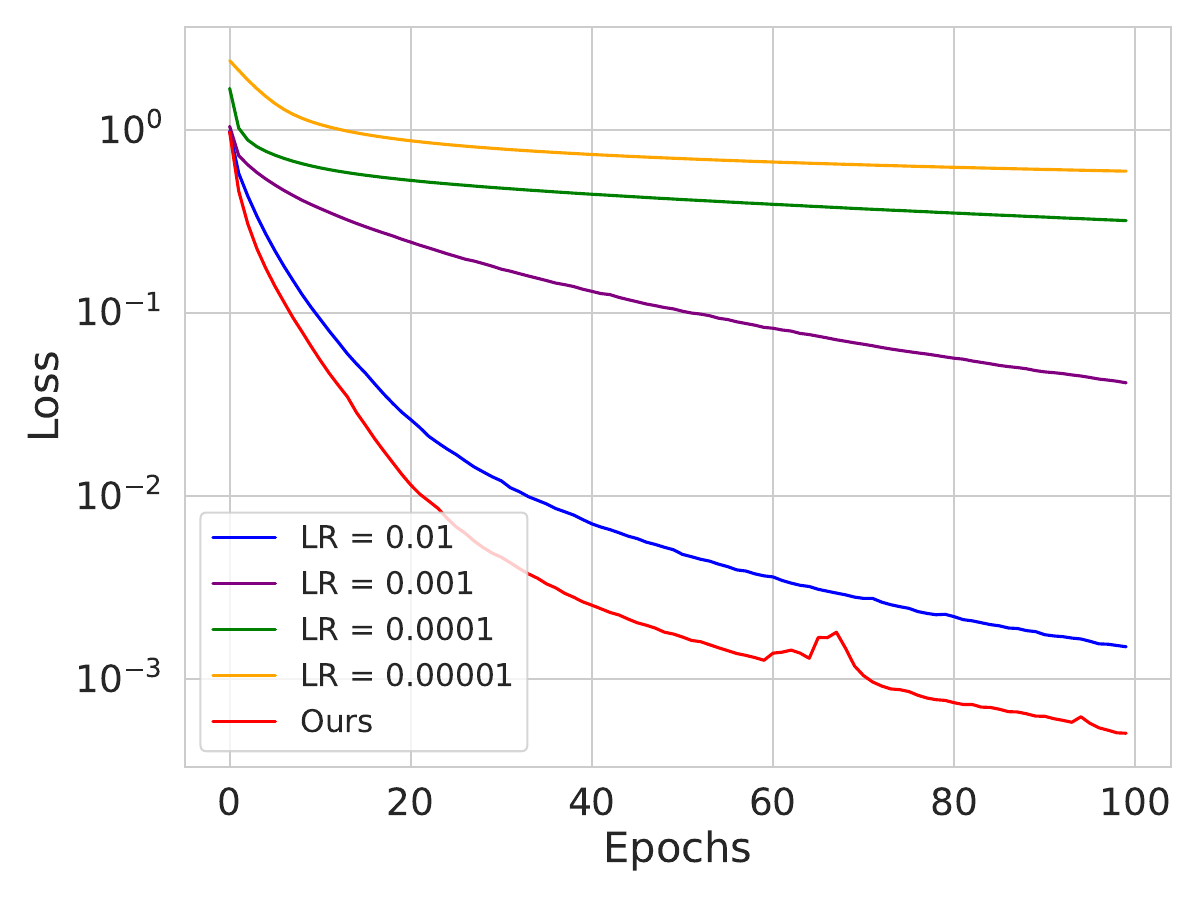}
    \caption{Training loss v/s Epochs}
  \end{subfigure}
  \hfill
  \begin{subfigure}[b]{0.31\textwidth}
    \includegraphics[width=\textwidth]{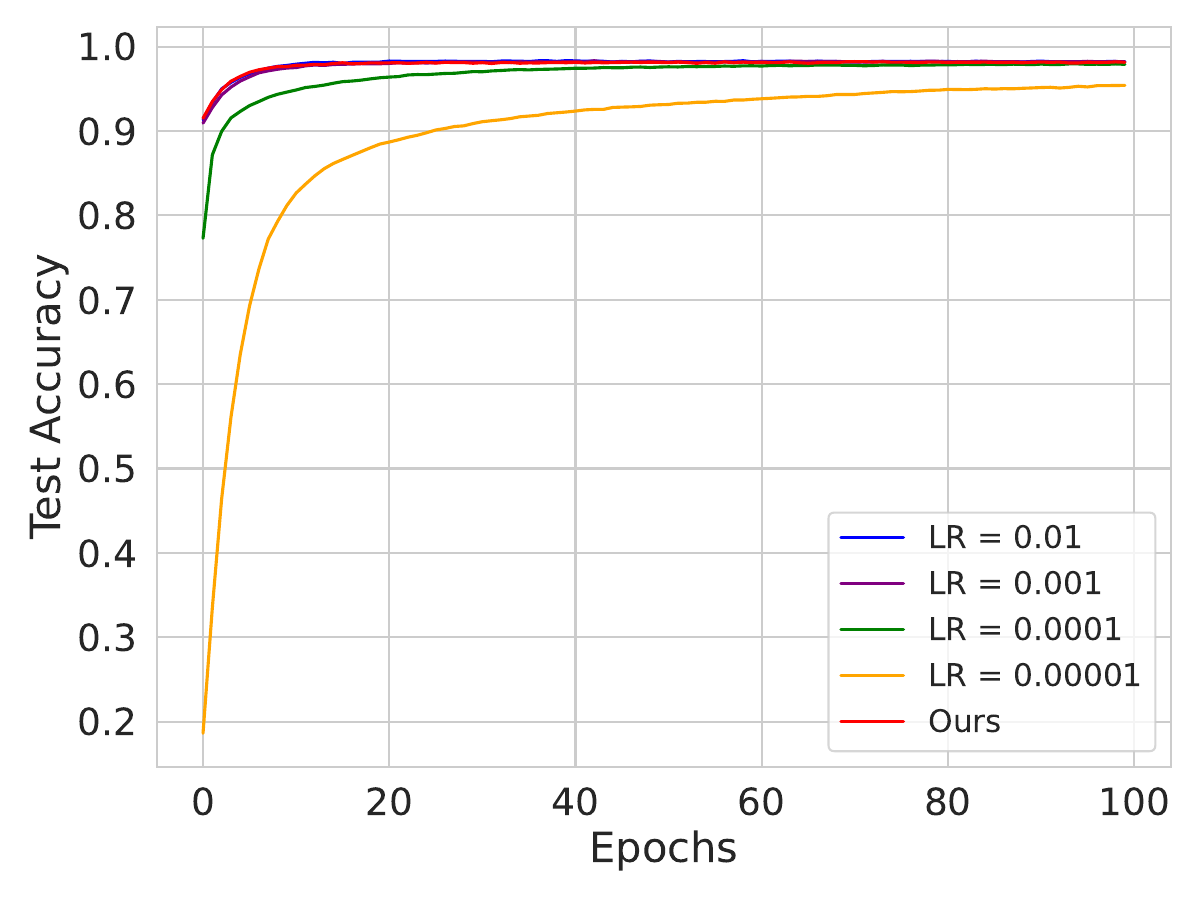}
    \caption{Validation acc. v/s Epochs}
  \end{subfigure}
  \caption{\textbf{Comparision with constant step sizes}: Mini-batch experiments on a single layer network with 1000 nodes in each layer, trained on MNIST.}
\label{fig:3_1000_mini_step}
\end{figure}
We observed that even in the mini-batch setting, our learning rate effectively reduces the gradient norm of loss more aggressively compared to other schedulers and constant step sizes. This trend aligns with the behaviour observed in the full-batch setting across all plots, including gradient norm, training loss, and validation accuracy.
\\
\textbf{Why such a large batch size is chosen?} Despite the non-convergence issue demonstrated by \emph{Reddi et al.}~\cite{sashank2018convergence}, it does not rule out convergence if the minibatch size increases over time, thereby reducing the variance of stochastic gradients. Increasing minibatch size has been shown to aid convergence in some optimisation algorithms not based on exponential moving average (EMA) methods~\cite{bernstein2018signsgd, hazan2015beyond}.

In all our minibatch experiments (including those in the appendix), we choose a large batch size because our theoretical analysis for the minibatch setting, as described in \textbf{Theorem}~\ref{theorem: 2}, assumes that the loss is $K-Lipschitz$ and is represented by $\mathcal{L} = \sum_{j=1}^{m}\mathcal{L}_{j}$. Our proposed learning rate relies on the Lipschitz constant of the loss function. To accurately estimate this constant, it is crucial to estimate the stochastic gradients with low variance, as the Lipschitz constant is essentially the supremum of the true gradient norm over the network weights (\textbf{Theorem}~\ref{theorem: 3}). This approximation of gradients affects the estimation of the Lipschitz constant. Therefore, to make our step size effective in the minibatch setting, it is crucial to estimate the gradients with low variance, which in turn provides the correct estimation of the Lipschitz constant. Our empirical analysis suggests that increasing batch size will lead to convergence, as increasing batch size decreases variance. We now show results with our learning rate with increasing minibatch size.
\begin{figure}[htbp]
  \centering
  \begin{subfigure}[b]{0.31\textwidth}
    \includegraphics[width=\textwidth]{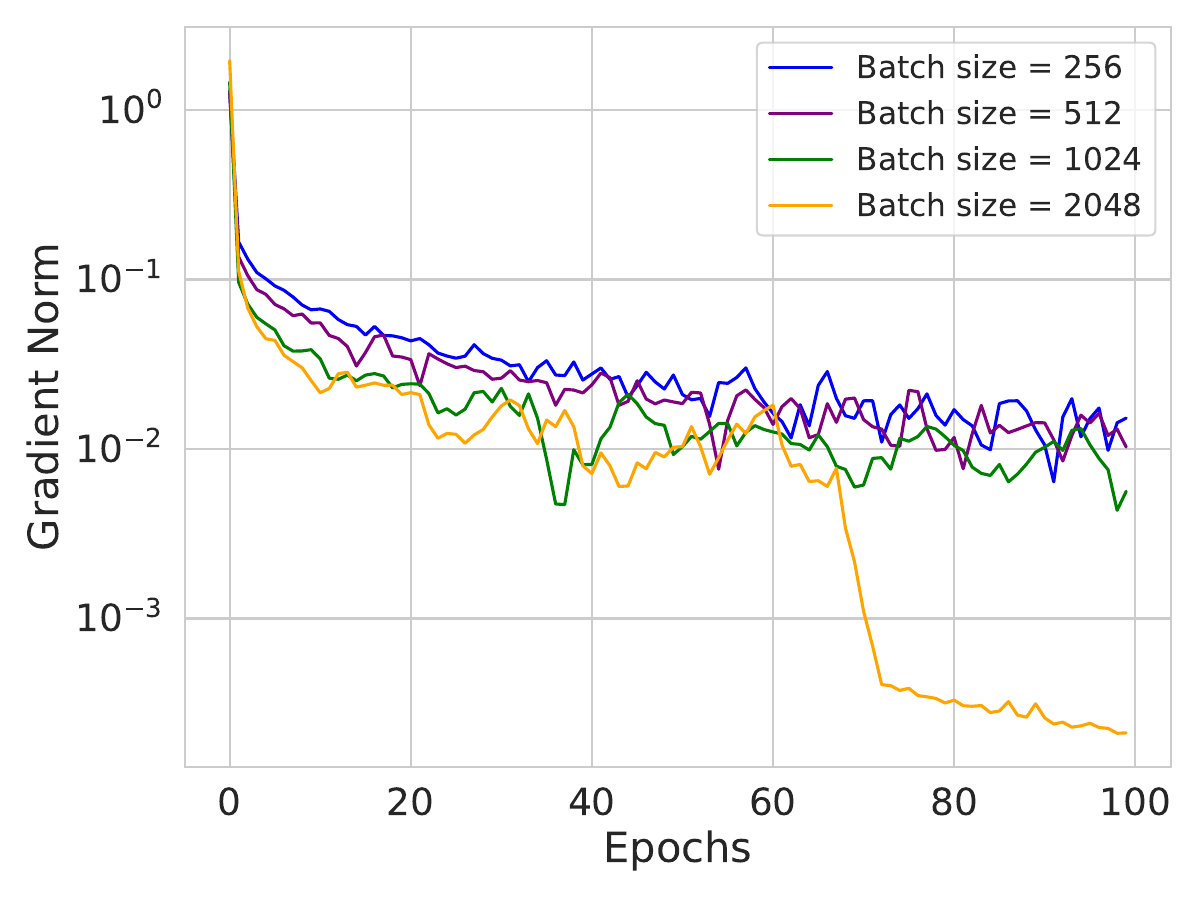}
  \end{subfigure}
  \hfill
  \begin{subfigure}[b]{0.31\textwidth}
    \includegraphics[width=\textwidth]{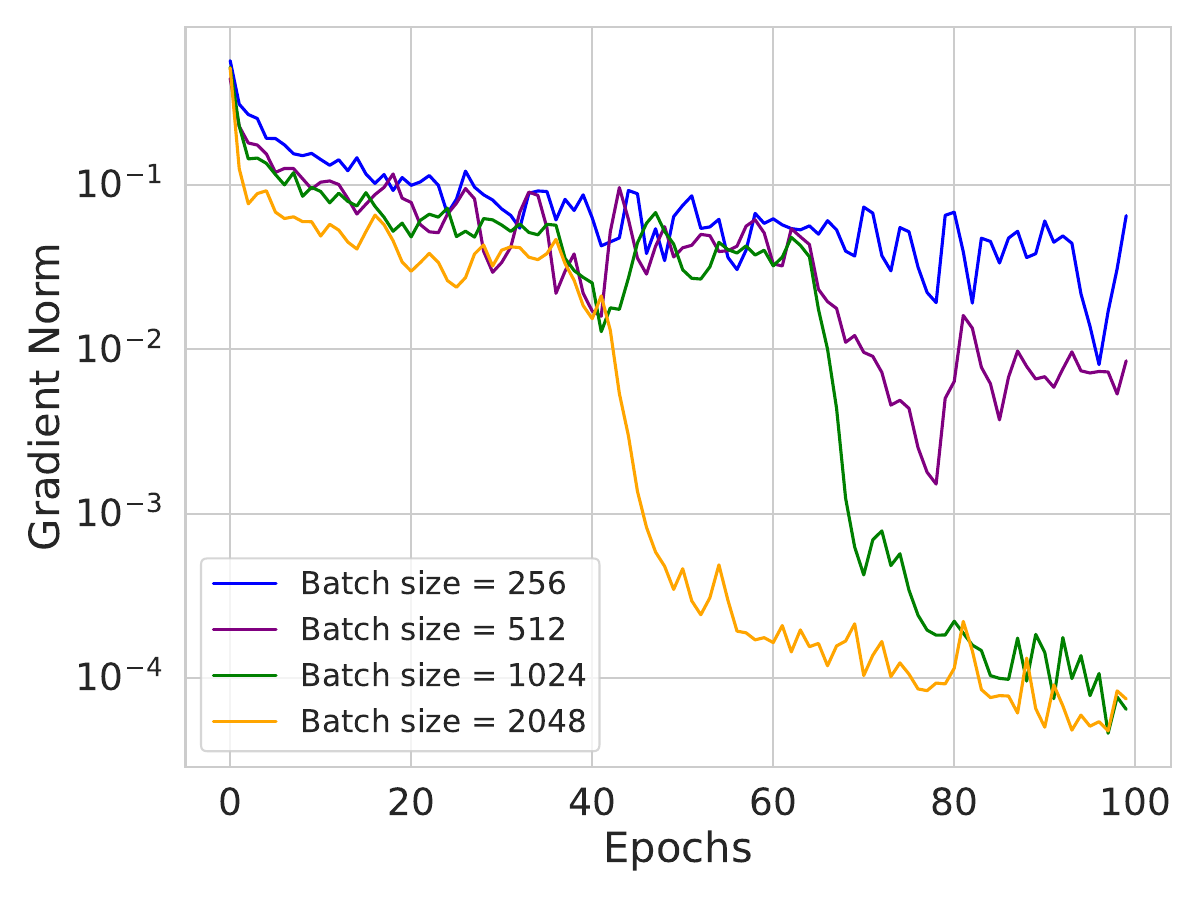}
  \end{subfigure}
  \hfill
  \begin{subfigure}[b]{0.31\textwidth}
    \includegraphics[width=\textwidth]{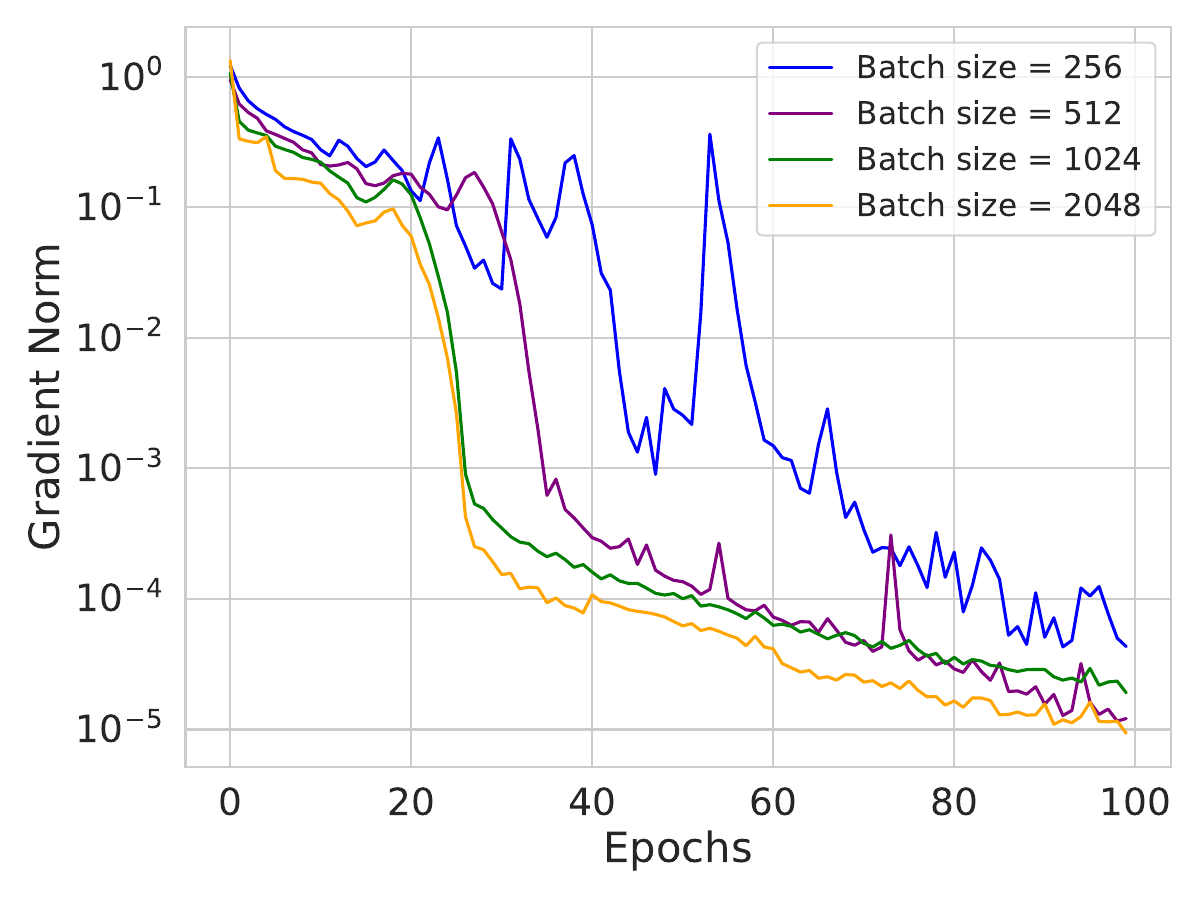}
  \end{subfigure}
  \caption{Increasing batch size experiments on \emph{Left}: a linear network with five layers and 3000 nodes in each layer, trained on MNIST, \emph{Middle}: LeNet architecture trained on MNIST and \emph{Right}: VGG-9 architecture trained on CIFAR-10.}
\label{fig:batch_size}
\end{figure}
\subsection{Effect of Initialization on Our Learning Rate}
\label{sec:dis_exp}
Our learning rate $\alpha = \sqrt{\frac{2\mathcal{L}(\textbf{w}_0)}{\hat{L}.T}}$ is dependent on the initial value of the loss $\mathcal{L}(\textbf{w}_0)$ and the estimated Lipchitz constant $\hat{L}$. Therefore, in this section, we empirically demonstrated that our learning is not sensitive to various types of initialization strategies in both full-batch and mini-batch setups. 

We employed a diverse range of weight initialization strategies, including sampling from various distributions and initialization techniques. Specifically, we randomly sampled weights from different distributions and estimated the Lipschitz constant using the same. Subsequently, we visualized the gradient norm of the networks using our learning rate under these weight initialization strategies. These strategies include sampling from the normal distribution and uniform distribution, as well as employing initialization methods such as orthogonal initialization~\cite{saxe2013exact}, Xavier normal initialization~\cite{glorot2010understanding}, and Kaiming uniform initialization~\cite{he2015delving}. The details of the parameters involved in each initialization method is differed to the appendix due to space constraints.
\begin{figure}
  \centering
  \begin{subfigure}[b]{0.31\textwidth}
    \includegraphics[width=\textwidth]{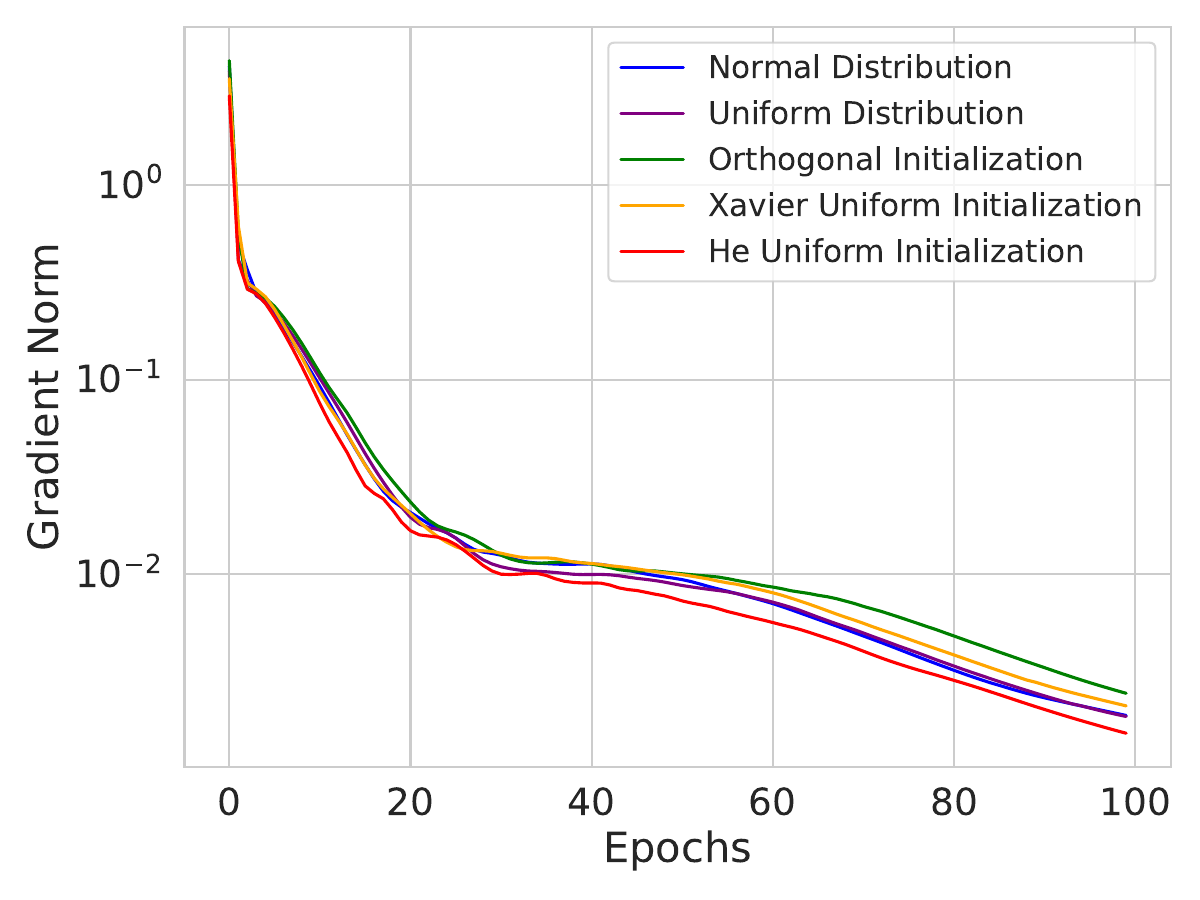}
  \end{subfigure}
  \hfill
  \begin{subfigure}[b]{0.31\textwidth}
    \includegraphics[width=\textwidth]{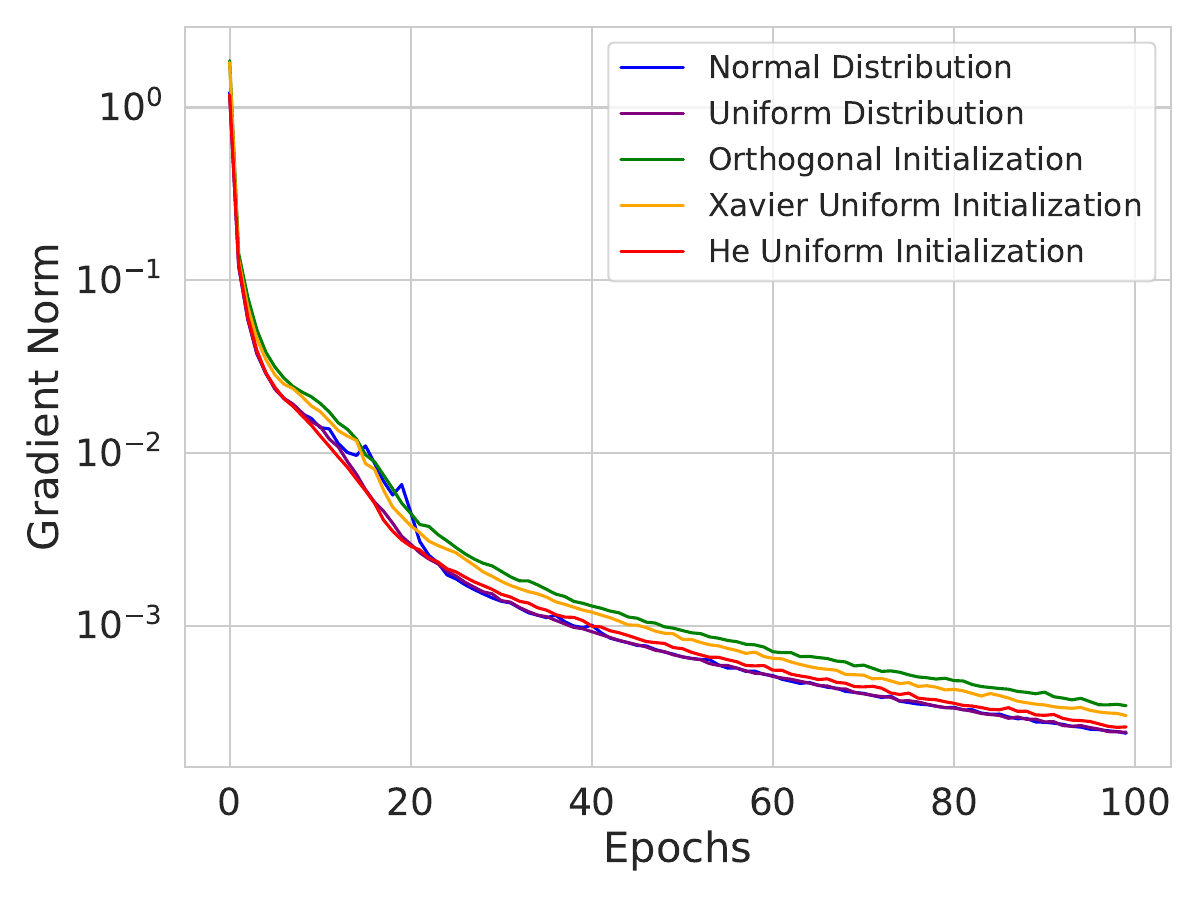}
  \end{subfigure}
  \hfill
  \begin{subfigure}[b]{0.31\textwidth}
    \includegraphics[width=\textwidth]{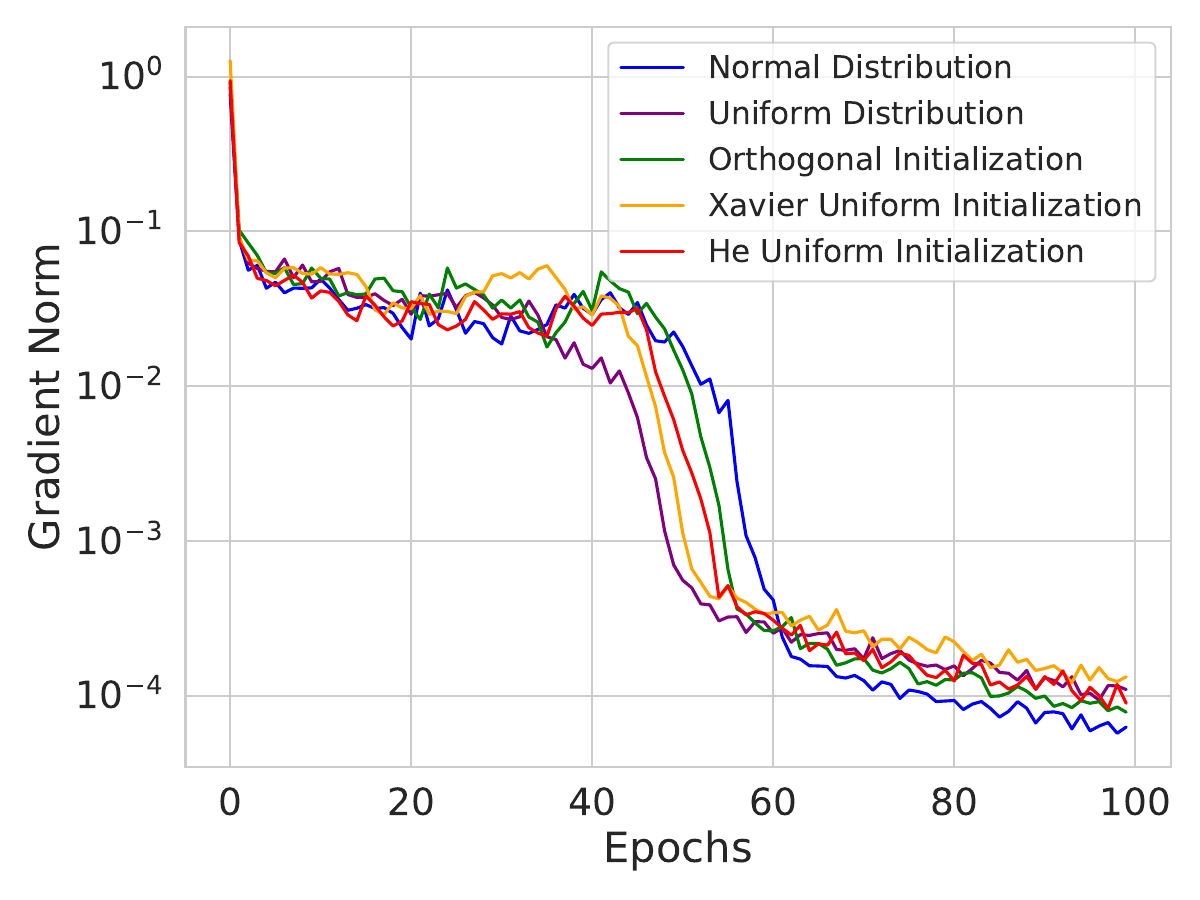}
  \end{subfigure}
  \caption{Effect of network initialization: \emph{Left}: a linear network with 3 layers and 1000 nodes in each layer with full-batch setting \emph{Middle}: a linear network with 3 layers and 3000 nodes in each layer with mini-batch size of 5,000 and \emph{Right}: LeNet architecture with mini-batch size of 5,000. All three networks are trained on MNIST.}
\label{fig:distribution}
\end{figure}

From Figure~\ref{fig:distribution}, it is apparent that our learning rate yields similar results across various initialization strategies in both full-batch and mini-batch setups across various architectures. Therefore, we can conclude that empirically, the performance of our learning rate is independent of the initialization method.

\subsection{LeNet on MNIST and VGG-9 on CIFAR-10}
\label{sec:cnn_exp}
To test whether these results might qualitatively hold for models, we trained an image classifier on CIFAR-10 and MNIST using VGG-9 and LeNet, respectively\footnote{Detailed architecture of VGG-9 and LeNet are given in Appendix due to space constraint.}. 
We use minibatches of size 5,000 for the LeNet and 2,500 for VGG-9. 

\begin{figure}
  \centering
  \begin{subfigure}[b]{0.31\textwidth}
    \includegraphics[width=\textwidth]{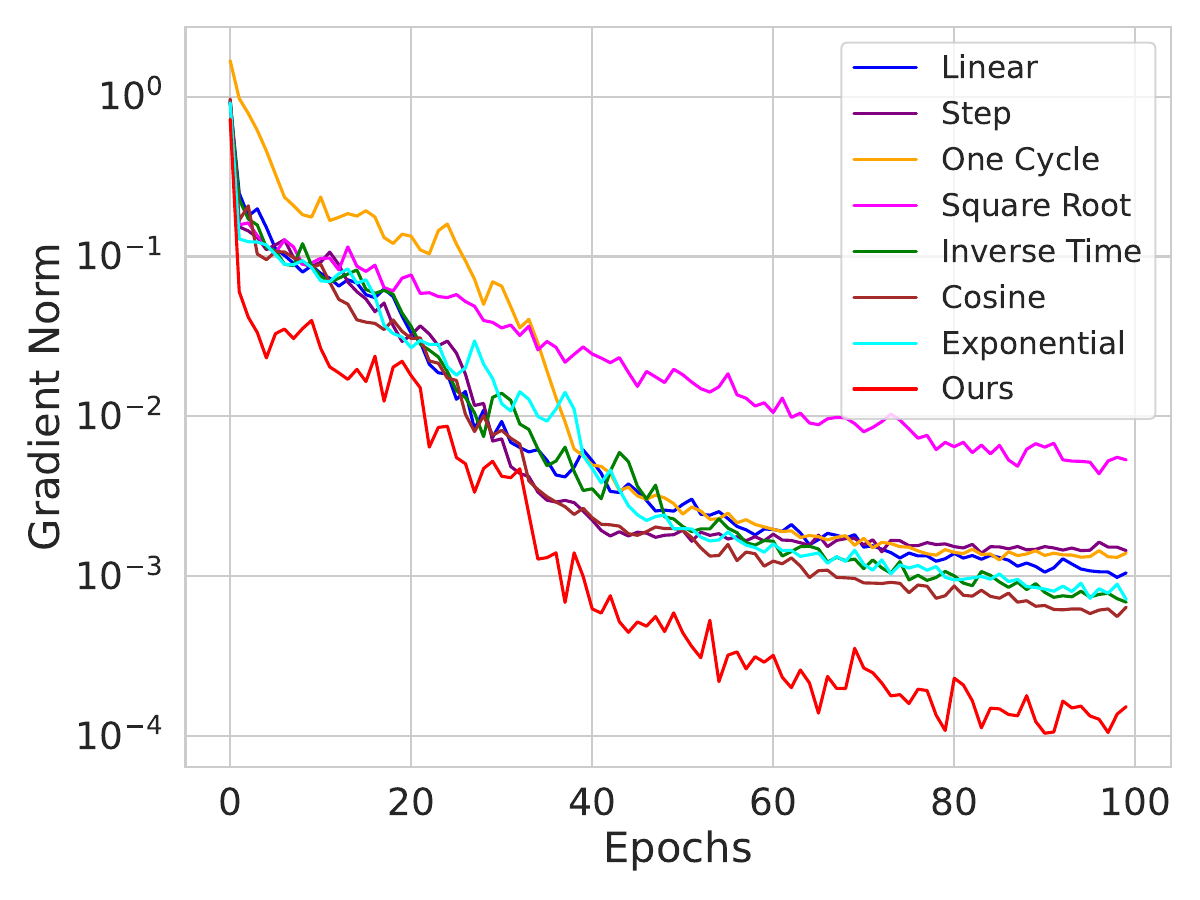}
  \end{subfigure}
  \hfill
  \begin{subfigure}[b]{0.31\textwidth}
    \includegraphics[width=\textwidth]{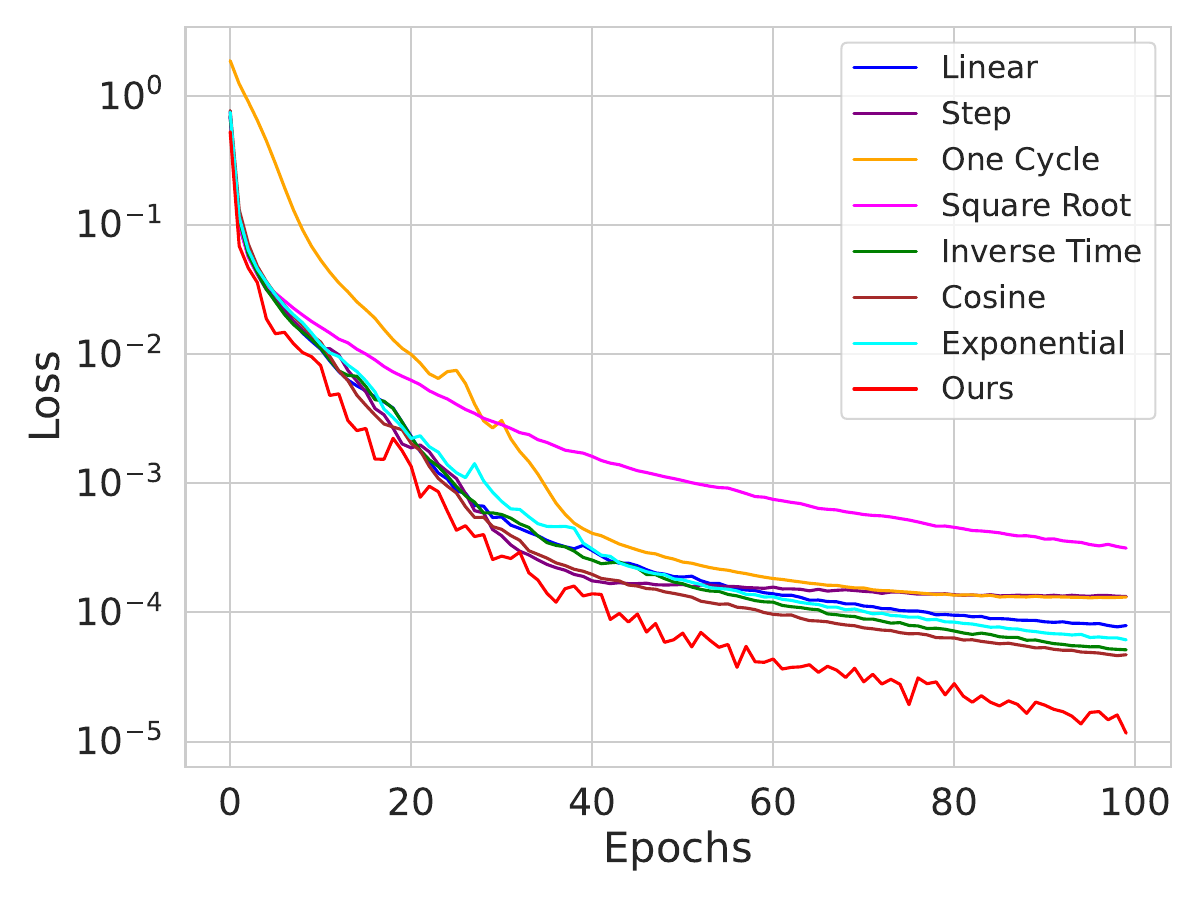}
  \end{subfigure}
  \hfill
  \begin{subfigure}[b]{0.31\textwidth}
    \includegraphics[width=\textwidth]{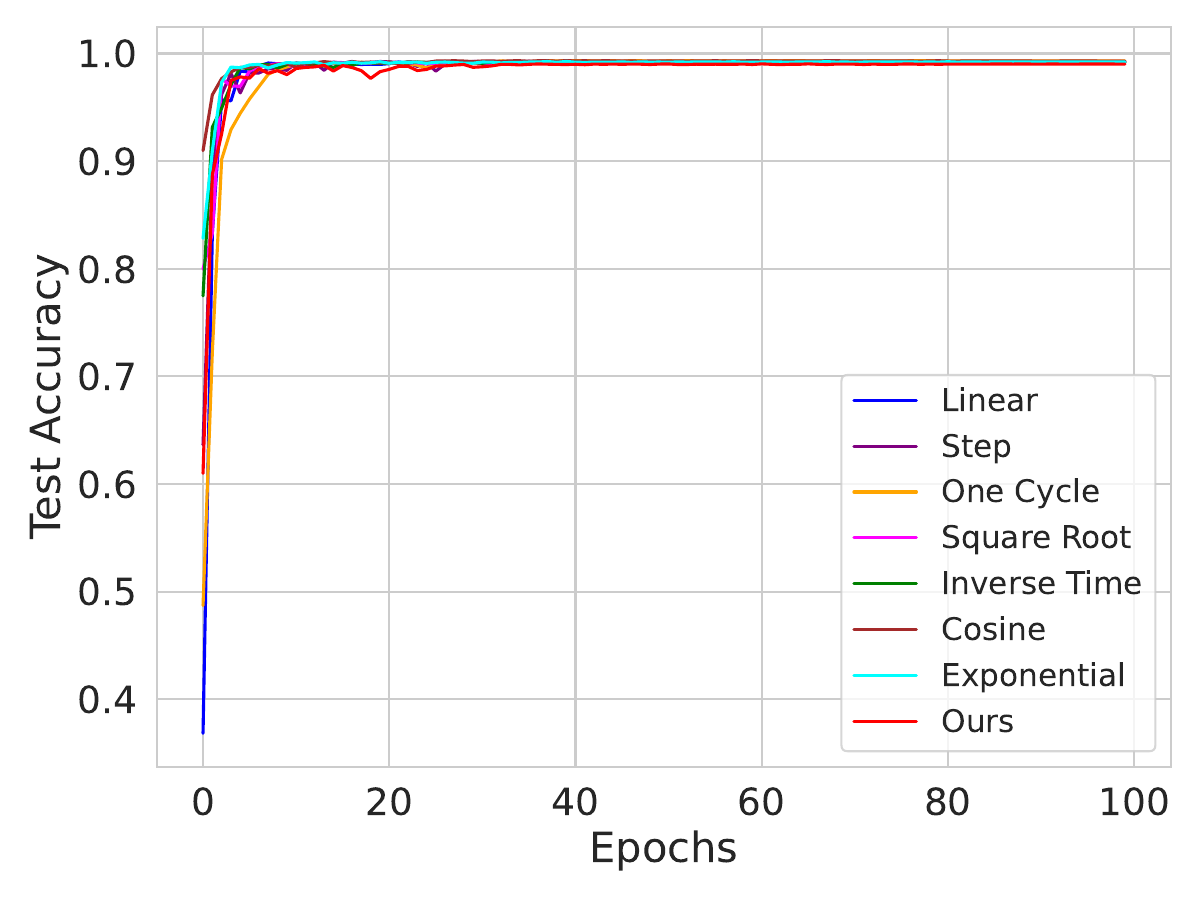}
  \end{subfigure}
  \caption{Mini-batch experiments on LeNet architecture with MNIST data.}
\label{fig:lenet}
\end{figure}
  
\begin{figure}
 \centering
  \begin{subfigure}[b]{0.31\textwidth}
    \includegraphics[width=\textwidth]{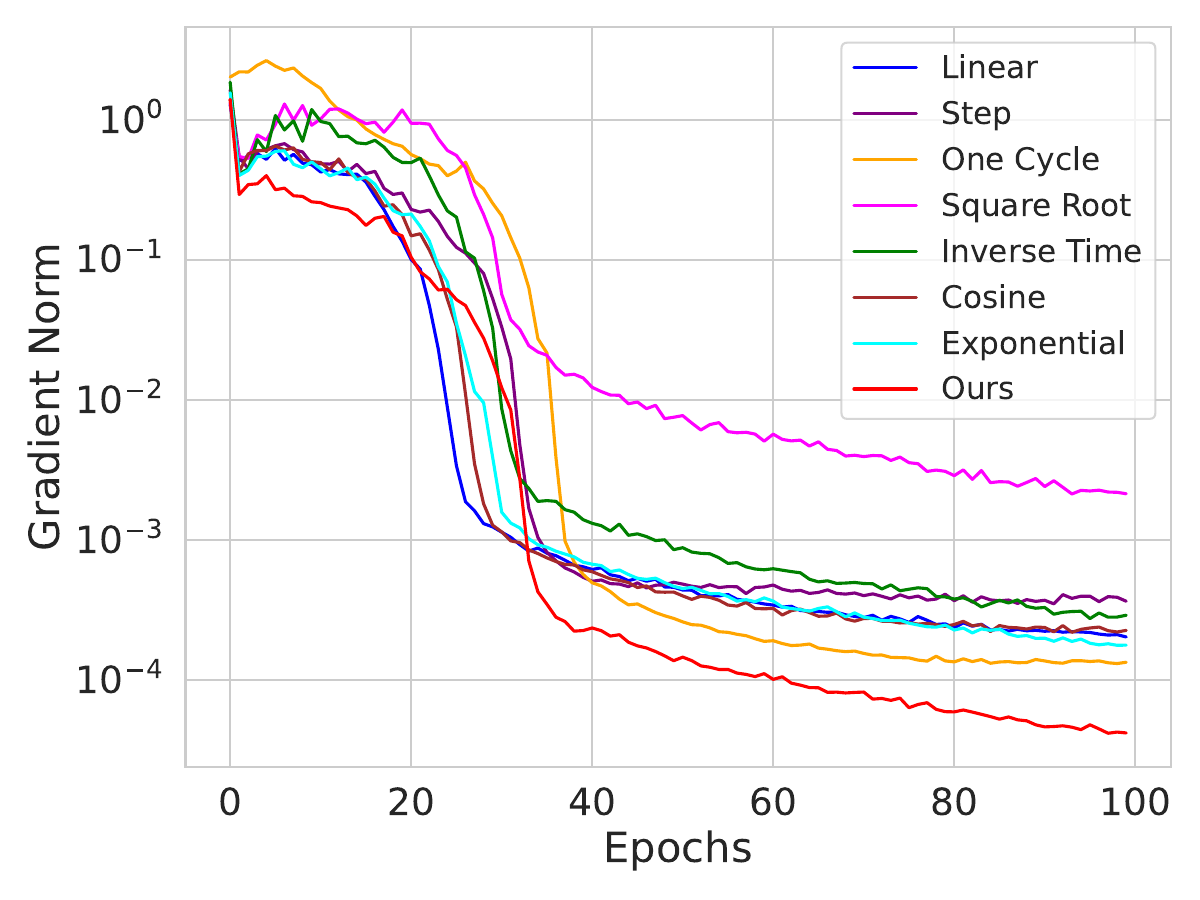}
  \end{subfigure}
  \hfill
  \begin{subfigure}[b]{0.31\textwidth}
    \includegraphics[width=\textwidth]{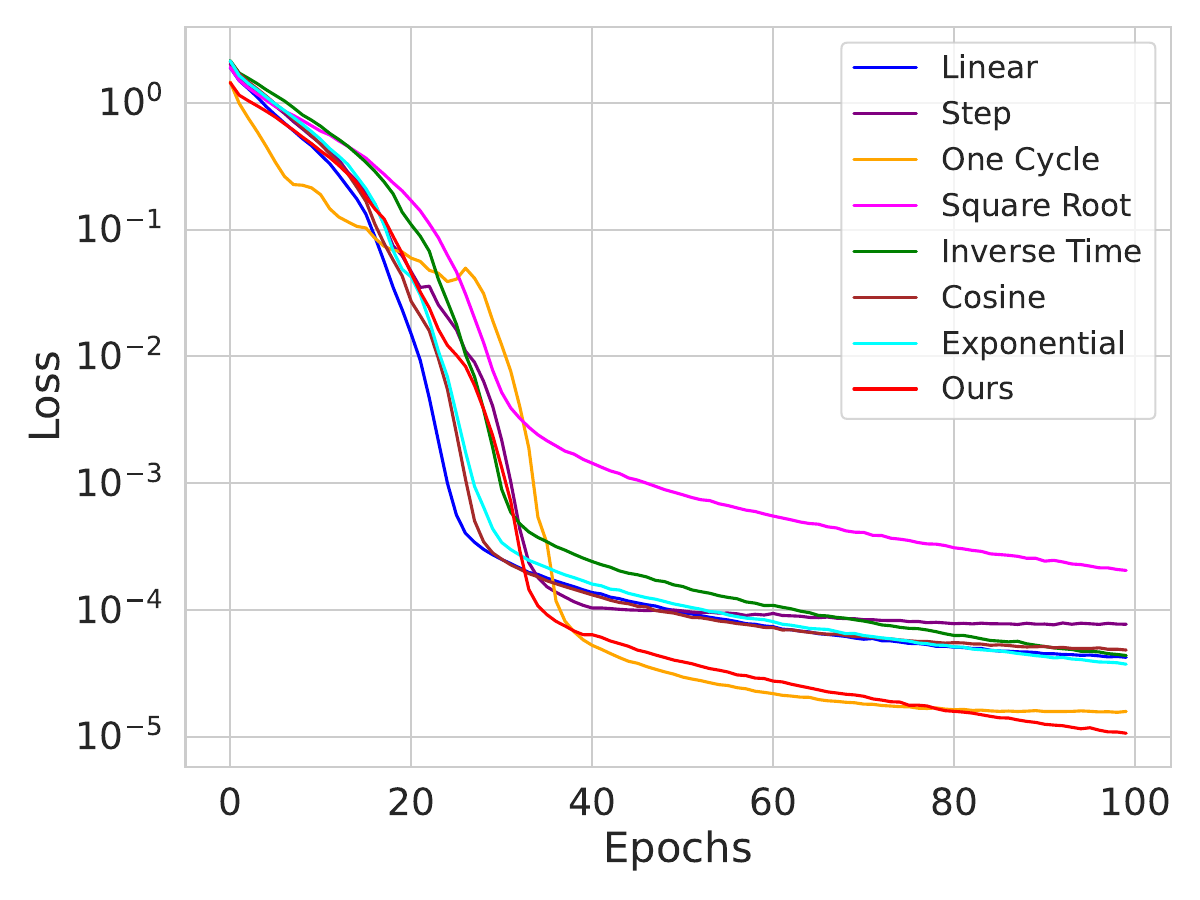}
  \end{subfigure}
  \hfill
  \begin{subfigure}[b]{0.31\textwidth}
    \includegraphics[width=\textwidth]{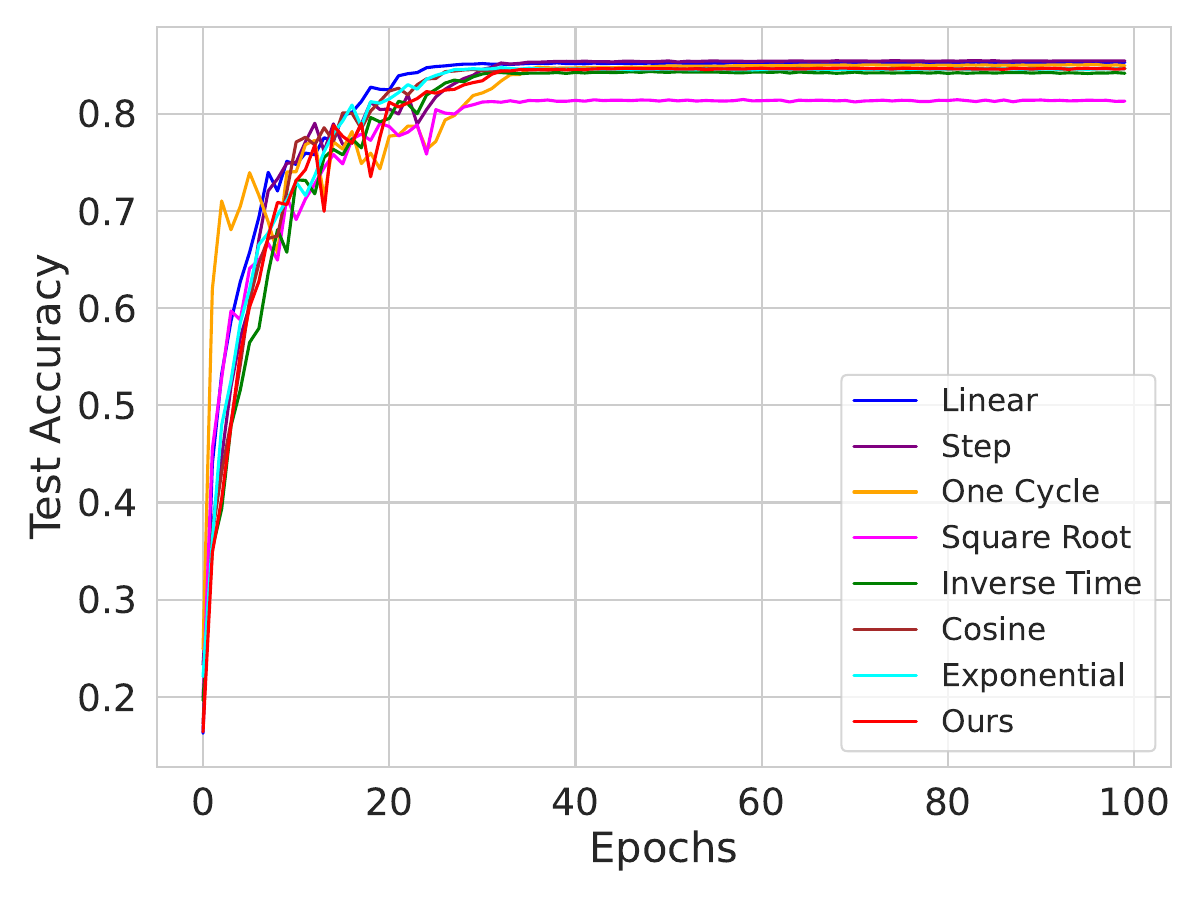}
  \end{subfigure}
  \caption{Mini-batch experiments on VGG-9 architecture with CIFAR-10 data.}
\label{fig:vgg}
\end{figure}
As illustrated in Figures~\ref{fig:lenet} and~\ref{fig:vgg}, our proposed learning rate performs well with convolutional neural networks (CNNs) as well. This training lasted for 100 epochs.

\section{Conclusion and Limitations}
This paper presents the first known theoretical guarantees for the convergence of Adam with an exact constant step size in non-convex settings. It provides insights into why using non-decreasing step sizes may yield suboptimal results and advocates for sticking to a fixed step size for convergence in Adam. \textbf{(i)} We analysed Adam's convergence properties and established a simple condition on the step size which ensures convergence in both deterministic and stochastic non-convex settings. We tag it as a sufficient condition \textbf{SC} in our text. \textbf{(ii)} We proposed a novel method for efficiently approximating the Lipschitz constant of the loss function with respect to the parameters, which is crucial for our proposed learning rate. \textbf{(iii)} Our empirical findings suggest that even with the accumulation of the past few gradients, the key driver for convergence in Adam is the non-increasing nature of step sizes. \textbf{(iv)} Finally, our theoretical claims are validated through extensive experiments on training deep neural networks on various datasets. Our experiments validate the effectiveness of the proposed step size. It drives the gradient norm towards zero more aggressively than the commonly used schedulers and a range of arbitrarily chosen constant step sizes. Our derived step size is easy to use and estimate and can be used for training a wide range of tasks.

\textbf{Limitations}: \textbf{(i)} It is still open to tightening the convergence rate of Adam. \textbf{(ii)} In Section~\ref{sec:mb_exp}, we empirically demonstrate that increasing the batch size leads to improved convergence. In our proof for the convergence of stochastic Adam, we chose to exclude the batch size factor for simplicity. However, we aim to theoretically demonstrate in the future that, according to our analysis, increasing batch size results in faster convergence.

\bibliographystyle{unsrt}
\bibliography{main}

\begin{thebibliography}{10}

\bibitem{bottou2012stochastic}
L{\'e}on Bottou.
\newblock Stochastic gradient descent tricks.
\newblock In {\em Neural Networks: Tricks of the Trade}. Springer, 2012.

\bibitem{bottou2009curiously}
L{\'e}on Bottou.
\newblock Curiously fast convergence of some stochastic gradient descent algorithms.
\newblock In {\em Proceedings of the symposium on learning and data science}, 2009.

\bibitem{duchi2011adaptive}
John~Johnson \emph{et al.}
\newblock Adaptive subgradient methods for online learning and stochastic optimization.
\newblock {\em Journal of machine learning research}, 2011.

\bibitem{mcmahan2010adaptive}
H~Brendan McMahan and Matthew Streeter.
\newblock Adaptive bound optimization for online convex optimization.
\newblock {\em arXiv preprint arXiv:1002.4908}, 2010.

\bibitem{tieleman2012rmsprop}
Tijmen~Tieleman \emph{et al.}
\newblock Rmsprop: Divide the gradient by a running average of its recent magnitude.
\newblock {\em COURSERA Neural Networks Mach. Learn}, 17, 2012.

\bibitem{kingma2015adam}
DP~Kingma and J~Ba.
\newblock Adam: A method for stochastic optimization,“u 3rd international conference for learning representations, 2015.

\bibitem{zeiler2012adadelta}
Matthew~D Zeiler.
\newblock Adadelta: an adaptive learning rate method.
\newblock {\em arXiv preprint arXiv:1212.5701}, 2012.

\bibitem{dozat2016incorporating}
Timothy Dozat.
\newblock Incorporating nesterov momentum into adam, 2016.

\bibitem{vaswani2019fast}
Sharan~Vaswani \emph{et al.}
\newblock Fast and faster convergence of sgd for over-parameterized models and an accelerated perceptron.
\newblock In {\em The 22nd international conference on artificial intelligence and statistics}, 2019.

\bibitem{sashank2018convergence}
J~Reddi Sashank, Kale Satyen, and Kumar Sanjiv.
\newblock On the convergence of adam and beyond.
\newblock In {\em International conference on learning representations}, 2018.

\bibitem{martens2015optimizing}
James Martens and Roger Grosse.
\newblock Optimizing neural networks with kronecker-factored approximate curvature.
\newblock In {\em International conference on machine learning}, 2015.

\bibitem{de2017automated}
Soham De, Abhay Yadav, David Jacobs, and Tom Goldstein.
\newblock Automated inference with adaptive batches.
\newblock In {\em Artificial Intelligence and Statistics}, 2017.

\bibitem{babanezhad2015stopwasting}
Babanezhad~Harikandeh \emph{et al.}
\newblock Stopwasting my gradients: Practical svrg.
\newblock {\em Advances in Neural Information Processing Systems}, 2015.

\bibitem{johnson2013accelerating}
Rie~Johnson \emph{et al.}
\newblock Accelerating stochastic gradient descent using predictive variance reduction.
\newblock {\em Advances in neural information processing systems}, 2013.

\bibitem{defazio2014saga}
Aaron~Defazio \emph{et al.}
\newblock Saga: A fast incremental gradient method with support for non-strongly convex composite objectives.
\newblock {\em Advances in neural information processing systems}, 27, 2014.

\bibitem{bernstein2018signsgd}
Jeremy~Bernstein \emph{et al.}
\newblock signsgd: Compressed optimisation for non-convex problems.
\newblock In {\em International Conference on Machine Learning}, 2018.

\bibitem{pmlr-v129-barakat20a}
Anas Barakat and Pascal Bianchi.
\newblock Convergence rates of a momentum algorithm with bounded adaptive step size for nonconvex optimization.
\newblock In Sinno~Jialin Pan and Masashi Sugiyama, editors, {\em Proceedings of The 12th Asian Conference on Machine Learning}, volume 129 of {\em Proceedings of Machine Learning Research}, pages 225--240. PMLR, 18--20 Nov 2020.

\bibitem{shi2020rmsprop}
Naichen Shi, Dawei Li, Mingyi Hong, and Ruoyu Sun.
\newblock Rmsprop converges with proper hyper-parameter.
\newblock In {\em International Conference on Learning Representations}, 2020.

\bibitem{tian2022amos}
Ran Tian and Ankur~P Parikh.
\newblock Amos: An adam-style optimizer with adaptive weight decay towards model-oriented scale.
\newblock {\em arXiv preprint arXiv:2210.11693}, 2022.

\bibitem{chen2018convergence}
Xiangyi Chen, Sijia Liu, Ruoyu Sun, and Mingyi Hong.
\newblock On the convergence of a class of adam-type algorithms for non-convex optimization.
\newblock {\em arXiv preprint arXiv:1808.02941}, 2018.

\bibitem{luo2019adaptive}
Liangchen Luo, Yuanhao Xiong, Yan Liu, and Xu~Sun.
\newblock Adaptive gradient methods with dynamic bound of learning rate.
\newblock {\em arXiv preprint arXiv:1902.09843}, 2019.

\bibitem{defossez2020simple}
Alexandre D{\'e}fossez, L{\'e}on Bottou, Francis Bach, and Nicolas Usunier.
\newblock A simple convergence proof of adam and adagrad.
\newblock {\em arXiv preprint arXiv:2003.02395}, 2020.

\bibitem{zou2019sufficient}
Fangyu Zou, Li~Shen, Zequn Jie, Weizhong Zhang, and Wei Liu.
\newblock A sufficient condition for convergences of adam and rmsprop.
\newblock In {\em Proceedings of the IEEE/CVF Conference on computer vision and pattern recognition}, pages 11127--11135, 2019.

\bibitem{zhou2018adashift}
Zhiming Zhou, Qingru Zhang, Guansong Lu, Hongwei Wang, Weinan Zhang, and Yong Yu.
\newblock Adashift: Decorrelation and convergence of adaptive learning rate methods.
\newblock {\em arXiv preprint arXiv:1810.00143}, 2018.

\bibitem{chen2022towards}
Congliang Chen, Li~Shen, Fangyu Zou, and Wei Liu.
\newblock Towards practical adam: Non-convexity, convergence theory, and mini-batch acceleration.
\newblock {\em The Journal of Machine Learning Research}, 23(1):10411--10457, 2022.

\bibitem{NEURIPS2020_a9078e86}
Li~\emph{et al.} Jing.
\newblock Implicit rank-minimizing autoencoder.
\newblock In {\em Advances in Neural Information Processing Systems}, pages 14736--14746, 2020.

\bibitem{li2021contrastive}
Yunfan Li, Peng Hu, Zitao Liu, Dezhong Peng, Joey~Tianyi Zhou, and Xi~Peng.
\newblock Contrastive clustering.
\newblock In {\em Proceedings of the AAAI conference on artificial intelligence}, pages 8547--8555, 2021.

\bibitem{de2018convergence}
Soham De, Anirbit Mukherjee, and Enayat Ullah.
\newblock Convergence guarantees for rmsprop and adam in non-convex optimization and an empirical comparison to nesterov acceleration.
\newblock {\em arXiv preprint arXiv:1807.06766}, 2018.

\bibitem{sohn2016improved}
Kihyuk Sohn.
\newblock Improved deep metric learning with multi-class n-pair loss objective.
\newblock {\em Advances in neural information processing systems}, 29, 2016.

\bibitem{wu2018unsupervised}
Zhirong Wu, Yuanjun Xiong, Stella~X Yu, and Dahua Lin.
\newblock Unsupervised feature learning via non-parametric instance discrimination.
\newblock In {\em Proceedings of the IEEE conference on computer vision and pattern recognition}, pages 3733--3742, 2018.

\bibitem{li2014efficient}
Mu~Li, Tong Zhang, Yuqiang Chen, and Alexander~J Smola.
\newblock Efficient mini-batch training for stochastic optimization.
\newblock In {\em Proceedings of the 20th ACM SIGKDD international conference on Knowledge discovery and data mining}, pages 661--670, 2014.

\bibitem{NEURIPS2018_d54e99a6}
Aladin Virmaux and Kevin Scaman.
\newblock Lipschitz regularity of deep neural networks: analysis and efficient estimation.
\newblock In S.~Bengio, H.~Wallach, H.~Larochelle, K.~Grauman, N.~Cesa-Bianchi, and R.~Garnett, editors, {\em Advances in Neural Information Processing Systems}, volume~31. Curran Associates, Inc., 2018.

\bibitem{NEURIPS2019_95e1533e}
Mahyar Fazlyab, Alexander Robey, Hamed Hassani, Manfred Morari, and George Pappas.
\newblock Efficient and accurate estimation of lipschitz constants for deep neural networks.
\newblock In H.~Wallach, H.~Larochelle, A.~Beygelzimer, F.~d\textquotesingle Alch\'{e}-Buc, E.~Fox, and R.~Garnett, editors, {\em Advances in Neural Information Processing Systems}, volume~32. Curran Associates, Inc., 2019.

\bibitem{latorre2020lipschitz}
Fabian Latorre, Paul Rolland, and Volkan Cevher.
\newblock Lipschitz constant estimation of neural networks via sparse polynomial optimization.
\newblock {\em arXiv preprint arXiv:2004.08688}, 2020.

\bibitem{fazlyab2019efficient}
Mahyar~Fazlyab \emph{et al.}
\newblock Efficient and accurate estimation of lipschitz constants for deep neural networks.
\newblock {\em Advances in Neural Information Processing Systems}, 2019.

\bibitem{gouk2021regularisation}
Henry Gouk, Eibe Frank, Bernhard Pfahringer, and Michael~J Cree.
\newblock Regularisation of neural networks by enforcing lipschitz continuity.
\newblock {\em Machine Learning}, 110:393--416, 2021.

\bibitem{federer2014geometric}
Herbert Federer.
\newblock {\em Geometric measure theory}.
\newblock Springer, 2014.

\bibitem{Boyd_Vandenberghe_2004}
Stephen Boyd and Lieven Vandenberghe.
\newblock {\em Convex Optimization}.
\newblock Cambridge University Press, 2004.

\bibitem{wolfe1969convergence}
Philip Wolfe.
\newblock Convergence conditions for ascent methods.
\newblock {\em SIAM review}, 11(2):226--235, 1969.

\bibitem{lecun1998gradient}
Yann LeCun, L{\'e}on Bottou, Yoshua Bengio, and Patrick Haffner.
\newblock Gradient-based learning applied to document recognition.
\newblock {\em Proceedings of the IEEE}, 86(11):2278--2324, 1998.

\bibitem{simonyan2014very}
Karen Simonyan and Andrew Zisserman.
\newblock Very deep convolutional networks for large-scale image recognition.
\newblock {\em arXiv preprint arXiv:1409.1556}, 2014.

\bibitem{he2015delving}
Kaiming He, Xiangyu Zhang, Shaoqing Ren, and Jian Sun.
\newblock Delving deep into rectifiers: Surpassing human-level performance on imagenet classification.
\newblock In {\em Proceedings of the IEEE international conference on computer vision}, pages 1026--1034, 2015.

\bibitem{smith2019super}
Leslie~N Smith and Nicholay Topin.
\newblock Super-convergence: Very fast training of neural networks using large learning rates.
\newblock In {\em Artificial intelligence and machine learning for multi-domain operations applications}, volume 11006, pages 369--386. SPIE, 2019.

\bibitem{loshchilov2016sgdr}
Ilya Loshchilov and Frank Hutter.
\newblock Sgdr: Stochastic gradient descent with warm restarts.
\newblock {\em arXiv preprint arXiv:1608.03983}, 2016.

\bibitem{hazan2015beyond}
Elad Hazan, Kfir Levy, and Shai Shalev-Shwartz.
\newblock Beyond convexity: Stochastic quasi-convex optimization.
\newblock {\em Advances in neural information processing systems}, 28, 2015.

\bibitem{saxe2013exact}
Andrew~M Saxe, James~L McClelland, and Surya Ganguli.
\newblock Exact solutions to the nonlinear dynamics of learning in deep linear neural networks.
\newblock {\em arXiv preprint arXiv:1312.6120}, 2013.

\bibitem{glorot2010understanding}
Xavier Glorot and Yoshua Bengio.
\newblock Understanding the difficulty of training deep feedforward neural networks.
\newblock In {\em Proceedings of the thirteenth international conference on artificial intelligence and statistics}, pages 249--256. JMLR Workshop and Conference Proceedings, 2010.

\end{thebibliography}

\clearpage
\section*{Appendix}

In this supplementary section, we provide complete proof of our main results. \textbf{Section A} proves the main theorems. \textbf{Section B} describes the architectures of LeNet and VGG-9. It also contains an algorithm to estimate the Lipchitz constant of loss function \emph{w.r.t} network parameters in mini-batch setting. \textbf{Section C} details the other comprehensive experiments that were performed on various neural network architectures.

\textbf{NOTE 1}: We will denote the gradient of loss function \emph{w.r.t} model weights by $\nabla\mathcal{L}(\textbf{w})$ throughout the \textbf{Appendix}. In our main text, it is denoted by $\nabla_{\textbf{w}}\mathcal{L}(\textbf{w})$. Both symbols represent the gradient \emph{w.r.t} model weights \textbf{w}.

\textbf{NOTE 2}: From here onwards, the matrix $(\textbf{V}_{t}^{1/2} + \texttt{diag}\left(\rho\textbf{1}_{d})\right)^{-1}$ (mentioned in Algorithm~\ref{alg:Adam_training}) will be referred as $\textbf{A}_t$.

\textbf{NOTE 3}: Please note that Figure~\ref{fig:3_1000_mini_step} in the main text was intended to be conducted on mini batch setting, but it was mistakenly labelled and the plot refers to the full batch setting. The correct figure for the mini batch setting is Figure~\ref{fig:1__1000_mini_step} in this appendix. We apologize for any confusion caused by this error. Nevertheless, the conclusions and results remain unchanged, demonstrating the effectiveness of our learning rate.
\renewcommand{\thesubsection}{\Alph{subsection}}
\subsection{Proofs}
\setcounter{theorem}{0}
\begin{theorem}

\textbf{Deterministic Adam converges with proper choice of constant step size.} Let the loss function $\mathcal{L}(\textbf{w})$ be $K-$Lipchitz and let $\gamma < \infty$ be an upper bound on the norm of the gradient of $\mathcal{L}$. Also assume that $\mathcal{L}$ has a well-defined minimizer $\textbf{w}^{*}$ such that $\textbf{w}^{*} = argmin_{\textbf{w} \in \mathbb{R}^d}$ $\mathcal{L}(\textbf{w})$. Then the following holds for Algorithm (1):

For any $\epsilon, \rho> 0$ if we let $\alpha = \sqrt{2(\mathcal{L}(\textbf{w}_{0}) - \mathcal{L}(\textbf{w}^{*}))/K\delta^{2}T}$, then there exists a natural number $T(\beta_1,\rho)$ (depends on $\beta_1$ and $\rho$) such that $\underset{t = 1,\dots,T}{min}\|\nabla\mathcal{L}(\textbf{w}_{t})\|_{2} \leq \epsilon$ for some $t \geq T(\beta_1,\rho)$, where $\delta^{2} = \frac{\gamma^{2}}{\rho^{2}}$.
\end{theorem}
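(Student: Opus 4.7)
The plan is to combine the $K$-smoothness of $\mathcal{L}$ with the specific structure of the Adam update and then telescope. Starting from the descent lemma
\[
\mathcal{L}(\mathbf{w}_{t+1}) \le \mathcal{L}(\mathbf{w}_t) + \nabla\mathcal{L}(\mathbf{w}_t)^\top(\mathbf{w}_{t+1}-\mathbf{w}_t) + \tfrac{K}{2}\|\mathbf{w}_{t+1}-\mathbf{w}_t\|_2^2,
\]
I would substitute the Adam step $\mathbf{w}_{t+1}-\mathbf{w}_t = -\alpha \mathbf{A}_t \mathbf{m}_t$, sum from $t=1$ to $T$, and use $\mathcal{L}(\mathbf{w}_{T+1}) \ge \mathcal{L}(\mathbf{w}^*)$ to produce a one-sided inequality relating $\sum_t \nabla\mathcal{L}(\mathbf{w}_t)^\top \mathbf{A}_t \mathbf{m}_t$ to $\mathcal{L}(\mathbf{w}_0)-\mathcal{L}(\mathbf{w}^*)$ plus a quadratic error.

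The next step is to control the two inner products created by this substitution. Because $\|\nabla\mathcal{L}(\mathbf{w}_t)\|_2 \le \gamma$, the exponential moving averages obey $\|\mathbf{v}_t\|_\infty \le \gamma^2$ and $\|\mathbf{m}_t\|_2 \le \gamma$; these give the spectral bounds $(\gamma+\rho)^{-1} I \preceq \mathbf{A}_t \preceq \rho^{-1} I$. Writing $\mathbf{m}_t = (1-\beta_1)\mathbf{g}_t + \beta_1 \mathbf{m}_{t-1}$ and using $\mathbf{g}_t = \nabla\mathcal{L}(\mathbf{w}_t)$ in the deterministic regime, the leading piece contributes
\[
(1-\beta_1)\,\nabla\mathcal{L}(\mathbf{w}_t)^\top \mathbf{A}_t \nabla\mathcal{L}(\mathbf{w}_t) \;\ge\; \tfrac{1-\beta_1}{\gamma+\rho}\,\|\nabla\mathcal{L}(\mathbf{w}_t)\|_2^2,
\]
while the momentum remainder is bounded below via Cauchy--Schwarz by $-\beta_1 \gamma^2/\rho$. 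For the quadratic term, $\|\mathbf{A}_t \mathbf{m}_t\|_2^2 \le \gamma^2/\rho^2 = \delta^2$. Collecting everything yields
\[
\tfrac{(1-\beta_1)\,\alpha}{\gamma+\rho}\sum_{t=1}^T \|\nabla\mathcal{L}(\mathbf{w}_t)\|_2^2 \;\le\; \mathcal{L}(\mathbf{w}_0)-\mathcal{L}(\mathbf{w}^*) + \alpha T\,\tfrac{\beta_1 \gamma^2}{\rho} + \tfrac{K\alpha^2 \delta^2}{2}\,T.
\]

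Finally, dividing by $T$ and bounding $\min_t \|\nabla\mathcal{L}(\mathbf{w}_t)\|_2^2$ by the running average, the prescribed constant step size $\alpha = \sqrt{2(\mathcal{L}(\mathbf{w}_0)-\mathcal{L}(\mathbf{w}^*))/(K\delta^2 T)}$ balances the first and third terms on the right to order $1/\sqrt{T}$, which recovers the advertised $\mathcal{O}(T^{-1/4})$ rate after taking a square root. The main obstacle is the surviving $\beta_1$-dependent bias $\tfrac{(\gamma+\rho)\beta_1 \gamma^2}{(1-\beta_1)\rho}$, which does not shrink with $T$: this is exactly what forces invocation of the sufficient condition $\beta_1 < \epsilon/(\epsilon+\gamma)$ from Section \ref{sec:SC}, since it implies $\beta_1 \gamma/(1-\beta_1) < \epsilon$ and hence shrinks the bias below a constant multiple of $\epsilon^2$. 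One then reads off an explicit threshold $T(\beta_1,\rho)$ beyond which the sum of the three contributions on the right is at most $\epsilon^2$, giving $\min_{t\le T} \|\nabla\mathcal{L}(\mathbf{w}_t)\|_2 \le \epsilon$ as claimed.
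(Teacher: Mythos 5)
Your overall skeleton (descent lemma, the spectral bounds $\lambda_{\min}(\mathbf{A}_t)\ge(\gamma+\rho)^{-1}$ and $\lambda_{\max}(\mathbf{A}_t)\le\rho^{-1}$, telescoping, and balancing terms with the prescribed $\alpha$) matches the paper's, but the way you handle the momentum cross term creates a gap that your final step cannot repair. When you bound the remainder $\beta_1\nabla\mathcal{L}(\mathbf{w}_t)^{\top}\mathbf{A}_t\mathbf{m}_{t-1}$ below by the constant $-\beta_1\gamma^2/\rho$, you are left after normalisation with the additive bias $\frac{(\gamma+\rho)\beta_1\gamma^2}{(1-\beta_1)\rho}$, as you note. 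The sufficient condition $\beta_1<\epsilon/(\epsilon+\gamma)$ gives $\beta_1\gamma/(1-\beta_1)<\epsilon$ and hence bounds this bias by $\frac{(\gamma+\rho)\gamma}{\rho}\,\epsilon$, which is $\Theta(\epsilon)$ with a coefficient exceeding $\gamma$ --- \emph{not} ``a constant multiple of $\epsilon^2$'' as you claim. Since you ultimately need the right-hand side to fall below $\epsilon^2$ to conclude $\min_t\|\nabla\mathcal{L}(\mathbf{w}_t)\|_2\le\epsilon$, and this bias is independent of $T$, no choice of $T(\beta_1,\rho)$ closes the argument in the regime of small $\epsilon$. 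The arithmetic in your last paragraph is exactly where the proof breaks.

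The fix, and the route the paper takes, is to keep the factor $\|\nabla\mathcal{L}(\mathbf{w}_t)\|_2$ in the cross term (Cauchy--Schwarz gives $-\beta_1\gamma\|\nabla\mathcal{L}(\mathbf{w}_t)\|_2/\rho$ rather than $-\beta_1\gamma^2/\rho$) and to argue by contradiction, assuming $\|\nabla\mathcal{L}(\mathbf{w}_t)\|_2>\epsilon$ for all $t$. Under that standing hypothesis together with the condition on $\beta_1$ (and the paper's implicit condition relating $\rho$ to $\gamma/\phi_1$), the cross term is strictly dominated by the diagonal term, so the whole inner product $\nabla\mathcal{L}(\mathbf{w}_t)^{\top}\mathbf{A}_t\mathbf{m}_t$ is bounded below \emph{multiplicatively} by $c\,\|\nabla\mathcal{L}(\mathbf{w}_t)\|_2^2$ for a single constant $c>0$; the paper does this by unrolling the full momentum recursion through the auxiliary sequence $P_j=\nabla\mathcal{L}(\mathbf{w}_t)^{\top}\mathbf{A}_t\mathbf{m}_j$, though a one-step decomposition like yours could be made to work the same way. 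With the bias absorbed into the quadratic term rather than left as an additive constant, the telescoped bound becomes $\min_t\|\nabla\mathcal{L}(\mathbf{w}_t)\|_2^2\le\mathcal{O}(1/\sqrt{T})$ with no residual term, and taking $T$ large enough contradicts the standing assumption. Your proposal needs this restructuring; as written, the $\beta_1$-dependent bias survives and the stated conclusion does not follow.
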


\begin{proof}
    We will prove this by contradiction. Let us assume that $\|\nabla\mathcal{L}(\textbf{w})\|_2 > \epsilon$ $\forall$ $t \in \{1, 2, 3, \dots\}$. Using the $K$-Smoothness of the loss function, we define the following relationship between consecutive updates:
    \setcounter{equation}{0}
    \begin{align}
    \mathcal{L}(\textbf{w}_{t+1}) - \mathcal{L}(\textbf{w}_{t})\leq & \hspace{5pt}\nabla\mathcal{L}(\textbf{w}_{t})^{T}(\textbf{w}_{t+1} - \textbf{w}_{t}) + \frac{K}{2}\|\textbf{w}_{t+1} - \textbf{w}_{t}\|_{2}^{2}\notag \\
    \leq & -\alpha\nabla\mathcal{L}(\textbf{w}_{t})^{T}(\textbf{A}_{t}\textbf{m}_{t}) + \frac{K}{2}\alpha^{2}\|\textbf{A}_{t}\textbf{m}_{t}\|_{2}^{2}\label{eq:1}
    \end{align}
    \textbf{Upperbound on} $\|\textbf{A}_{t}\textbf{m}_{t}\|_2$:\\
    We have, $\lambda_{max}(\textbf{A}_t) \leq \frac{1}{\rho + min_{j = 1,2,3,\dots,d}\sqrt{(\textbf{v}_t)_j}}$. Also, the recursion of $\textbf{v}_t$ can be solved as, $\textbf{v}_t = (1-\beta_2)\sum_{k=1}^{t}\beta_2^{t-k}\nabla\mathcal{L}(\textbf{w}_k)^2$. Now, define: \\
    $\epsilon_t = min_{k=1\dots t, j=1\dots d}\nabla\mathcal{L}(\textbf{w}_k)^2_j$, and it gives us:
    \begin{align}
        \lambda_{max}(\textbf{A}_t) \leq \frac{1}{\rho + \sqrt{(1-\beta_2^t)\epsilon_t}}\label{eq:2}
    \end{align}
    
    
    Now, to respect the inequality in Eq.(\ref{eq:2}) we must find an upper bound on $\|\textbf{m}_{t}\|_{2}^{2}$. The equation of $\textbf{m}_{t}$ without recursion is $\textbf{m}_{t} = (1 - \beta_{1})\sum_{k=1}^{t}\beta_{1}^{t-k}\nabla\mathcal{L}(\textbf{w}_{t})$. Using triangle inequality and defining $\gamma_t = max_{j \in \{1,2,3\dots\}}\|\nabla_{\textbf{w}}\mathcal{L}(\textbf{w}_{j})\|_{2}$ we have, $\|\textbf{m}_{t}\|_{2} \leq (1 - \beta_{1}^{t})\gamma_{t}$. Now, combining the estimate the $\|\textbf{m}_t\|_2$ with Eq.(\ref{eq:2}), we have:
    \begin{align}
        \|\textbf{A}_t\textbf{m}_t\|_2 \leq \frac{(1 - \beta_{1}^{t})\gamma_{t}}{\rho + \sqrt{(1-\beta_2^t)\epsilon_t}} \leq \frac{(1 - \beta_{1}^{t})\gamma_{t}}{\rho}\label{eq:3}
    \end{align}
    \textbf{Lowerbound on} $\nabla\mathcal{L}(\textbf{w}_{t})^{T}(\textbf{A}_{t}\textbf{m}_{t})$:\\
    To work with this, we define the following sequence for $j = 1, 2\dots, t$:
    \begin{align}
        P_j = \nabla\mathcal{L}(\textbf{w}_{t})^{T}(\textbf{A}_{t}\textbf{m}_{j})\notag
    \end{align}
    Now, we obtain the following sequence by substituting the update rule for $\textbf{m}_t$:
    \begin{align}
        P_j - \beta_1 P_{j-1} = & \hspace{5pt} \nabla\mathcal{L}(\textbf{w}_{t})^{T}\textbf{A}_{t}(\textbf{m}_j - \beta_1\textbf{m}_{j-1})\notag \\
        = & \hspace{5pt}(1 - \beta_{1})\nabla\mathcal{L}(\textbf{w}_{t})^{T}(\textbf{A}_{t}\nabla\mathcal{L}(\textbf{w}_{j}))\notag
    \end{align}
    At $j = t$, we have:
    \begin{align}
    P_{t} - \beta_{1}P_{t-1} \geq & \hspace{5pt}(1 - \beta_{1})\|\nabla\mathcal{L}(\textbf{w}_{t})\|_{2}^{2}\lambda_{min}(\textbf{A}_{t})\notag
    \end{align}
    Let us define $\gamma_{t-1} = \underset{1 \leq j \leq t-1}{max}\|\nabla\mathcal{L}(\textbf{w}_{j})\|_{2}$, and $\forall j \in \{1, 2, \dots t-1\}$, this gives us:
    \begin{align}
        P_{j} - \beta_{1}P_{j-1} \geq & \hspace{5pt}-(1 - \beta_{1})\|\nabla\mathcal{L}(\textbf{w}_{t})\|_{2} \gamma_{t-1}\lambda_{max}(\textbf{A}_{t})\notag
    \end{align}
    Now, we define the following identity:
    \begin{align}
        P_t - \beta_1 P_0 = (P_t - \beta_1 P_{t-1}) + \beta_1(P_{t-1} - \beta_1 P_{t-2}) + \dots + \beta_1^{t-1}(P_{1} - \beta_1 P_{0})\label{eq:4}
    \end{align}
    Now, we use the lowerbounds obtained on $P_j - \beta_1 P_{j-1}$ $\forall$ $j \in \{1,2\dots t-1\}$ and $P_t - \beta_1 P_{t-1}$ to lowerbound Eq.(\ref{eq:4}) as:
    \begin{align}
    P_{t} - \beta_{1}^{t}P_{0}\geq & \hspace{5pt}(1 - \beta_{1})\|\nabla\mathcal{L}(\textbf{w}_{t})\|_{2}^{2}\lambda_{min}(\textbf{A}_{t})\notag\\
    - & \hspace{5pt}(1 - \beta_{1})\|\nabla\mathcal{L}(\textbf{w}_{t})\|_{2} \gamma_{t-1}\lambda_{max}(\textbf{A}_{t})\sum_{j=0}^{t-1}\beta_{1}^{j}\notag\\
    \geq & \hspace{5pt}(1 - \beta_{1})\|\nabla\mathcal{L}(\textbf{w}_{t})\|_{2}^{2}\lambda_{min}(\textbf{A}_{t})\notag\\
    - & \hspace{5pt}(\beta_{1} - \beta_{1}^{t})\|\nabla\mathcal{L}(\textbf{w}_{t})\|_{2} \gamma_{t-1}\lambda_{max}(\textbf{A}_{t})\label{eq:5}
    \end{align}
    We have, $\lambda_{min}(\textbf{A}_t) \leq \frac{1}{\rho + \sqrt{max_{j=1\dots d}(\textbf{v}_t)_j}}$. We recall that the recursion of $\textbf{v}_t$ can be solved as $\textbf{v}_t = (1-\beta_2)\sum_{k=1}^{t}\beta_2^{t-k}\nabla\mathcal{L}(\textbf{w}_k)^2$ and we define\\ $\gamma_t = max_{j \in \{1,2,3\dots\}}\|\nabla_{\textbf{w}}\mathcal{L}(\textbf{w}_{j})\|_{2}$ to get:
    \begin{align}
        \lambda_{min}(\textbf{A}_t) \geq \frac{1}{\rho + \sqrt{(1-\beta_2^t)\gamma_t^2}}\label{eq:6}
    \end{align}
    We now combine Eq.(\ref{eq:6}) and Eq.(\ref{eq:2}) and the known value of $P_0 = 0$ (from initial condition) to get from the Eq.(\ref{eq:5}):
    \begin{align}
        P_t \geq & \hspace{5pt} -(\beta_1 - \beta_1^t)\|\nabla\mathcal{L}(\textbf{w}_{t})\|_{2}\gamma_{t-1}\frac{1}{\rho + \sqrt{(1-\beta_2^t)\epsilon_t}} \notag\\
        + & \hspace{5pt} (1 - \beta_{1})\|\nabla\mathcal{L}(\textbf{w}_{t})\|_{2}^{2}\frac{1}{\rho + \sqrt{(1-\beta_2^t)\gamma_t^2}}\notag\\
        \geq & \hspace{5pt} \|\nabla\mathcal{L}(\textbf{w}_{t})\|_{2}^{2} \left\{\frac{(1-\beta_1)}{\rho + \sqrt{(1-\beta_2^t)\gamma_t^2}} - \frac{(\beta_1 - \beta_1^t)\gamma}{\rho\|\nabla\mathcal{L}(\textbf{w}_{t})\|_2}\right\}\label{eq:7}
    \end{align}
    In Eq.(\ref{eq:7}), we have set $\epsilon_t = 0$ and $\gamma_{t-1} = \gamma_t = \gamma$. Next we analyse the following part of the lowerbound obtained above:
   {\tiny
\begin{align}
        \frac{(1-\beta_1)}{\rho + \sqrt{(1-\beta_2^t)\gamma^2}} - \frac{(\beta_1 - \beta_1^t)\gamma}{\rho\|\nabla\mathcal{L}(\textbf{w}_{t})\|_2}= & \hspace{5pt}\frac{\rho\|\nabla\mathcal{L}(\textbf{w}_{t})\|_2(1-\beta_1) - (\beta_1 - \beta_1^t)\gamma(\rho + \gamma\sqrt{(1-\beta_2^t)})}{\rho\|\nabla\mathcal{L}(\textbf{w}_{t})\|_2(\rho + \gamma\sqrt{(1-\beta_2^t)})}\notag \\
        = & \hspace{5pt} \gamma(\beta_1 - \beta_1^t)\frac{\rho\left\{\frac{\|\nabla\mathcal{L}(\textbf{w}_{t})\|_2(1-\beta_1)}{(\beta_1 - \beta_1^t)\gamma} - 1\right\}-\gamma\sqrt{(1-\beta_2^t)}}{\rho\|\nabla\mathcal{L}(\textbf{w}_{t})\|_2(\rho + \gamma\sqrt{(1-\beta_2^t)})}\notag \\
        = & \hspace{5pt} \gamma(\beta_1 - \beta_1^t)\left\{\frac{\|\nabla\mathcal{L}(\textbf{w}_{t})\|_2(1-\beta_1)}{(\beta_1 - \beta_1^t)\gamma} - 1\right\}\frac{\rho - \left\{\frac{\gamma\sqrt{(1-\beta_2^t)}}{-1 + \frac{\|\nabla\mathcal{L}(\textbf{w}_{t})\|_2(1-\beta_1)}{(\beta_1 - \beta_1^t)\gamma}}\right\}}{\rho\|\nabla\mathcal{L}(\textbf{w}_{t})\|_2(\rho + \gamma\sqrt{(1-\beta_2^t)})}\notag
    \end{align}
}

Now, we recall that we are working under the assumption that $\|\nabla\mathcal{L}(\textbf{w}_t)\|_2 > \epsilon$. Also, by definition $\beta_1 \in (0,1) \implies \beta_1 - \beta_1^t \in (0,\beta_1)$. This further implies that, $\frac{\|\nabla\mathcal{L}(\textbf{w}_{t})\|_2(1-\beta_1)}{(\beta_1 - \beta_1^t)\gamma} > \frac{\epsilon(1-\beta_1)}{\beta_1\gamma} > 1$. These inequalities now allow us to define a constant, $\frac{\epsilon(1-\beta_1)}{\beta_1\gamma} - 1 = \phi_1 > 0$ \emph{s.t} $\frac{\|\nabla\mathcal{L}(\textbf{w}_{t})\|_2(1-\beta_1)}{(\beta_1 - \beta_1^t)\gamma} - 1 > \phi_1$.

Now, according to our definition of $\rho > 0$, it allows us to define another constant $\phi_2 > 0$ to obtain:
\begin{align}
    \left\{\frac{\gamma\sqrt{(1-\beta_2^t)}}{-1 + \frac{\|\nabla\mathcal{L}(\textbf{w}_{t})\|_2(1-\beta_1)}{(\beta_1 - \beta_1^t)\gamma}}\right\} < \frac{\gamma}{\phi_1} = \rho - \phi_2\notag
\end{align}

Putting the above back into the lowerbound of $P_t$ in Eq.(\ref{eq:7}), we have:
\begin{align}
    P_t \geq \|\nabla\mathcal{L}(\textbf{w}_{t})\|_{2}^{2}\left\{\frac{\gamma(\beta_1 - \beta_1^2)\phi_1\phi_2}{\rho\gamma(\rho+\gamma)}\right\} = c\|\nabla\mathcal{L}(\textbf{w}_{t})\|_{2}^{2}\label{eq:8}
\end{align}

Here $ c = \frac{\gamma(\beta_1 - \beta_1^2)\phi_1\phi_2}{\rho\gamma(\rho+\gamma)} > 0$ is a constant. Now, we substitute Eq.(\ref{eq:8}) and Eq.(\ref{eq:3}) in Eq.(\ref{eq:1}). We get:
\begin{align}
    \mathcal{L}(\textbf{w}_{t+1}) - \mathcal{L}(\textbf{w}_{t})\leq & \hspace{5pt} -\alpha c\|\nabla\mathcal{L}(\textbf{w}_{t})\|_{2}^{2} + \frac{K}{2}\alpha^{2}\frac{(1 - \beta_{1}^{t})^2\gamma_{t}^2}{\rho^2}\label{eq:9}
\end{align}

In the last term of Eq.(\ref{eq:9}), we can have $\frac{(1 - \beta_{1}^{t})^2\gamma_{t}^2}{\rho^2} \leq \frac{\gamma_t^2}{\rho^2}$. We put $\gamma_{t-1} = \gamma_t = \gamma$ and finally we have $\frac{(1 - \beta_{1}^{t})^2\gamma_{t}^2}{\rho^2} \leq \frac{\gamma_t^2}{\rho^2} = \frac{\gamma^2}{\rho^2} = \delta^2$.

\begin{align}
    \mathcal{L}(\textbf{w}_{t+1}) - \mathcal{L}(\textbf{w}_{t})\leq & \hspace{5pt} -\alpha c\|\nabla\mathcal{L}(\textbf{w}_{t})\|_{2}^{2} + \frac{K}{2}\alpha^{2}\delta^2\notag\\
    \alpha c\|\nabla\mathcal{L}(\textbf{w}_{t})\|_{2}^{2} \leq & \hspace{5pt} \mathcal{L}(\textbf{w}_{t}) - \mathcal{L}(\textbf{w}_{t+1}) + \frac{K}{2}\alpha^{2}\delta^{2}\notag\\
    \|\nabla\mathcal{L}(\textbf{w}_{t})\|_{2}^{2} \leq & \hspace{5pt} \frac{\mathcal{L}(\textbf{w}_{t}) - \mathcal{L}(\textbf{w}_{t+1})}{\alpha c} + \frac{K\alpha\delta^{2}}{2c}\label{eq:10}
\end{align}

From Eq.(\ref{eq:10}), we have the following inequalities:
\begin{align}
\left\{
\begin{aligned}
    \|\nabla\mathcal{L}(\textbf{w}_{0})\|_{2}^{2} \leq & \hspace{5pt} \frac{\mathcal{L}(\textbf{w}_{0}) - \mathcal{L}(\textbf{w}_{1})}{\alpha c} + \frac{K\alpha\delta^{2}}{2c} & \notag\\
    \|\nabla\mathcal{L}(\textbf{w}_{1})\|_{2}^{2} \leq & \hspace{5pt} \frac{\mathcal{L}(\textbf{w}_{1}) - \mathcal{L}(\textbf{W}_{2})}{\alpha c} + \frac{K\alpha\delta^{2}}{2c} & \notag\\
    & \vdots \\
    \|\nabla\mathcal{L}(\textbf{w}_{T-1})\|_{2}^{2} \leq & \hspace{5pt} \frac{\mathcal{L}(\textbf{w}_{T-1}) - \mathcal{L}(\textbf{w}_{t})}{\alpha c} + \frac{K\alpha\delta^{2}}{2c} &\\
\end{aligned}
\right.
\end{align}

Summing up all the inequalities presented above, we obtain:
\begin{align*}
    \sum_{t=0}^{T-1} \|\nabla\mathcal{L}(\textbf{w}_{t})\|_{2}^{2} \leq & \hspace{5pt} \frac{\mathcal{L}(\textbf{w}_{0}) - \mathcal{L}(\textbf{w}_{t})}{\alpha c} + \frac{K\alpha\delta^{2}T}{2c}
\end{align*}
The inequality remains valid if we substitute $\|\nabla\mathcal{L}(\textbf{w}_{t})\|_{2}^{2}$ with $\underset{0\leq t \leq T-1}{min}\|\nabla\mathcal{L}(\textbf{w}_{t})\|_{2}^{2}$ within the summation on the left-hand side (LHS).
\begin{align}
    \underset{0 \leq t \leq T-1}{min}\|\nabla\mathcal{L}(\textbf{w}_{t})\|_{2}^{2}T \leq & \hspace{5pt} \frac{\mathcal{L}(\textbf{w}_{0}) - \mathcal{L}(\textbf{w}^{*})}{\alpha c} + \frac{K\alpha\delta^{2}T}{2c}\notag\\
    \underset{0 \leq t \leq T-1}{min}\|\nabla\mathcal{L}(\textbf{w}_{t})\|_{2}^{2} \leq & \hspace{5pt} \frac{\mathcal{L}(\textbf{w}_{0}) - \mathcal{L}(\textbf{w}^{*})}{\alpha c T} + \frac{K\alpha\delta^{2}}{2c}\notag\\
    \underset{0 \leq t \leq T-1}{min}\|\nabla\mathcal{L}(\textbf{w}_{t})\|_{2}^{2} \leq & \hspace{1pt} \frac{1}{\sqrt{T}}\left(\frac{\mathcal{L}(\textbf{w}_{0}) - \mathcal{L}(\textbf{w}^{*})}{cb} + \frac{K\delta^{2}b}{2c}\right)\notag
\end{align}
where $b = \alpha\sqrt{T}$. We set $b = \sqrt{2(\mathcal{L}(\textbf{w}_{0}) - \mathcal{L}(\textbf{w}^{*})\delta^{2})/K\delta^{2}}$, and we have:
\begin{align}
    \underset{0 \leq t \leq T-1}{\min} \|\nabla\mathcal{L}(\textbf{w}_{t})\|_{2} \leq \left(\frac{2K\delta^{2}}{T}(\mathcal{L}(\textbf{w}_{0})-\mathcal{L}(\textbf{w}^{*}))\right)^{\frac{1}{4}}\notag
\end{align}

When $T \geq \left(\frac{2K\delta^{2}}{\epsilon^{4}}(\mathcal{L}(\textbf{w}_{0})-\mathcal{L}(\textbf{w}^{*}))\right)$, we will have\\ $\underset{0 \leq t \leq T-1}{\min} \|\nabla\mathcal{L}(\textbf{w}_{t})\|_{2} \leq \epsilon$ which will contradict the assumption, \emph{i.e.} $\|\nabla\mathcal{L}(\textbf{w}_{t})\|_{2} > \epsilon$ for all $t \in \{1, 2, \dots\}$. Hence, completing the proof.
\end{proof}

\begin{theorem}
\textbf{Stochastic Adam converges with proper choice of constant step size.} Let the loss function $\mathcal{L}(\textbf{w})$ be $K-$Lipchitz and be of the form $\mathcal{L} = \sum_{j=1}^{m}\mathcal{L}_{j}$ such that (a) each $\mathcal{L}_{j}$ is at-least once differentiable, (b) the gradients satisfy $sign(\mathcal{L}_{r}(\textbf{w}))$ $= sign(\mathcal{L}_{s}(\textbf{w}))$ for all $r,s \in \{1,2,\dots,m\}$, (c) $\mathcal{L}$ has a well-defined minimizer $\textbf{w}^{*}$ such that $\textbf{w}^{*} = argmin_{\textbf{w} \in \mathbb{R}^d}$ $\mathcal{L}(\textbf{w})$. Let the gradient oracle, upon invocation at \( \textbf{w}_t \in \mathbb{R}^d \), randomly selects  $j_t$ from the set $\{1, 2, \ldots, m\}$ uniformly, and then provides $\nabla f_{j_{t}}(x_t) = \textbf{g}_{t}$ as the result. 

Then, for any $\epsilon, \rho > 0$ if we let $\alpha = \sqrt{2(\mathcal{L}(\textbf{w}_{0}) - \mathcal{L}(\textbf{w}^{*}))/K\delta^{2}T}$, then there exists a natural number $T(\beta_1,\rho)$ (depends on $\beta_1$ and $\rho$) such that \\$\underset{t = 1,\dots,T}{min}\mathbb{E}[\|\nabla\mathcal{L}(\textbf{w}_{t})\|_{2}] \leq \epsilon$ for some $t \geq T(\beta_1,\rho)$, where $\delta^{2} = \frac{\gamma^{2}}{\rho^{2}}$.

\end{theorem}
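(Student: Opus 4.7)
The plan is to mirror the structure of the proof of Theorem 1, adapting each ingredient to handle the randomness introduced by the stochastic oracle. The role of the sign-consistency hypothesis $\operatorname{sign}(\nabla\mathcal{L}_r(\textbf{w})) = \operatorname{sign}(\nabla\mathcal{L}_s(\textbf{w}))$ is crucial: it guarantees that for every coordinate $i$ and every realization $j_t$ of the oracle, $\operatorname{sign}((\textbf{g}_t)_i) = \operatorname{sign}((\nabla\mathcal{L}(\textbf{w}_t))_i)$, so the random gradient $\textbf{g}_t$ always points in the ``correct'' direction componentwise. This is the stochastic analogue of what makes the deterministic lower bound on $\nabla\mathcal{L}(\textbf{w}_t)^T \textbf{A}_t \textbf{m}_t$ work.

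First, I would once again start from $K$-smoothness to obtain $\mathcal{L}(\textbf{w}_{t+1}) - \mathcal{L}(\textbf{w}_t) \leq -\alpha \nabla\mathcal{L}(\textbf{w}_t)^T \textbf{A}_t \textbf{m}_t + \tfrac{K}{2}\alpha^2 \|\textbf{A}_t \textbf{m}_t\|_2^2$, except that $\textbf{m}_t$, $\textbf{v}_t$ (and hence $\textbf{A}_t$) are now built from the stochastic $\textbf{g}_t$. Conditioning on $\mathcal{F}_{t-1}$, I would expand the recursion $\textbf{m}_t = (1-\beta_1)\sum_{k=1}^{t} \beta_1^{t-k} \textbf{g}_k$ and use the sign condition to replace $\textbf{g}_k$ by a quantity with the same sign pattern as $\nabla\mathcal{L}(\textbf{w}_k)$. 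Introducing the auxiliary sequence $P_j := \nabla\mathcal{L}(\textbf{w}_t)^T \textbf{A}_t \textbf{m}_j$, the same telescoping identity $P_t - \beta_1^t P_0 = \sum_{j=0}^{t-1} \beta_1^j (P_{t-j} - \beta_1 P_{t-j-1})$ applies; by the sign-consistency the leading term $P_t - \beta_1 P_{t-1}$ still admits, in expectation, a lower bound proportional to $\mathbb{E}[\|\nabla\mathcal{L}(\textbf{w}_t)\|_2^2]\, \lambda_{\min}(\textbf{A}_t)$, while the remaining cross-terms remain controlled in absolute value by $\gamma_{t-1}\lambda_{\max}(\textbf{A}_t)$.

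Second, I would re-derive the eigenvalue bounds on $\textbf{A}_t$ and the norm bound on $\textbf{A}_t \textbf{m}_t$ from Theorem 1, which carry over verbatim once we use $\mathbb{E}[\|\textbf{g}_t\|_2^2] \leq \gamma^2$ and the $\rho > 0$ floor in the preconditioner. After plugging these into the $K$-smoothness inequality and taking total expectation, the same constant $c = c(\beta_1,\rho,\gamma,\epsilon)$ appears, provided the assumption $\beta_1 < \epsilon/(\epsilon + \gamma)$ is enforced (this is exactly the stochastic version of the argument producing $\phi_1 > 0$ in the deterministic proof). This yields the key per-step inequality
\begin{equation*}
\mathbb{E}[\|\nabla\mathcal{L}(\textbf{w}_t)\|_2^2] \leq \frac{\mathbb{E}[\mathcal{L}(\textbf{w}_t)] - \mathbb{E}[\mathcal{L}(\textbf{w}_{t+1})]}{\alpha c} + \frac{K\alpha\delta^2}{2c}.
\end{equation*}

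Finally, summing from $t=0$ to $T-1$, bounding the telescoping sum by $\mathcal{L}(\textbf{w}_0) - \mathcal{L}(\textbf{w}^*)$, replacing the sum on the left by $T \cdot \min_{t}\mathbb{E}[\|\nabla\mathcal{L}(\textbf{w}_t)\|_2^2]$, using Jensen's inequality $\mathbb{E}[\|\cdot\|_2] \leq \sqrt{\mathbb{E}[\|\cdot\|_2^2]}$, and substituting $\alpha = \sqrt{2(\mathcal{L}(\textbf{w}_0) - \mathcal{L}(\textbf{w}^*))/K\delta^2 T}$ reproduces the rate $\mathcal{O}(T^{-1/4})$ and contradicts any supposition that $\mathbb{E}[\|\nabla\mathcal{L}(\textbf{w}_t)\|_2] > \epsilon$ for all $t$ once $T \geq T(\beta_1,\rho)$. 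The main obstacle is Step 2: handling the fact that both $\textbf{m}_t$ and $\textbf{A}_t$ are random and strongly correlated, since $\textbf{v}_t$ is built from the same $\textbf{g}_k$'s that drive $\textbf{m}_t$. The sign-consistency hypothesis is precisely what allows the deterministic cross-term cancellation to survive passage through $\mathbb{E}[\cdot]$; if one tried to relax it, the lower bound on $P_t - \beta_1 P_{t-1}$ would no longer be positive in expectation and the whole chain would collapse.
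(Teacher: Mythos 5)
Your plan follows the paper's proof essentially step for step: the contradiction hypothesis, $K$-smoothness combined with conditional expectation given the past iterates $B_t$, the telescoping sequence $P_j$ with the sign-consistency assumption driving the lower bound on the leading term, worst-case eigenvalue bounds on $\textbf{A}_t$, and the same choice of $\alpha$ yielding the $\mathcal{O}(T^{-1/4})$ rate and the contradiction. The obstacle you single out as the crux --- the correlation between $\textbf{A}_t$ and $\textbf{g}_t$, which the sign-consistency hypothesis renders harmless --- is precisely the content of the paper's Lemma~1, so your identification of where the real work lies is accurate.
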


\begin{proof}
     We will prove this by contradiction. Let us assume that $\|\nabla\mathcal{L}(\textbf{w})\|_2 > \epsilon$ $\forall$ $t \in \{1, 2, 3, \dots\}$. We define $\gamma_t = max_{k=1\dots t}\|\nabla\mathcal{L}_{j_k}(\textbf{w}_k)\|$ and we solve the recursion for $\textbf{v}_t$, $\textbf{v}_t = (1-\beta_2)\sum_{k=1}^{t}\beta_2^{t-k}\textbf{g}_k^2$. We then write the following bounds:
{\scriptsize
\begin{align}
        \lambda_{max}(\textbf{A}_t) \geq \frac{1}{\sqrt{max_{j=1\dots d}(\textbf{v}_t)_j}} \geq \frac{1}{\sqrt{max_{j=1\dots d}((1-\beta_2)\sum_{k=1}^t \beta_2^{t-k}(\textbf{g}_k^2)_j})} \geq \frac{1}{\rho + \sqrt{(1-\beta_2^t)\gamma_t^2}}\label{eq:11}
\end{align}
}
Now, we define $\epsilon_t = min_{k=1\dots t, j=1\dots d}\nabla\mathcal{L}_{j_k}(\textbf{w}_k)^2_j$, and we get the following bound:
\begin{align}
        \lambda_{max}(\textbf{A}_t) \leq \frac{1}{\rho + min_{j = 1,2,3,\dots,d}\sqrt{(\textbf{v}_t)_j}} \leq \frac{1}{\rho + \sqrt{(1-\beta_2^t)\epsilon_t}}\label{eq:12}
    \end{align}
Now, we assume that the gradient of each $\mathcal{L}_j$ is upperbounded by a positive constant $\gamma$. Invoking this bound, we replace the eigenvalue bounds in the above equation with worst-case estimates, $\theta_{max}$ and $\theta_{min}$, defined as:
\begin{align}
    \lambda_{min}(\textbf{A}_t) \geq & \hspace{5pt} \frac{1}{\rho + \gamma} = \theta_{min}\notag\\
    \lambda_{max}(\textbf{A}_t) \leq & \hspace{5pt} \frac{1}{\rho + \sqrt{(1-\beta_2)}} = \theta_{max}\notag
\end{align}
We note that, the updates of stochastic Adam are $\textbf{w}_{t+1} = \textbf{w}_t - \alpha\textbf{A}_t\textbf{m}_t$. In case of stochastic Adam, $\textbf{m}_t = \beta_{1}\textbf{m}_{t-1} + (1 - \beta_{1})\textbf{g}_{t}$, here $\textbf{g}_t$ is the stochastic gradient at $t^{th}$ iterate. Let, $B_t = \{\textbf{w}_1, \textbf{w}_2, \textbf{w}_3 \dots \textbf{w}_t\}$ be a set of random variables corresponding to the first $t$ iterates. The assumption we have about the stochastic oracle gives us the following relation $\mathbb{E}[\textbf{g}_t] = \nabla\mathcal{L}(\textbf{w}_t)$ and we have $\mathbb{E}[\|\textbf{g}_t\|^2] \leq \gamma^2$ (from \textbf{SC}). Now, we will use these stochastic oracle's properties and take conditional expectation over $\textbf{g}_t$ \emph{w.r.t} $B_t$ in the $K-$ smoothness property of the loss function, we get:
{\small
\begin{align}
    \mathbb{E}[\mathcal{L}(\textbf{w}_{t+1})|B_t] \leq & \hspace{5pt} \mathbb{E}[\mathcal{L}(\textbf{w}_t)|B_t] - \alpha\mathbb{E}[\langle\mathcal{L}(\textbf{w}_t),\textbf{w}_{t+1} - \textbf{w}_t\rangle|B_t] + \frac{K}{2}\mathbb{E}[\|\textbf{w}_{t+1} - \textbf{w}_t\|_2^2|B_t]\notag\\
    \leq & \hspace{5pt} \mathbb{E}[\mathcal{L}(\textbf{w}_t)|B_t] - \alpha\mathbb{E}[\langle\mathcal{L}(\textbf{w}_t),\textbf{A}_t\textbf{m}_t\rangle|B_t] + \frac{K\alpha^2}{2}\mathbb{E}[\|\textbf{A}_t\textbf{m}_t\|_2^2|B_t]\label{eq:13}
\end{align}
}
From here, we will separately analyse the first and last term of the RHS of the above equation. 

\textbf{Upperbound on} $\mathbb{E}[\|\textbf{A}_t\textbf{m}_t\|_2|B_t]$:\\
The equation of $\textbf{m}_{t}$ without recursion is $\textbf{m}_{t} = (1 - \beta_{1})\sum_{k=1}^{t}\beta_{1}^{t-k}\textbf{g}_k$. We have:
\begin{align}
    \mathbb{E}[\|\textbf{A}_t\textbf{m}_t\|_2|B_t] = & \hspace{5pt} \mathbb{E}\left[\left\|\textbf{A}_t(1 - \beta_{1})\sum_{k=1}^{t}\beta_{1}^{t-k}\textbf{g}_k\right\|_2^2\Bigg|B_t\right]\notag\\
    \leq & \hspace{5pt} \theta_{max}^2(1-\beta_1)^2\sum_{k=1}^{t}\beta_1^{t-k}\mathbb{E}[\|\textbf{g}_k\|_2^2|B_t]\notag\\
    \leq & \hspace{5pt}\theta_{max}^2(1-\beta_1)^2\gamma^2\sum_{k=1}^{t}\beta_1^{t-k}\notag\\
    \leq & \hspace{5pt}\theta_{max}^2(1-\beta_1^t)^2\gamma^2\notag\\
    \leq & \hspace{5pt}\frac{(1-\beta_1^t)^2\gamma^2}{(\rho + \sqrt{(1-\beta_2)})^2}\notag\\
    \leq & \hspace{5pt} \frac{\gamma^2}{\rho^2} = \delta^2\label{eq:14}
\end{align}
\\
\textbf{Lowerbound on} $\mathbb{E}[\langle\mathcal{L}(\textbf{w}_t),\textbf{A}_t\textbf{m}_t\rangle|B_t]$:\\
To work with this, we define the following sequence for $j = 1, 2\dots, t$:
    \begin{align}
        P_j = \mathbb{E}[\langle\nabla\mathcal{L}(\textbf{w}_{t}), \textbf{A}_{t}\textbf{m}_{j}\rangle|B_t]\notag
    \end{align}
    
    Now, we obtain the following sequence by substituting the update rule for $\textbf{m}_t$:
    \begin{align}
        P_j - \beta_1 P_{j-1} = & \hspace{5pt} \mathbb{E}[\langle\nabla\mathcal{L}(\textbf{w}_{t}), \textbf{A}_{t}(\textbf{m}_j - \beta_1\textbf{m}_{j-1})\rangle|B_t]\notag \\
        = & \hspace{5pt}(1 - \beta_{1})\mathbb{E}[\langle\nabla\mathcal{L}(\textbf{w}_{t}), \textbf{A}_{t}\textbf{g}_j\rangle|B_t]\notag
    \end{align}
    
     At $j = t$, we have:
    \begin{align}
    P_{t} - \beta_{1}P_{t-1} \geq & \hspace{5pt}(1 - \beta_{1})\mathbb{E}[\langle\nabla\mathcal{L}(\textbf{w}_{t}), \textbf{A}_{t}\textbf{g}_t\rangle|B_t]\notag
    \end{align}
    
   We have, $\mathbb{E}[\langle\nabla\mathcal{L}(\textbf{w}_{t}), \textbf{A}_{t}\textbf{g}_t\rangle|B_t] \geq \theta_{min}\|\nabla\mathcal{L}(\textbf{w}_t)\|_2^2$. This bound is analyzed in \textbf{Lemma 1} after the end of this proof. Hence:
   \begin{align}
    P_{t} - \beta_{1}P_{t-1} \geq & \hspace{5pt}(1 - \beta_{1})\theta_{min}\|\nabla\mathcal{L}(\textbf{w}_t)\|_2^2\notag
    \end{align}

    At $j < t$, we have:\\
    \begin{align}
        P_{j} - \beta_{1}P_{j-1} = & \hspace{5pt}(1-\beta_1)\mathbb{E}[\langle\nabla\mathcal{L}(\textbf{w}_{t}), \textbf{A}_{t}\textbf{g}_j\rangle|B_t]\notag\\
        = & \hspace{5pt}(1-\beta_1)\mathbb{E}[\|\nabla\mathcal{L}(\textbf{w}_{t})\|_2\| \textbf{A}_{t}\textbf{g}_j\|_2cos\psi|B_t]\notag
    \end{align}
    
    As, $-1 \leq cos\psi \leq 1$, we obtain the following inequality:
    \begin{align}
        P_{j} - \beta_{1}P_{j-1} \geq & \hspace{5pt}-(1-\beta_1)\mathbb{E}[\|\nabla\mathcal{L}(\textbf{w}_{t})\|_2\| \textbf{A}_{t}\textbf{g}_j\|_2|B_t]\notag\\
        \geq & \hspace{5pt}-(1-\beta_1)\|\nabla\mathcal{L}(\textbf{w}_{t})\|_2\theta_{max}\mathbb{E}[\|\textbf{g}_j\|_2|B_t]\notag
    \end{align}
    
    Using \textbf{SC}, $\mathbb{E}[\|\textbf{g}_t\|_2^2] \leq (\mathbb{E}[\|\textbf{g}_t\|_2])^2 \leq \gamma^2 \implies \mathbb{E}[\|\textbf{g}_t\|_2] \leq \gamma$ $\forall t$. We get:
    \begin{align}
         P_{j} - \beta_{1}P_{j-1} \geq & \hspace{5pt} -(1-\beta_1)\|\nabla\mathcal{L}(\textbf{w}_{t})\|_2\theta_{max}\gamma\notag
    \end{align}

     Now, we define the following identity:
    \begin{align}
        P_t - \beta_1 P_0 = (P_t - \beta_1 P_{t-1}) + \beta_1(P_{t-1} - \beta_1 P_{t-2}) + \dots + \beta_1^{t-1}(P_{1} - \beta_1 P_{0})\notag
    \end{align}
    
    Now, we use the lowerbounds obtained on $P_j - \beta_1 P_{j-1}$ $\forall$ $j \in \{1,2\dots t-1\}$ and $P_t - \beta_1 P_{t-1}$ to lowerbound $P_t - \beta_1 P_0$ as:
    {\scriptsize
    \begin{align}
    P_{t} - \beta_{1}^{t}P_{0}\geq & \hspace{5pt}(1 - \beta_{1})\|\nabla\mathcal{L}(\textbf{w}_{t})\|_{2}^{2}\theta_{min}-(1 - \beta_{1})\|\nabla\mathcal{L}(\textbf{w}_{t})\|_{2} \gamma\theta_{max}\sum_{j=0}^{t-1}\beta_{1}^{j}\notag\\
    \geq & \hspace{5pt}(1 - \beta_{1})\|\nabla\mathcal{L}(\textbf{w}_{t})\|_{2}^{2}\theta_{min}-(\beta_{1} - \beta_{1}^{t})\|\nabla\mathcal{L}(\textbf{w}_{t})\|_{2} \gamma\theta_{max}\notag\\
     \geq & \hspace{5pt}(1 - \beta_{1})\|\nabla\mathcal{L}(\textbf{w}_{t})\|_{2}^{2}\frac{1}{\rho+\gamma} - (\beta_{1} - \beta_{1}^{t})\|\nabla\mathcal{L}(\textbf{w}_{t})\|_{2}\frac{\gamma}{\rho + \sqrt{(1-\beta_1)}}\notag\\
     \geq & \hspace{5pt}\|\nabla\mathcal{L}(\textbf{w}_{t})\|_{2}^{2}\left\{\frac{(1-\beta_1)}{\rho + \gamma} - \frac{(\beta_1 - \beta_1^t)\gamma}{\|\nabla\mathcal{L}(\textbf{w}_{t})\|_{2}(\rho + \sqrt{(1-\beta_1})}\right\}\notag\\
     \geq & \hspace{5pt}\|\nabla\mathcal{L}(\textbf{w}_{t})\|_{2}^{2}\left\{\frac{(1-\beta_1)}{\rho + \gamma} - \frac{(\beta_1 - \beta_1^t)\gamma}{\|\nabla\mathcal{L}(\textbf{w}_{t})\|_{2}\rho}\right\}\notag\\
     \geq & \hspace{5pt}\|\nabla\mathcal{L}(\textbf{w}_{t})\|_{2}^2\left\{\frac{(1-\beta_1)\|\nabla\mathcal{L}(\textbf{w}_{t})\|_{2}\rho - (\beta_1 - \beta_1^t)\gamma(\rho + \gamma)}{(\rho + \gamma)\rho\|\nabla\mathcal{L}(\textbf{w}_{t})\|_{2}}\right\}\notag\\
     \geq & \hspace{5pt}\|\nabla\mathcal{L}(\textbf{w}_{t})\|_{2}^2(\beta_1 - \beta_1^t)\gamma\frac{\rho\left\{\frac{(1-\beta_1)\|\nabla\mathcal{L}(\textbf{w}_{t})\|_{2}}{(\beta_1 - \beta_1^t)\gamma} - 1\right\} - \gamma}{(\rho + \gamma)\rho\|\nabla\mathcal{L}(\textbf{w}_{t})\|_{2}}\notag\\
     \geq & \hspace{5pt}\|\nabla\mathcal{L}(\textbf{w}_{t})\|_{2}^2(\beta_1 - \beta_1^t)\gamma\left\{\frac{(1-\beta_1)\|\nabla\mathcal{L}(\textbf{w}_{t})\|_{2}}{(\beta_1 - \beta_1^t)\gamma} - 1\right\}\frac{\rho - \left\{\frac{\gamma}{-1 + \frac{(1-\beta_1)\|\nabla\mathcal{L}(\textbf{w}_{t})\|_{2}}{(\beta_1 - \beta_1^t)\gamma}}\right\}}{(\rho + \gamma)\rho\|\nabla\mathcal{L}(\textbf{w}_{t})\|_{2}}\notag
    \end{align}
    }

Now, we recall that we are working under the assumption that $\|\nabla\mathcal{L}(\textbf{w}_t)\|_2 > \epsilon$. Also, by definition $\beta_1 \in (0,1) \implies \beta_1 - \beta_1^t \in (0,\beta_1)$. This further implies that, $\frac{\|\nabla\mathcal{L}(\textbf{w}_{t})\|_2(1-\beta_1)}{(\beta_1 - \beta_1^t)\gamma} > \frac{\epsilon(1-\beta_1)}{\beta_1\gamma} > 1$. These inequalities now allow us to define a constant, $\frac{\epsilon(1-\beta_1)}{\beta_1\gamma} - 1 = \phi_1 > 0$ \emph{s.t} $\frac{\|\nabla\mathcal{L}(\textbf{w}_{t})\|_2(1-\beta_1)}{(\beta_1 - \beta_1^t)\gamma} - 1 > \phi_1$.

Now, according to our definition of $\rho > 0$, it allows us to define another constant $\phi_2 > 0$ to obtain:
\begin{align}
    \left\{\frac{\gamma}{-1 + \frac{\|\nabla\mathcal{L}(\textbf{w}_{t})\|_2(1-\beta_1)}{(\beta_1 - \beta_1^t)\gamma}}\right\} < \frac{\gamma}{\phi_1} = \rho - \phi_2\notag
\end{align}

Putting the above back into the lowerbound of $P_t - \beta_1^t P_0$, we have:
\begin{align}
    P_t - \beta_1^t P_0 \geq \|\nabla\mathcal{L}(\textbf{w}_{t})\|_{2}^{2}\left\{\frac{\gamma(\beta_1 - \beta_1^2)\phi_1\phi_2}{\rho\gamma(\rho+\gamma)}\right\} = c\|\nabla\mathcal{L}(\textbf{w}_{t})\|_{2}^{2}\notag
\end{align}

Here $ c = \frac{\gamma(\beta_1 - \beta_1^2)\phi_1\phi_2}{\rho\gamma(\rho+\gamma)} > 0$ is a constant. We use the initial condition and put $P_0 = 0$, we get:
\begin{align}
    P_t \geq \|\nabla\mathcal{L}(\textbf{w}_{t})\|_{2}^{2}\left\{\frac{\gamma(\beta_1 - \beta_1^2)\phi_1\phi_2}{\rho\gamma(\rho+\gamma)}\right\} = c\|\nabla\mathcal{L}(\textbf{w}_{t})\|_{2}^{2}\notag
\end{align}

Next, we proceed by putting the lowerbound of $P_t$ and Eq.(\ref{eq:14}) both in Eq.(\ref{eq:13}). We get:
\begin{align}
    \mathbb{E}[\mathcal{L}(\textbf{w}_{t+1})|B_t] \leq & \hspace{5pt} \mathbb{E}[\mathcal{L}(\textbf{w}_t)|B_t] -\alpha c\|\nabla\mathcal{L}(\textbf{w}_{t})\|_{2}^{2} + \frac{K}{2}\alpha^{2}\delta^2\notag\\
    \alpha c\mathbb{E}[\|\nabla\mathcal{L}(\textbf{w}_{t})\|_{2}^{2}] \leq & \hspace{5pt} \mathbb{E}[\mathcal{L}(\textbf{w}_{t}) - \mathcal{L}(\textbf{w}_{t+1})] + \frac{K}{2}\alpha^{2}\delta^{2}\notag\\
    \mathbb{E}[\|\nabla\mathcal{L}(\textbf{w}_{t})\|_{2}^{2}] \leq & \hspace{5pt} \frac{\mathbb{E}[\mathcal{L}(\textbf{w}_{t}) - \mathcal{L}(\textbf{w}_{t+1})]}{\alpha c} + \frac{K\alpha\delta^{2}}{2c}\label{eq:15}
\end{align}

From Eq.(\ref{eq:15}), we have the following inequalities:
\begin{align}
\left\{
\begin{aligned}
    \mathbb{E}[\|\nabla\mathcal{L}(\textbf{w}_{0})\|_{2}^{2}] \leq & \hspace{5pt} \frac{\mathbb{E}[\mathcal{L}(\textbf{w}_{0}) - \mathcal{L}(\textbf{w}_{1})]}{\alpha c} + \frac{K\alpha\delta^{2}}{2c} & \notag\\
    \mathbb{E}[\|\nabla\mathcal{L}(\textbf{w}_{1})\|_{2}^{2}] \leq & \hspace{5pt} \frac{\mathbb{E}[\mathcal{L}(\textbf{w}_{1}) - \mathcal{L}(\textbf{W}_{2})]}{\alpha c} + \frac{K\alpha\delta^{2}}{2c} & \notag\\
    & \vdots \\
    \mathbb{E}[\|\nabla\mathcal{L}(\textbf{w}_{T-1})\|_{2}^{2}] \leq & \hspace{5pt} \frac{\mathbb{E}[\mathcal{L}(\textbf{w}_{T-1}) - \mathcal{L}(\textbf{w}_{t})]}{\alpha c} + \frac{K\alpha\delta^{2}}{2c} &\\
\end{aligned}
\right.
\end{align}

Summing up all the inequalities presented above, we obtain:
\begin{align*}
    \sum_{t=0}^{T-1} \mathbb{E}[\|\nabla\mathcal{L}(\textbf{w}_{t})\|_{2}^{2}] \leq & \hspace{5pt} \frac{\mathbb{E}[\mathcal{L}(\textbf{w}_{0}) - \mathcal{L}(\textbf{w}_{t})]}{\alpha c} + \frac{K\alpha\delta^{2}T}{2c}
\end{align*}
The inequality remains valid if we substitute $\mathbb{E}[\|\nabla\mathcal{L}(\textbf{w}_{t})\|_{2}^{2}]$ with \\$\underset{0\leq t \leq T-1}{min}\mathbb{E}[\|\nabla\mathcal{L}(\textbf{w}_{t})\|_{2}^{2}]$ within the summation on the left-hand side (LHS).
\begin{align}
    \underset{0\leq t \leq T-1}{min}\mathbb{E}[\|\nabla\mathcal{L}(\textbf{w}_{t})\|_{2}^{2}]T \leq & \hspace{5pt} \frac{\mathcal{L}(\textbf{w}_{0}) - \mathcal{L}(\textbf{w}^{*})}{\alpha c} + \frac{K\alpha\delta^{2}T}{2c}\notag\\
    \underset{0\leq t \leq T-1}{min}\mathbb{E}[\|\nabla\mathcal{L}(\textbf{w}_{t})\|_{2}^{2}] \leq & \hspace{5pt} \frac{\mathcal{L}(\textbf{w}_{0}) - \mathcal{L}(\textbf{w}^{*})}{\alpha c T} + \frac{K\alpha\delta^{2}}{2c}\notag\\
    \underset{0\leq t \leq T-1}{min}\mathbb{E}[\|\nabla\mathcal{L}(\textbf{w}_{t})\|_{2}^{2}] \leq & \hspace{1pt} \frac{1}{\sqrt{T}}\left(\frac{\mathcal{L}(\textbf{w}_{0}) - \mathcal{L}(\textbf{w}^{*})}{cb} + \frac{K\delta^{2}b}{2c}\right)\notag
\end{align}
where $b = \alpha\sqrt{T}$. We set $b = \sqrt{2(\mathcal{L}(\textbf{w}_{0}) - \mathcal{L}(\textbf{w}^{*})\delta^{2})/K\delta^{2}}$, and we have:
\begin{align}
    \underset{0\leq t \leq T-1}{min}\mathbb{E}[\|\nabla\mathcal{L}(\textbf{w}_{t})\|_{2}^{2}] \leq \left(\frac{2K\delta^{2}}{T}(\mathcal{L}(\textbf{w}_{0})-\mathcal{L}(\textbf{w}^{*}))\right)^{\frac{1}{4}}\notag
\end{align}

When $T \geq \left(\frac{2K\delta^{2}}{\epsilon^{4}}(\mathcal{L}(\textbf{w}_{0})-\mathcal{L}(\textbf{w}^{*}))\right)$, we will have\\ $\underset{0\leq t \leq T-1}{min}\mathbb{E}[\|\nabla\mathcal{L}(\textbf{w}_{t})\|_{2}^{2}] \leq \epsilon$ which will contradict the assumption, \emph{i.e.} $\|\nabla\mathcal{L}(\textbf{w}_{t})\|_{2} > \epsilon$ for all $t \in \{1, 2, \dots\}$. Hence, completing the proof.
\end{proof}
\textbf{Lemma 1.} At any instant $t$, the following holds:\
\begin{align}
    \mathbb{E}[\langle\nabla\mathcal{L}(\textbf{w}_{t}), \textbf{A}_{t}\textbf{g}_t\rangle|B_t] \geq \theta_{min}\|\nabla\mathcal{L}(\textbf{w})_t\|_2^2\notag
\end{align}
\begin{proof}
    \begin{align}
        \mathbb{E}[\langle\nabla\mathcal{L}(\textbf{w}_{t}), \textbf{A}_{t}\textbf{g}_t\rangle|B_t] = & \hspace{5pt} \mathbb{E}\left[\sum_{j=1}^d \nabla_j\mathcal{L}(\textbf{w}_t)(\textbf{A}_t)_{jj}(\textbf{g}_t)_j|B_t\right]\notag\\
        = & \hspace{5pt} \sum_{j=1}^d \nabla_j\mathcal{L}(\textbf{w}_t)\mathbb{E}\left[(\textbf{A}_t)_{jj}(\textbf{g}_t)_j|B_t\right]\label{eq:16}
    \end{align}

    Now, we will introduce a new variable $\psi_{ri} = [\nabla_q\mathcal{L}(\textbf{w}_t)]_j$. Here, $r$ is the index for training set, $r \in \{1,\dots,k\}$. Also, condition to $B_t$, $\psi_{ri} 's$ are constants. This implies that $\nabla_j\mathcal{L}(\textbf{w}_t) = \frac{1}{k}\sum_{r=1}^k\psi_{rj}$. Further, we recall that $\mathbb{E}[(\textbf{g}_t)_j] = \nabla_j\mathcal{L}(\textbf{w}_t)$, and the expectation is taken over at $t^{th}$ update. Moreover, our implementation of the oracle is analogous to conducting random sampling with uniform distribution, where each \( (g_t)_j \) is drawn independently from the set \( \{\psi_{rj}\}_{r=1,\ldots,k} \).

    Given we have $\textbf{v}_{t} = \beta_{2}\textbf{v}_{t-1} + (1 - \beta_{2})\textbf{g}_{t}^{2}$, this implies $(\textbf{A}_t)_{jj} = \frac{1}{\sqrt{(1-\beta_2)(\textbf{g}_t)_j^2 + l_j}}$. We have defined $l_j = (1-\beta_2)\sum_{k=1}^{t-k}(\textbf{g}_k)_j^2 + \rho$. Note that, condition on $B_t$, $l_j$ is constant. This results in a specific expression for the required expectation over the $t$-th oracle call as:
    \begin{align}
        \mathbb{E}\left[(\textbf{A}_t)_{jj}(\textbf{g}_t)_j|B_t\right] = & \hspace{5pt} \mathbb{E}\left[(\textbf{A}_t)_{jj}(\textbf{g}_t)_j|B_t\right]\notag\\
        = & \hspace{5pt} \mathbb{E}_{(\textbf{g}_t)_j \sim \{\psi_{rj}\}_{r=1\dots k}}\left[\frac{(\textbf{g}_t)_j}{\sqrt{(1-\beta_2)(\textbf{g}_t)_j^2 + l_j}}\Bigg|B_t\right]\notag\\
        = & \hspace{5pt} \frac{1}{k}\sum_{r=1}^k\frac{\psi_{rj}}{\sqrt{(1-\beta_2)\psi_{rj}^2 + l_j}}\notag
    \end{align}

    Substituting the above along with the definition of the constants $\psi_{rj}$ into Eq.(\ref{eq:16}), we get:
    \begin{align}
        \mathbb{E}[\langle\nabla\mathcal{L}(\textbf{w}_{t}), \textbf{A}_{t}\textbf{g}_t\rangle|B_t] = & \hspace{5pt} \sum_{j=1}^d\left[\frac{1}{k}\sum_{r=1}^k\psi_{rj}\right]\left[\frac{1}{k}\sum_{r=1}^k\frac{\psi_{rj}}{\sqrt{(1-\beta_2)\psi_{rj}^2 + l_j}}\right]\notag
    \end{align}

    Now, we define two vectors $\textbf{y}_j$ and $\textbf{z}_j$ $\in \mathbb{R}^k$ such that $(\textbf{y}_j)_r = \psi_{rj}$ and $(\textbf{z}_j)_r = \frac{1}{\sqrt{(1-\beta_2)\psi_{rj}^2 + l_j}}$. Plugging this in the above expression, we get:
    \begin{align}
        \mathbb{E}[\langle\nabla\mathcal{L}(\textbf{w}_{t}), \textbf{A}_{t}\textbf{g}_t\rangle|B_t] = & \hspace{5pt} \frac{1}{k^2}\sum_{j=1}^d(\textbf{y}_j^T\textbf{1}_k)(\textbf{z}_j^T\textbf{y}_j)\notag\\
        = & \hspace{5pt} \frac{1}{k^2}\sum_{j=1}^d\textbf{y}_j^T(\textbf{1}_k\textbf{z}_j^T)\textbf{y}_j\notag
    \end{align}

    Note, with this substitution, the RHS of Lemma can be rewritten as:
    \begin{align}
        \theta_{min}\|\nabla\mathcal{L}(\textbf{w}_t)\|_2 = & \hspace{5pt} \theta_{min}\sum_{j=1}^d\left[\frac{1}{k}\sum_{r=1}^k\nabla_r\mathcal{L}(\textbf{w}_t)\right]^2\notag\\
        = & \hspace{5pt} \frac{\theta_{min}}{k^2}\sum_{j=1}^d(\textbf{y}_j^T\textbf{1}_k)^2\notag\\
         = & \hspace{5pt} \frac{\theta_{min}}{k^2}\sum_{j=1}^d\textbf{y}_j^T\textbf{1}_k\textbf{1}_k^T\textbf{y}_j\notag
    \end{align}
    
    Our claim for Lemma 1 can be proved straightforwardly if we show that \\$\frac{1}{k^2}\sum_{j=1}^d\textbf{y}_j^T(\textbf{1}_k\textbf{z}_j^T)\textbf{y}_j - \frac{\theta_{min}}{k^2}\sum_{j=1}^d\textbf{y}_j^T\textbf{1}_k\textbf{1}_k^T\textbf{y}_j \geq 0$ holds for all $j$. This can be further simplified as:
    \begin{align}
        \frac{1}{k^2}\sum_{j=1}^d\textbf{y}_j^T(\textbf{1}_k\textbf{z}_j^T)\textbf{y}_j - \frac{\theta_{min}}{k^2}\sum_{j=1}^d\textbf{y}_j^T\textbf{1}_k\textbf{1}_k^T\textbf{y}_j = \frac{1}{k^2}\textbf{y}_j^T(\textbf{1}_k(\textbf{z}_j - \theta_{min}\textbf{1}_k)^T)\textbf{y}_j\notag
    \end{align}

    To further simplify, we define $\textbf{u}_j \in \mathbb{R}^k$ as:
    \begin{align}
        (\textbf{u}_j)_r = (\textbf{z}_j)_r - \theta_{min} = \frac{1}{\sqrt{(1-\beta_2)(\textbf{g}_t)_j^2 + l_j}} - \theta_{min}\notag
    \end{align}

    Our objective now simplifies to show that $\frac{1}{k^2}\mathbf{y}_j^T(\mathbf{1}_k\mathbf{u}_j^T)\mathbf{y}_j \geq 0$. Before that, we find a bound on $l_j$ by recalling the definition of $\gamma$. From which it follows that $\psi_{rj}^2 \leq \gamma^2$.

\begin{align}
    l_j \leq & \hspace{5pt} (1-\beta_2)\sum_{k=1}^{t-k}(\textbf{g}_k)_j^2 + \rho\notag\\
    \leq & \hspace{5pt} (1-\beta_2)\sum_{k=1}^{t-k}\gamma^2 + \rho\notag\\
    \leq & \hspace{5pt} \gamma^2(\beta_2 - \beta_2^{t-1}) + \rho\notag
\end{align}

The above bound on $l_j$ implies:
\begin{align}
    \sqrt{(1-\beta_2)\psi_{rj}^2 + l_j} \leq & \hspace{5pt} \sqrt{(1-\beta_2)\gamma^2 + \rho + \gamma^2(\beta_2 - \beta_2^{t-1})}\notag\\
    \leq & \hspace{5pt} \sqrt{\gamma^2(1-\beta_2^{t-1}) + \rho}\notag
\end{align}

This further implies that :\\$-\theta_{min} + \frac{1}{\sqrt{(1-\beta_2)\psi_{rj}^2 + l_j}} \geq -\theta_{min} + \frac{1}{\sqrt{\gamma^2(1-\beta_2^{t-1}) + \rho}} = -\frac{1}{\rho + \gamma} + \frac{1}{\sqrt{\gamma^2(1-\beta_2^{t-1}) + \rho}} \geq 0$

The inequality follows because $beta_2 \in (0,1]$. Putting this all together:
\begin{align}
    (\textbf{y}_j^T\textbf{1}_k)(\textbf{u}_j^T\textbf{y}_j) = & \hspace{5pt} \left[\sum_{r=1}^k\psi_{rj}\right]\left[\sum_{r=1}^k\left\{-\theta_{min}\psi_{rj}+\frac{\psi_{rj}}{\sqrt{(1-\beta_2)\psi_{rj}^2 + l_j}}\right\}\right]\notag\\
    = & \hspace{5pt} \sum_{r=1}^k\sum_{s=1}^k\left[-\theta_{min}\psi_{rj}\psi_{sj} + \frac{\psi_{rj}\psi_{sj}}{\sqrt{(1-\beta_2)\psi_{rj}^2 + l_j}}\right]\notag\\
    = & \hspace{5pt} \sum_{r=1}^k\sum_{s=1}^k\psi_{rj}\psi_{sj}\left[-\theta_{min} + \frac{1}{\sqrt{(1-\beta_2)\psi_{rj}^2 + l_j}}\right]\notag
\end{align}

Now our assumption that for all $\textbf{w}$, $sign(\nabla_r\mathcal{L}(\textbf{w})) = sign(\nabla_s\mathcal{L}(\textbf{w}))$ for all $r,s \in \{1\dots k\}$ leads to the conclusion that the term $\psi_{rj}\psi_{sj} \geq 0$. Additionally, we have previously established that $\left[-\theta_{\min} + \frac{1}{\sqrt{(1-\beta_2)\psi_{rj}^2 + l_j}}\right] \geq 0$. Therefore, we have demonstrated that $(\mathbf{y}_j^T\mathbf{1}_k)(\mathbf{u}_j^T\mathbf{y}_j) \geq 0$, which completes the proof.
\end{proof}
\begin{theorem}
    Given $\|.\|_2$ and $\nabla\mathcal{L}$ are continuous functions and $\|\nabla\mathcal{L}\|_2$ is bounded from above. If $\mathcal{L}$ is Lipchitz continuous then, $K = sup_{\textbf{w} \in \mathbb{R}^d}\|\nabla\mathcal{L}\|_2$. Let $\hat{L}$ be a random variable defined as :
    \begin{equation*}
        \hat{L} = \underset{1 \leq j \leq N}{max} \|\nabla\mathcal{L}(\textbf{w}_j)\|_2
    \end{equation*} 
where $\textbf{w}_j$, $j \in \{1,2,\dots,N\}$ are i.i.d samples drawn from a distribution $\mathcal{W}$. Then, as $N \rightarrow \infty$, the random variable $\hat{L}$ converges in distribution to $K$. \\Mathematically:
\begin{equation*}
    \lim_{{n \to \infty}} P(\hat{L} \leq k) = \begin{cases} 0 & \text{if } k < K \\ 1 & \text{if } k \geq K \end{cases}
\end{equation*}
\end{theorem}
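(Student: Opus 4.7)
The plan is to reduce the claim to a standard fact about the maximum of i.i.d.\ samples and then use continuity and the definition of the supremum to control the tail. Writing $Y_j = \|\nabla\mathcal{L}(\mathbf{w}_j)\|_2$, the samples $Y_1, \dots, Y_N$ are i.i.d.\ (since the $\mathbf{w}_j$ are i.i.d.\ from $\mathcal{W}$ and $\|\nabla\mathcal{L}\|_2$ is a measurable function), bounded above by $K$. By independence,
\begin{equation*}
P(\hat{L} \leq k) = P\!\left(\max_{1 \leq j \leq N} Y_j \leq k\right) = \left(P(Y_1 \leq k)\right)^N.
\end{equation*}
Thus it suffices to analyse $P(Y_1 \leq k)$ in the two regimes $k \geq K$ and $k < K$.

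For $k \geq K$, the identity $K = \sup_{\mathbf{w}} \|\nabla\mathcal{L}(\mathbf{w})\|_2$ immediately gives $Y_1 \leq K \leq k$ almost surely, so $P(Y_1 \leq k) = 1$ and therefore $P(\hat{L} \leq k)^N = 1$ for every $N$, yielding the limit $1$. For $k < K$, I would use the definition of the supremum to pick $\mathbf{w}^\star \in \mathbb{R}^d$ with $\|\nabla\mathcal{L}(\mathbf{w}^\star)\|_2 > k$. Since $\nabla\mathcal{L}$ and $\|\cdot\|_2$ are assumed continuous, the composition $\mathbf{w} \mapsto \|\nabla\mathcal{L}(\mathbf{w})\|_2$ is continuous, and hence the preimage $U = \{\mathbf{w} : \|\nabla\mathcal{L}(\mathbf{w})\|_2 > k\}$ is an open set containing $\mathbf{w}^\star$. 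Assuming the sampling distribution $\mathcal{W}$ has $\mathcal{W}(U) > 0$ (e.g.\ $\mathcal{W}$ has full support on $\mathbb{R}^d$, as happens for standard initialisation distributions like Gaussian or uniform on a cube covering the relevant region), we obtain $P(Y_1 > k) \geq \mathcal{W}(U) > 0$, hence $q := P(Y_1 \leq k) < 1$, and $P(\hat{L} \leq k) = q^N \to 0$ as $N \to \infty$.

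Combining the two regimes gives exactly the step-function limit in the statement, namely $\lim_{N\to\infty} P(\hat{L} \leq k) = 0$ for $k < K$ and $= 1$ for $k \geq K$. This is convergence in distribution to the (degenerate) random variable equal to $K$ almost surely, because $K$'s cumulative distribution function is continuous at every $k \neq K$, which matches the set of points where the limit is taken.

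The main obstacle, as sketched above, is the implicit support hypothesis on $\mathcal{W}$: one needs $\mathcal{W}$ to put positive mass on every non-empty open subset of $\mathbb{R}^d$ on which $\|\nabla\mathcal{L}\|_2$ exceeds any level $k < K$. Without this, a sup attained only on a $\mathcal{W}$-null set would not be reached. In the paper's Algorithm~\ref{alg:lipchitz}, $\mathcal{W}$ is a standard initialisation distribution with full support, so this is innocuous, but I would state it explicitly as a hypothesis. A secondary technical point is the measurability of $\sup$ over an uncountable domain in the definition of $K$, which is free here because continuity of $\|\nabla\mathcal{L}\|_2$ lets one replace the sup by one over a countable dense subset of $\mathbb{R}^d$.
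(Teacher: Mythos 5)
Your proof follows essentially the same route as the paper's: write $\hat{L}$ as the maximum of the i.i.d.\ variables $Y_j = \|\nabla\mathcal{L}(\mathbf{w}_j)\|_2$, factorize $P(\hat{L}\le k) = \left(P(Y_1\le k)\right)^N$, and analyse the two regimes $k\ge K$ and $k<K$. The one point where you go beyond the paper is in making explicit the hypothesis that $\mathcal{W}$ assigns positive mass to the open set $\{\mathbf{w}: \|\nabla\mathcal{L}(\mathbf{w})\|_2 > k\}$ for every $k<K$ — the paper simply asserts that the product ``converges to zero unless $k \geq K$'' without stating this support condition, which is in fact needed for the claimed limit (a supremum attained only on a $\mathcal{W}$-null set would make the limit $1$ even for $k<K$), so your added hypothesis is a genuine and worthwhile correction rather than a cosmetic one.
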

\begin{proof}
    Given $\textbf{w}_j$ $\forall j \in \{1\dots N\}$ are i.i.d random variables drawn from a probability distribution $\mathcal{W}$, it implies that $\|\nabla\mathcal{L}(\textbf{w}_j)\|_2$ $\forall j \in \{1\dots N\}$ are also random variables. Say it follows a distribution $\mathcal{G}$. Here, we now proceed by making two important assumptions required for this analysis: \textbf{(i)} $\|\nabla\mathcal{L}(\textbf{w}_j)\|_2$ $\forall j \in \{1\dots N\}$ are also i.i.d and \textbf{(ii)} $\|\nabla\mathcal{L}(\textbf{w})\|_2$ is always upper bounded by a finite constant $\forall \textbf{w} \in \mathbb{R}^d$. 
    
    The distribution of $\hat{L}$ is given by the maximum of $N$ i.i.d samples from $\mathcal{G}$:
    \begin{align}
        P(\hat{L} \leq k) = P\left\{\underset{1 \leq j \leq N}{max} \|\nabla\mathcal{L}(\textbf{w}_j)\|_2 \leq k\right\}\notag
    \end{align}

This is equivalent to the joint probability that each $\|\nabla\mathcal{L}(\textbf{w}_j)\|_2$ is less than or equal to $k$:
\begin{align}
    P(\hat{L} \leq k) = P(\|\nabla\mathcal{L}(\textbf{w}_1)\|_2 \leq k, \|\nabla\mathcal{L}(\textbf{w}_2)\|_2 \leq k \dots \|\nabla\mathcal{L}(\textbf{w}_N)\|_2 \leq k)\label{eq:17}
\end{align}

Since, we assumed that $\|\nabla\mathcal{L}(\textbf{w}_j)\|_2$ $\forall j \in \{1\dots N\}$ are also i.i.d's, Eq.(\ref{eq:17}) can be expressed as product of individual probabilities:
\begin{align}
     P(\hat{L} \leq k) = \prod_{j=1}^N P(\|\nabla\mathcal{L}(\textbf{w}_j)\|_2 \leq k)\notag
\end{align}

As $N$ tends to infinity, this product of probabilities converges to zero unless $k$ is greater than or equal to $K$, the maximum value of $\|\nabla\mathcal{L}\|_2$. Hence:
\begin{equation*}
    \lim_{{N \to \infty}} P(\hat{L} \leq k) = \begin{cases} 0 & \text{if } k < K \\ 1 & \text{if } k \geq K \end{cases}
\end{equation*}

This is because, with an increasing number of samples, the probability that at least one of the samples $\|\nabla\mathcal{L}(\textbf{w}_j)\|_2$ exceeds or equals $K$ becomes almost certain. Since $K$ is the maximum value of $\|\nabla\mathcal{L}(\textbf{w})\|_2$, and  $\hat{L}$ represents the maximum of $N$ samples, it becomes highly likely that $\hat{L}$ is greater than or equal to any given $k \geq K$ as $N$ approaches infinity. In summary, the asymptotic behaviour suggests that as the sample size grows indefinitely, the probability of $\hat{L}$ exceeding or equaling any value greater than or equal to 
$K$ tends to 1.
\end{proof}
\subsection{Model Architectures and Algorithms}
In this section, we will provide a brief overview of the LeNet and VGG-9 architectures. Additionally, we will present an algorithm for computing the Lipschitz constant of the loss with respect to model weights in the mini-batch setting.\\

\textbf{1. LeNet Architecture (For MNIST)}
\begin{table}[]
    \centering
    \caption{LeNet Architecture}
    \label{tab:lenet}
    \begin{tabular}{|c|c|c|c|}
        \hline
        \textbf{Layers} & \textbf{Kernel Size} & \textbf{Input Size} & \textbf{Output Size} \\
        \hline
        Conv1 & $5 \times 5$ & $28 \times 28 \times 1$ & $26 \times 26 \times 6$ \\
        \hline
        AvgPooling1 & $2 \times 2$ & $26 \times 26 \times 6$ & $13 \times 13 \times 6$ \\
        \hline
        Conv2 & $5 \times 5$ & $13 \times 13 \times 6$ & $11 \times 11 \times 16$ \\
        \hline
        AvgPooling2 & $2 \times 2$ & $11 \times 11 \times 16$ & $5 \times 5 \times 16$ \\
        \hline
        Flatten & - & $5 \times 5 \times 16$ & $400$ \\
        \hline
        FC1 & - & $400$ & $120$ \\
        \hline
        FC2 & - & $120$ & $84$ \\
        \hline
        FC3 & - & $84$ & $10$ \\
        \hline
    \end{tabular}
\end{table}

Here, each Conv layer and FC layer includes batch normalisation and ReLU activation. Stride=1 and padding=1 are used for each conv layer and stride=2 and padding=0 are used for each avgpool layer. No batch normalisation and ReLU activation are used after FC3 layer.\\
\newpage
\textbf{2. VGG-9 Architecture (For CIFAR-10)}
\begin{table}[]
    \centering
    \caption{VGG-9 Architecture}
    \label{tab:vgg9}
    \begin{tabular}{|c|c|c|c|}
        \hline
        \textbf{Layers} & \textbf{Kernel Size} & \textbf{Input Size} & \textbf{Output Size} \\
        \hline
        Conv1 & $3 \times 3$ & $32 \times 32 \times 3$ & $32 \times 32 \times 64$ \\
        \hline
        Conv2 & $3 \times 3$ & $32 \times 32 \times 64$ & $32 \times 32 \times 64$ \\
        \hline
        MaxPooling1 & $2 \times 2$ & $32 \times 32 \times 64$ & $16 \times 16 \times 64$ \\
        \hline
        Conv3 & $3 \times 3$ & $16 \times 16 \times 64$ & $16 \times 16 \times 128$ \\
        \hline
        Conv4 & $3 \times 3$ & $16 \times 16 \times 128$ & $16 \times 16 \times 128$ \\
        \hline
        MaxPooling2 & $2 \times 2$ & $16 \times 16 \times 128$ & $8 \times 8 \times 128$ \\
        \hline
        Conv5 & $3 \times 3$ & $8 \times 8 \times 128$ & $8 \times 8 \times 256$ \\
        \hline
        Conv6 & $3 \times 3$ & $8 \times 8 \times 256$ & $8 \times 8 \times 256$ \\
        \hline
        Conv7 & $3 \times 3$ & $8 \times 8 \times 256$ & $8 \times 8 \times 256$ \\
        \hline
        MaxPooling3 & $2 \times 2$ & $8 \times 8 \times 256$ & $4 \times 4 \times 256$ \\
        \hline
        FC1 & - & $4 \times 4 \times 256$ & $256$ \\
        \hline
        FC2 & - & $256$ & $10$ \\
        \hline
    \end{tabular}
\end{table}

\textbf{3. Estimation of Lipchitz Constant in Mini-Batch Setting}\\

We present Algorithm~\ref{alg:lipchitz_mini} for approximating the Lipschitz constant of the loss in the mini-batch setting. Since this algorithm computes the Lipschitz constant by taking the maximum over the set of gradients evaluated on sampled weights, \textbf{Theorem}~\ref{theorem: 4} applies. In the mini-batch setting, $\Omega_j$ represents the probability distribution from which the $j^{th}$ batch of data is drawn. \textbf{Note:} Reviewer is advised not to confuse $\mathcal{L}^j$, loss for $j^{th}$ batch with $\mathcal{L}_j$ which is mentioned in \textbf{Theorem}~\ref{theorem: 2}.

\begin{algorithm}
\caption{Estimating Lipchitz Constant of Loss Function}
\label{alg:lipchitz_mini}
\KwIn{Dataset: $\mathcal{X}$ with batches $\{\mathcal{B}_{j}\}_{j=1}^{b} \sim \{\Omega_{j}\}_{j=1}^{b}$, Loss function: $\mathcal{L}:\mathbb{R}^{d} \rightarrow \mathbb{R}$, A network parameterized by $\textbf{w} \in \mathbb{R}^d \sim \mathcal{W}$, No. of iterations: $N$}
\KwOut{$\hat{L}$}
\textbf{Initialization: $\hat{L} = 0$}, $\emph{temp} =0 $, Iteration Number $n =0$\\
\For{$n$ \textbf{from} 1 \textbf{to} $N$:}{
Randomly sample the network weights: $\textbf{w}_{n} \sim \mathcal{W}$\\
\For{$j$ \textbf{from} 1 \textbf{to} $b$:}{
    Compute loss for current $j^{th}$ batch: $\mathcal{L}^{j}(\textbf{w}_{n}) = \mathbb{E}_{\mathcal{B}_j \sim \Omega_{j}}[\tilde{\mathcal{L}^{j}}(\mathcal{B}_j,\textbf{w}_n)]$\\
    Compute the norm of gradient at current instant: $\|\nabla\mathcal{L}^{j}(\textbf{w}_{n})\|_2$\\
    \emph{temp} = \emph{max}($\|\nabla\mathcal{L}^{j}(\textbf{w}_{n})\|_2$, \emph{temp})\\
    
}
Estimate the Lipchitz constant $(\hat{L})$: $\hat{L} = max(\emph{temp}, \hat{L})$\\}
\textbf{End}\\
\end{algorithm}
The gradient for each full pass of data is represented by the maximum gradient among the batches. This is done to respect \textbf{Theorem}~\ref{theorem: 3}.
\subsection{Additional Experiments}
We show training loss, test loss and gradient norm results for a variety of
additional network architectures. Across almost all network architectures, our main results remain
consistent.\\

\setcounter{figure}{0}
\textbf{C.1 Full-batch experiments, comparison against various learning rate schedulers.}

This section serves as a continuation for Section~\ref{sec:fb_exp}.
\begin{figure}[]
  \centering
  \begin{subfigure}[b]{0.31\textwidth}
    \includegraphics[width=\textwidth]{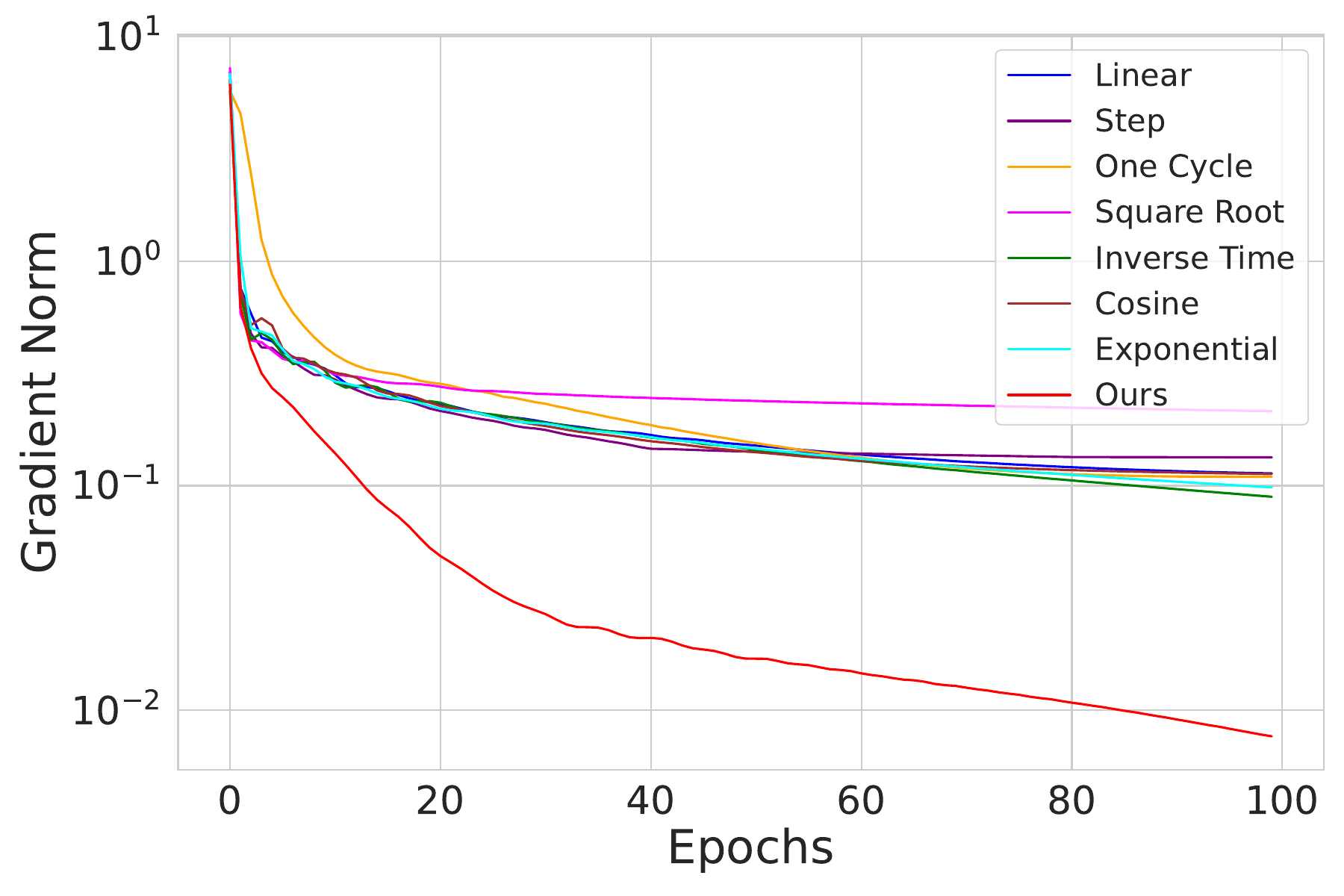}
    \caption{Gradient norm v/s Epochs}
  \end{subfigure}
  \hfill
  \begin{subfigure}[b]{0.31\textwidth}
    \includegraphics[width=\textwidth]{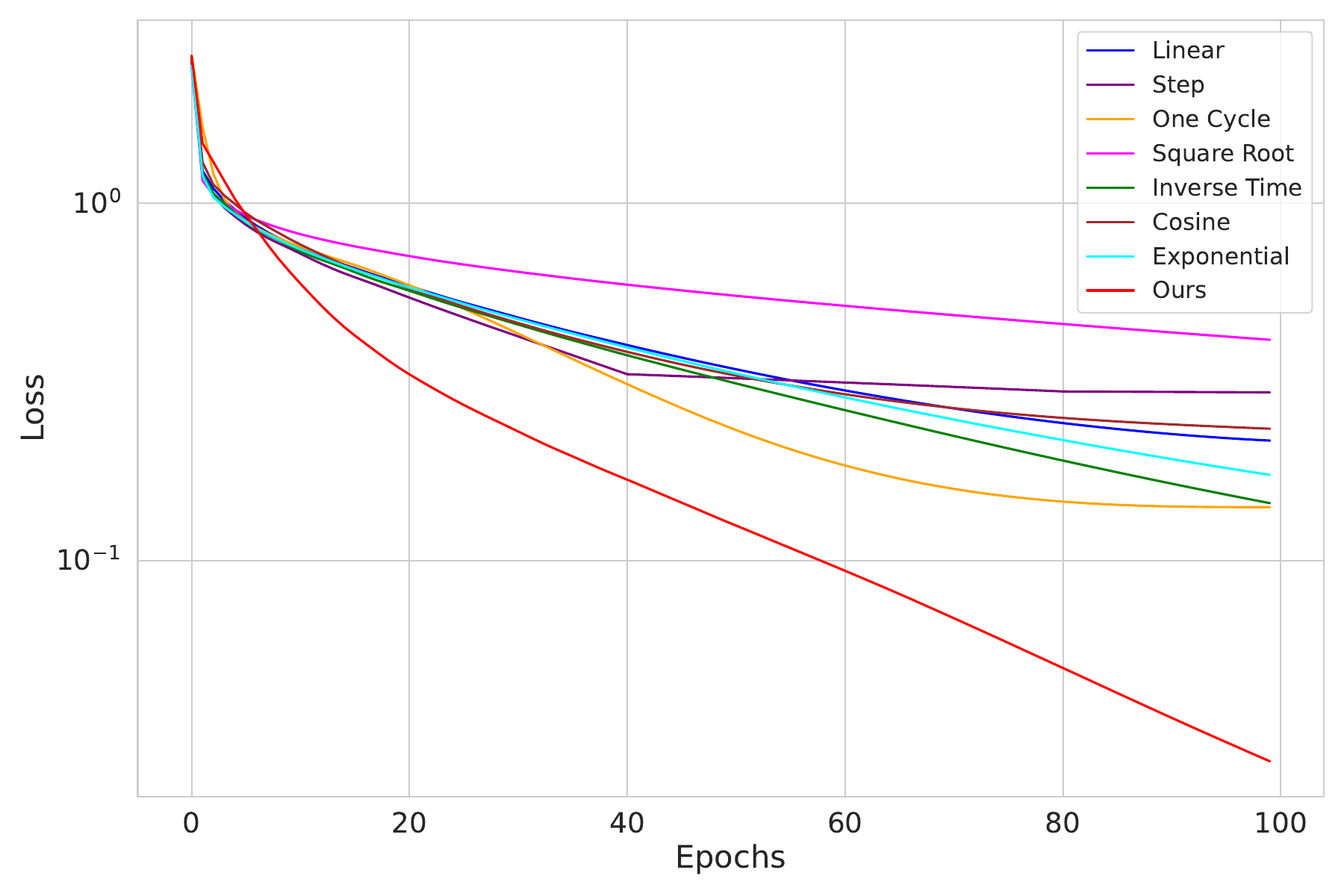}
    \caption{Training loss v/s Epochs}
  \end{subfigure}
  \hfill
  \begin{subfigure}[b]{0.31\textwidth}
    \includegraphics[width=\textwidth]{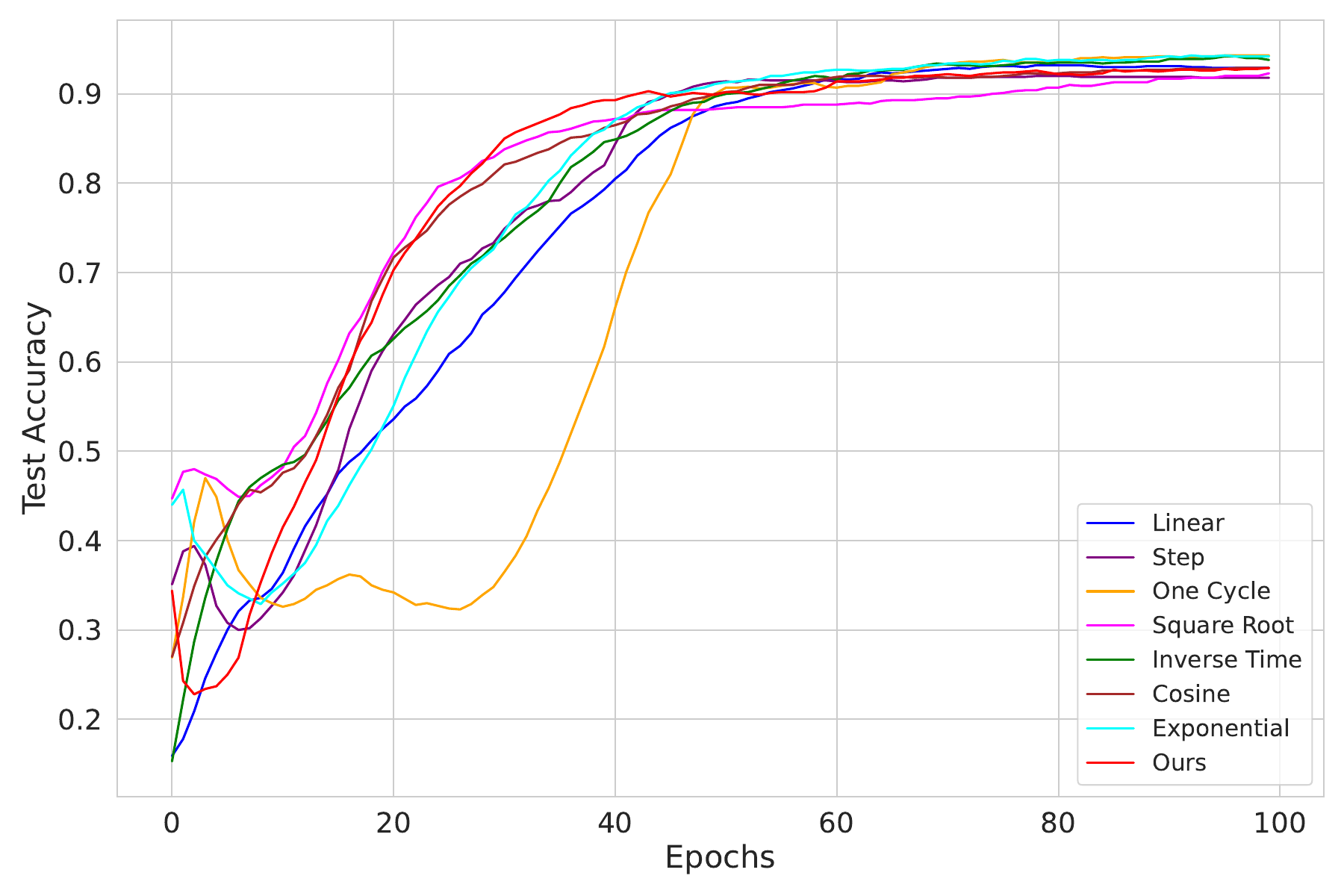}
    \caption{Validation acc. v/s Epochs}
  \end{subfigure}
  \caption{Full-batch experiments on a single layer network with 300 nodes in each layer, trained on MNIST.}
\label{fig:1_300}
\end{figure}
\begin{figure}[]
  \centering
  \begin{subfigure}[b]{0.31\textwidth}
    \includegraphics[width=\textwidth]{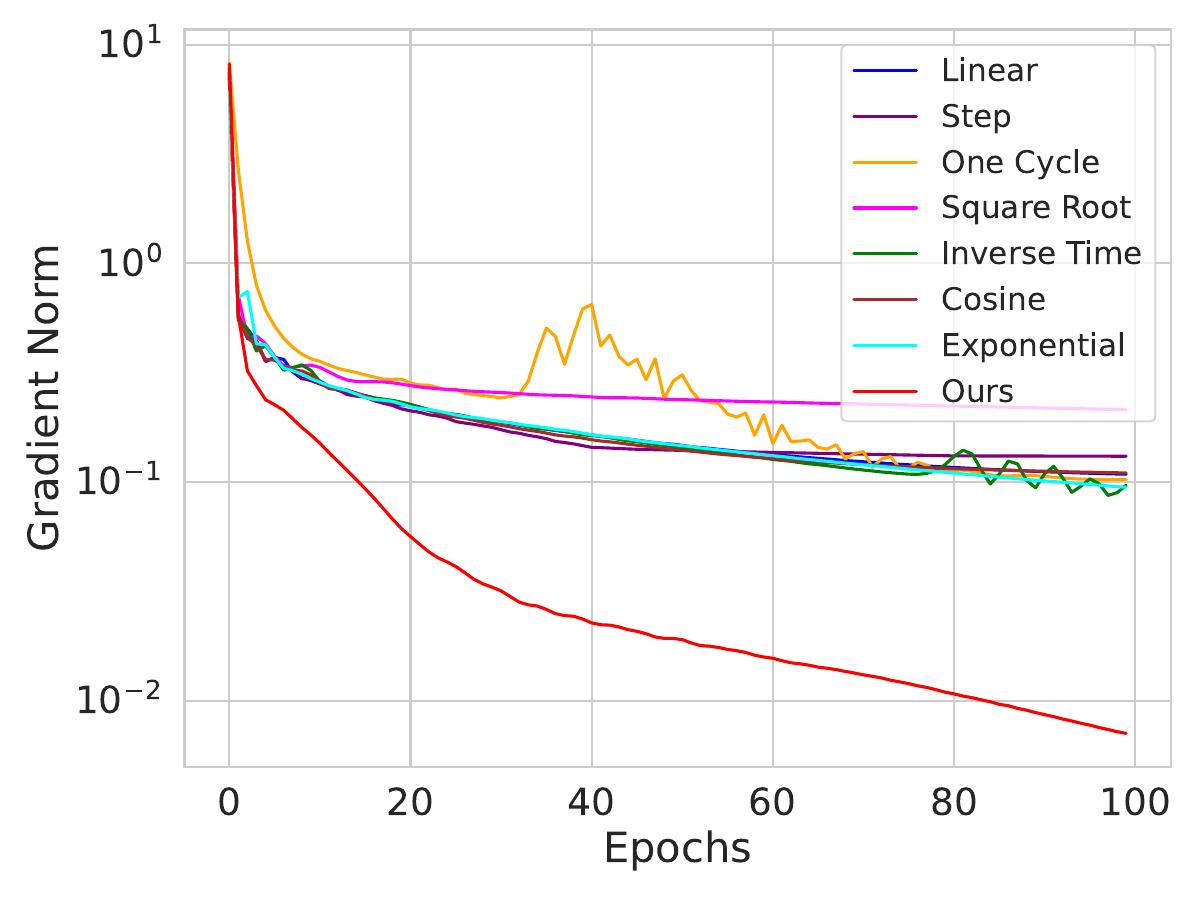}
    \caption{Gradient norm v/s Epochs}
  \end{subfigure}
  \hfill
  \begin{subfigure}[b]{0.31\textwidth}
    \includegraphics[width=\textwidth]{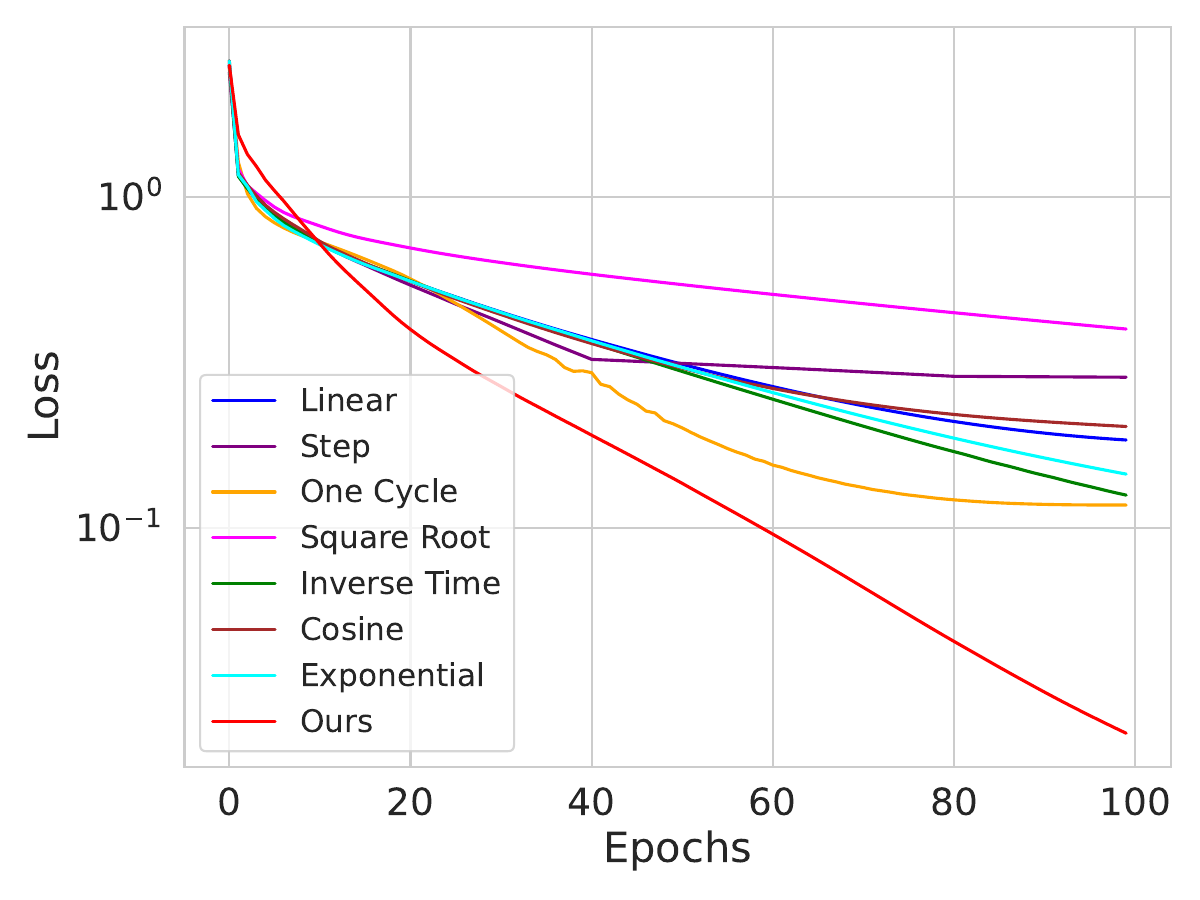}
    \caption{Training loss v/s Epochs}
  \end{subfigure}
  \hfill
  \begin{subfigure}[b]{0.31\textwidth}
    \includegraphics[width=\textwidth]{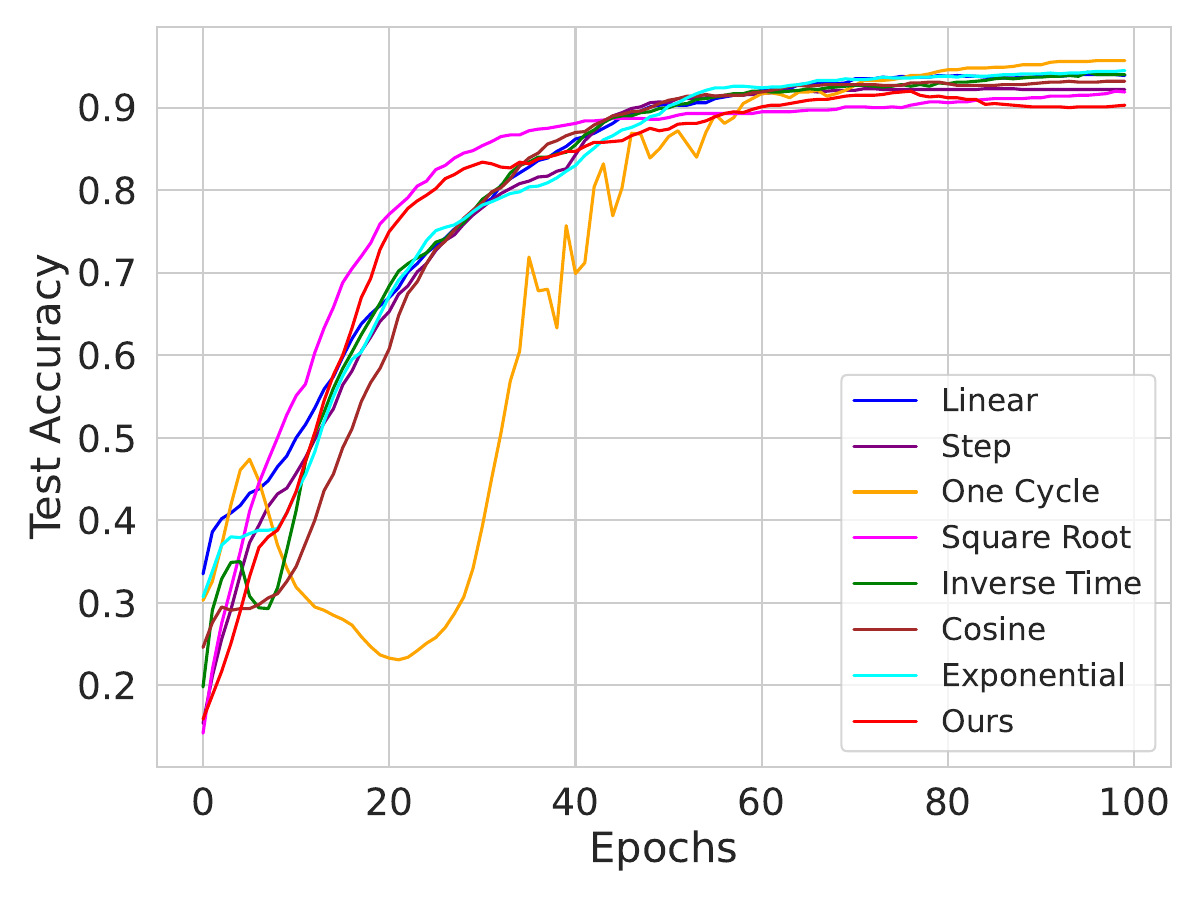}
    \caption{Validation acc. v/s Epochs}
  \end{subfigure}
  \caption{Full-batch experiments on a single layer network with 1000 nodes in each layer, trained on MNIST.}
\label{fig:1_1000}
\end{figure}
\begin{figure}[]
  \centering
  \begin{subfigure}[b]{0.31\textwidth}
    \includegraphics[width=\textwidth]{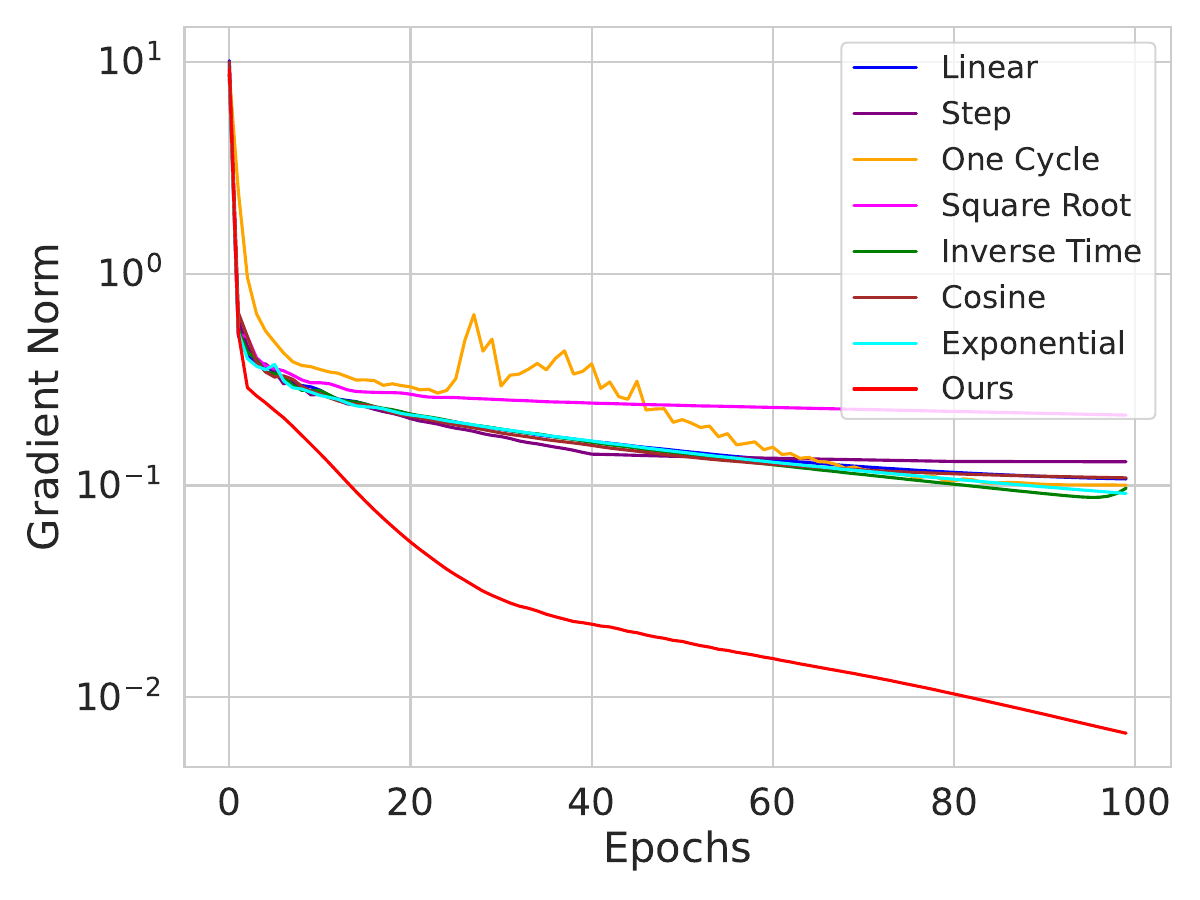}
    \caption{Gradient norm v/s Epochs}
  \end{subfigure}
  \hfill
  \begin{subfigure}[b]{0.31\textwidth}
    \includegraphics[width=\textwidth]{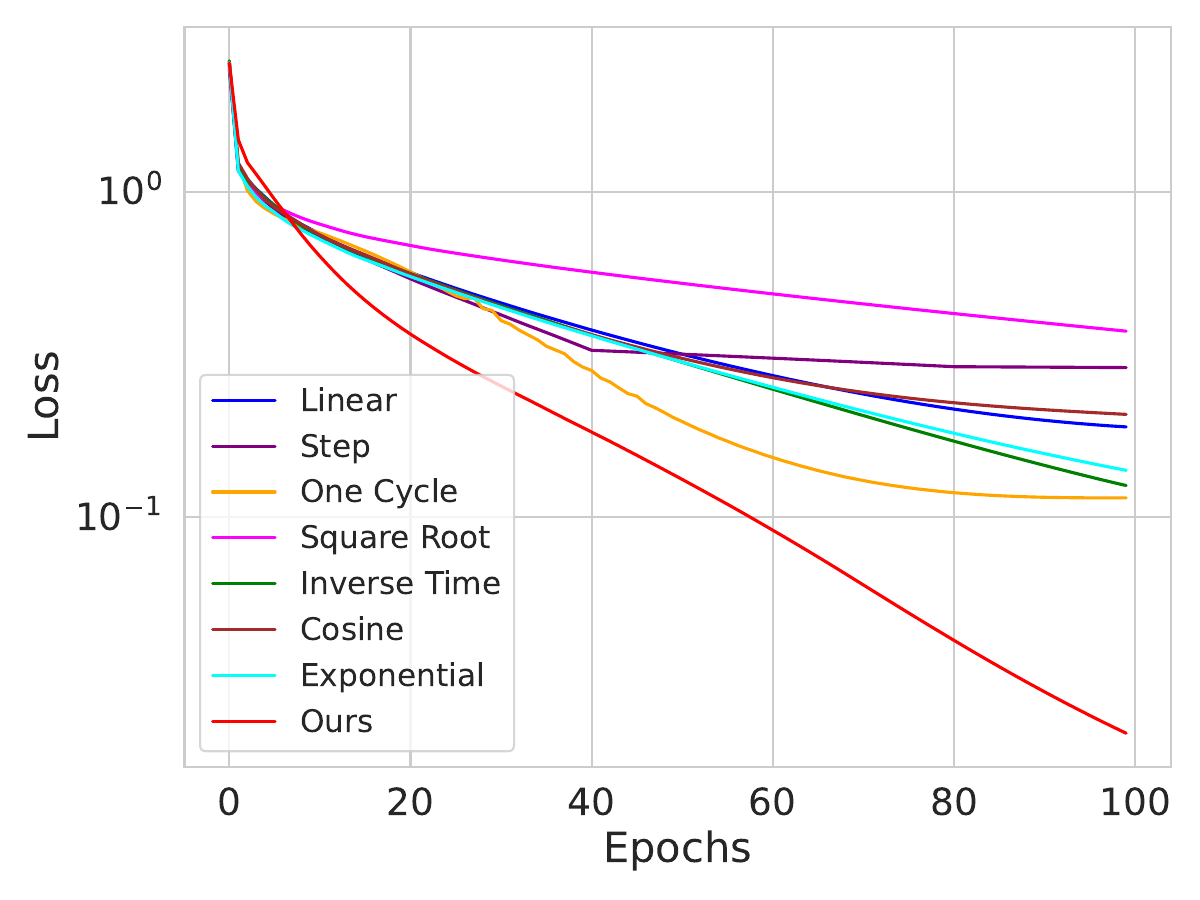}
    \caption{Training loss v/s Epochs}
  \end{subfigure}
  \hfill
  \begin{subfigure}[b]{0.31\textwidth}
    \includegraphics[width=\textwidth]{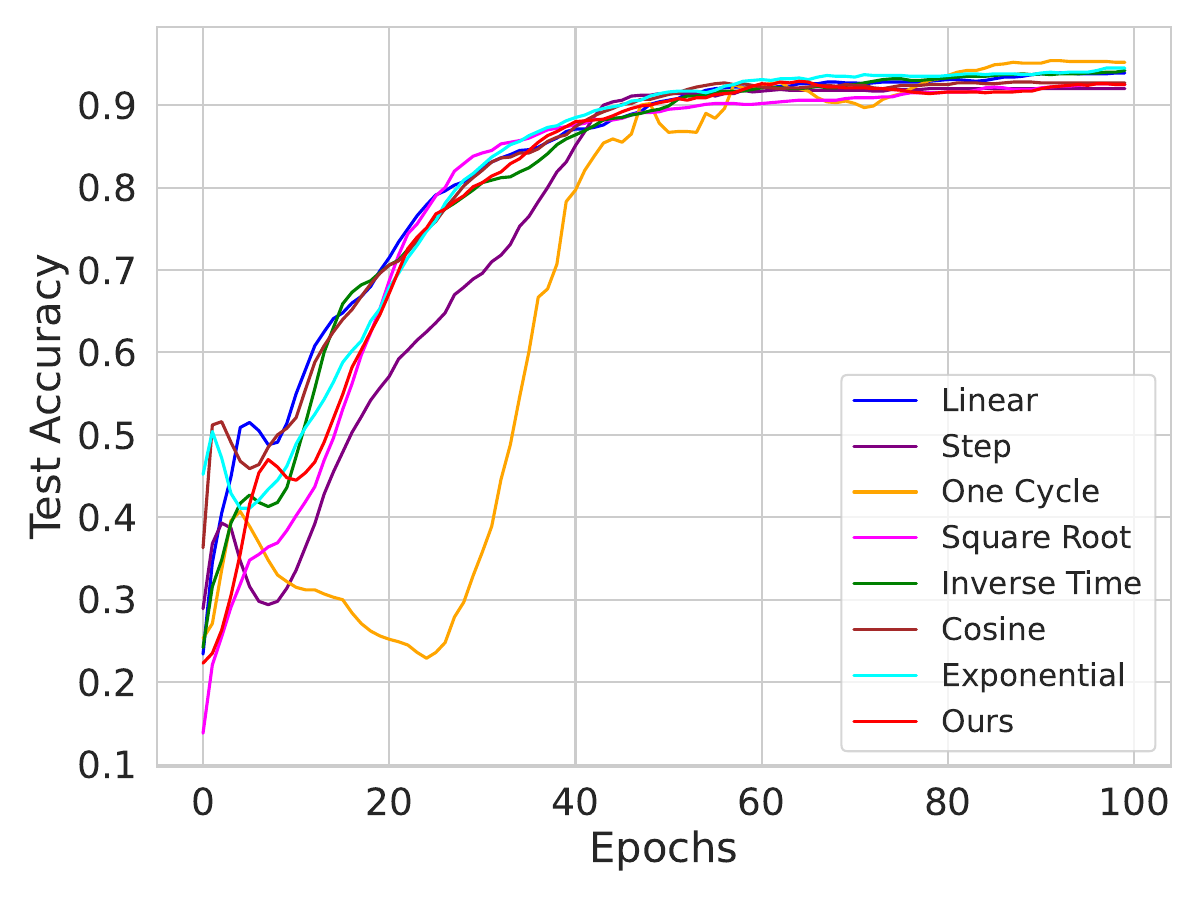}
    \caption{Validation acc. v/s Epochs}
  \end{subfigure}
  \caption{Full-batch experiments on a single layer network with 3000 nodes in each layer, trained on MNIST.}
\label{fig:1_3000}
\end{figure}
\begin{figure}[]
  \centering
  \begin{subfigure}[b]{0.31\textwidth}
    \includegraphics[width=\textwidth]{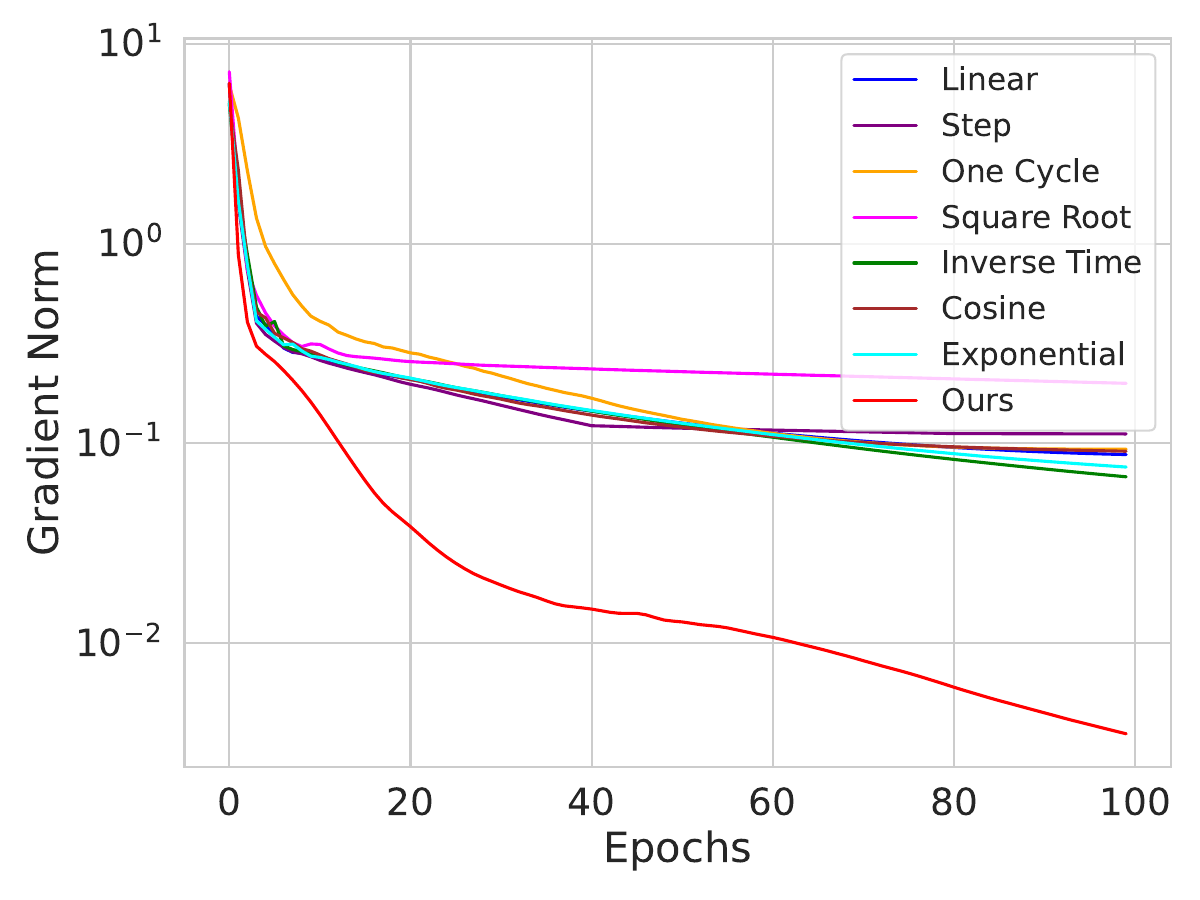}
    \caption{Gradient norm v/s Epochs}
  \end{subfigure}
  \hfill
  \begin{subfigure}[b]{0.31\textwidth}
    \includegraphics[width=\textwidth]{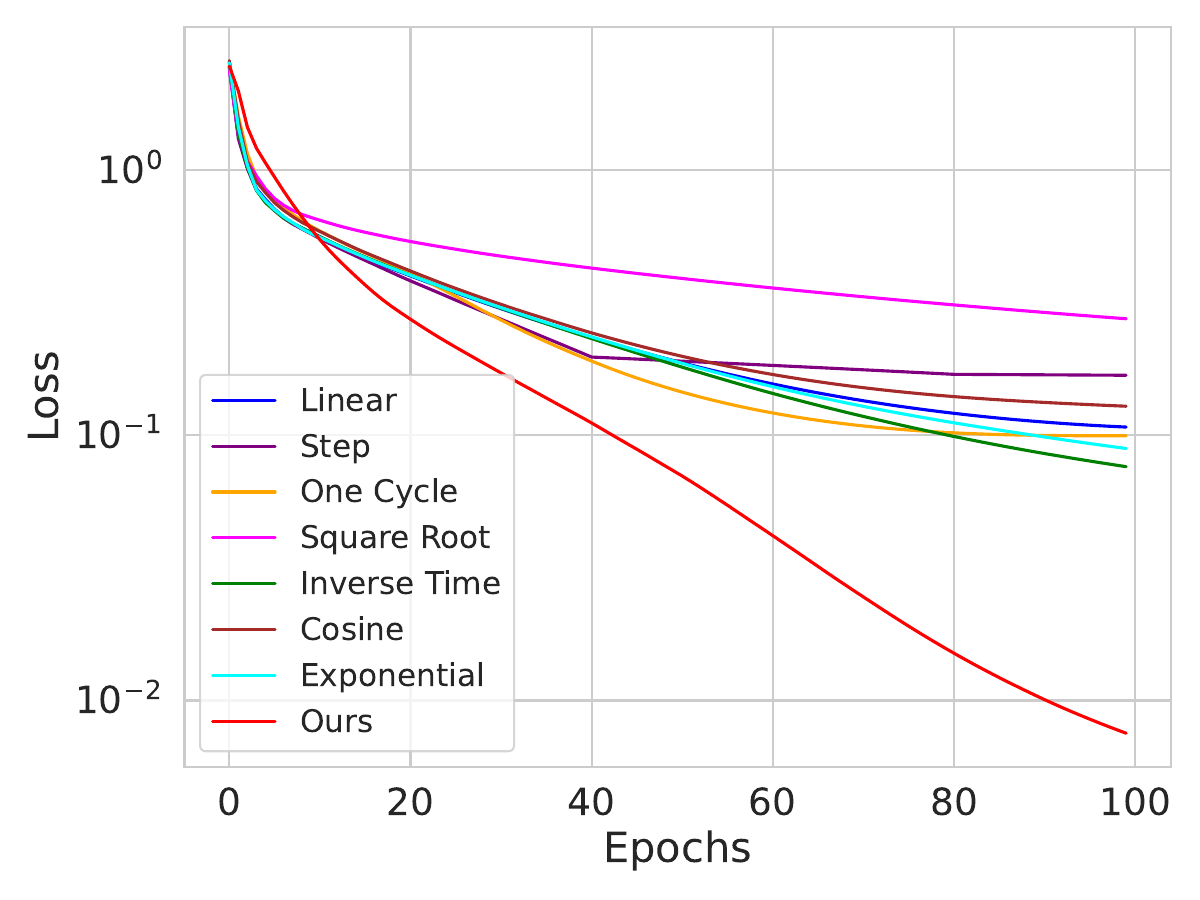}
    \caption{Training loss v/s Epochs}
  \end{subfigure}
  \hfill
  \begin{subfigure}[b]{0.31\textwidth}
    \includegraphics[width=\textwidth]{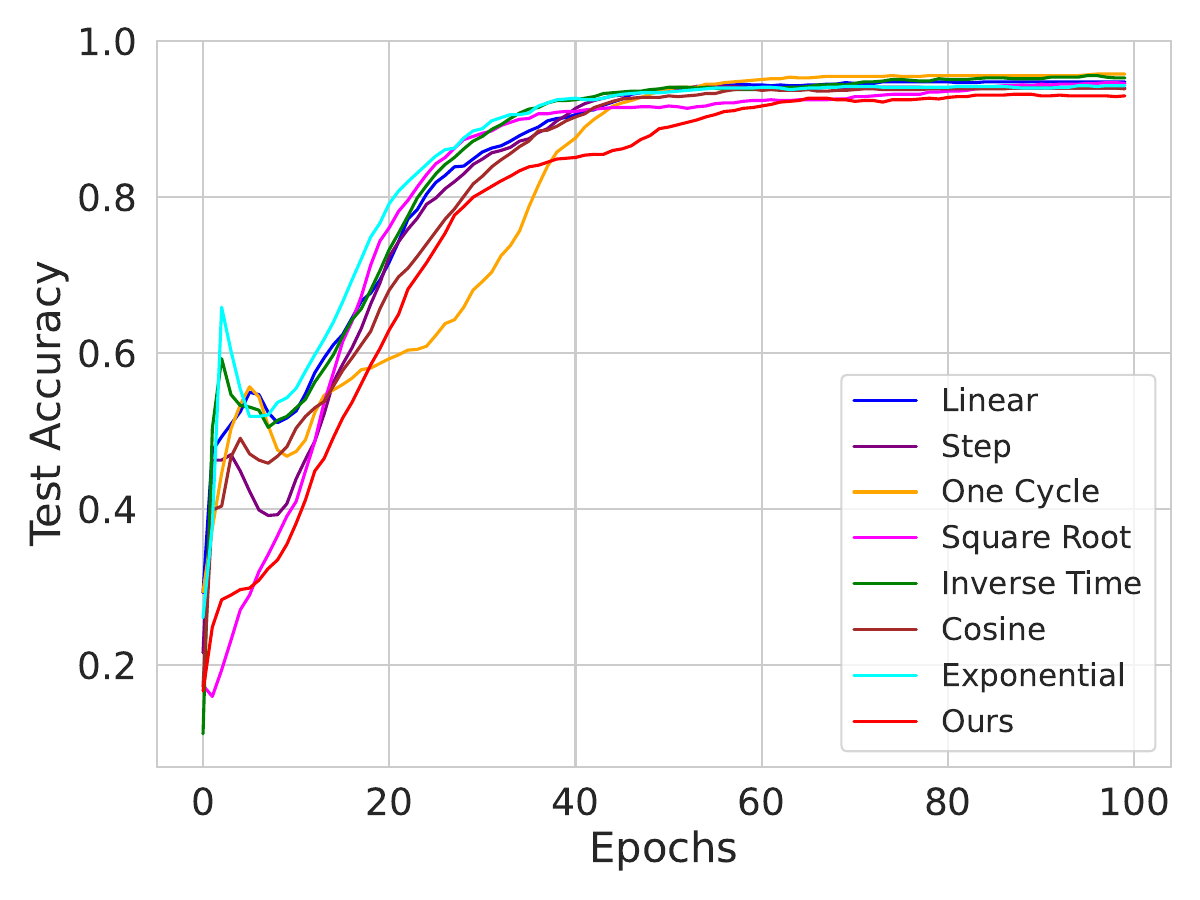}
    \caption{Validation acc. v/s Epochs}
  \end{subfigure}
  \caption{Full-batch experiments on a 3 layer network with 300 nodes in each layer, trained on MNIST.}
\label{fig:3__300}
\end{figure}
\begin{figure}[]
  \centering
  \begin{subfigure}[b]{0.31\textwidth}
    \includegraphics[width=\textwidth]{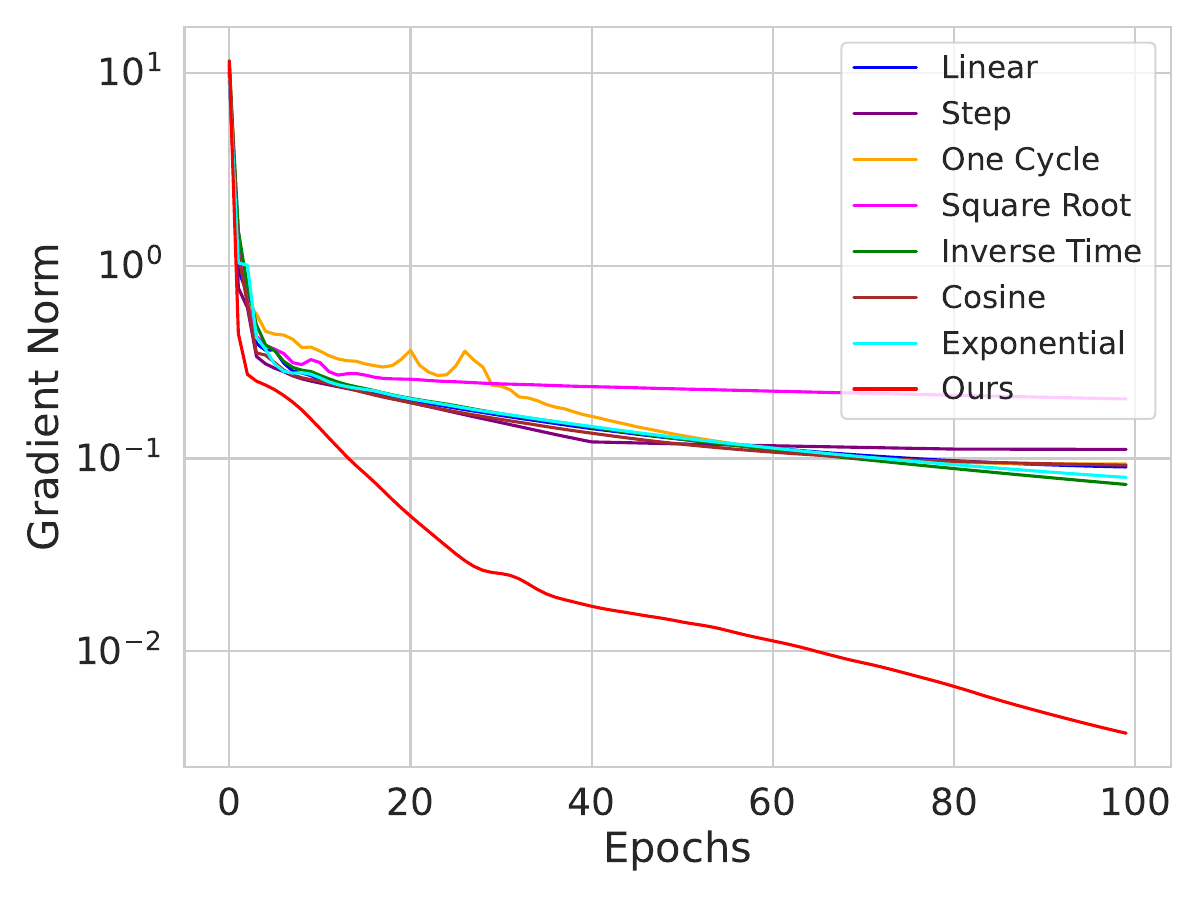}
    \caption{Gradient norm v/s Epochs}
  \end{subfigure}
  \hfill
  \begin{subfigure}[b]{0.31\textwidth}
    \includegraphics[width=\textwidth]{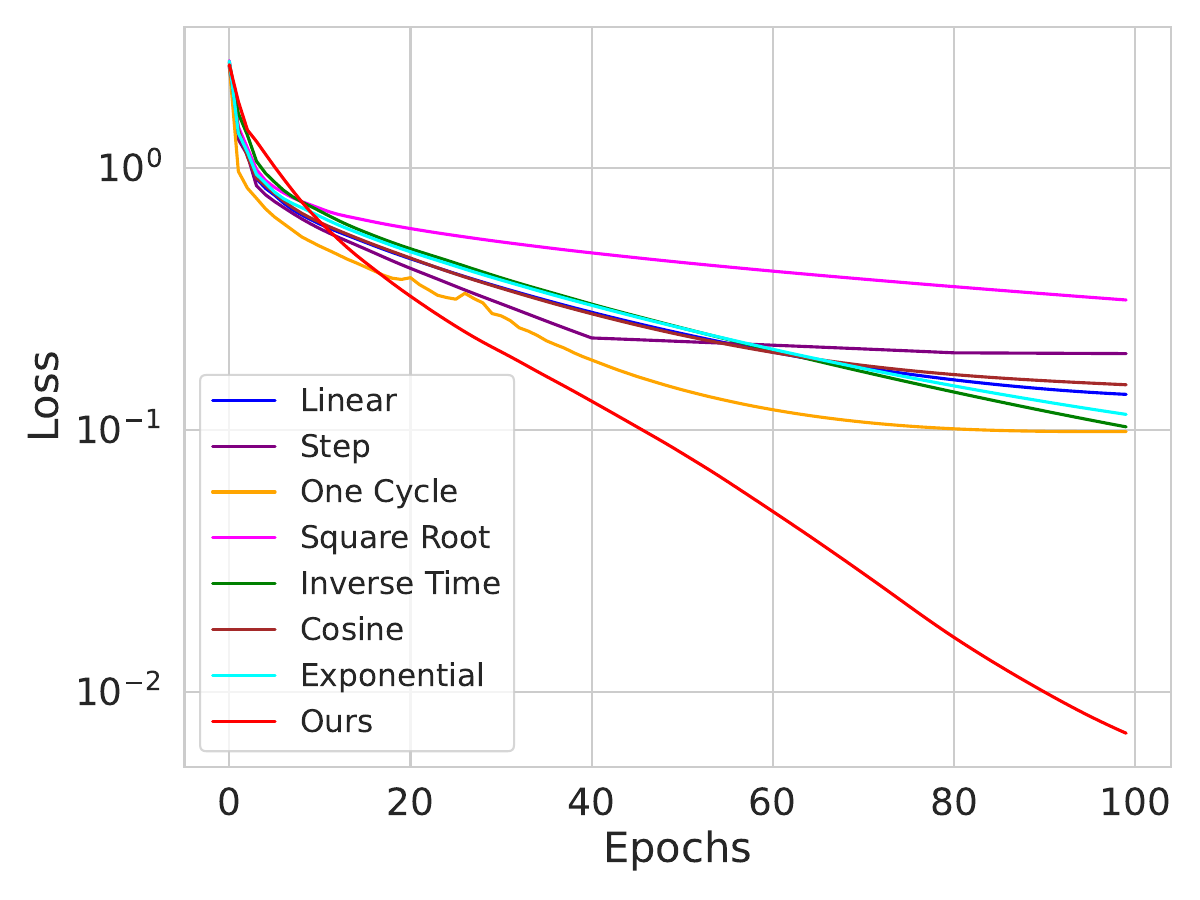}
    \caption{Training loss v/s Epochs}
  \end{subfigure}
  \hfill
  \begin{subfigure}[b]{0.31\textwidth}
    \includegraphics[width=\textwidth]{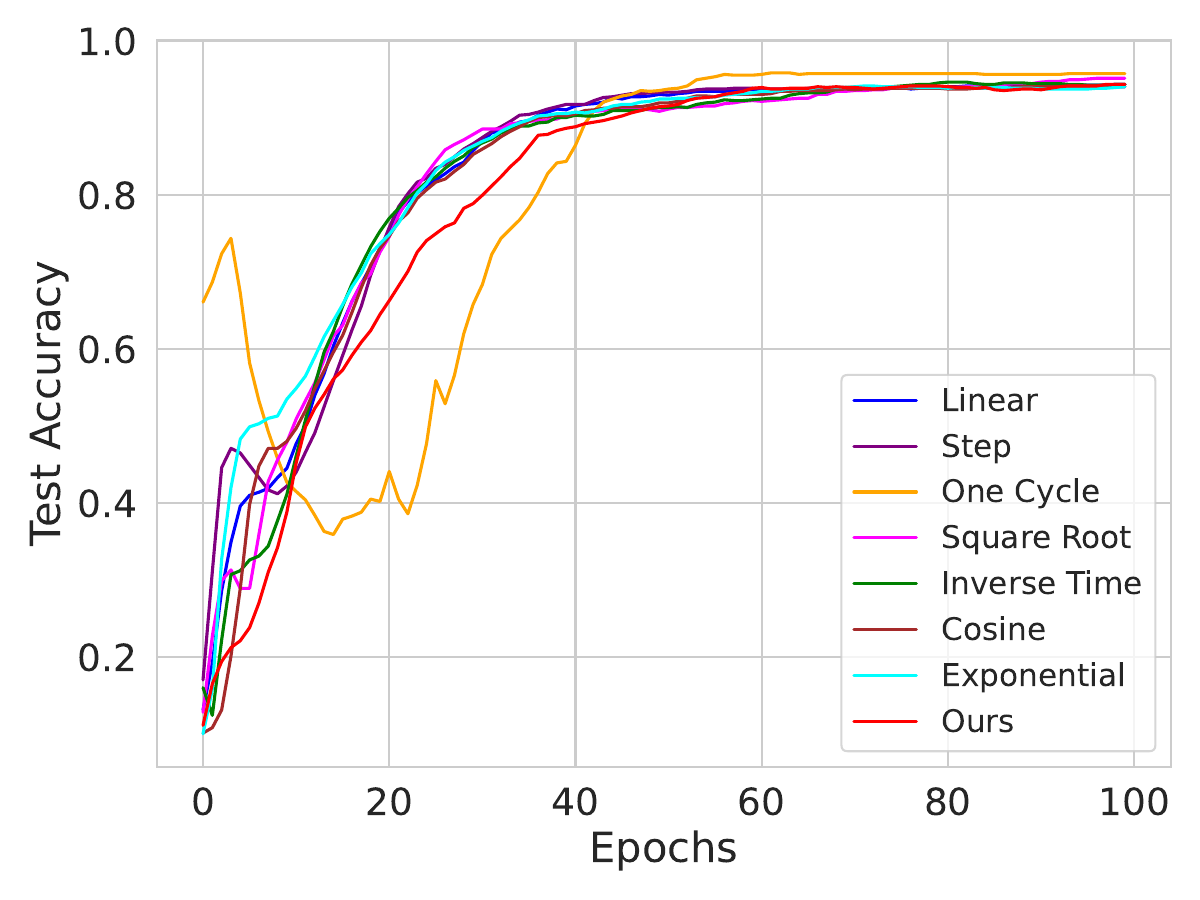}
    \caption{Validation acc. v/s Epochs}
  \end{subfigure}
  \caption{Full-batch experiments on a 3 layer network with 3000 nodes in each layer, trained on MNIST.}
\label{fig:3__3000}
\end{figure}
\begin{figure}[]
  \centering
  \begin{subfigure}[b]{0.31\textwidth}
    \includegraphics[width=\textwidth]{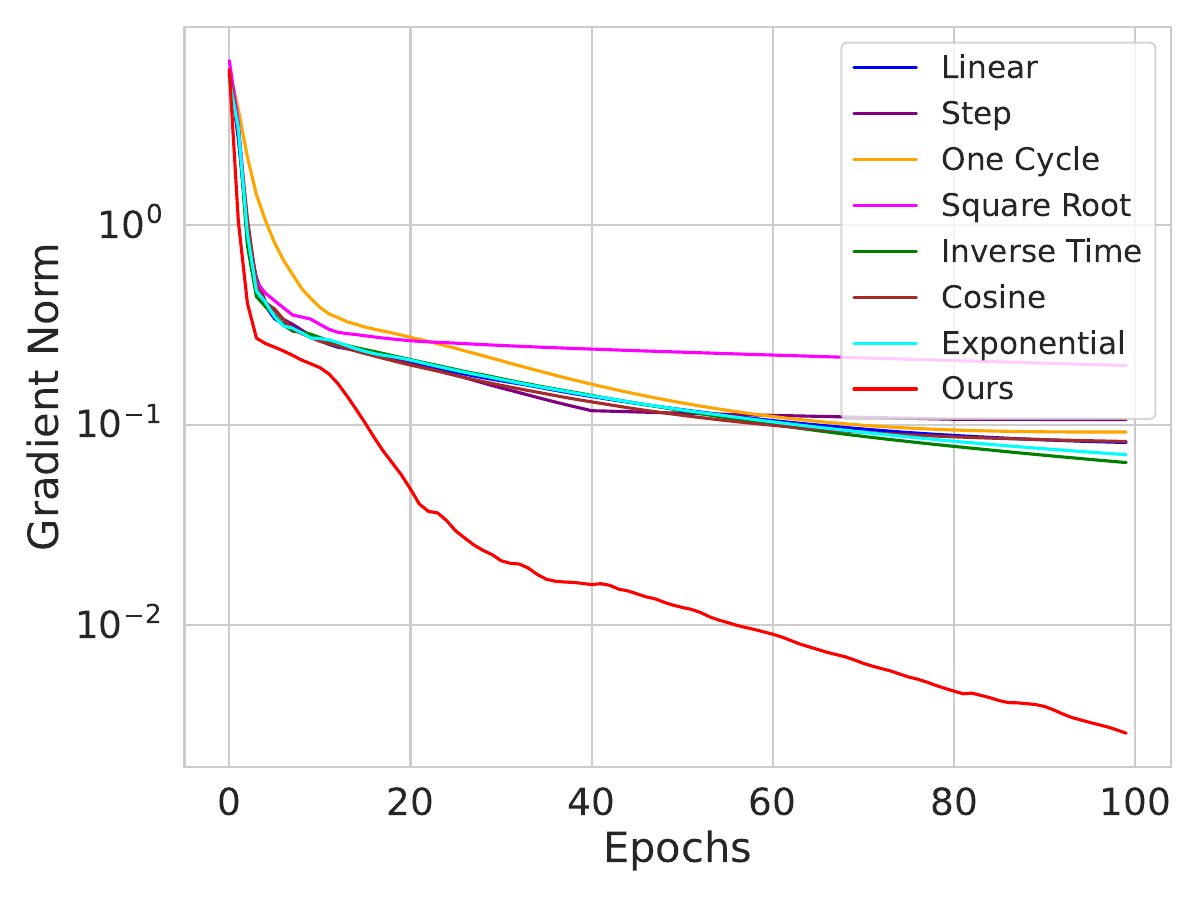}
    \caption{Gradient norm v/s Epochs}
  \end{subfigure}
  \hfill
  \begin{subfigure}[b]{0.31\textwidth}
    \includegraphics[width=\textwidth]{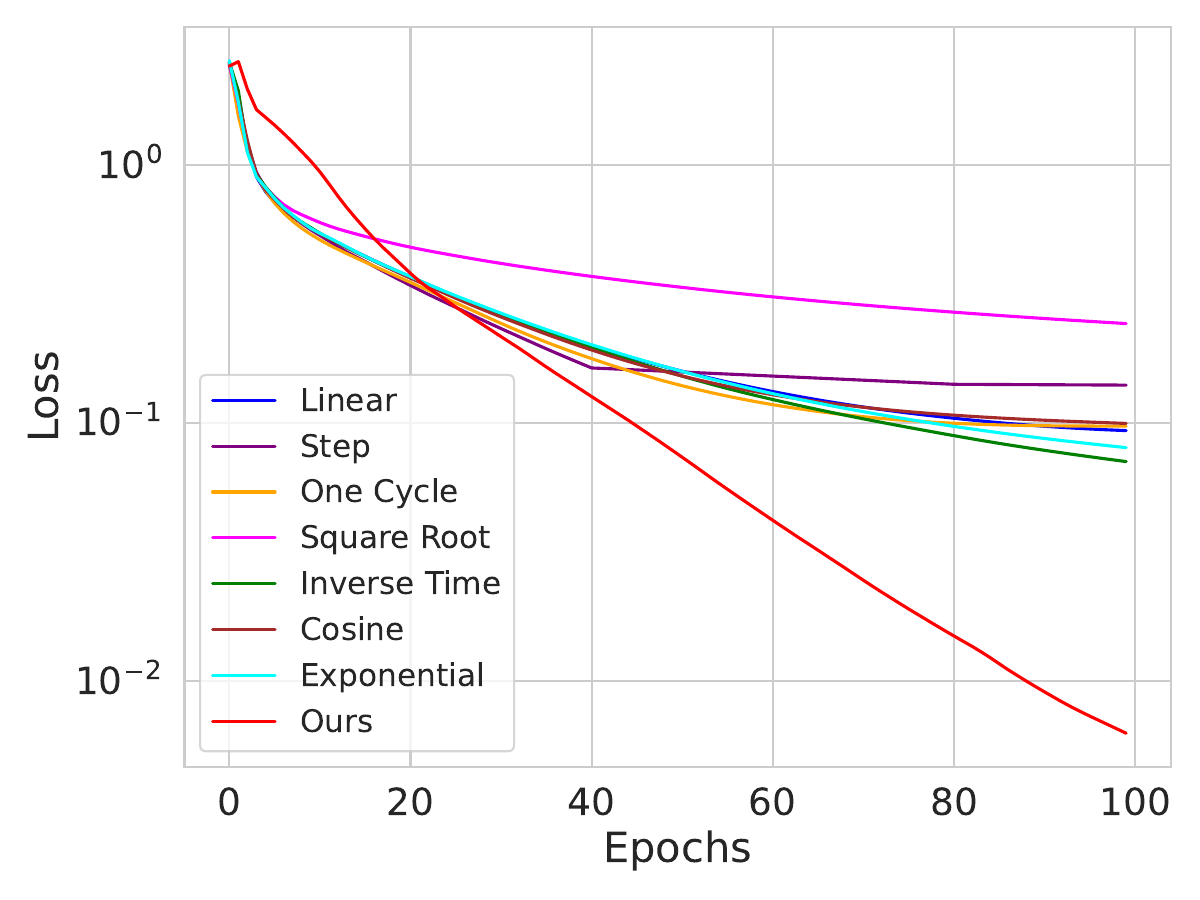}
    \caption{Training loss v/s Epochs}
  \end{subfigure}
  \hfill
  \begin{subfigure}[b]{0.31\textwidth}
    \includegraphics[width=\textwidth]{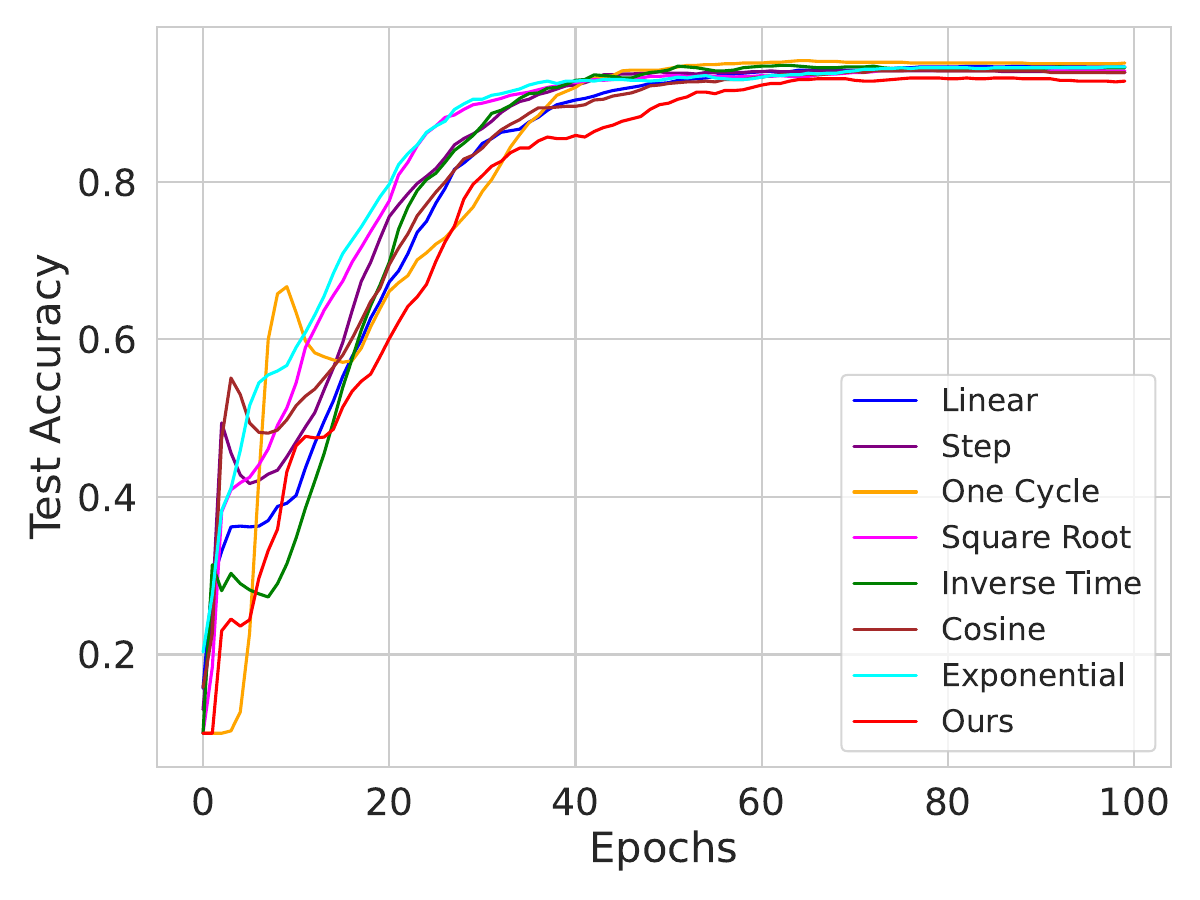}
    \caption{Validation acc. v/s Epochs}
  \end{subfigure}
  \caption{Full-batch experiments on a 5 layer network with 300 nodes in each layer, trained on MNIST.}
\label{fig:5__300}
\end{figure}
\begin{figure}[]
  \centering
  \begin{subfigure}[b]{0.31\textwidth}
    \includegraphics[width=\textwidth]{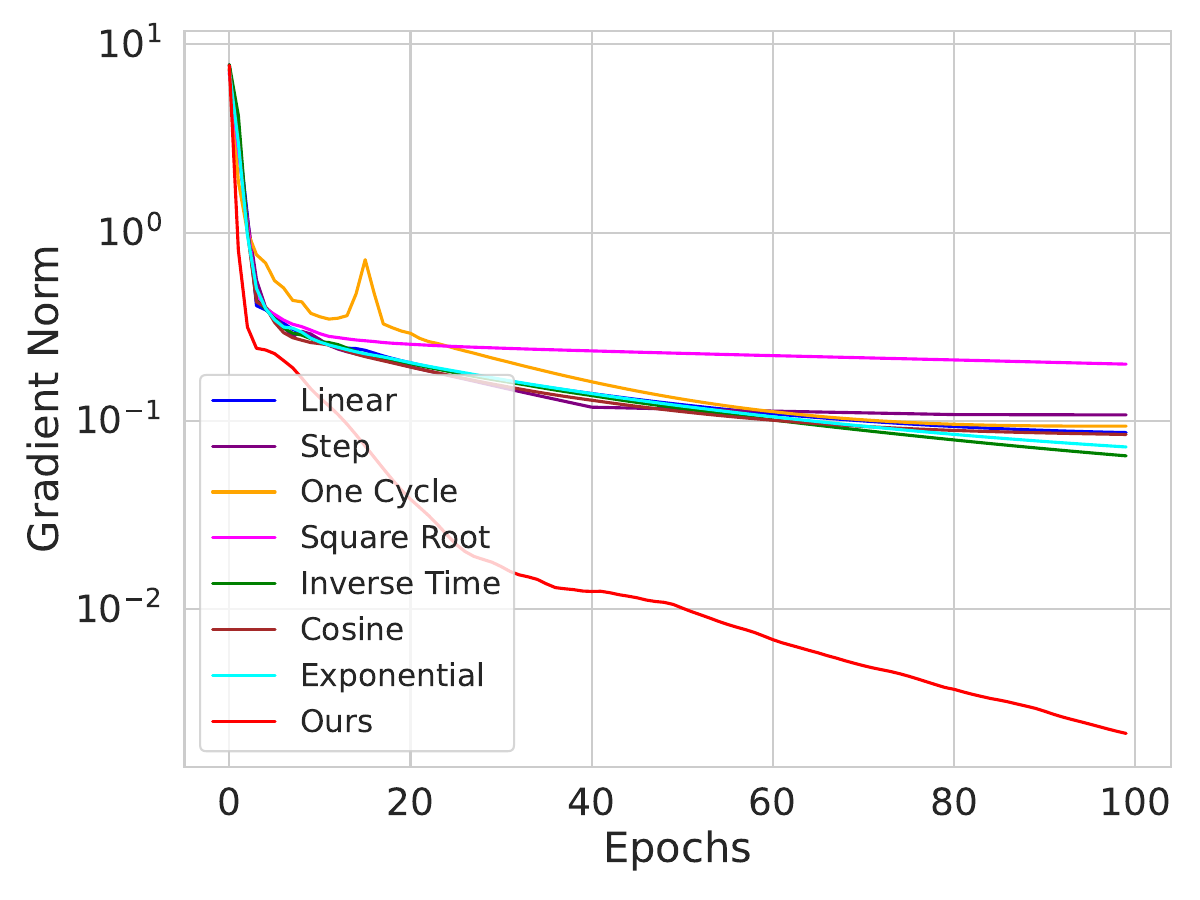}
    \caption{Gradient norm v/s Epochs}
  \end{subfigure}
  \hfill
  \begin{subfigure}[b]{0.31\textwidth}
    \includegraphics[width=\textwidth]{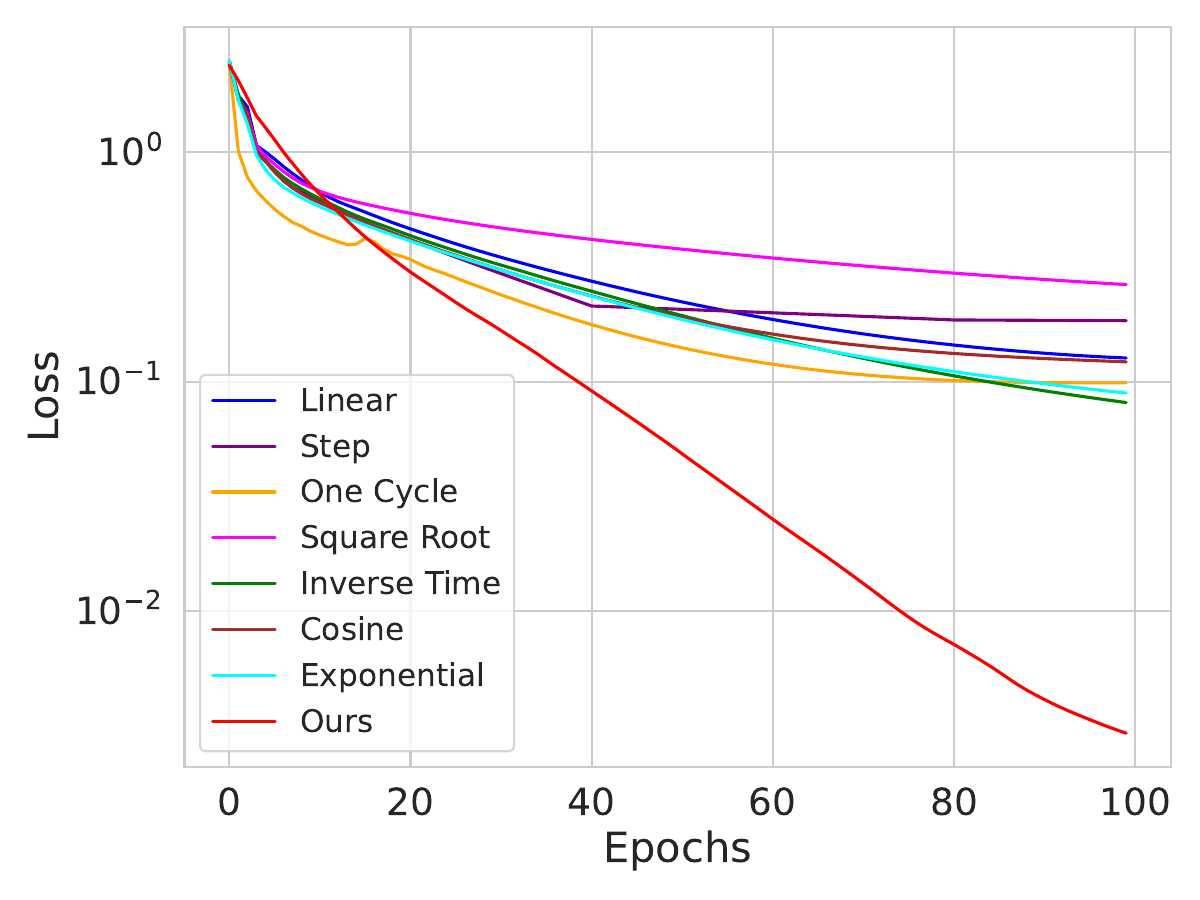}
    \caption{Training loss v/s Epochs}
  \end{subfigure}
  \hfill
  \begin{subfigure}[b]{0.31\textwidth}
    \includegraphics[width=\textwidth]{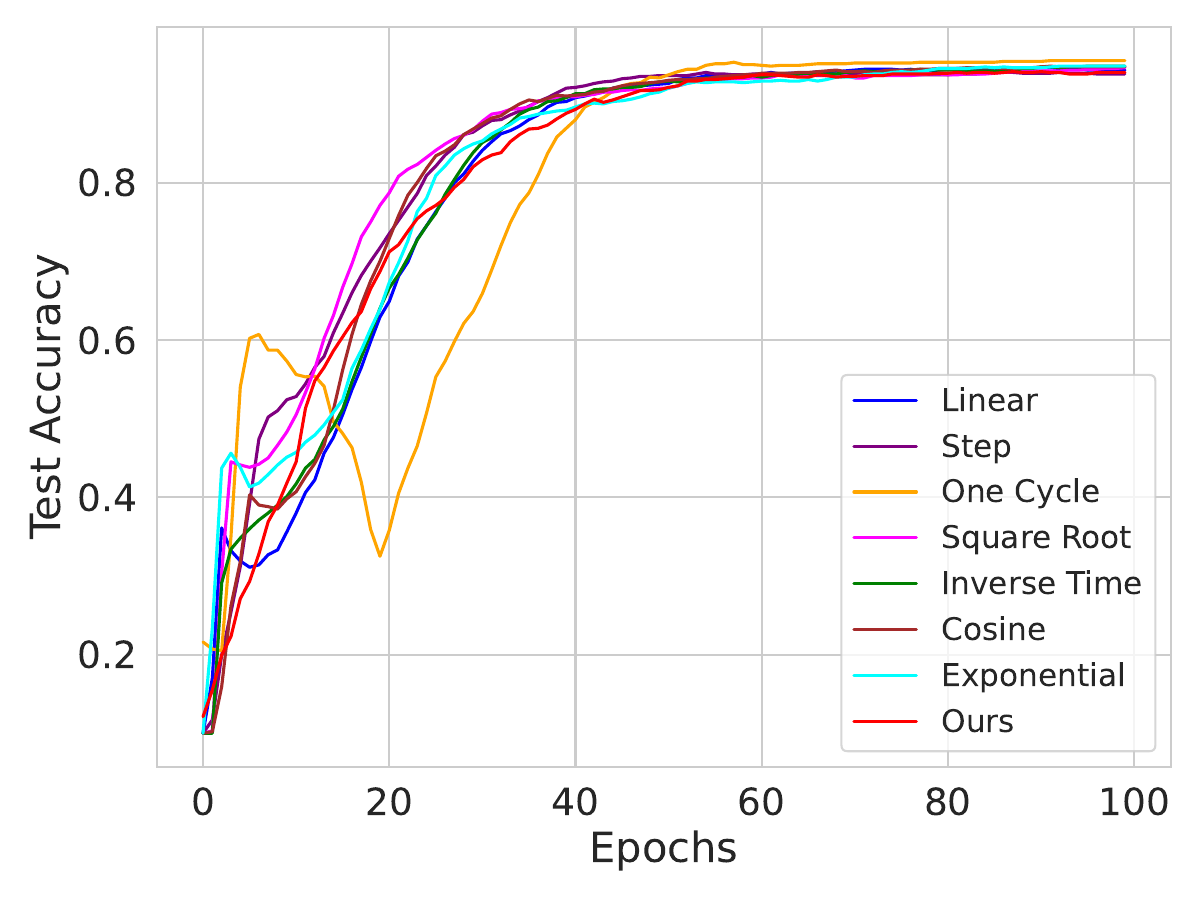}
    \caption{Validation acc. v/s Epochs}
  \end{subfigure}
  \caption{Full-batch experiments on a 5 layer network with 1000 nodes in each layer, trained on MNIST.}
\label{fig:5__1000}
\end{figure}
\begin{figure}[]
  \centering
  \begin{subfigure}[b]{0.31\textwidth}
    \includegraphics[width=\textwidth]{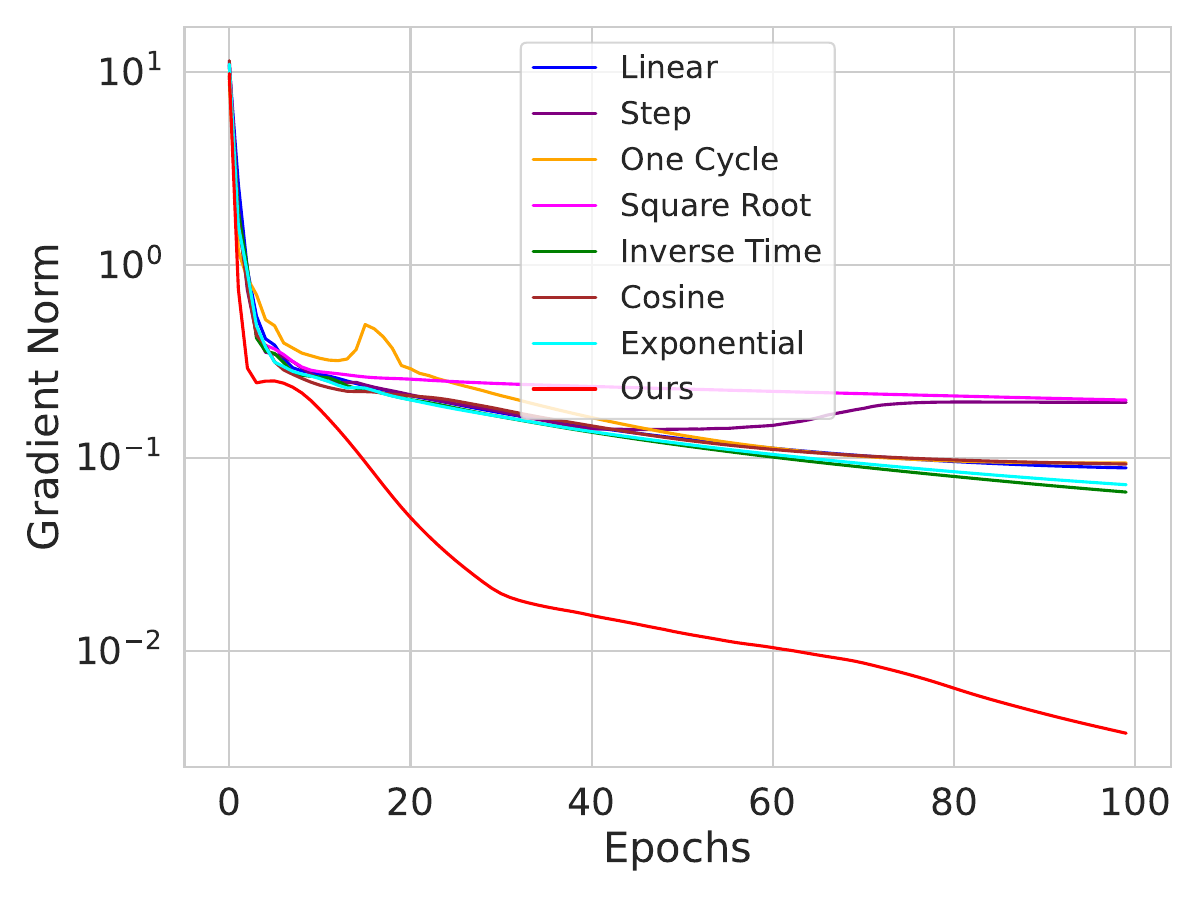}
    \caption{Gradient norm v/s Epochs}
  \end{subfigure}
  \hfill
  \begin{subfigure}[b]{0.31\textwidth}
    \includegraphics[width=\textwidth]{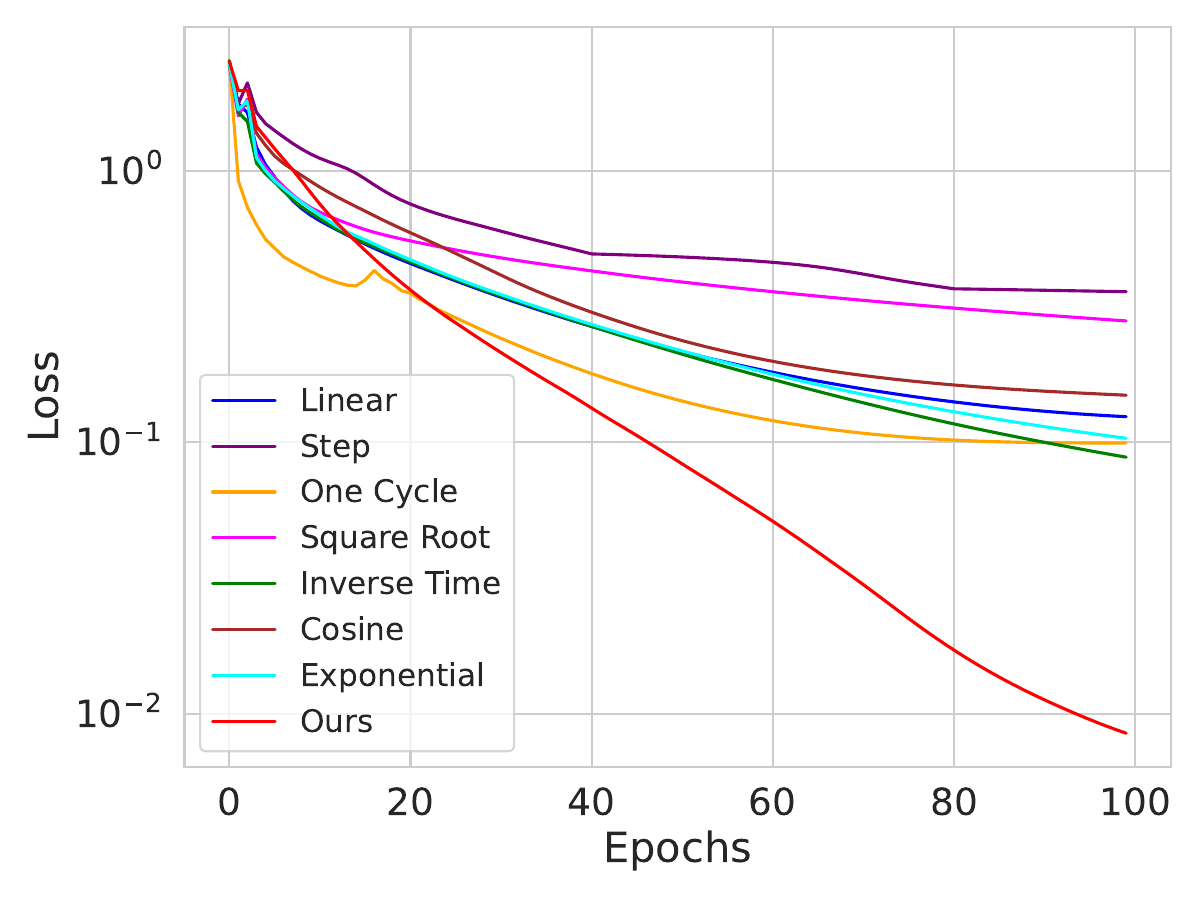}
    \caption{Training loss v/s Epochs}
  \end{subfigure}
  \hfill
  \begin{subfigure}[b]{0.31\textwidth}
    \includegraphics[width=\textwidth]{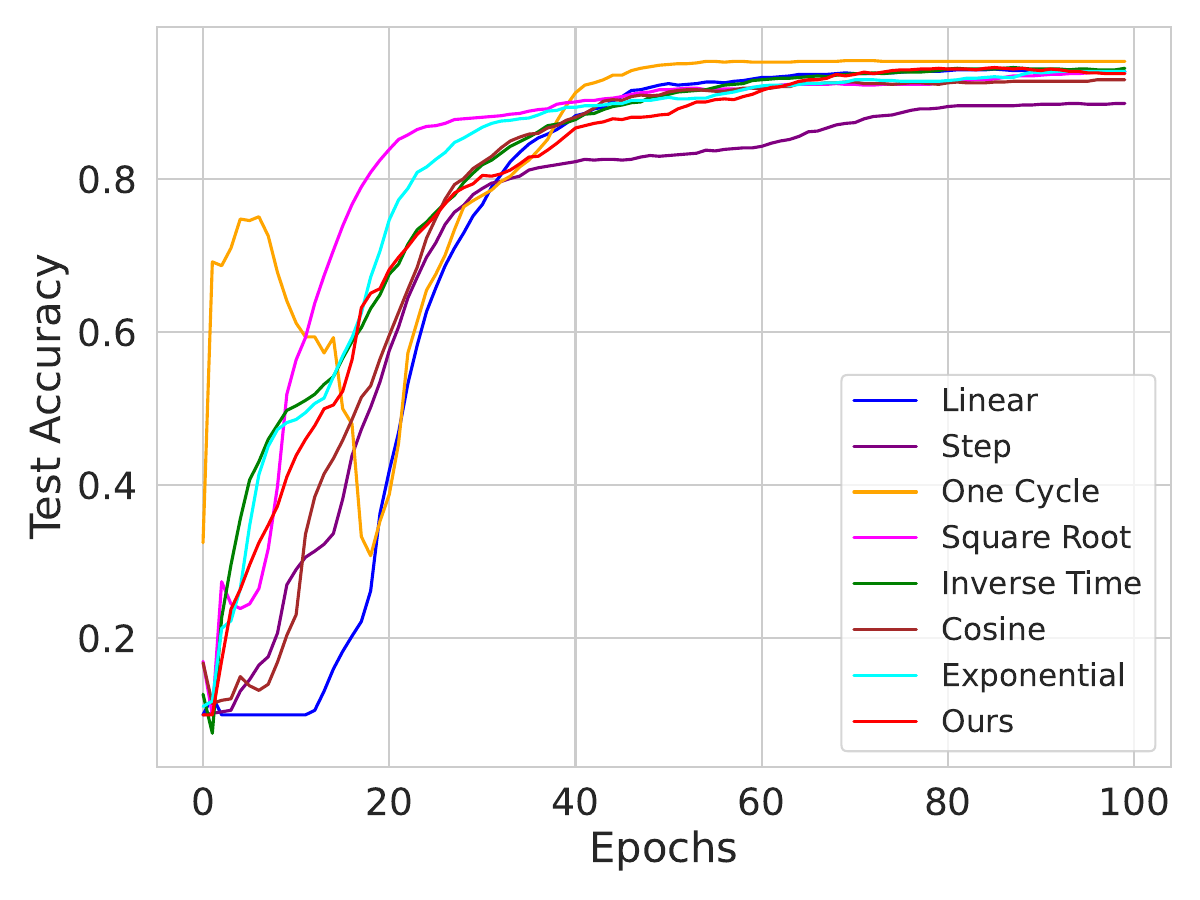}
    \caption{Validation acc. v/s Epochs}
  \end{subfigure}
  \caption{Full-batch experiments on a 5 layer network with 3000 nodes in each layer, trained on MNIST.}
\label{fig:5__3000}
\end{figure}
\newpage
Based on the preceding figures, it's evident that our chosen learning rate effectively reduces the gradient norm, leading to a commendable validation accuracy. In the case of full-batch setting, the outcomes remain consistent across various linear layer architectures.
\\

\textbf{C.2 Full-batch experiments, comparison against various fixed learning rates.}

This section serves as a continuation for Section~\ref{sec:fb_exp}.
\begin{figure}[htbp]
  \centering
  \begin{subfigure}[b]{0.31\textwidth}
    \includegraphics[width=\textwidth]{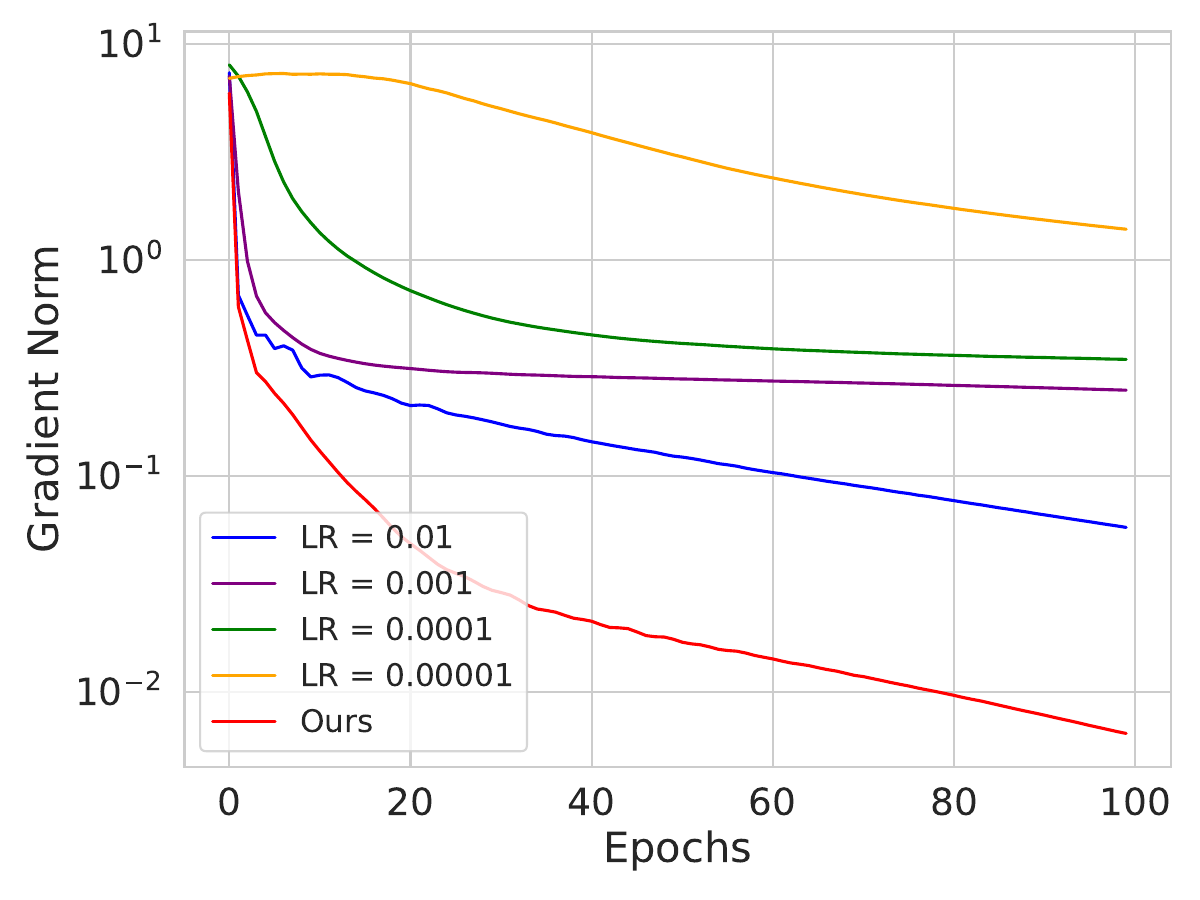}
    \caption{Gradient norm v/s Epochs}
  \end{subfigure}
  \hfill
  \begin{subfigure}[b]{0.31\textwidth}
    \includegraphics[width=\textwidth]{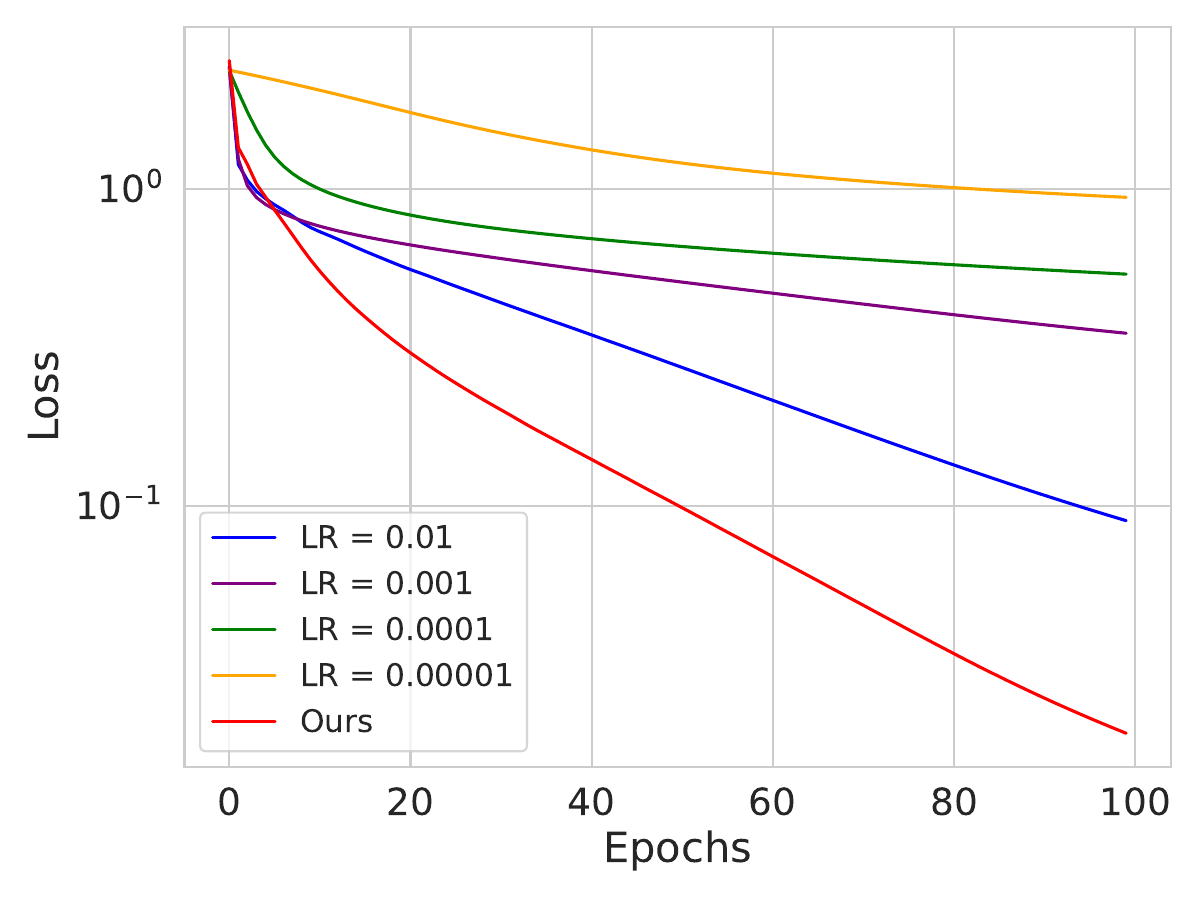}
    \caption{Training loss v/s Epochs}
  \end{subfigure}
  \hfill
  \begin{subfigure}[b]{0.31\textwidth}
    \includegraphics[width=\textwidth]{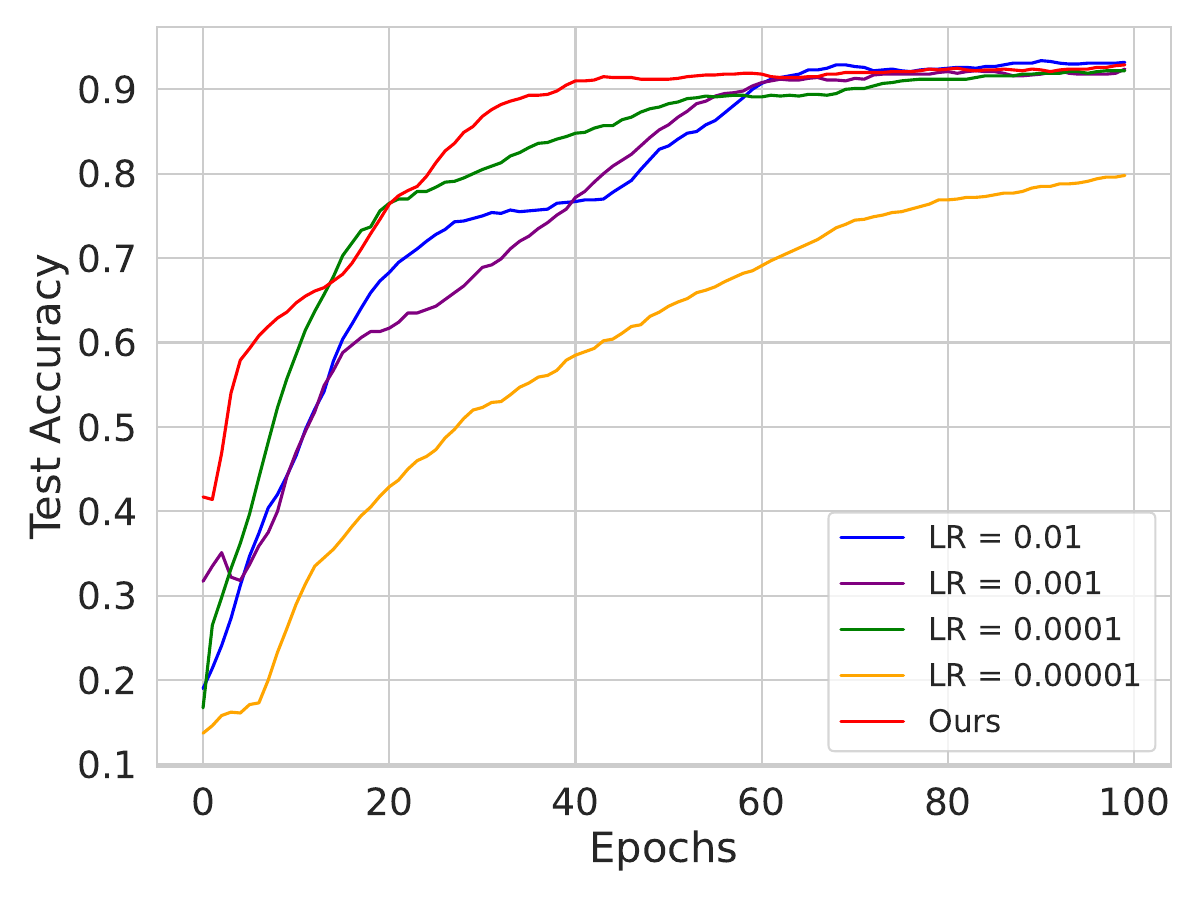}
    \caption{Validation acc. v/s Epochs}
  \end{subfigure}
  \caption{Full-batch experiments on a single layer network with 300 nodes in each layer, trained on MNIST.}
\label{fig:1__300_step}
\end{figure}
\begin{figure}[htbp]
  \centering
  \begin{subfigure}[b]{0.31\textwidth}
    \includegraphics[width=\textwidth]{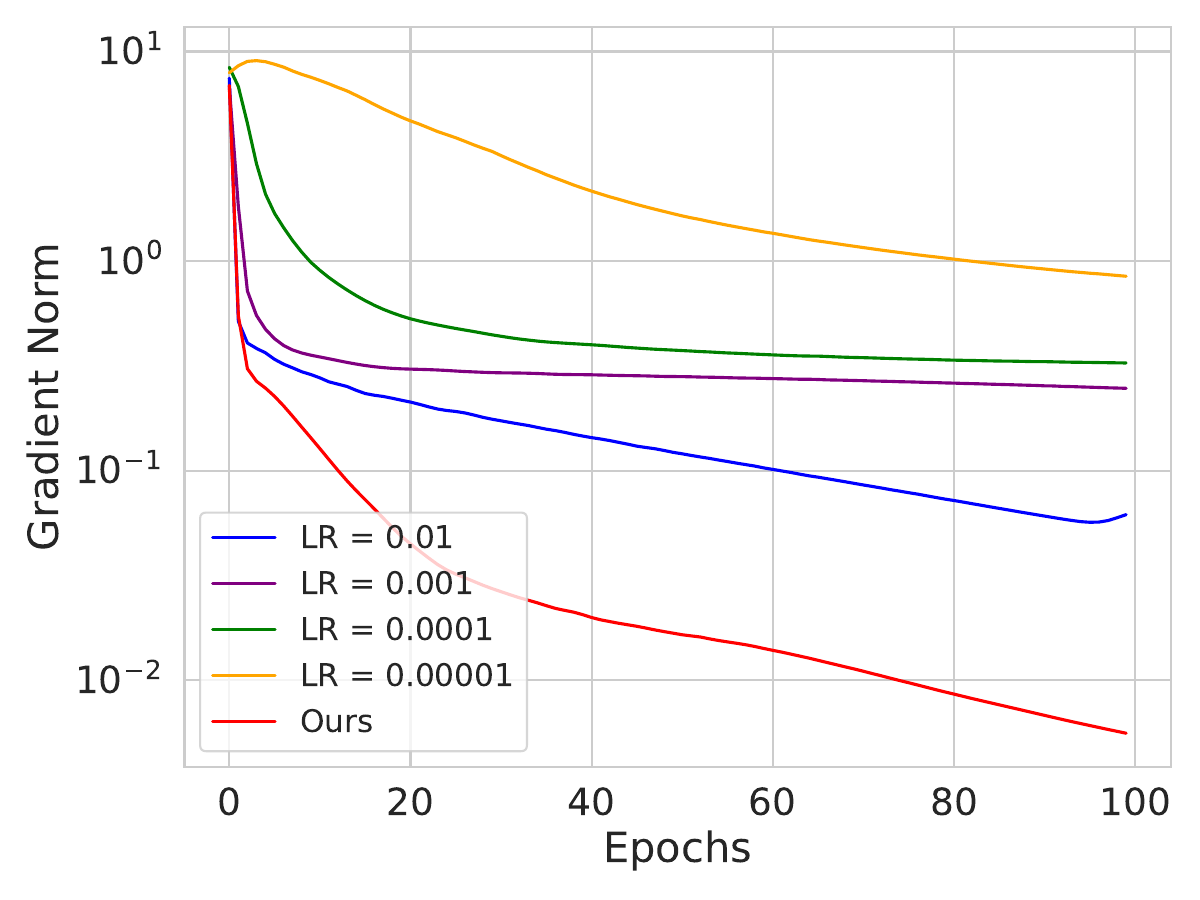}
    \caption{Gradient norm v/s Epochs}
  \end{subfigure}
  \hfill
  \begin{subfigure}[b]{0.31\textwidth}
    \includegraphics[width=\textwidth]{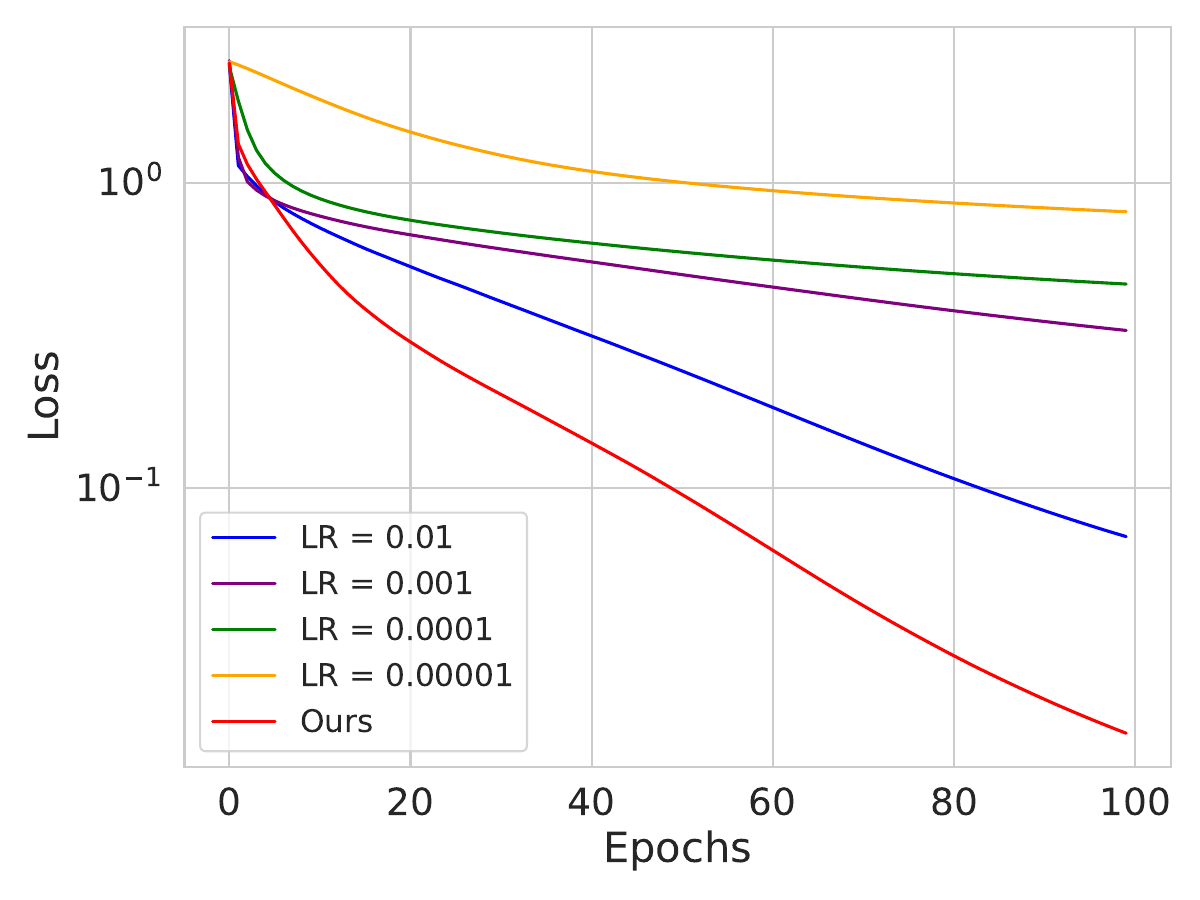}
    \caption{Training loss v/s Epochs}
  \end{subfigure}
  \hfill
  \begin{subfigure}[b]{0.31\textwidth}
    \includegraphics[width=\textwidth]{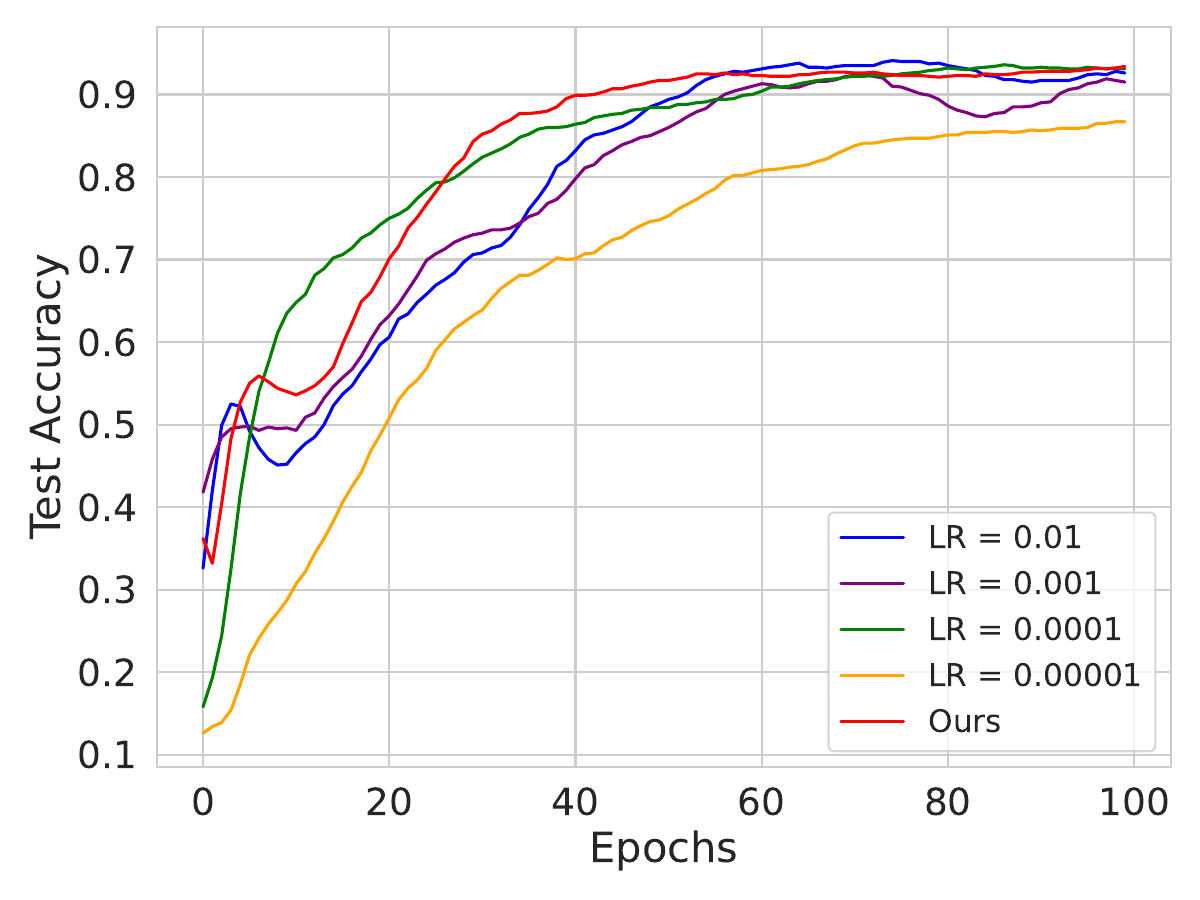}
    \caption{Validation acc. v/s Epochs}
  \end{subfigure}
  \caption{Full-batch experiments on a single layer network with 1000 nodes in each layer, trained on MNIST.}
\label{fig:1__1000_step}
\end{figure}
\begin{figure}[htbp]
  \centering
  \begin{subfigure}[b]{0.31\textwidth}
    \includegraphics[width=\textwidth]{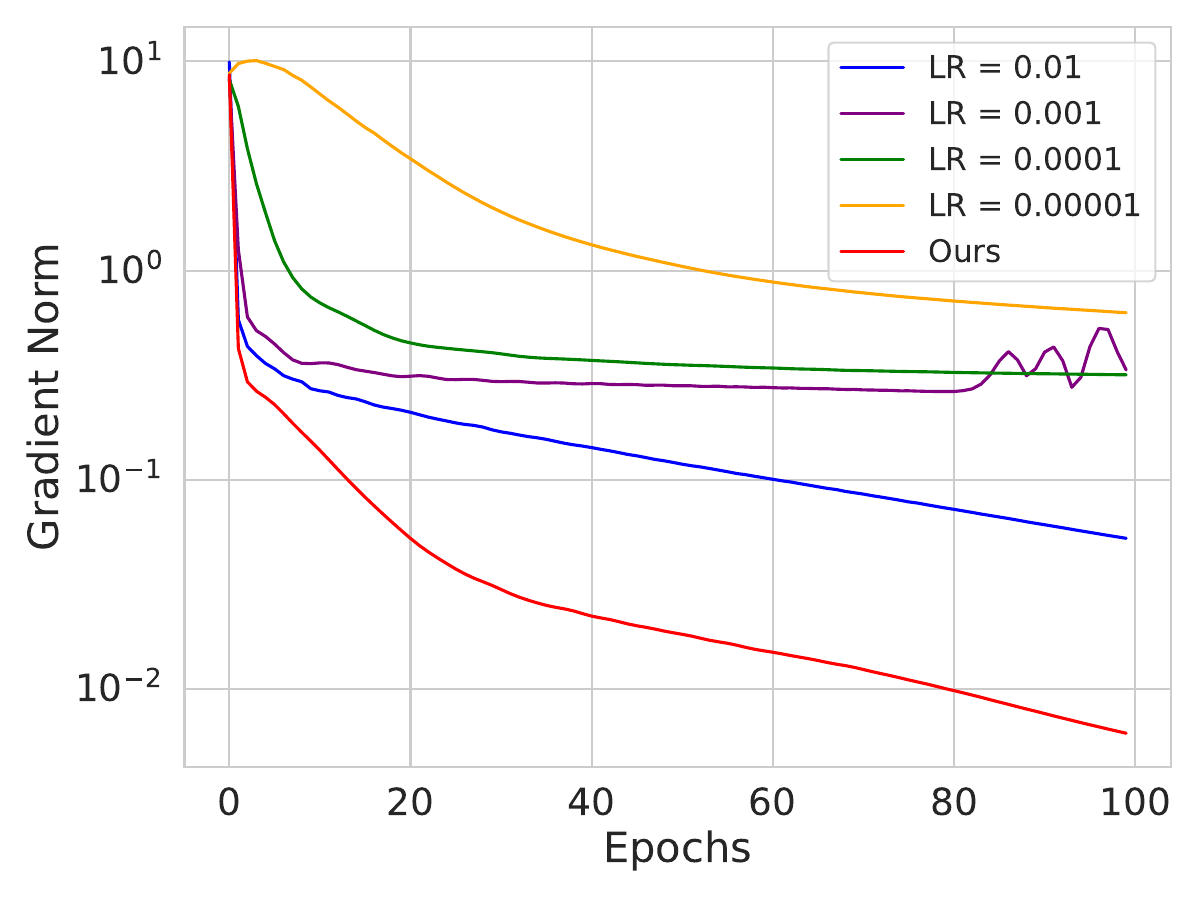}
    \caption{Gradient norm v/s Epochs}
  \end{subfigure}
  \hfill
  \begin{subfigure}[b]{0.31\textwidth}
    \includegraphics[width=\textwidth]{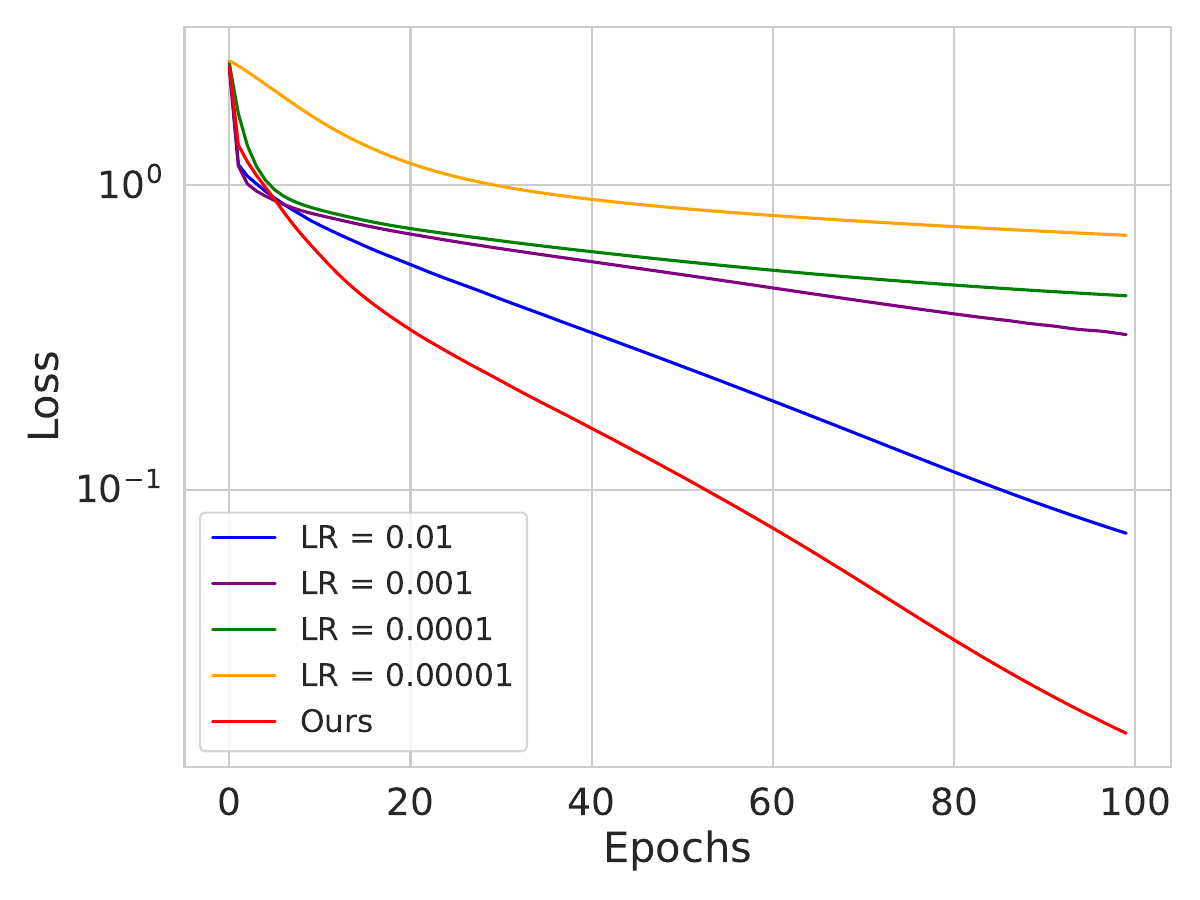}
    \caption{Training loss v/s Epochs}
  \end{subfigure}
  \hfill
  \begin{subfigure}[b]{0.31\textwidth}
    \includegraphics[width=\textwidth]{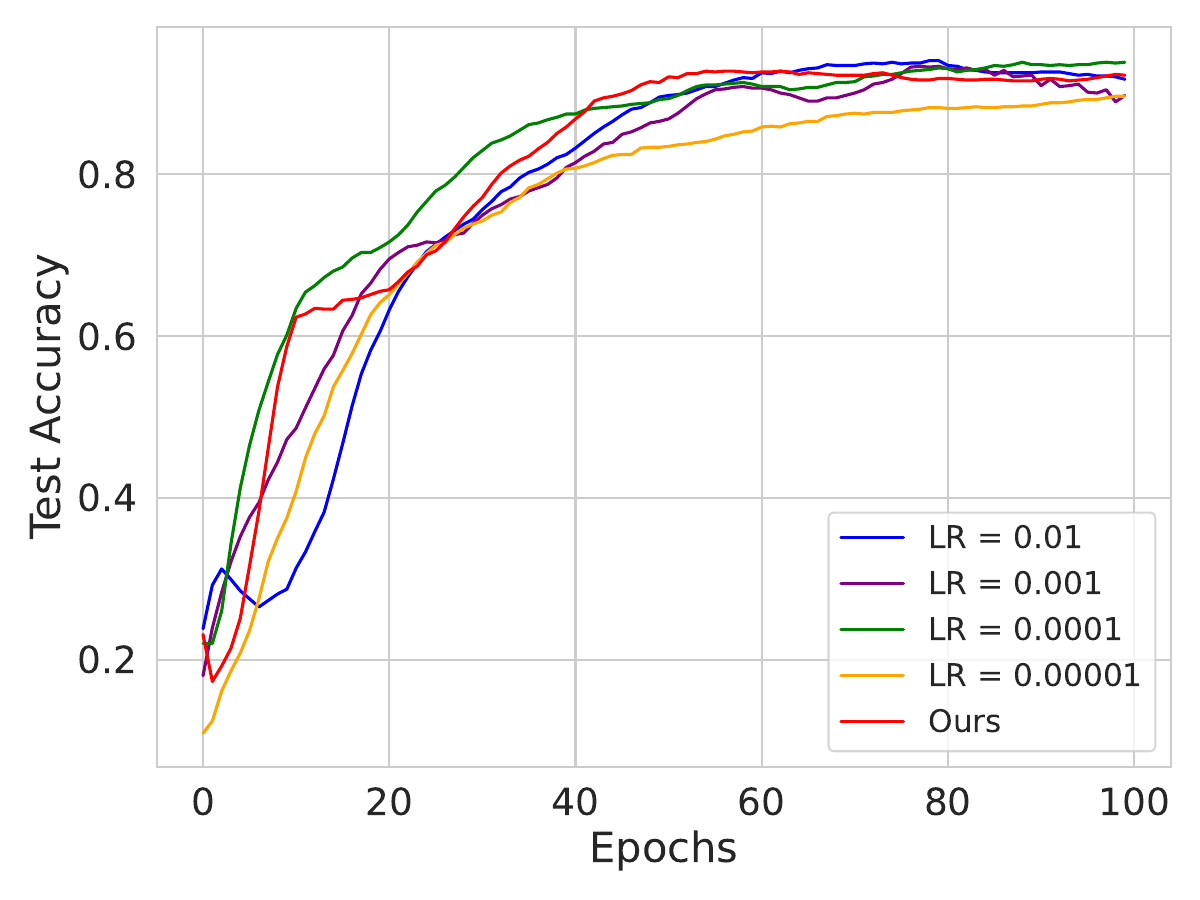}
    \caption{Validation acc. v/s Epochs}
  \end{subfigure}
  \caption{Full-batch experiments on a single layer network with 3000 nodes in each layer, trained on MNIST.}
\label{fig:1__3000_step}
\end{figure}
\begin{figure}[htbp]
  \centering
  \begin{subfigure}[b]{0.31\textwidth}
    \includegraphics[width=\textwidth]{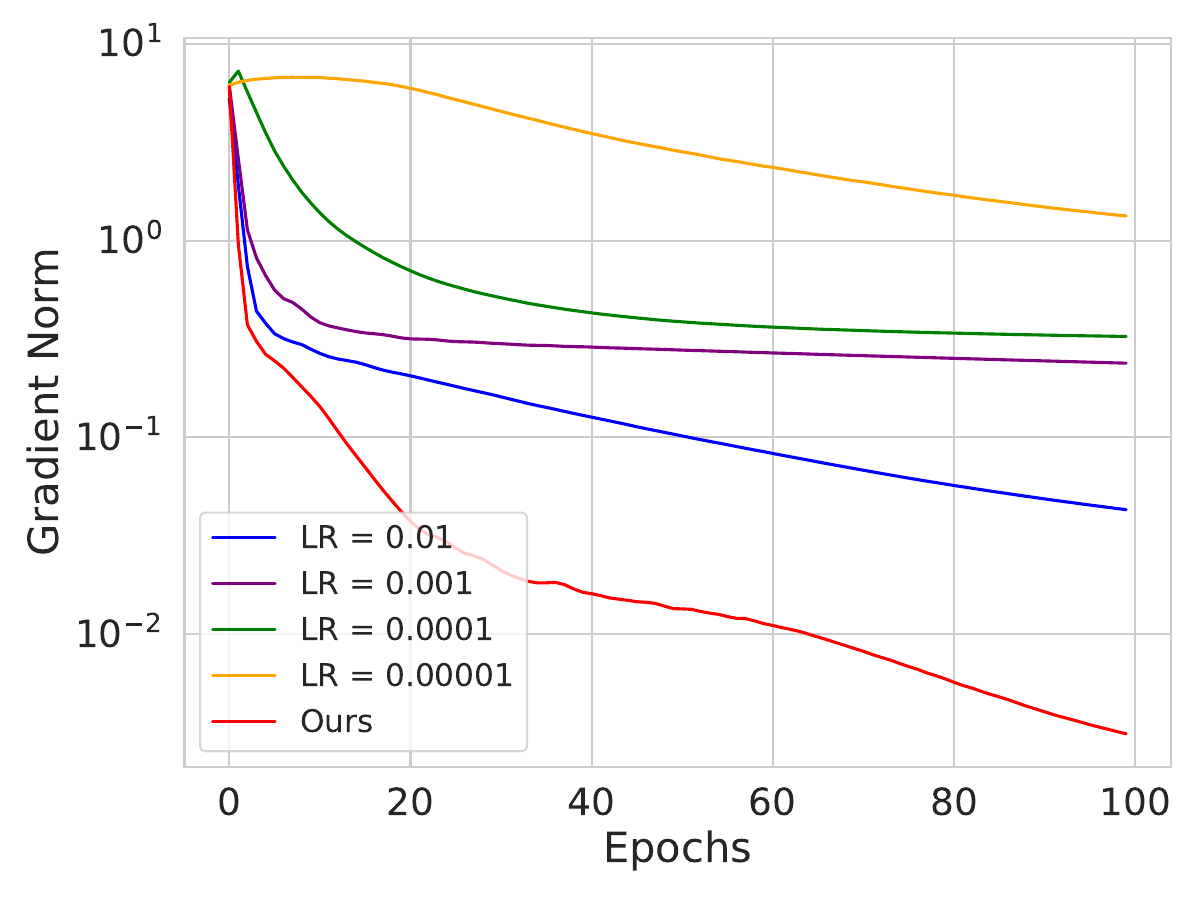}
    \caption{Gradient norm v/s Epochs}
  \end{subfigure}
  \hfill
  \begin{subfigure}[b]{0.31\textwidth}
    \includegraphics[width=\textwidth]{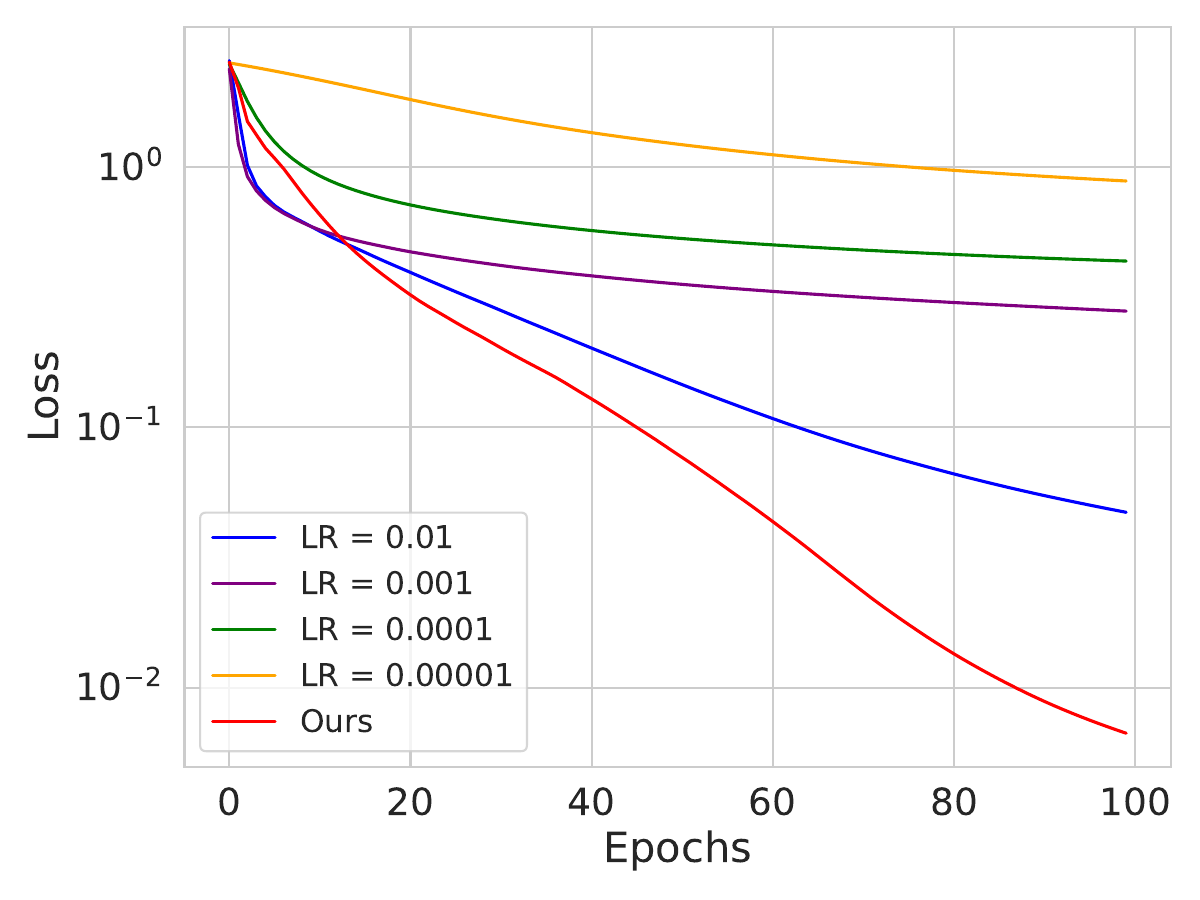}
    \caption{Training loss v/s Epochs}
  \end{subfigure}
  \hfill
  \begin{subfigure}[b]{0.31\textwidth}
    \includegraphics[width=\textwidth]{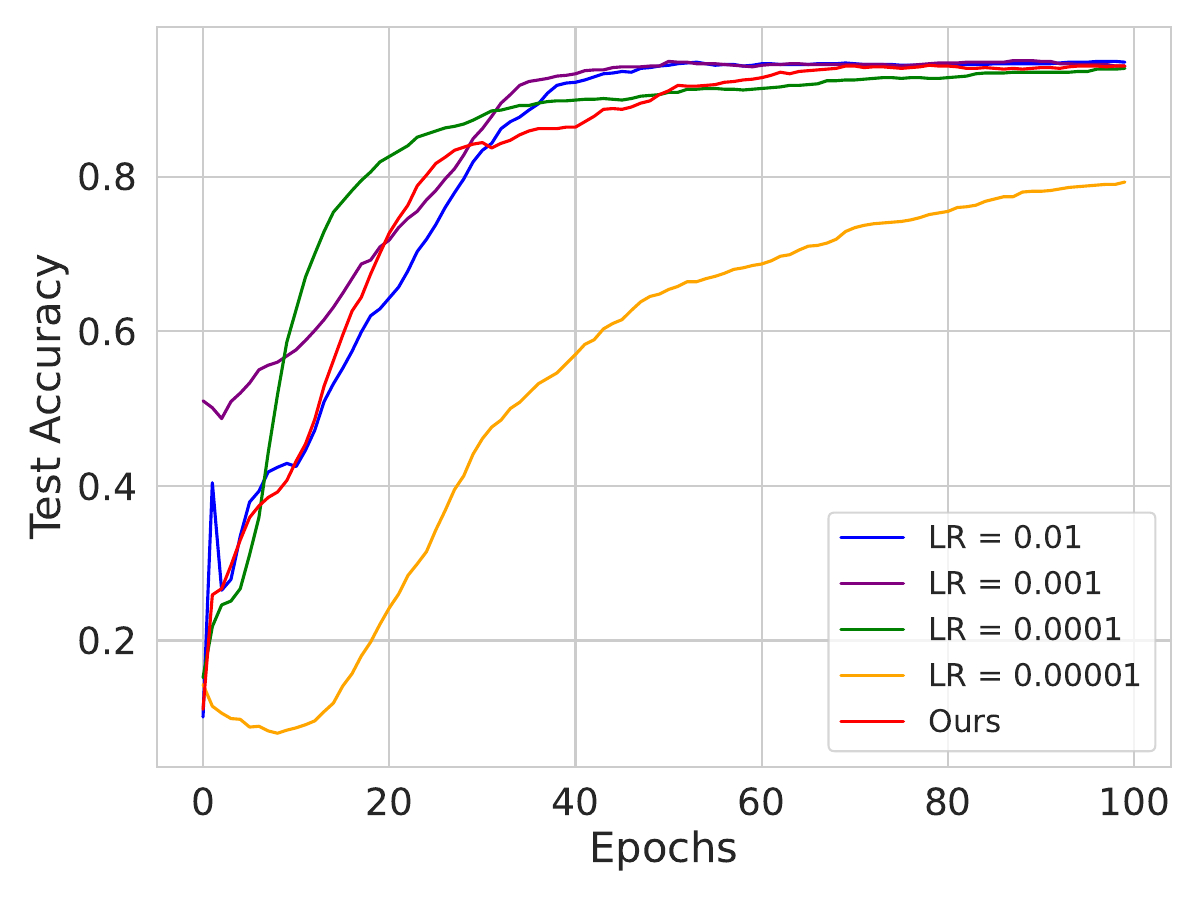}
    \caption{Validation acc. v/s Epochs}
  \end{subfigure}
  \caption{Full-batch experiments on a 3 layer network with 300 nodes in each layer, trained on MNIST.}
\label{fig:3__300_step}
\end{figure}
\begin{figure}[htbp]
  \centering
  \begin{subfigure}[b]{0.31\textwidth}
    \includegraphics[width=\textwidth]{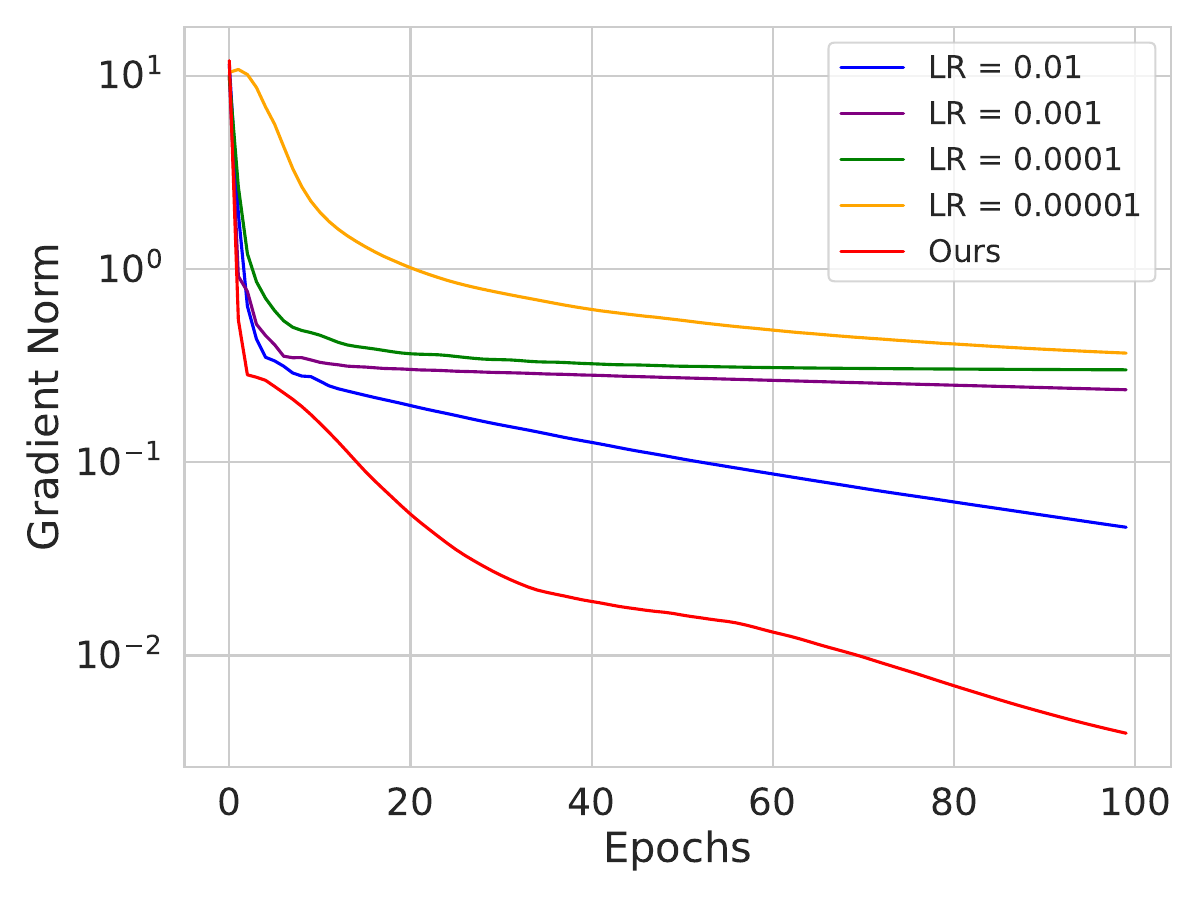}
    \caption{Gradient norm v/s Epochs}
  \end{subfigure}
  \hfill
  \begin{subfigure}[b]{0.31\textwidth}
    \includegraphics[width=\textwidth]{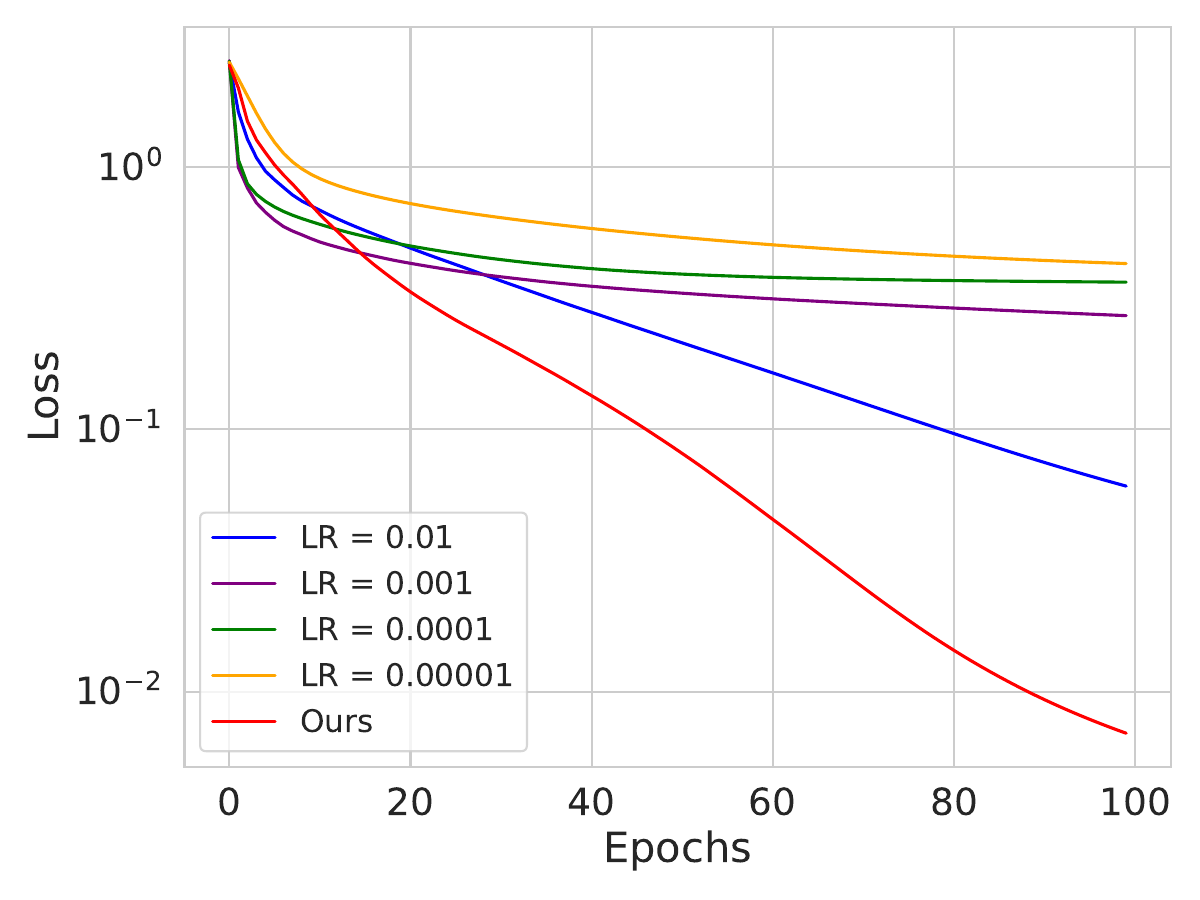}
    \caption{Training loss v/s Epochs}
  \end{subfigure}
  \hfill
  \begin{subfigure}[b]{0.31\textwidth}
    \includegraphics[width=\textwidth]{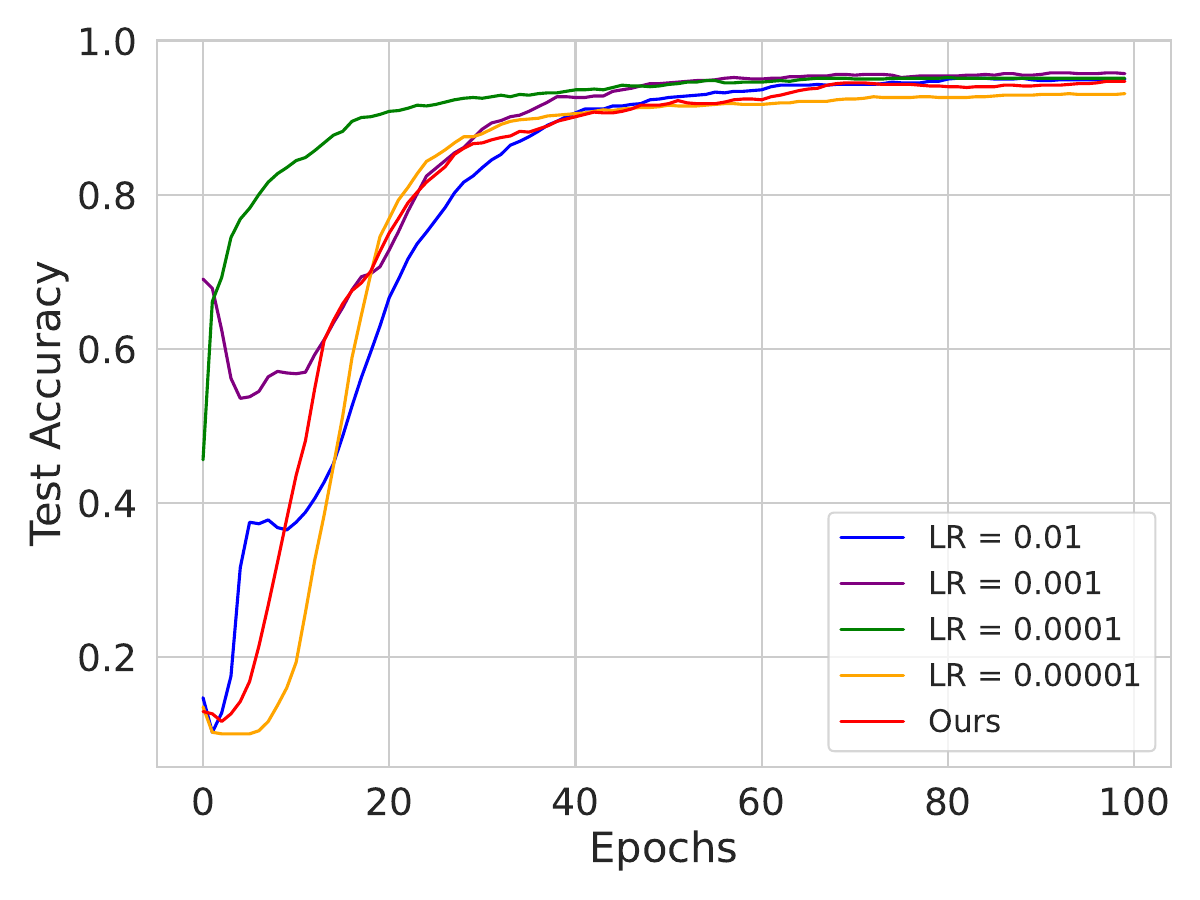}
    \caption{Validation acc. v/s Epochs}
  \end{subfigure}
  \caption{Full-batch experiments on a 3 layer network with 3000 nodes in each layer, trained on MNIST.}
\label{fig:3__3000_step}
\end{figure}
\begin{figure}[htbp]
  \centering
  \begin{subfigure}[b]{0.31\textwidth}
    \includegraphics[width=\textwidth]{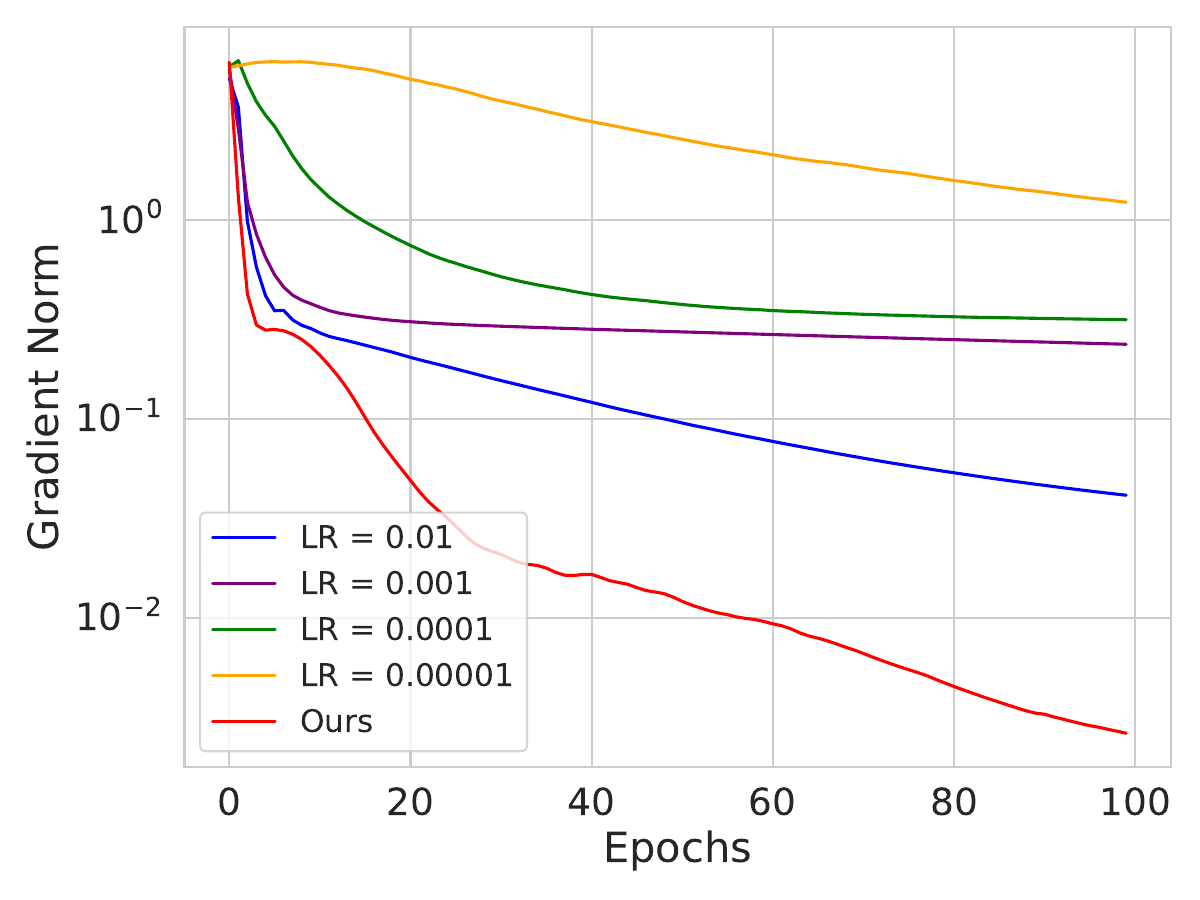}
    \caption{Gradient norm v/s Epochs}
  \end{subfigure}
  \hfill
  \begin{subfigure}[b]{0.31\textwidth}
    \includegraphics[width=\textwidth]{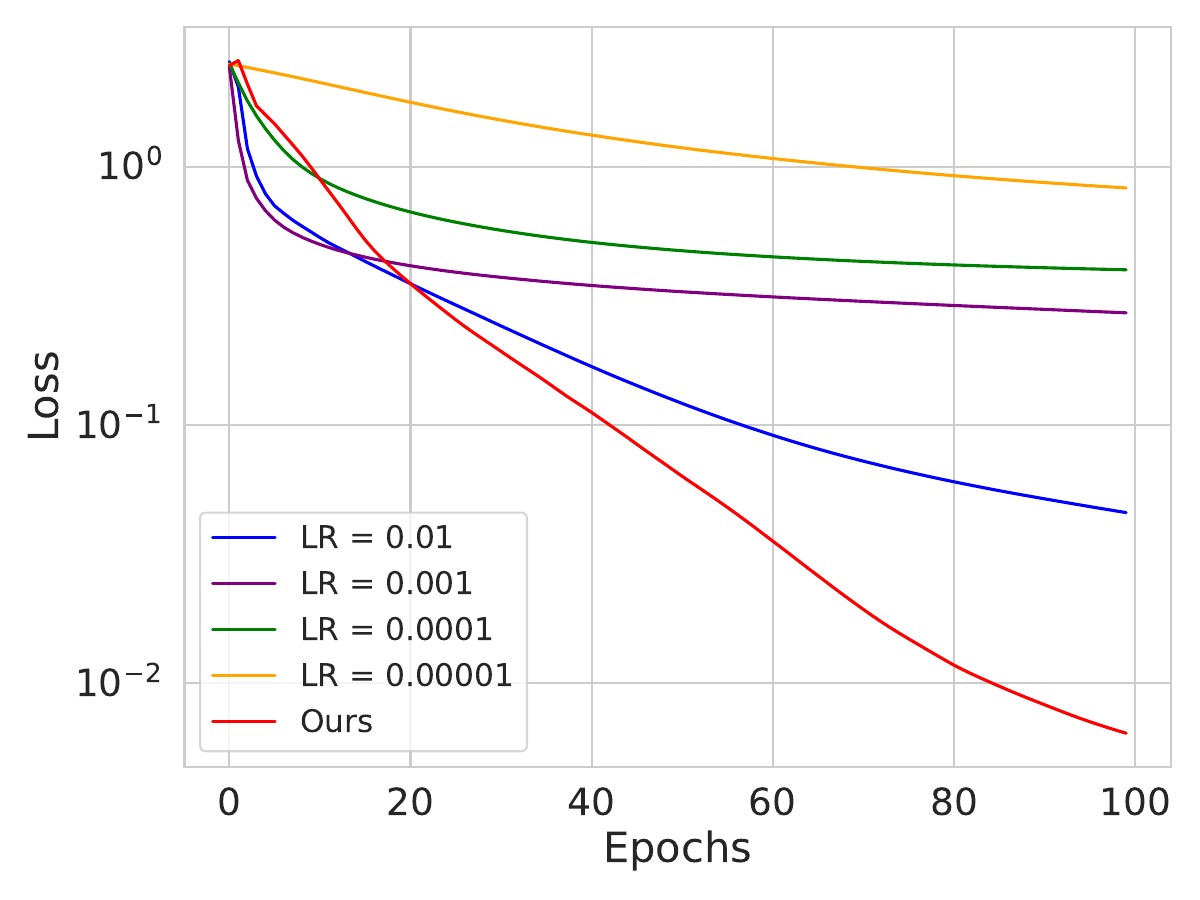}
    \caption{Training loss v/s Epochs}
  \end{subfigure}
  \hfill
  \begin{subfigure}[b]{0.31\textwidth}
    \includegraphics[width=\textwidth]{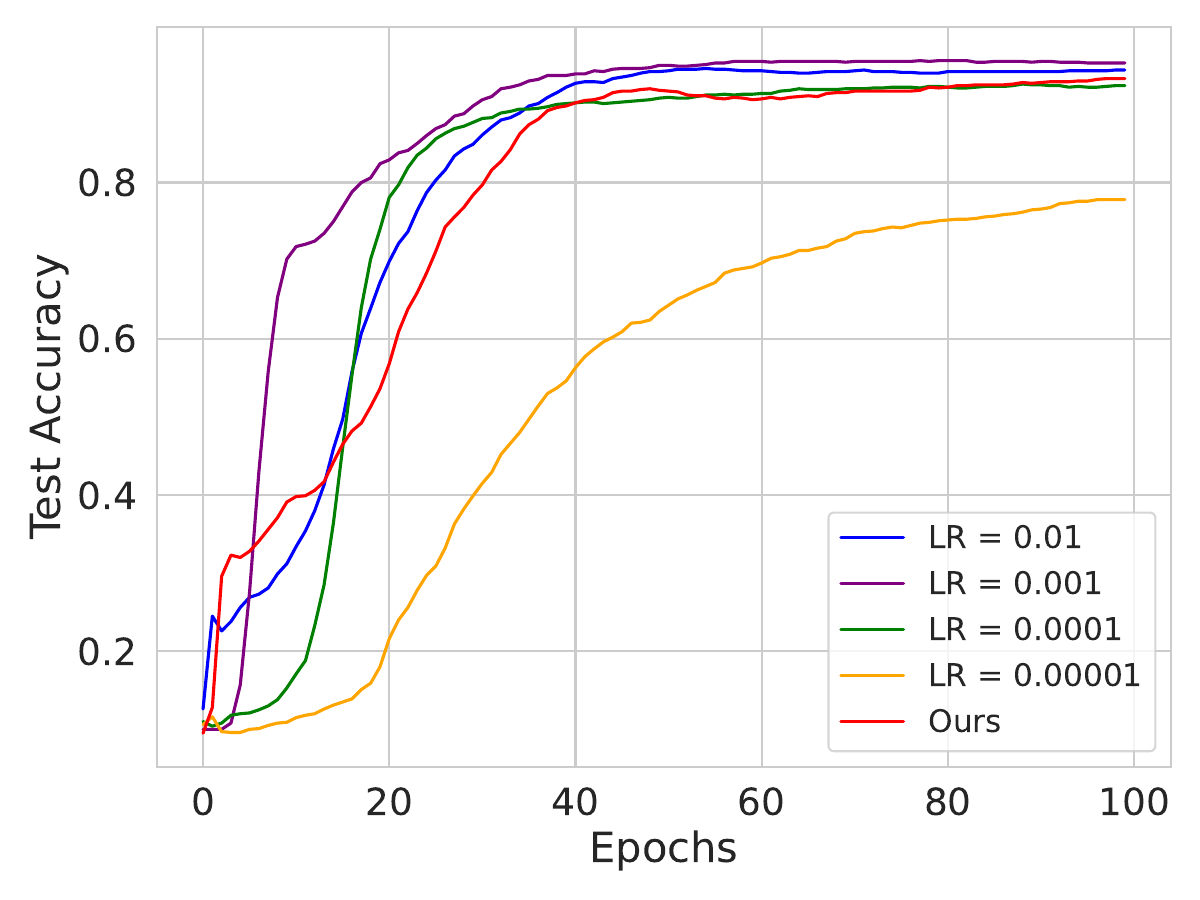}
    \caption{Validation acc. v/s Epochs}
  \end{subfigure}
  \caption{Full-batch experiments on a 5 layer network with 300 nodes in each layer, trained on MNIST.}
\label{fig:5__300_step}
\end{figure}
\begin{figure}[htbp]
  \centering
  \begin{subfigure}[b]{0.31\textwidth}
    \includegraphics[width=\textwidth]{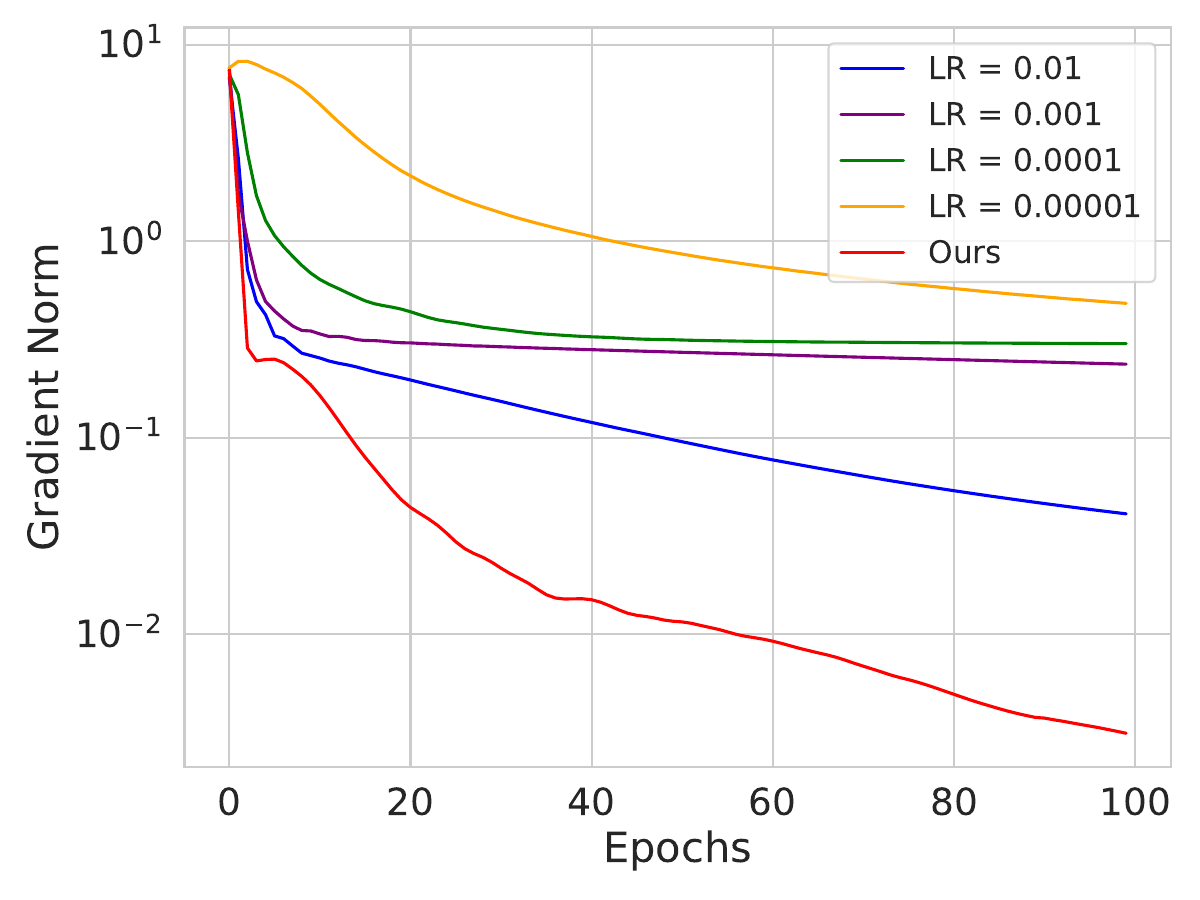}
    \caption{Gradient norm v/s Epochs}
  \end{subfigure}
  \hfill
  \begin{subfigure}[b]{0.31\textwidth}
    \includegraphics[width=\textwidth]{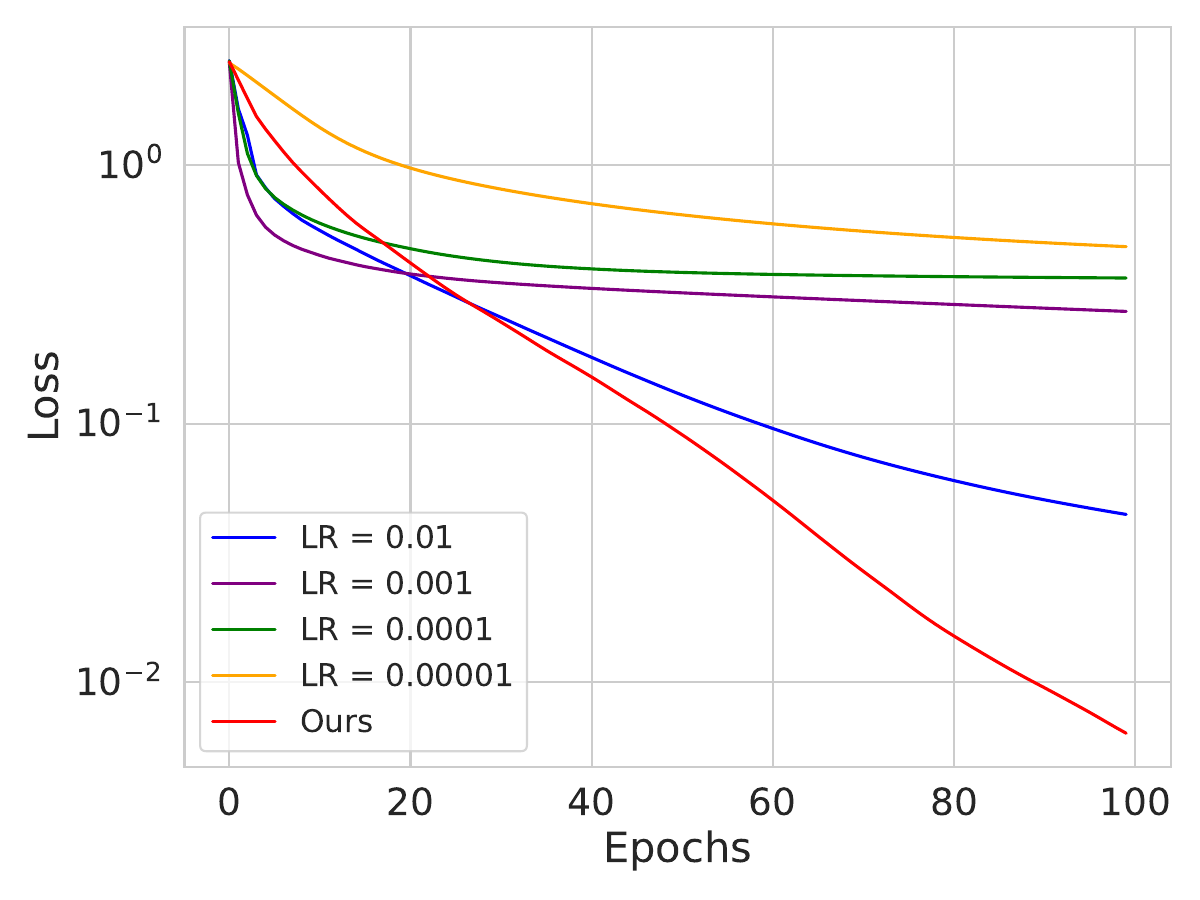}
    \caption{Training loss v/s Epochs}
  \end{subfigure}
  \hfill
  \begin{subfigure}[b]{0.31\textwidth}
    \includegraphics[width=\textwidth]{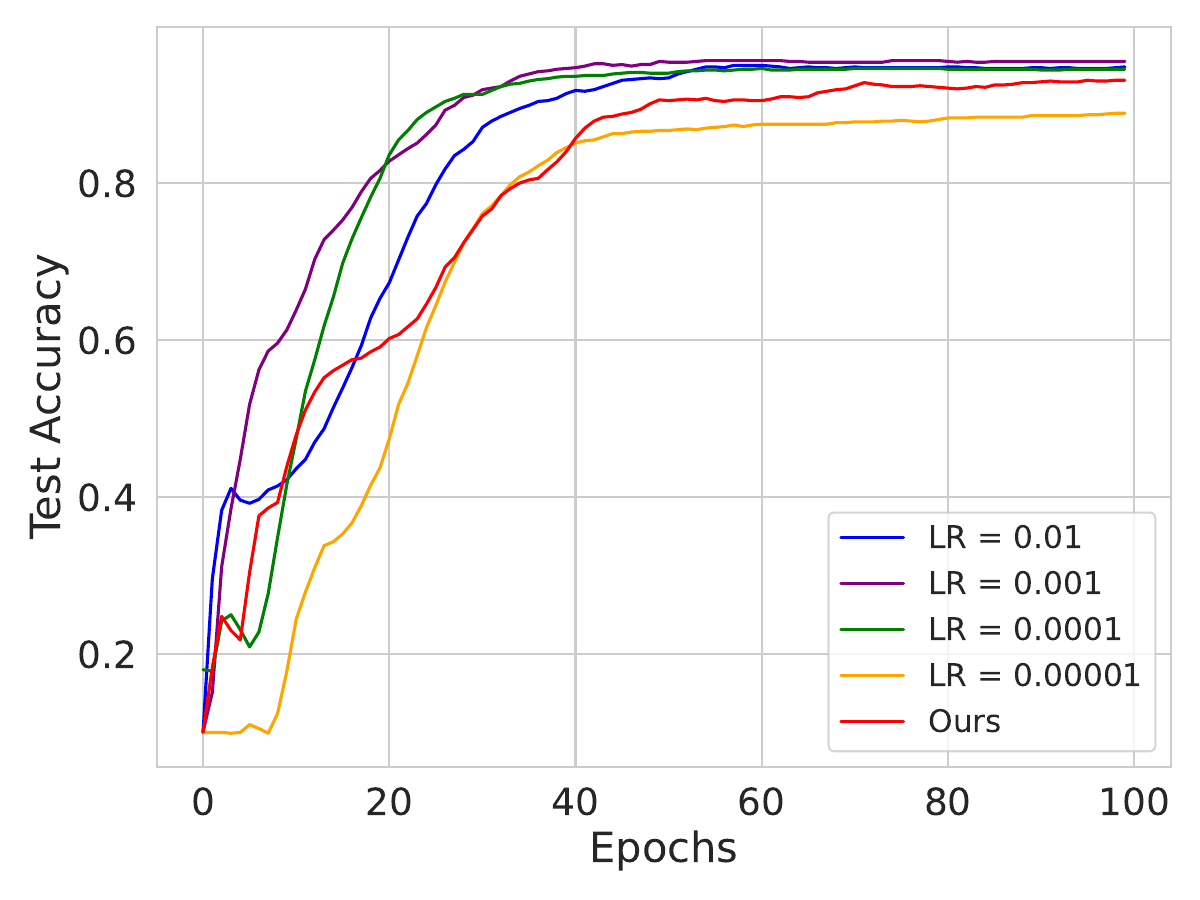}
    \caption{Validation acc. v/s Epochs}
  \end{subfigure}
  \caption{Full-batch experiments on a 5 layer network with 1000 nodes in each layer, trained on MNIST.}
\label{fig:5__1000_step}
\end{figure}
\begin{figure}[htbp]
  \centering
  \begin{subfigure}[b]{0.31\textwidth}
    \includegraphics[width=\textwidth]{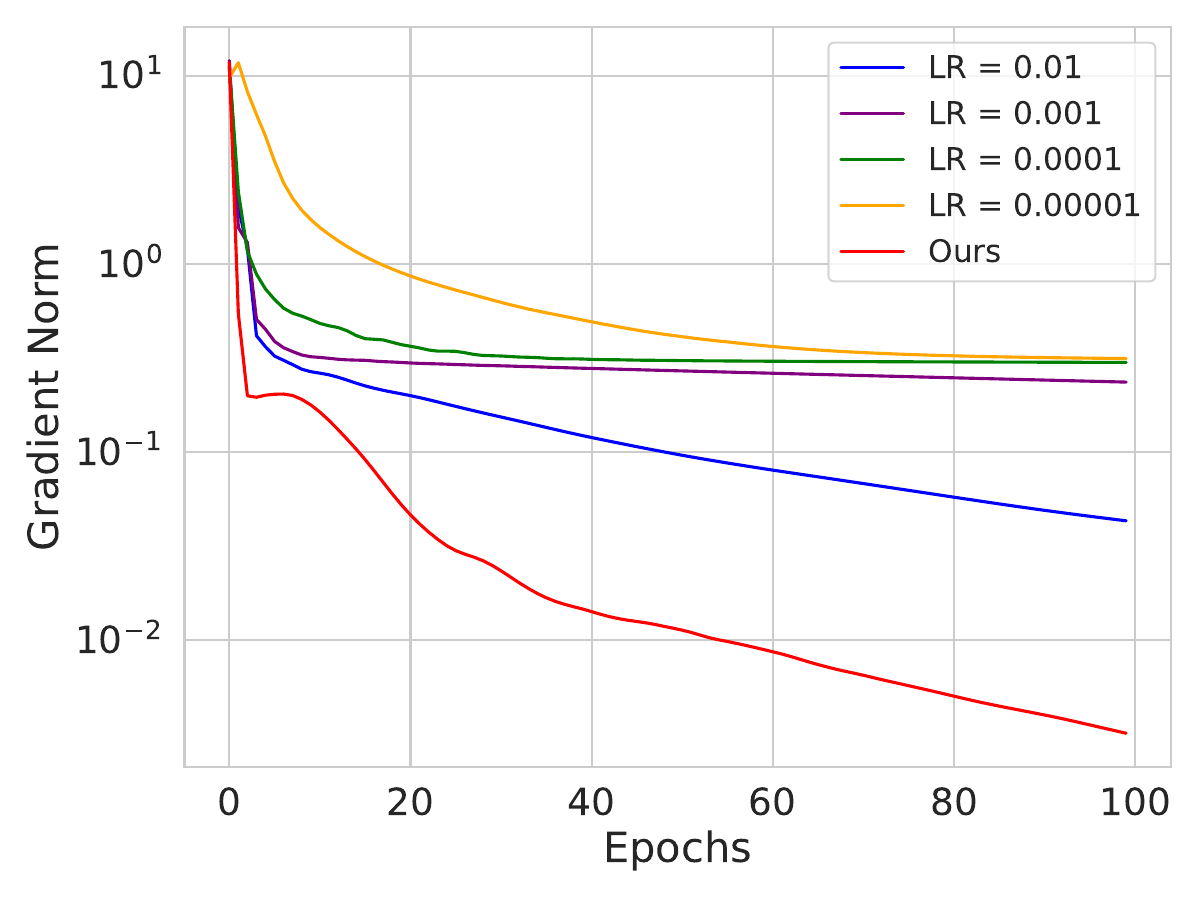}
    \caption{Gradient norm v/s Epochs}
  \end{subfigure}
  \hfill
  \begin{subfigure}[b]{0.31\textwidth}
    \includegraphics[width=\textwidth]{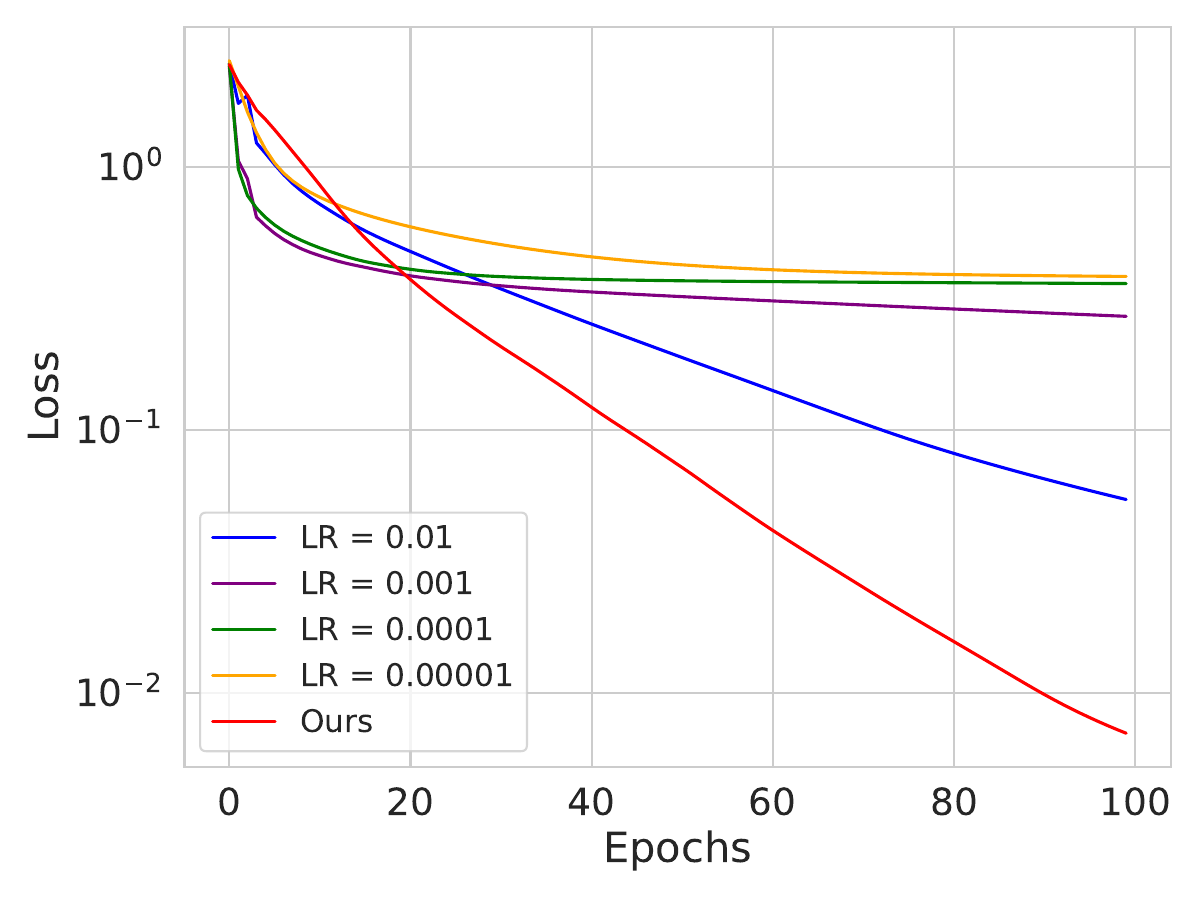}
    \caption{Training loss v/s Epochs}
  \end{subfigure}
  \hfill
  \begin{subfigure}[b]{0.31\textwidth}
    \includegraphics[width=\textwidth]{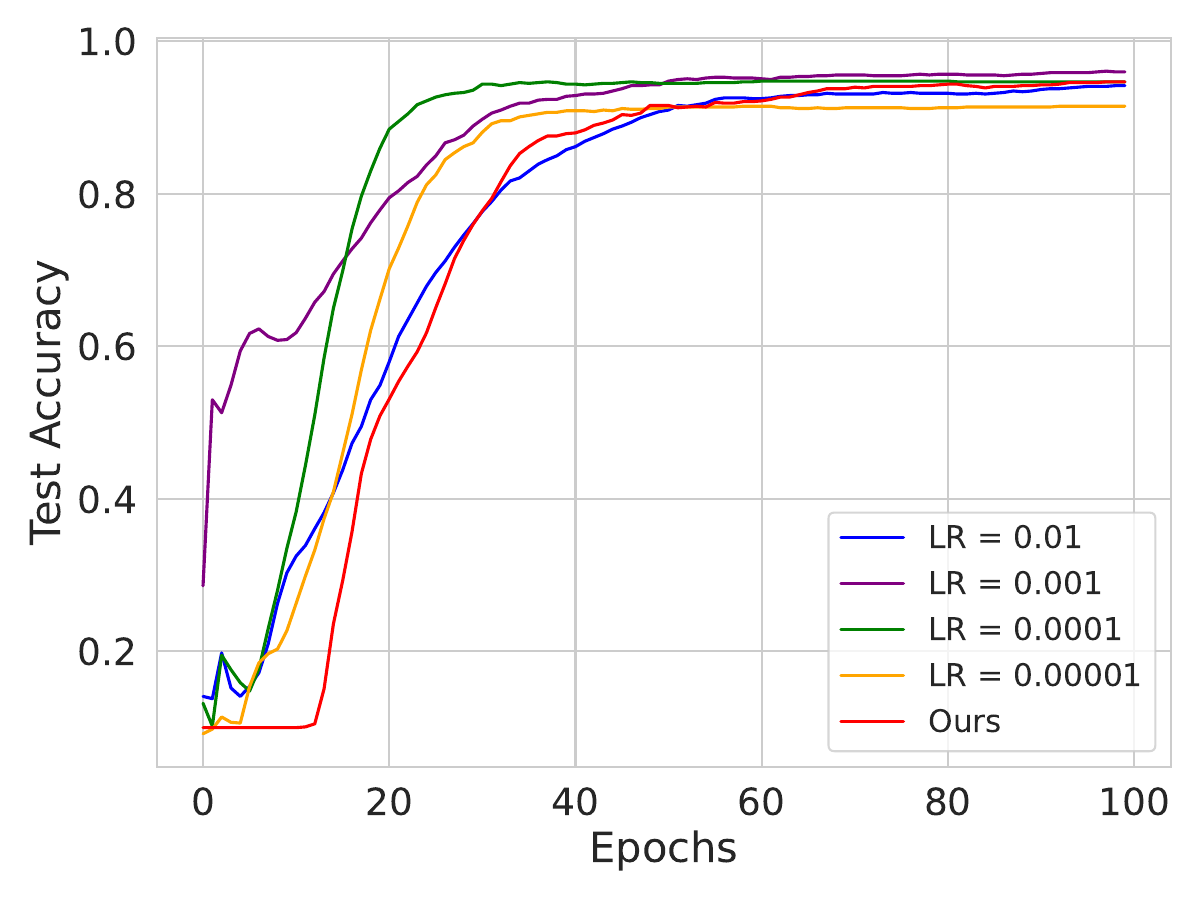}
    \caption{Validation acc. v/s Epochs}
  \end{subfigure}
  \caption{Full-batch experiments on a 5 layer network with 3000 nodes in each layer, trained on MNIST.}
\label{fig:5__3000_step}
\end{figure}
\\
Based on the preceding figures, it's evident that our chosen learning rate effectively reduces the gradient norm as compared to the series of constant learning rates, leading to a commendable validation accuracy. The outcomes remain consistent across various fully connected layer architectures. Hence, rather than performing a line search over several constant learning rates, one can adapt our learning rate to get better results.
\\

\textbf{C.3 Mini-Batch experiments on fully connected networks.}\\

In this section, we conduct a series of experiments using different depths and widths of fully connected layers, in mini-batch setting. We choose 5,000 as the batch size for MNIST. This section serves as a continuation for Section~\ref{sec:mb_exp}.

\begin{figure}[htbp]
  \centering
  \begin{subfigure}[b]{0.31\textwidth}
    \includegraphics[width=\textwidth]{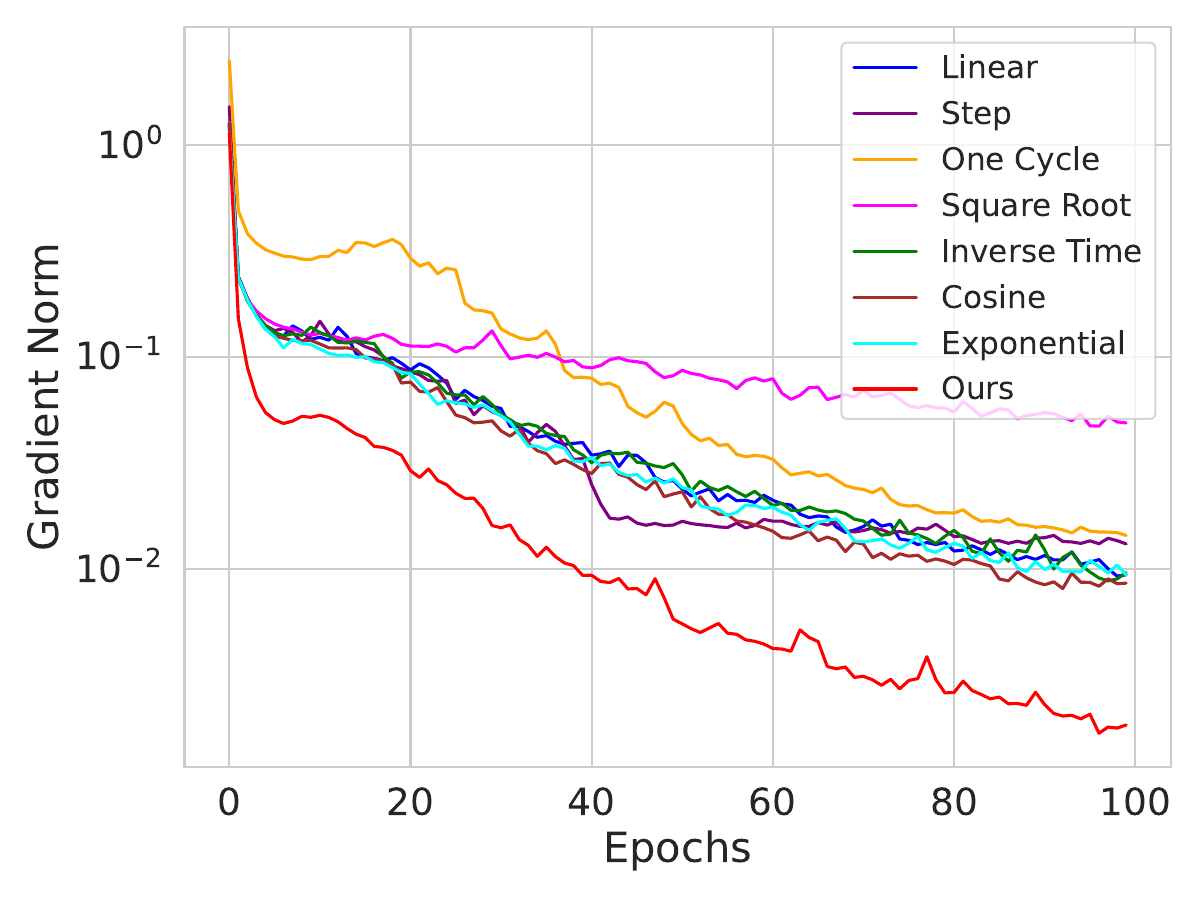}
    \caption{Gradient norm v/s Epochs}
  \end{subfigure}
  \hfill
  \begin{subfigure}[b]{0.31\textwidth}
    \includegraphics[width=\textwidth]{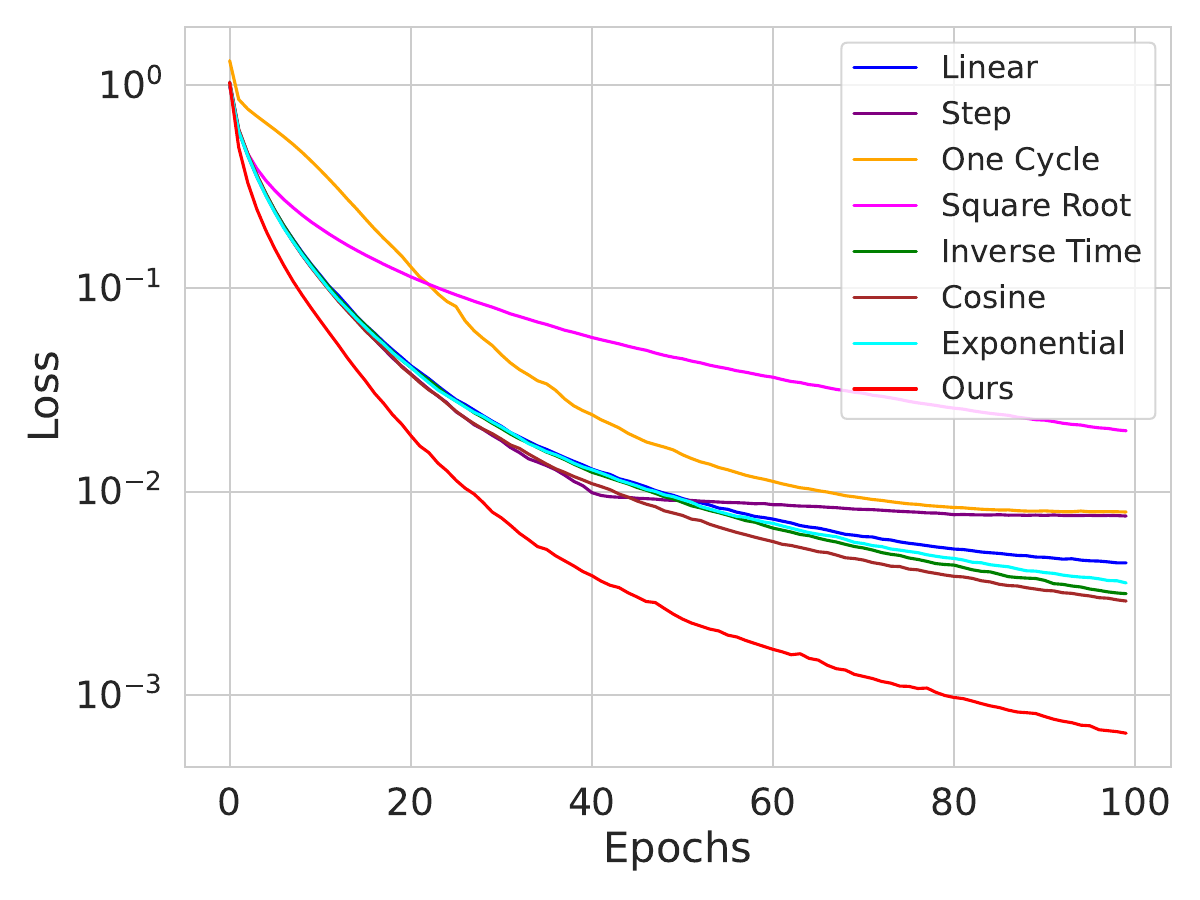}
    \caption{Training loss v/s Epochs}
  \end{subfigure}
  \hfill
  \begin{subfigure}[b]{0.31\textwidth}
    \includegraphics[width=\textwidth]{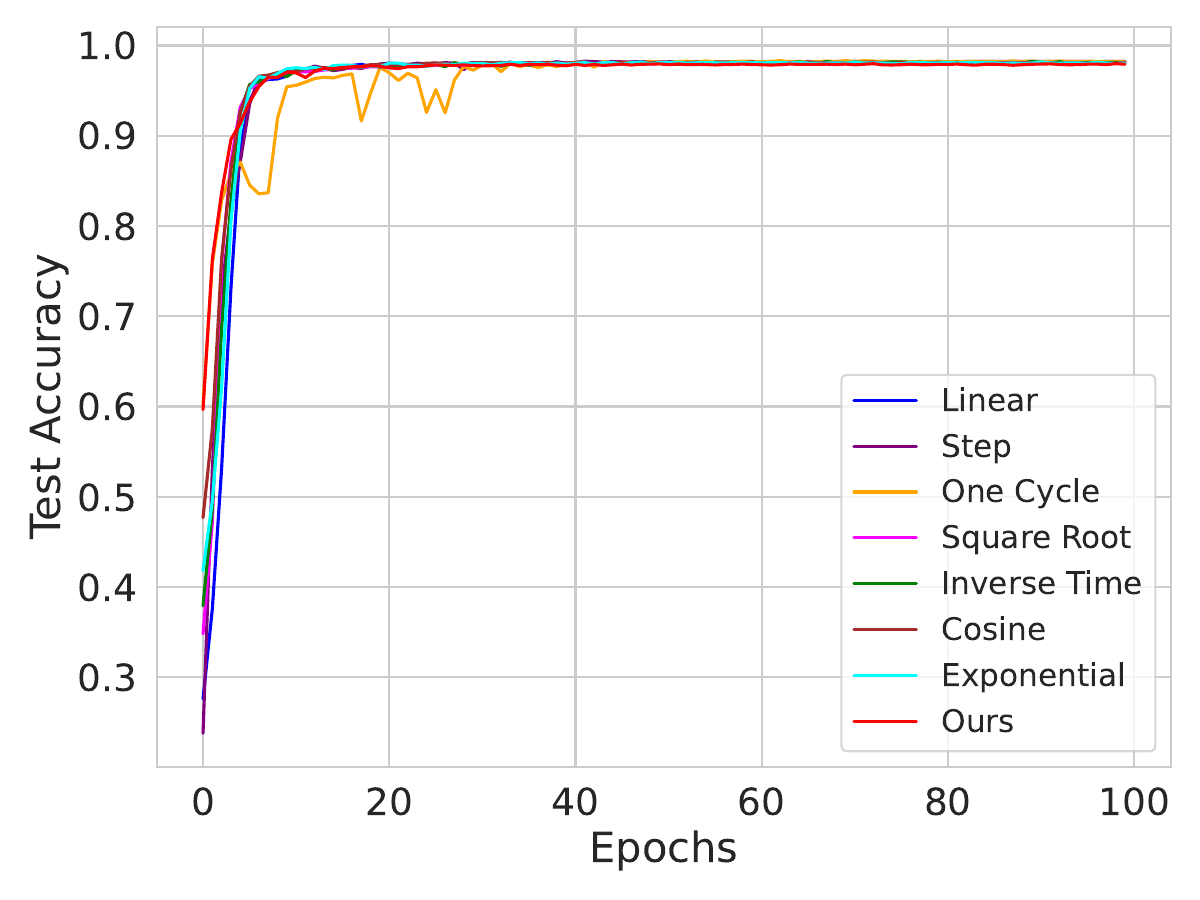}
    \caption{Validation acc. v/s Epochs}
  \end{subfigure}
  \caption{\textbf{Comparision with schedulers}: Mini-batch experiments on a single layer network with 300 nodes in each layer, trained on MNIST.}
\label{fig:1__300_mini}
\end{figure}
\begin{figure}[htbp]
  \centering
  \begin{subfigure}[b]{0.31\textwidth}
    \includegraphics[width=\textwidth]{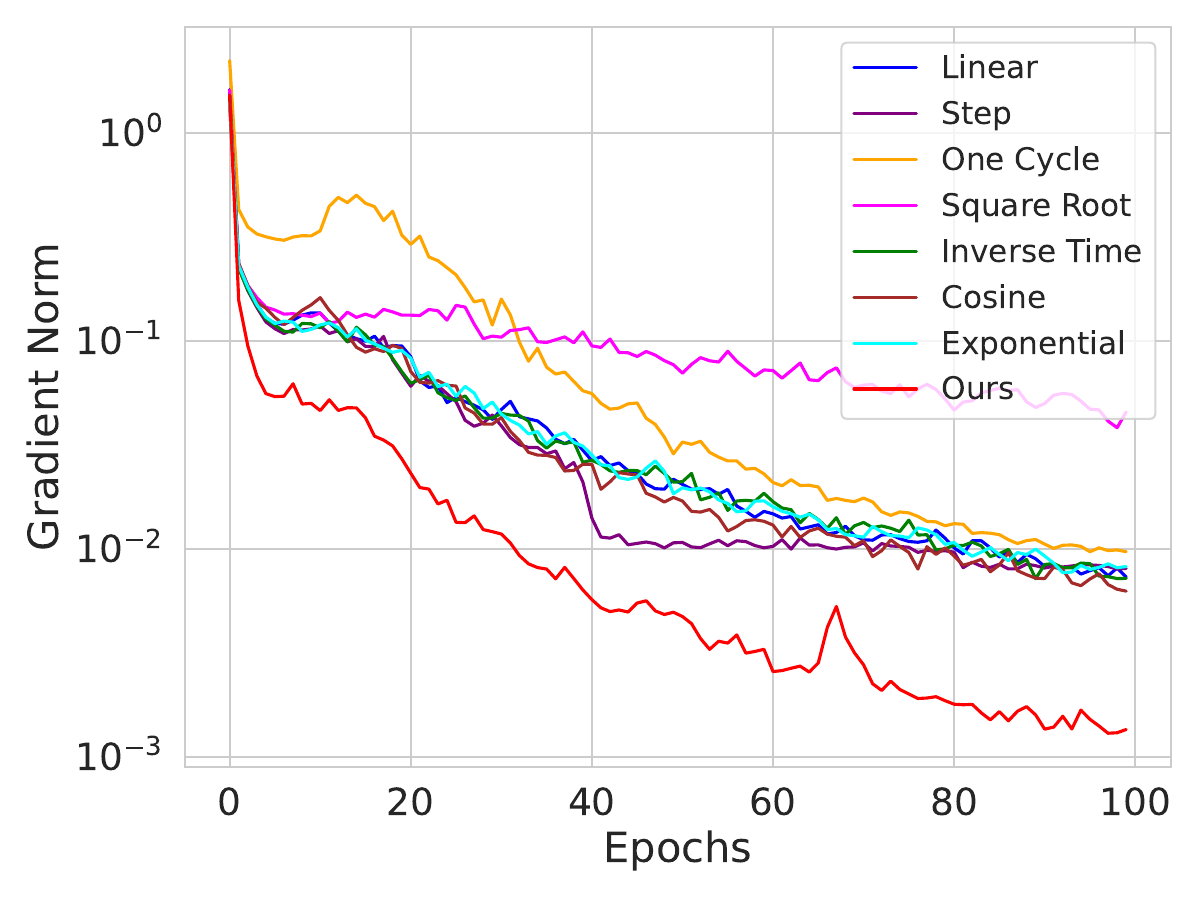}
    \caption{Gradient norm v/s Epochs}
  \end{subfigure}
  \hfill
  \begin{subfigure}[b]{0.31\textwidth}
    \includegraphics[width=\textwidth]{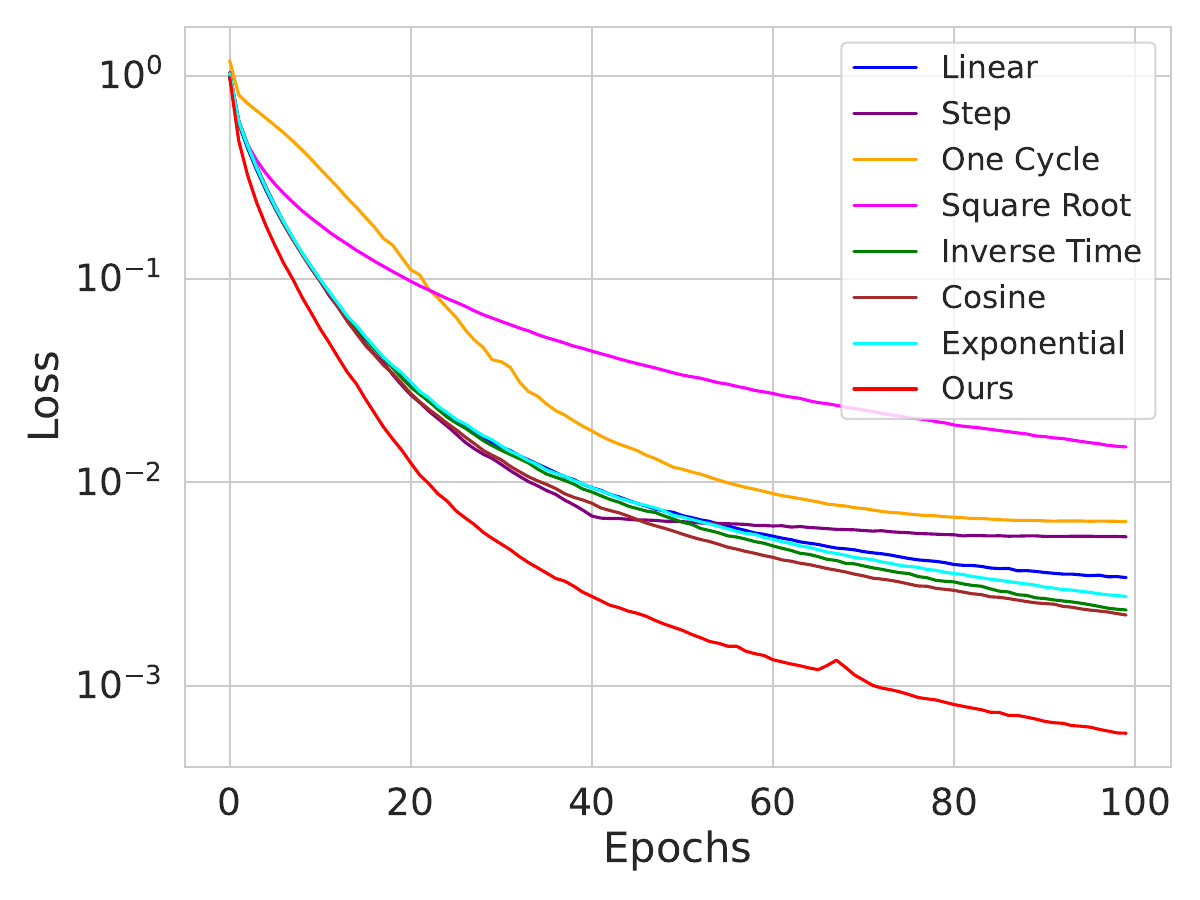}
    \caption{Training loss v/s Epochs}
  \end{subfigure}
  \hfill
  \begin{subfigure}[b]{0.31\textwidth}
    \includegraphics[width=\textwidth]{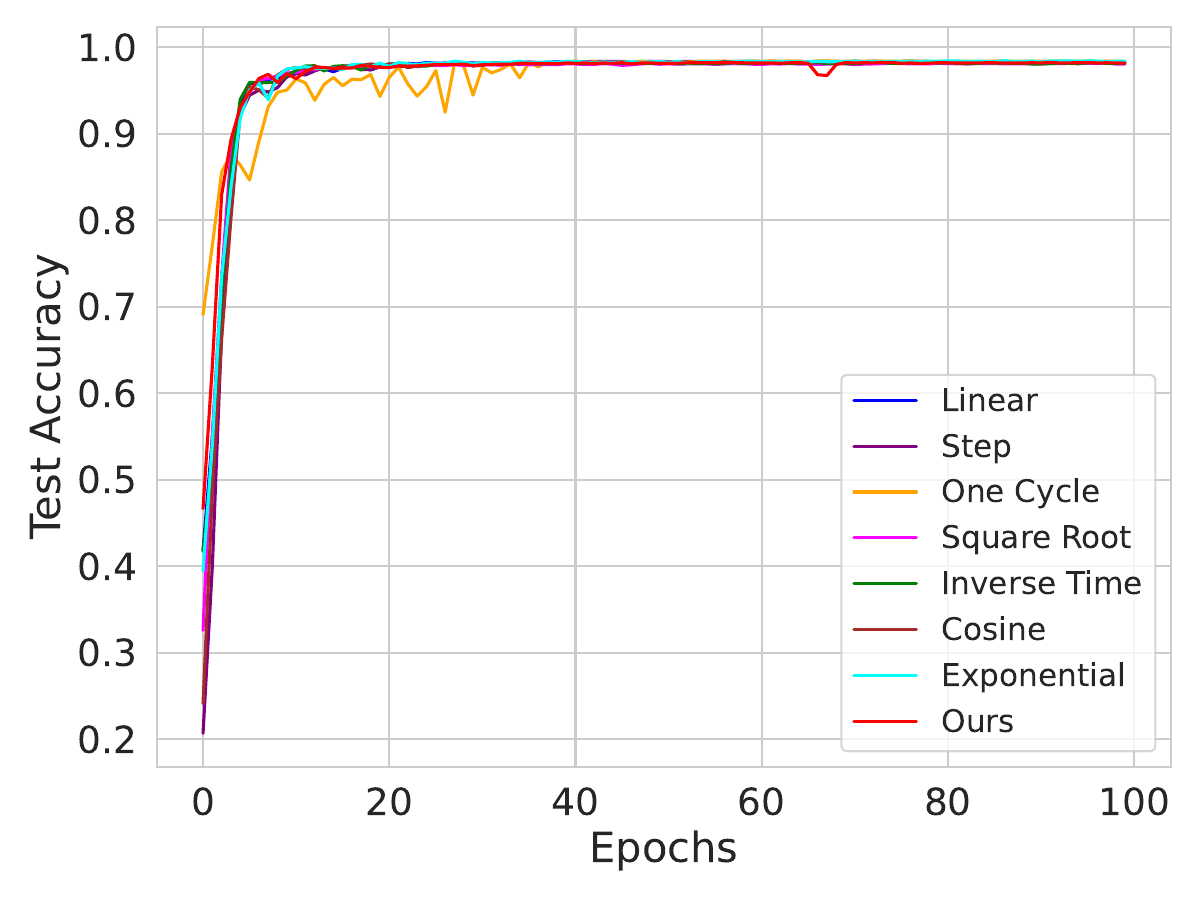}
    \caption{Validation acc. v/s Epochs}
  \end{subfigure}
  \caption{\textbf{Comparision with schedulers}: Mini-batch experiments on a single layer network with 1000 nodes in each layer, trained on MNIST.}
\label{fig:1__1000_mini}
\end{figure}
\begin{figure}[htbp]
  \centering
  \begin{subfigure}[b]{0.31\textwidth}
    \includegraphics[width=\textwidth]{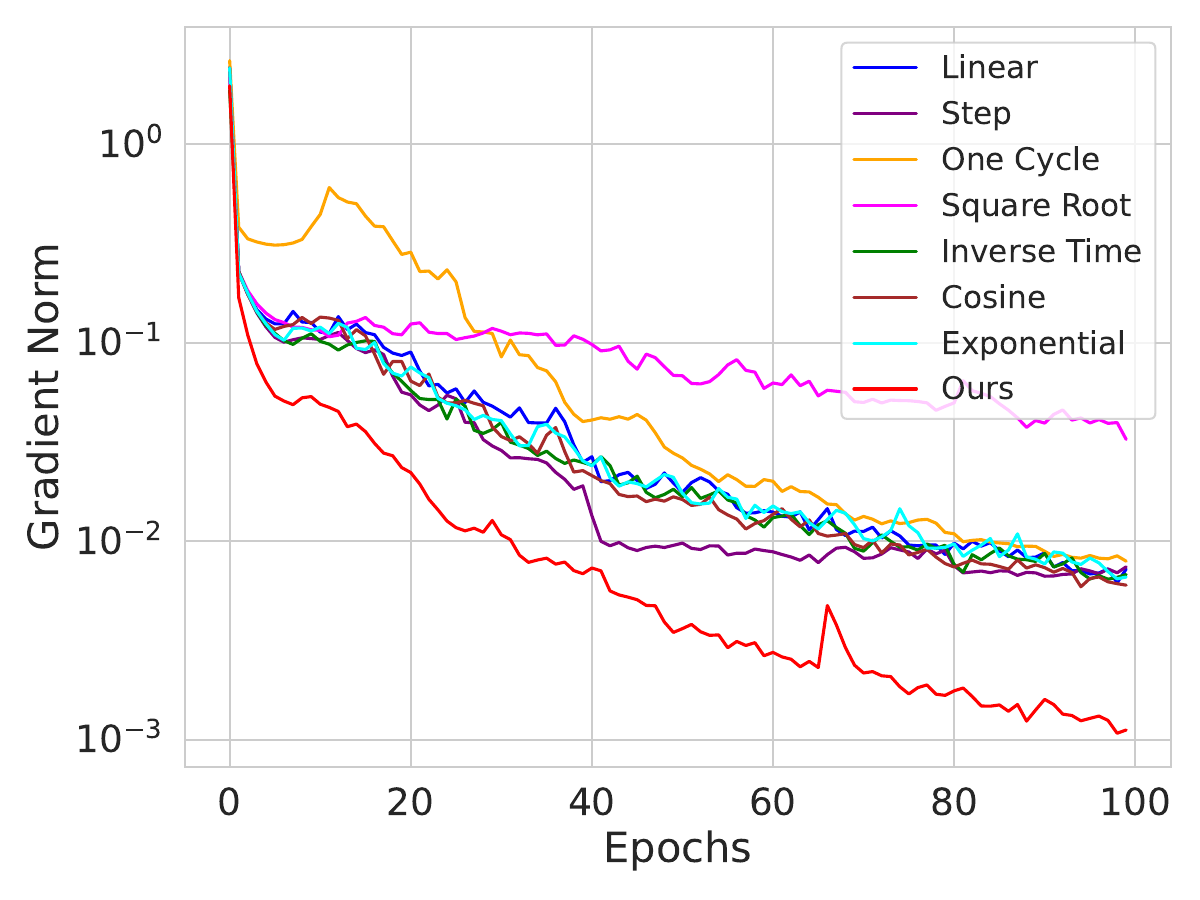}
    \caption{Gradient norm v/s Epochs}
  \end{subfigure}
  \hfill
  \begin{subfigure}[b]{0.31\textwidth}
    \includegraphics[width=\textwidth]{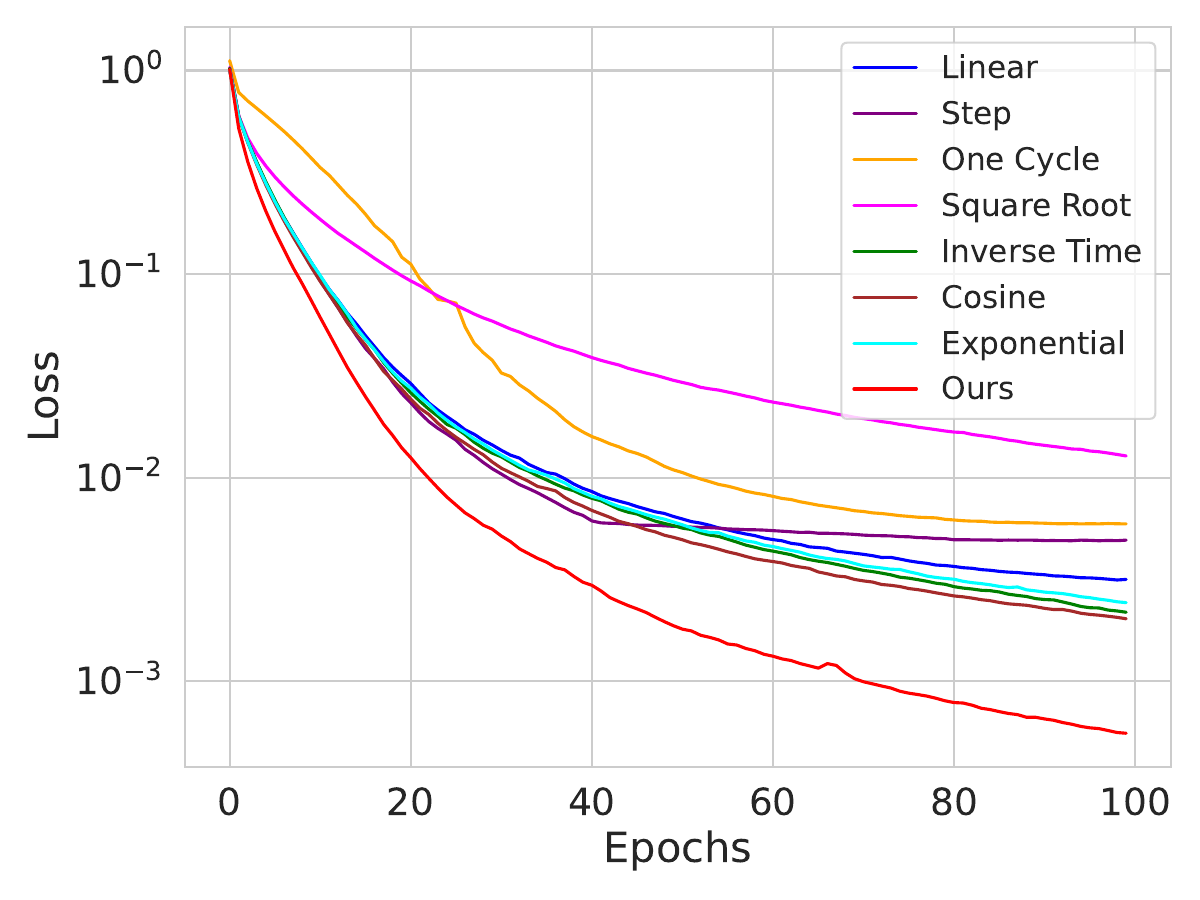}
    \caption{Training loss v/s Epochs}
  \end{subfigure}
  \hfill
  \begin{subfigure}[b]{0.31\textwidth}
    \includegraphics[width=\textwidth]{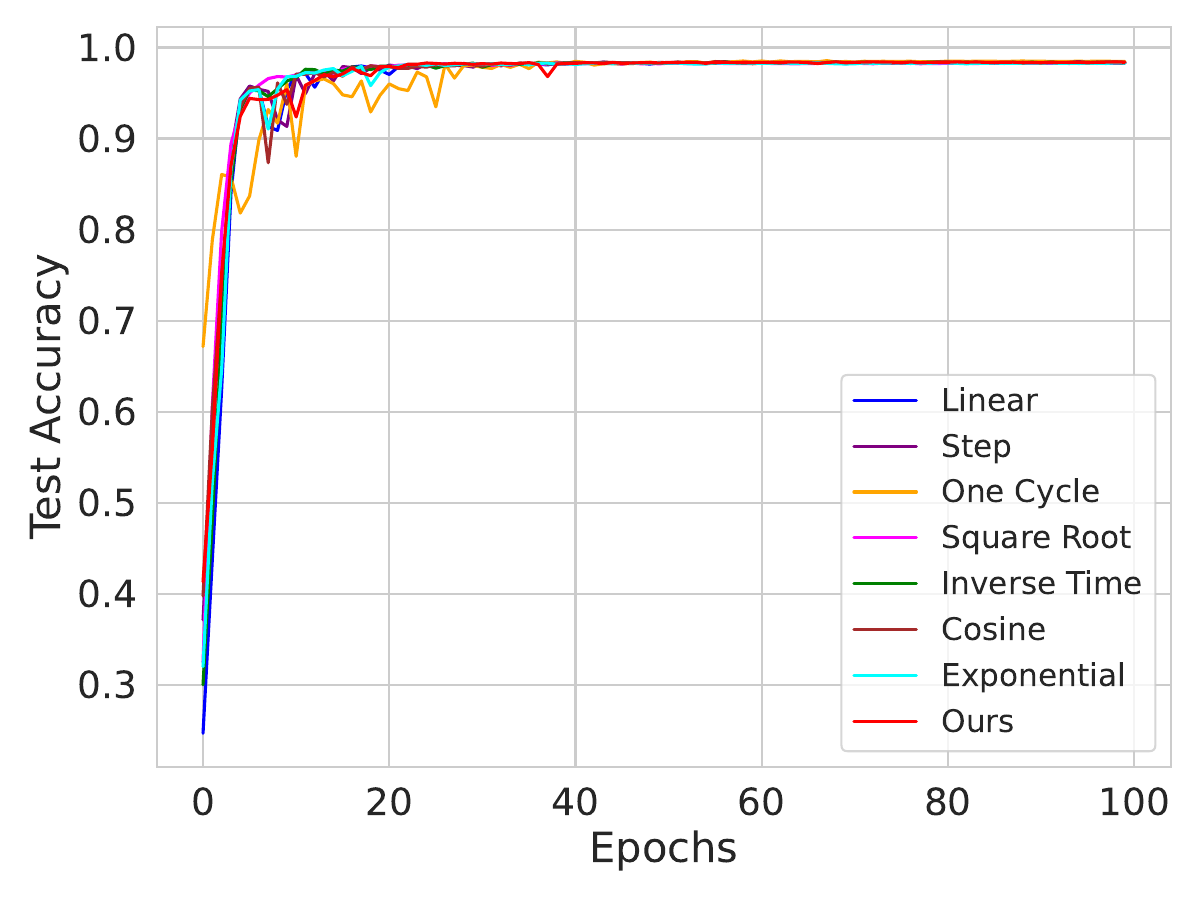}
    \caption{Validation acc. v/s Epochs}
  \end{subfigure}
  \caption{\textbf{Comparision with schedulers}: Mini-batch experiments on a single layer network with 3000 nodes in each layer, trained on MNIST.}
\label{fig:1__3000_mini}
\end{figure}
\begin{figure}[htbp]
  \centering
  \begin{subfigure}[b]{0.31\textwidth}
    \includegraphics[width=\textwidth]{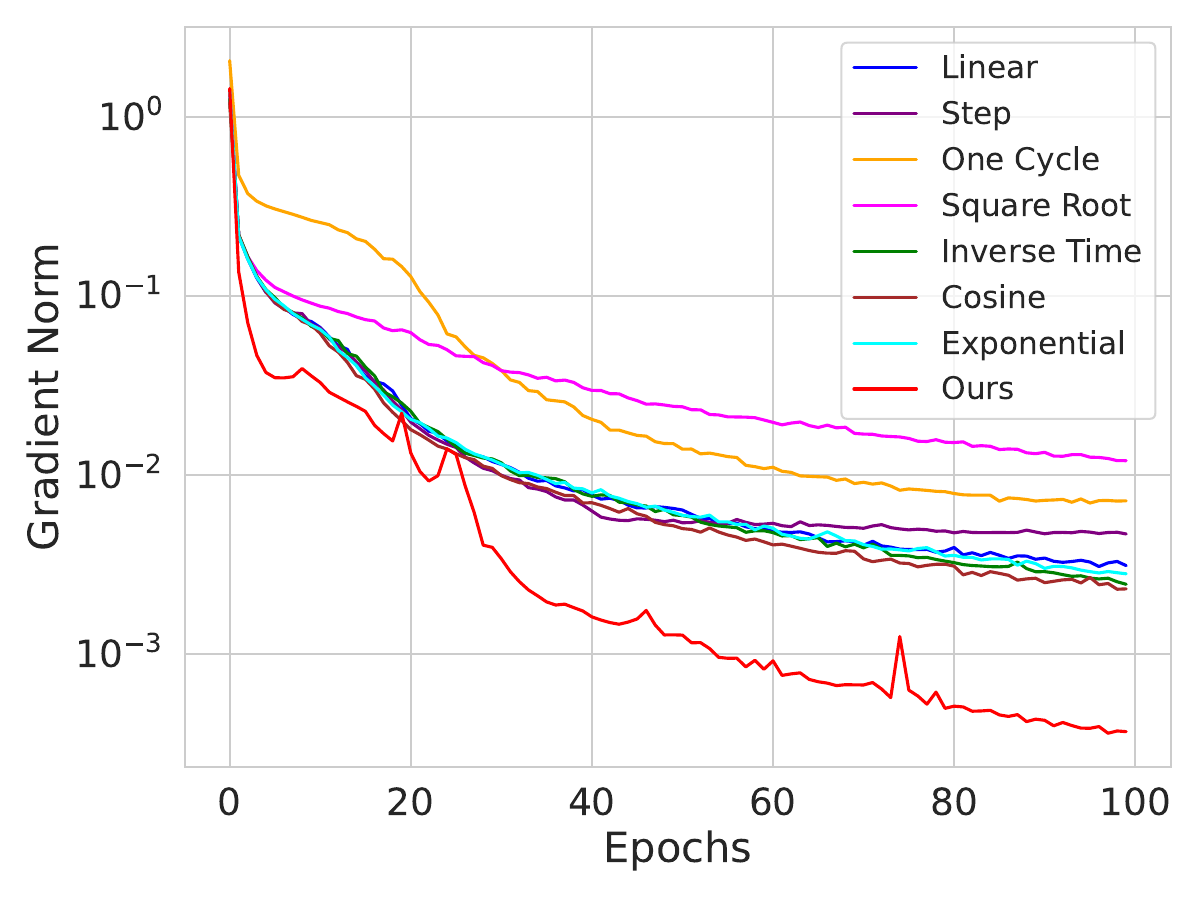}
    \caption{Gradient norm v/s Epochs}
  \end{subfigure}
  \hfill
  \begin{subfigure}[b]{0.31\textwidth}
    \includegraphics[width=\textwidth]{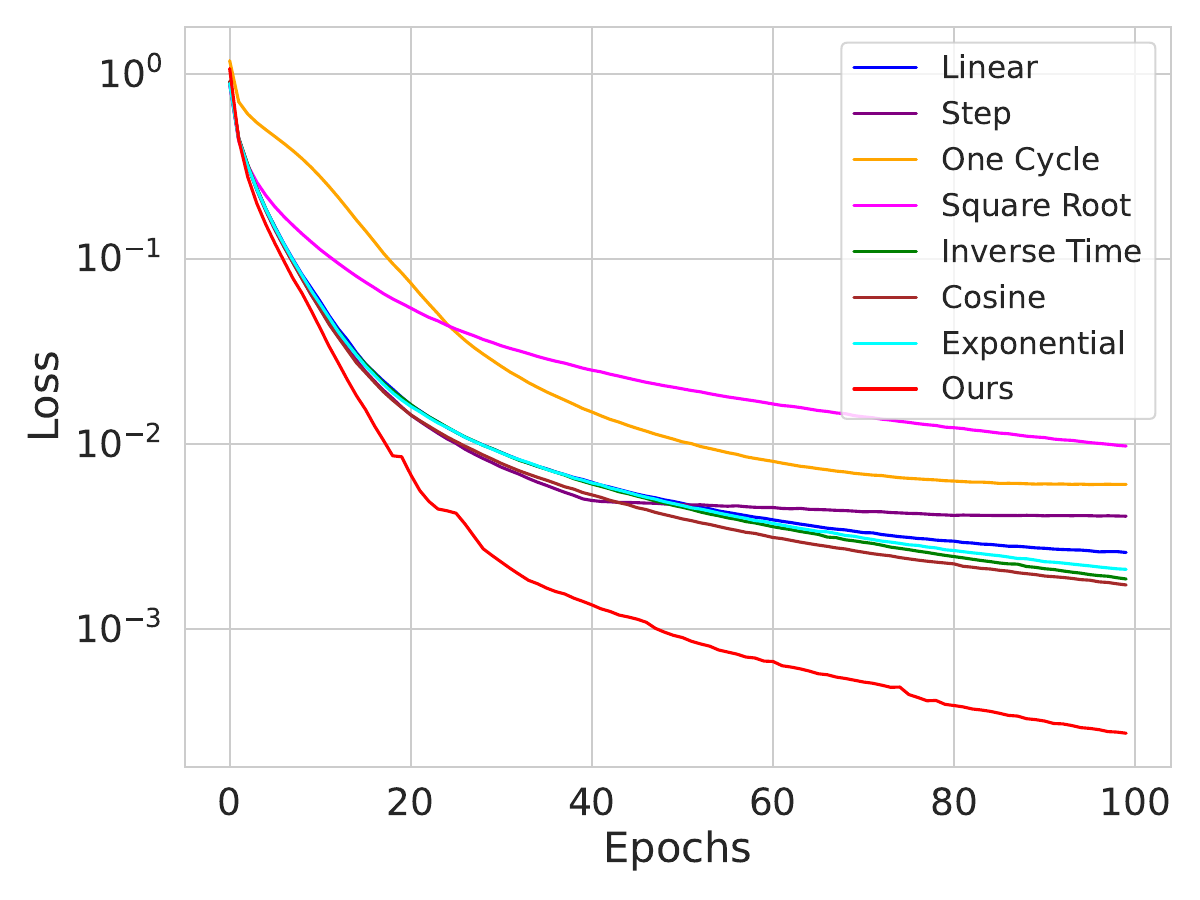}
    \caption{Training loss v/s Epochs}
  \end{subfigure}
  \hfill
  \begin{subfigure}[b]{0.31\textwidth}
    \includegraphics[width=\textwidth]{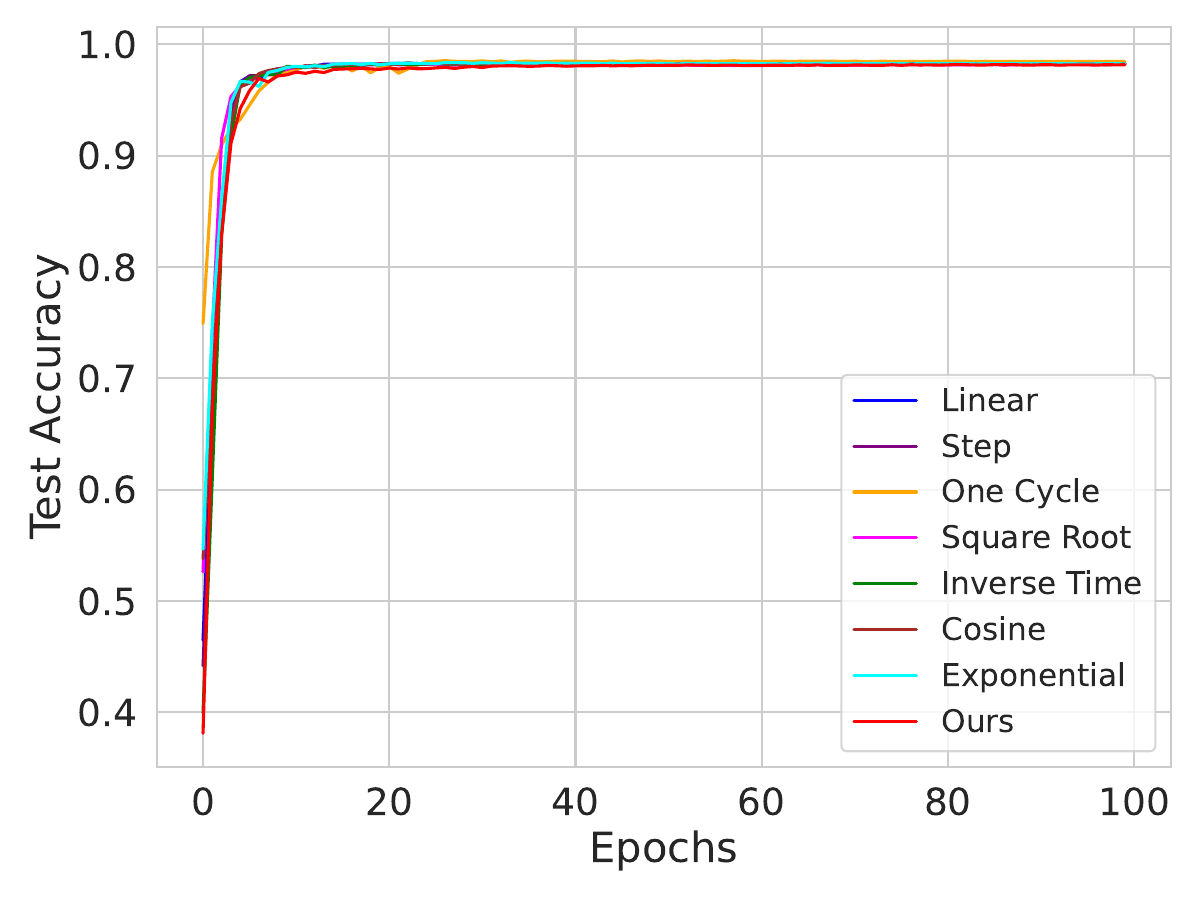}
    \caption{Validation acc. v/s Epochs}
  \end{subfigure}
  \caption{\textbf{Comparision with schedulers}: Mini-batch experiments on a 3 layer network with 300 nodes in each layer, trained on MNIST.}
\label{fig:3__300_mini}
\end{figure}
\begin{figure}[htbp]
  \centering
  \begin{subfigure}[b]{0.31\textwidth}
    \includegraphics[width=\textwidth]{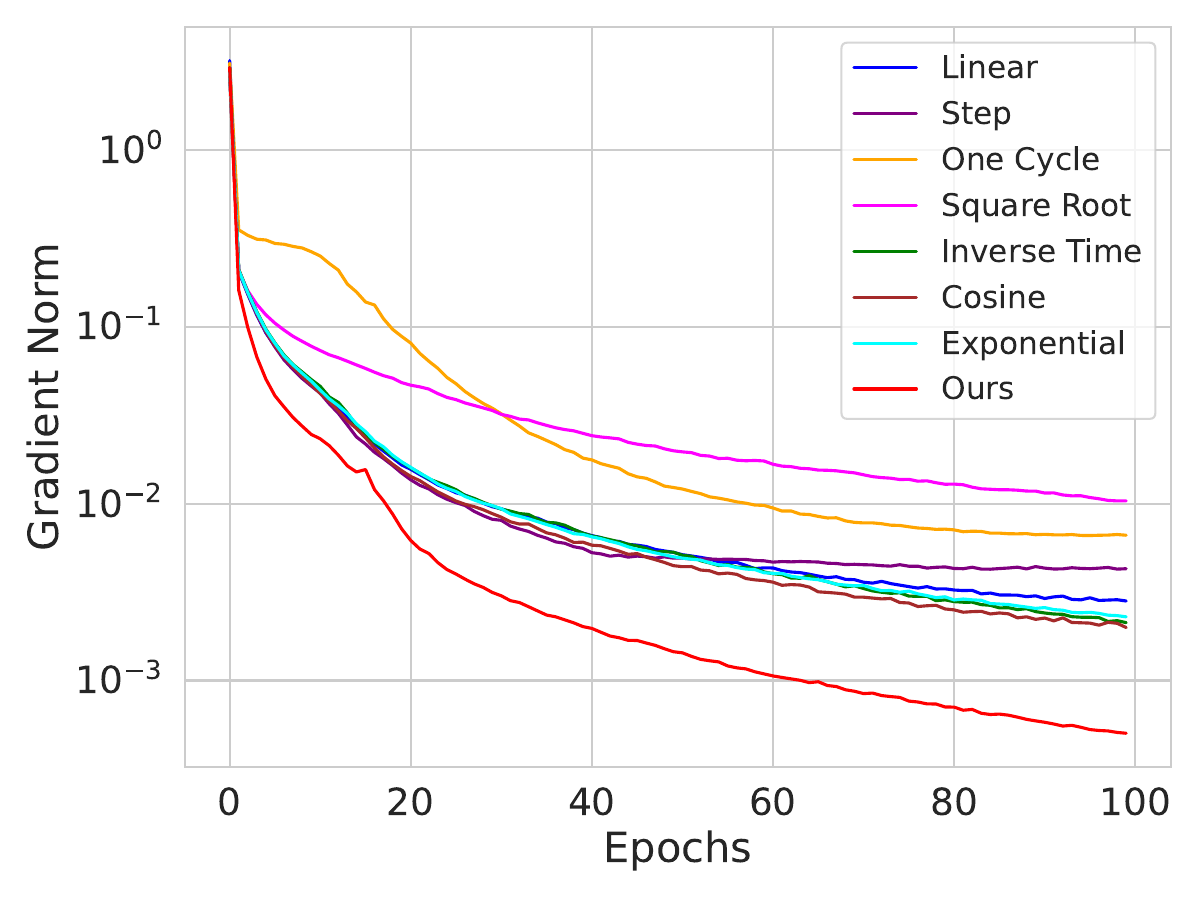}
    \caption{Gradient norm v/s Epochs}
  \end{subfigure}
  \hfill
  \begin{subfigure}[b]{0.31\textwidth}
    \includegraphics[width=\textwidth]{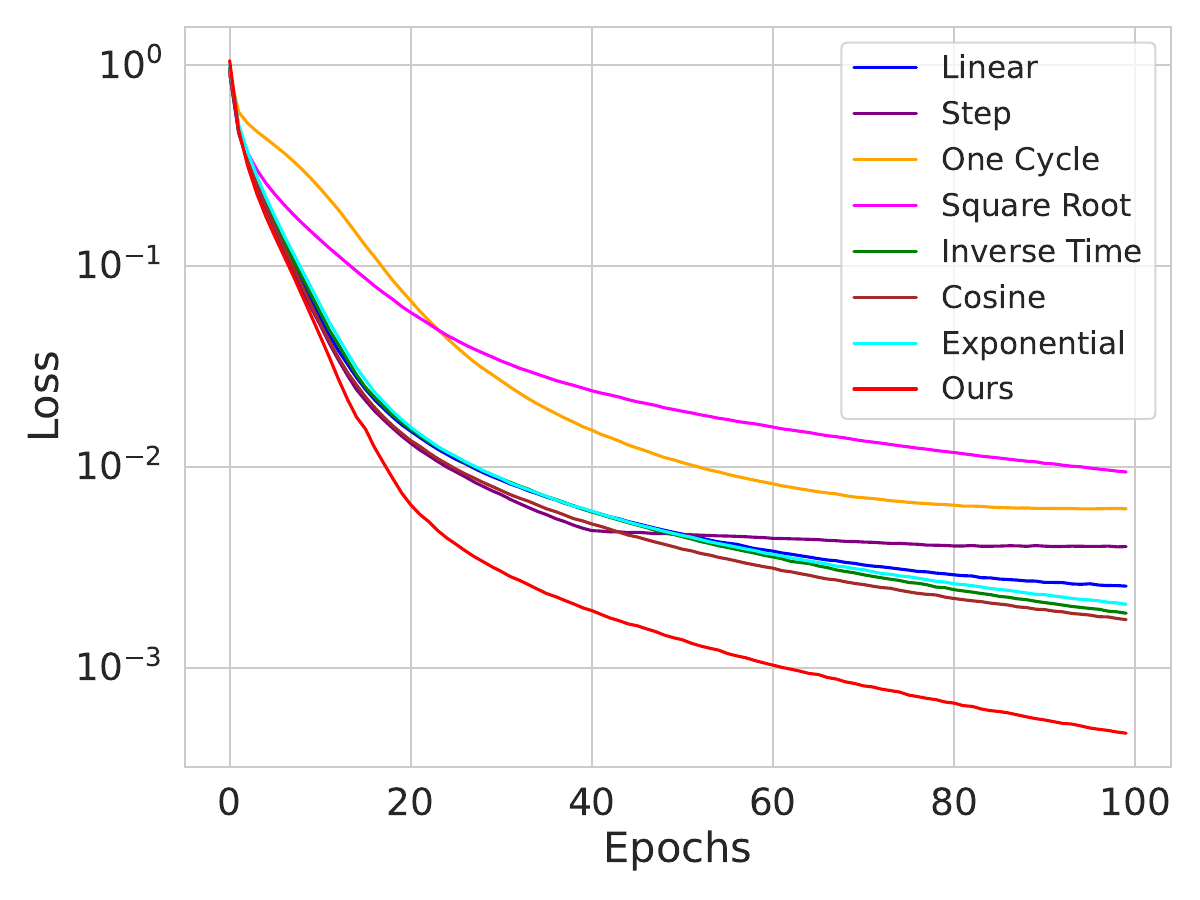}
    \caption{Training loss v/s Epochs}
  \end{subfigure}
  \hfill
  \begin{subfigure}[b]{0.31\textwidth}
    \includegraphics[width=\textwidth]{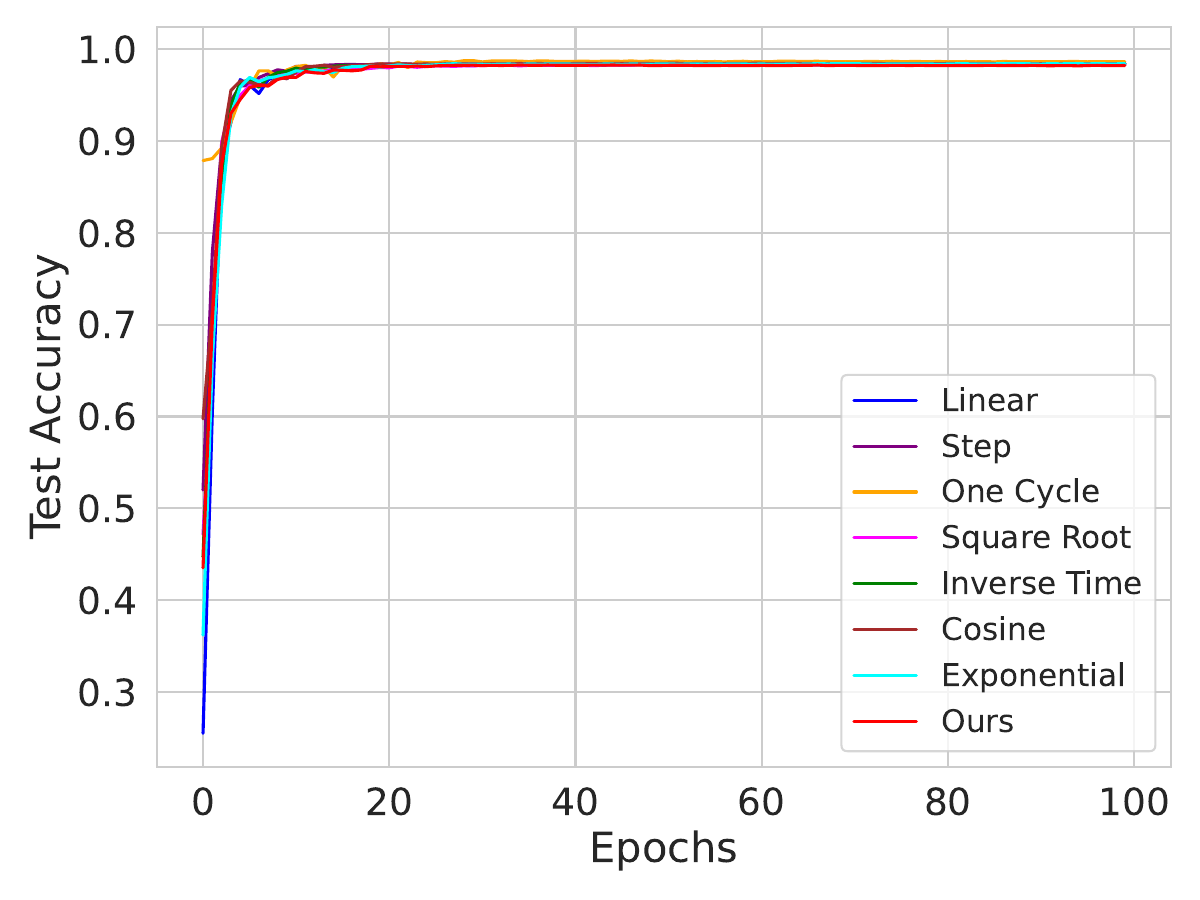}
    \caption{Validation acc. v/s Epochs}
  \end{subfigure}
  \caption{\textbf{Comparision with schedulers}: Mini-batch experiments on a 3 layer network with 3000 nodes in each layer, trained on MNIST.}
\label{fig:3__3000_mini}
\end{figure}
\begin{figure}[htbp]
  \centering
  \begin{subfigure}[b]{0.31\textwidth}
    \includegraphics[width=\textwidth]{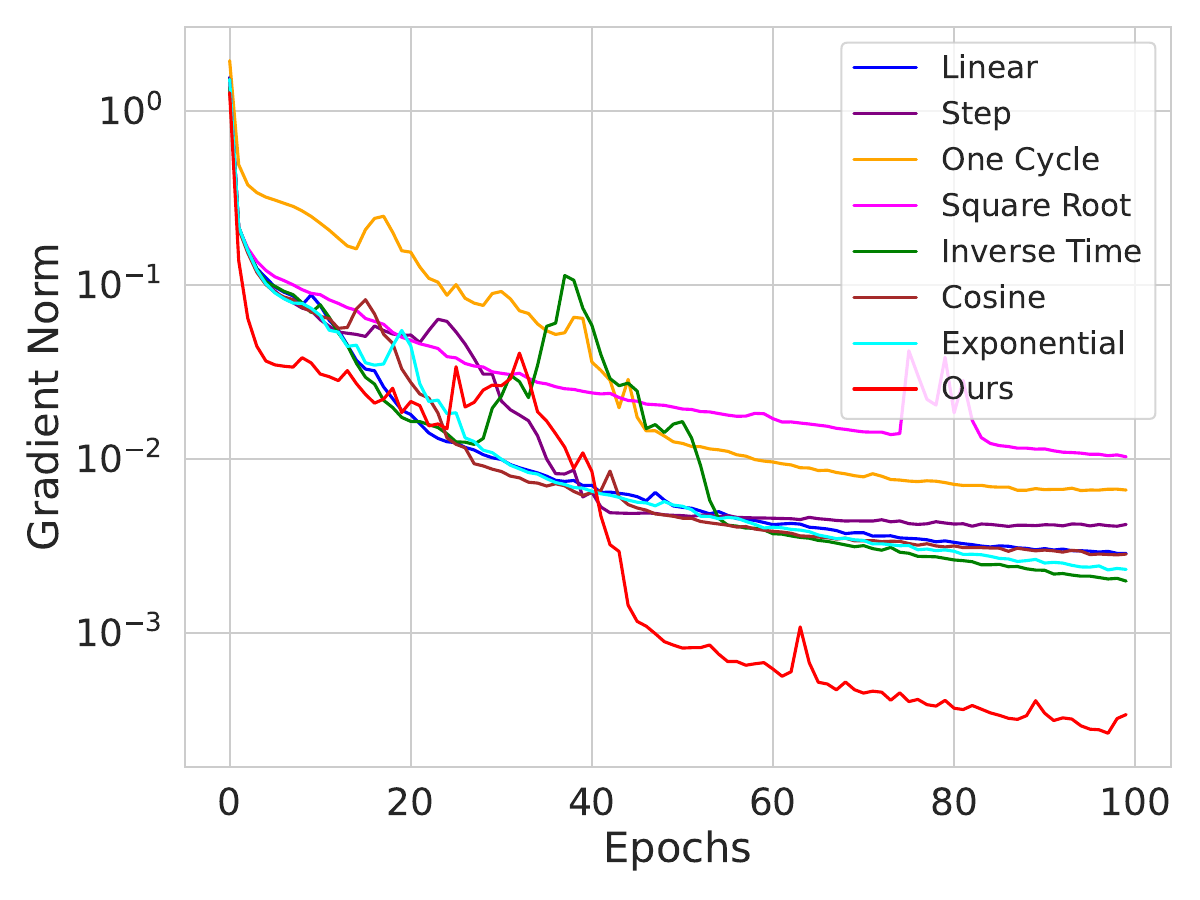}
    \caption{Gradient norm v/s Epochs}
  \end{subfigure}
  \hfill
  \begin{subfigure}[b]{0.31\textwidth}
    \includegraphics[width=\textwidth]{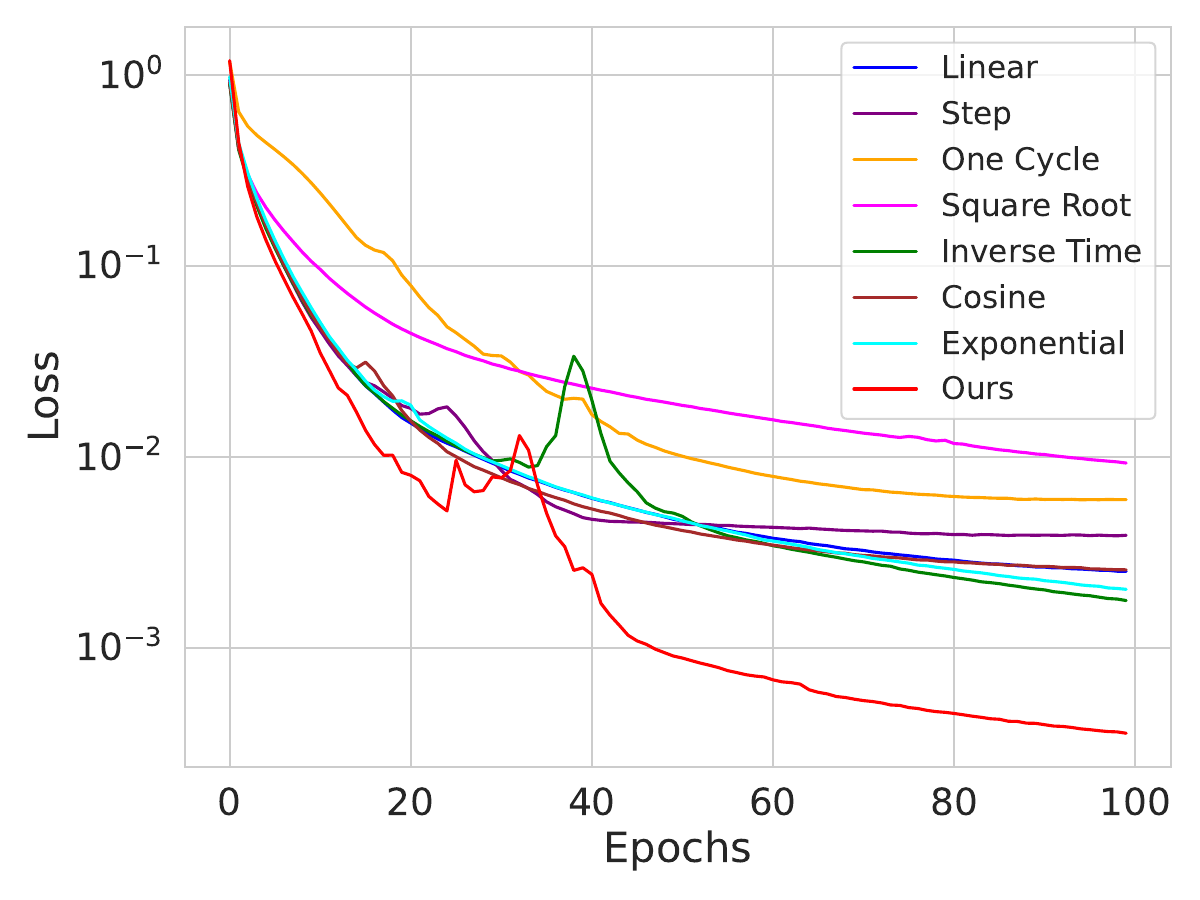}
    \caption{Training loss v/s Epochs}
  \end{subfigure}
  \hfill
  \begin{subfigure}[b]{0.31\textwidth}
    \includegraphics[width=\textwidth]{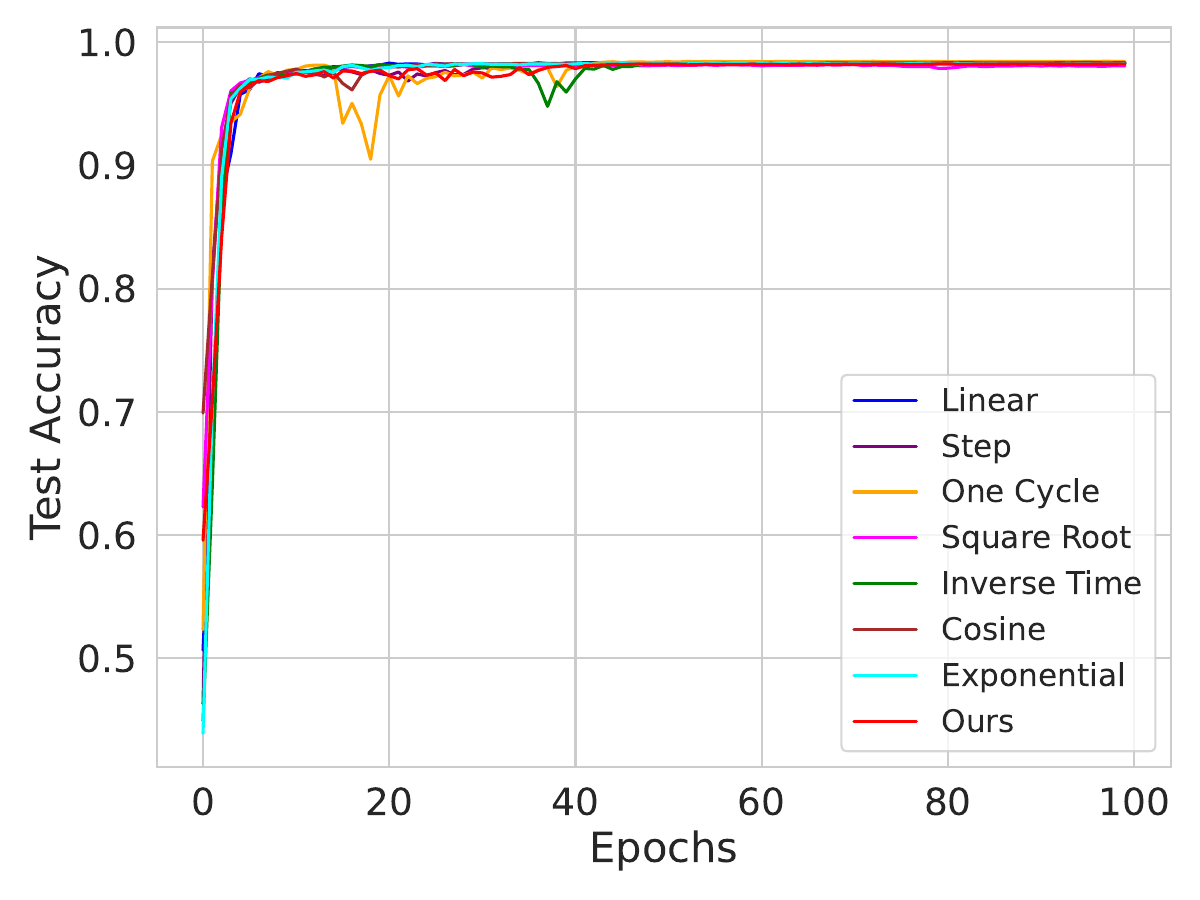}
    \caption{Validation acc. v/s Epochs}
  \end{subfigure}
  \caption{\textbf{Comparision with schedulers}: Mini-batch experiments on a 5 layer network with 300 nodes in each layer, trained on MNIST.}
\label{fig:5__300_mini}
\end{figure}
\begin{figure}[htbp]
  \centering
  \begin{subfigure}[b]{0.31\textwidth}
    \includegraphics[width=\textwidth]{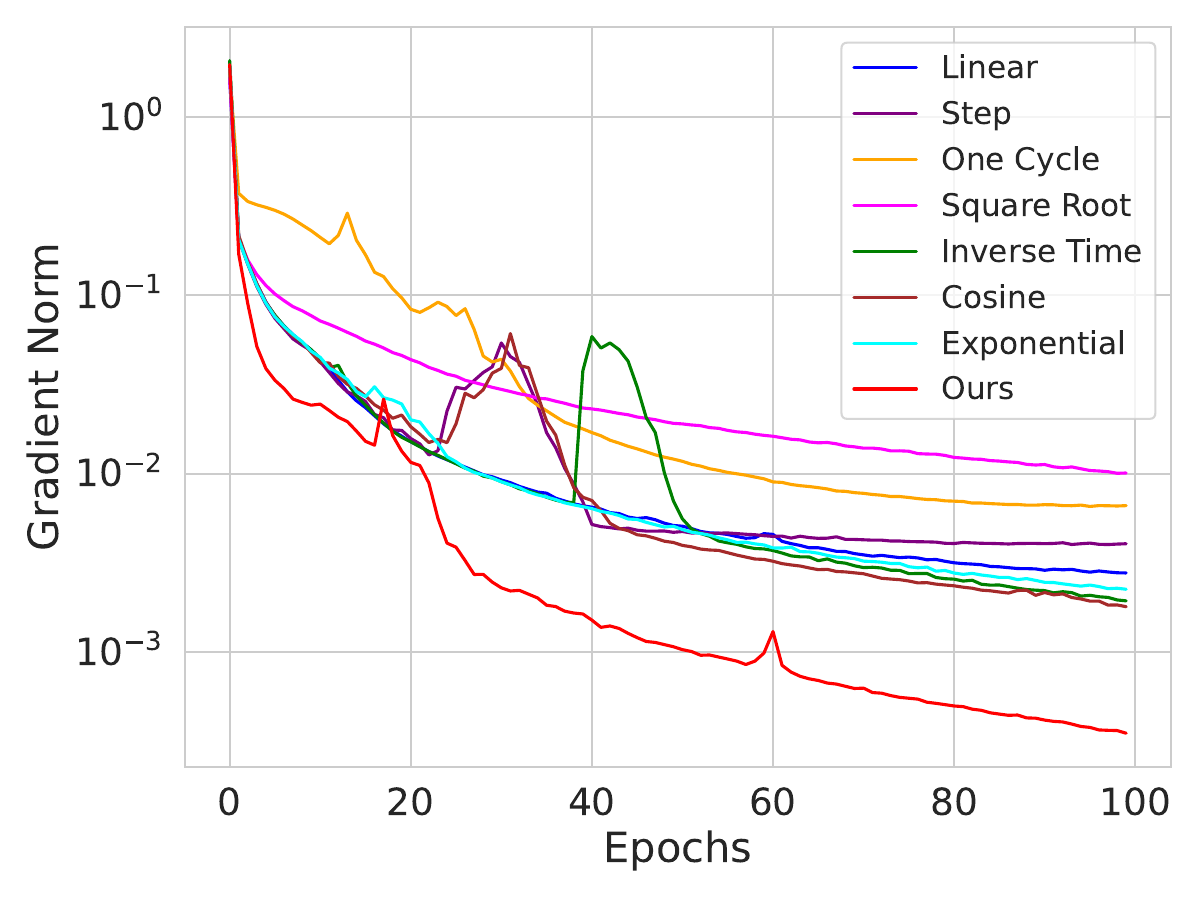}
    \caption{Gradient norm v/s Epochs}
  \end{subfigure}
  \hfill
  \begin{subfigure}[b]{0.31\textwidth}
    \includegraphics[width=\textwidth]{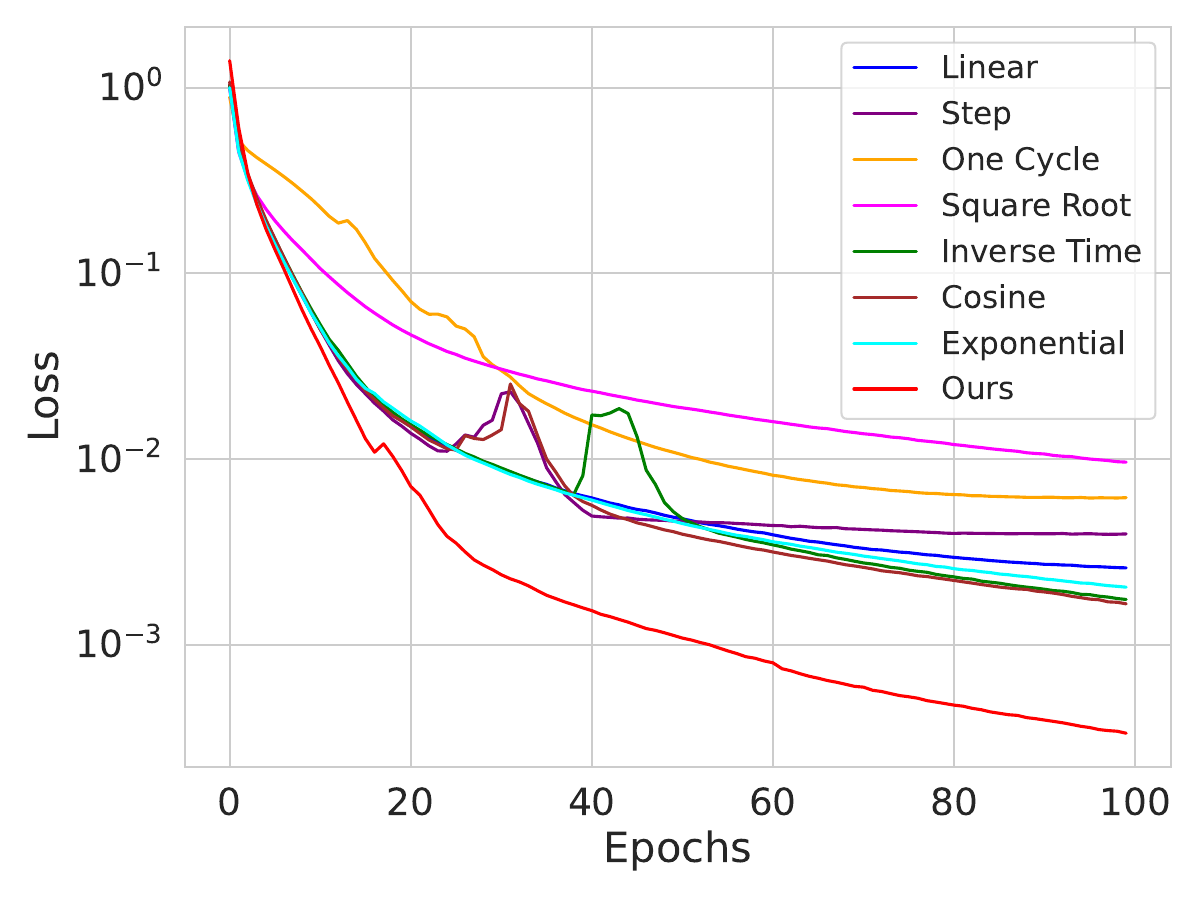}
    \caption{Training loss v/s Epochs}
  \end{subfigure}
  \hfill
  \begin{subfigure}[b]{0.31\textwidth}
    \includegraphics[width=\textwidth]{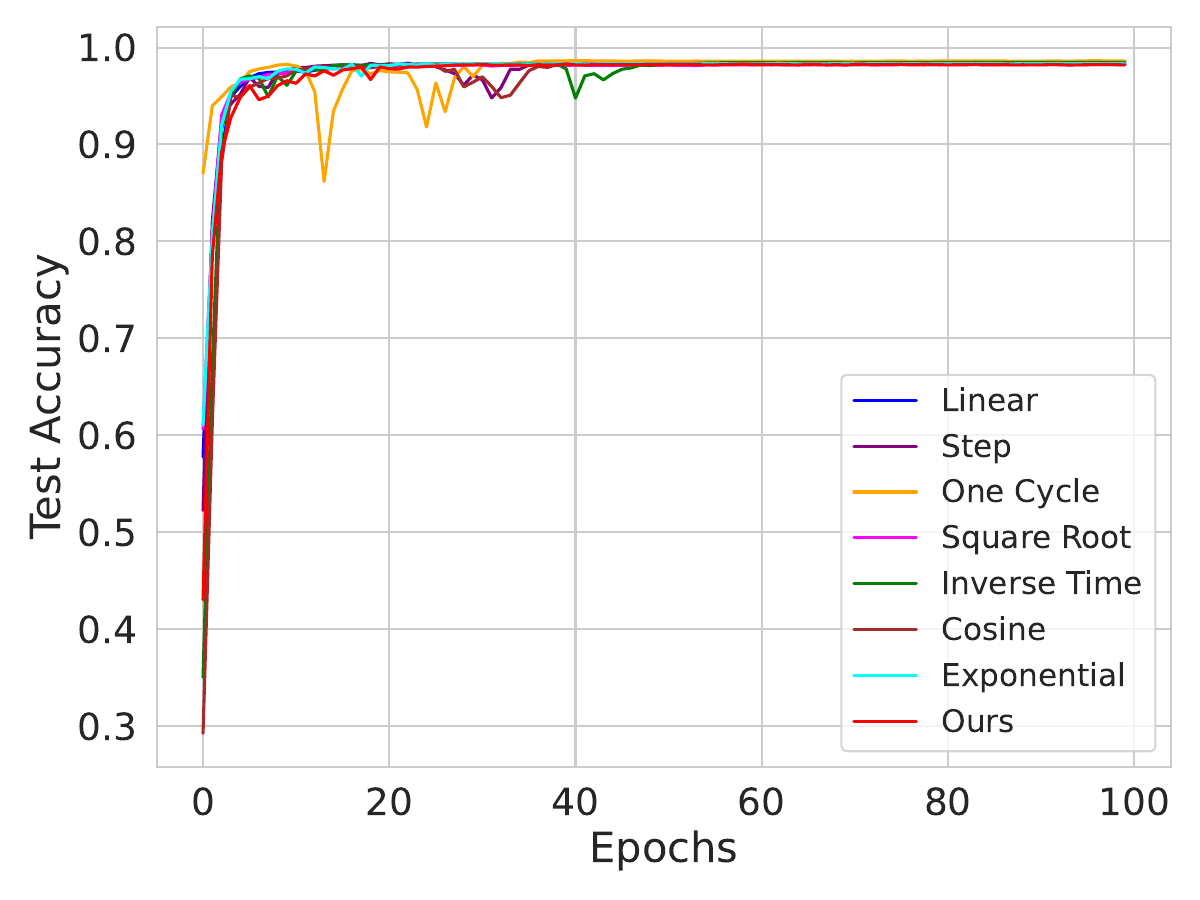}
    \caption{Validation acc. v/s Epochs}
  \end{subfigure}
  \caption{\textbf{Comparision with schedulers}: Mini-batch experiments on a 5 layer network with 1000 nodes in each layer, trained on MNIST.}
\label{fig:5__1000_mini}
\end{figure}
\begin{figure}[htbp]
  \centering
  \begin{subfigure}[b]{0.31\textwidth}
    \includegraphics[width=\textwidth]{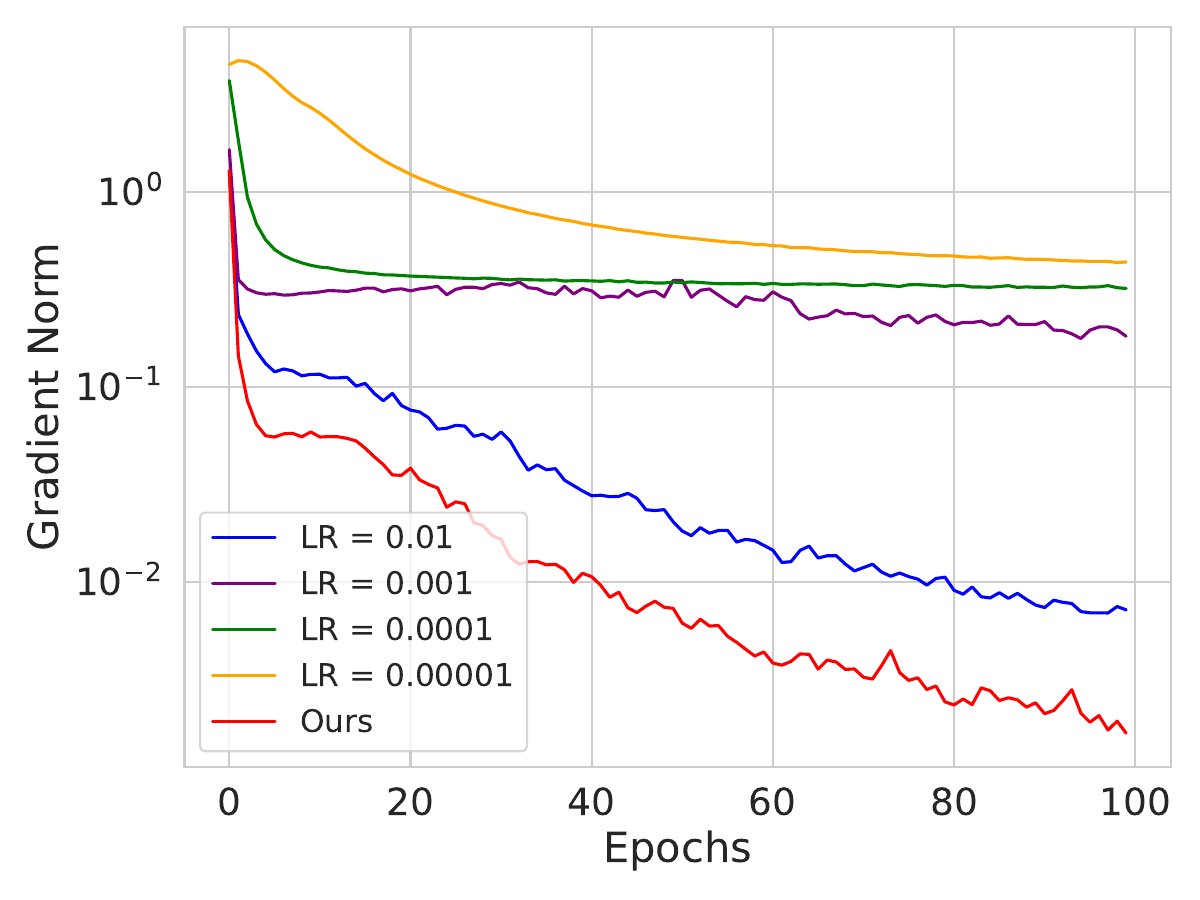}
    \caption{Gradient norm v/s Epochs}
  \end{subfigure}
  \hfill
  \begin{subfigure}[b]{0.31\textwidth}
    \includegraphics[width=\textwidth]{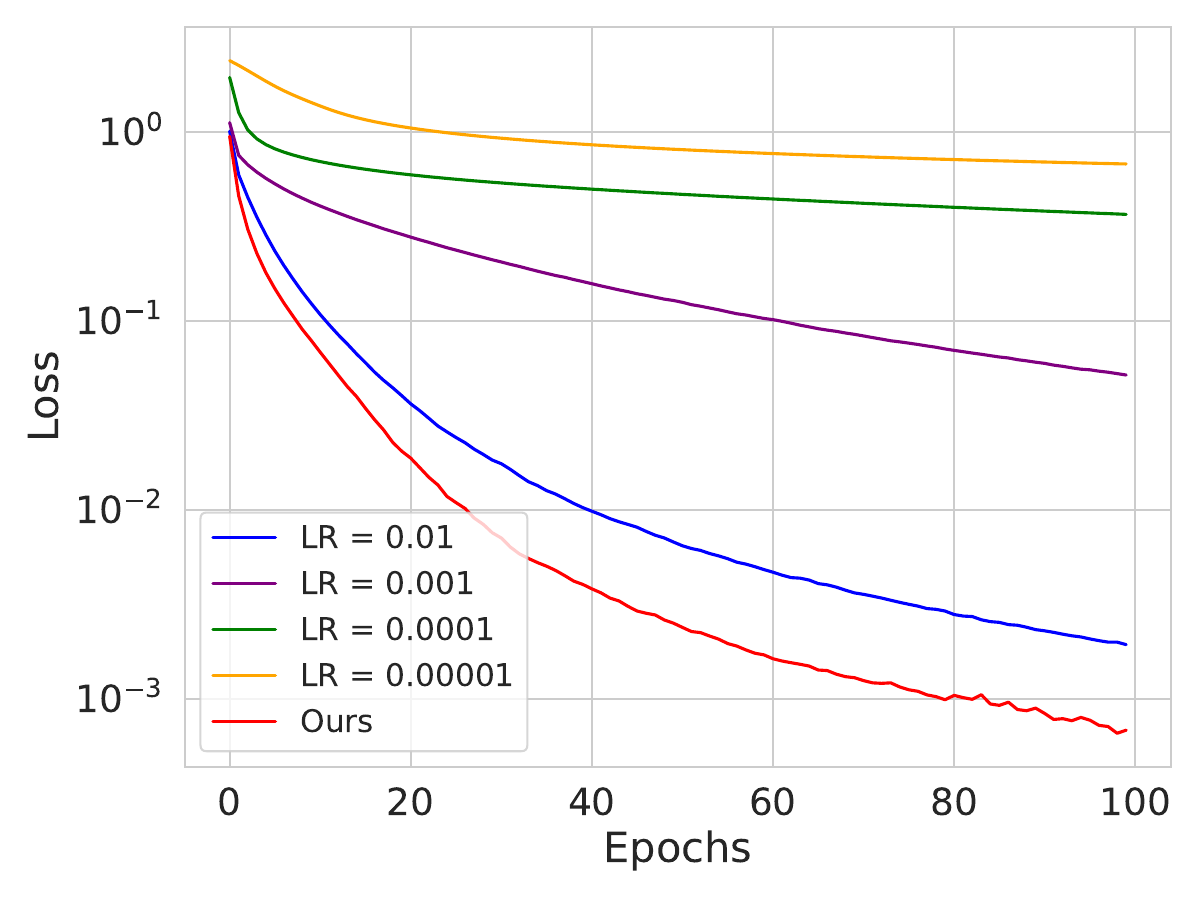}
    \caption{Training loss v/s Epochs}
  \end{subfigure}
  \hfill
  \begin{subfigure}[b]{0.31\textwidth}
    \includegraphics[width=\textwidth]{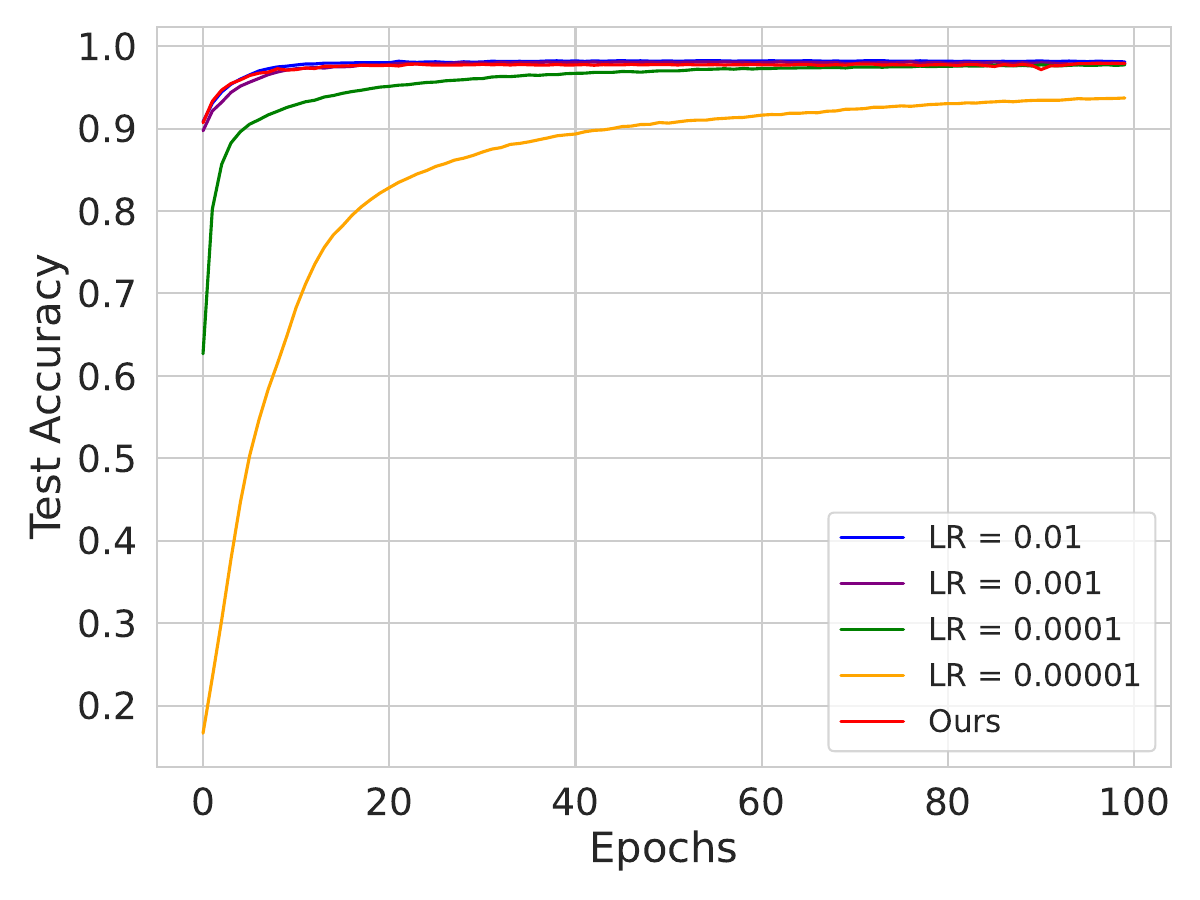}
    \caption{Validation acc. v/s Epochs}
  \end{subfigure}
  \caption{\textbf{Comparision with constant step sizes}: Mini-batch experiments on a single layer network with 300 nodes in each layer, trained on MNIST.}
\label{fig:1_300_mini_step}
\end{figure}
\begin{figure}[htbp]
  \centering
  \begin{subfigure}[b]{0.31\textwidth}
    \includegraphics[width=\textwidth]{results/linear/step_mini/1__1000/grad_norm_new.pdf}
    \caption{Gradient norm v/s Epochs}
  \end{subfigure}
  \hfill
  \begin{subfigure}[b]{0.31\textwidth}
    \includegraphics[width=\textwidth]{results/linear/step_mini/1__1000/loss.pdf}
    \caption{Training loss v/s Epochs}
  \end{subfigure}
  \hfill
  \begin{subfigure}[b]{0.31\textwidth}
    \includegraphics[width=\textwidth]{results/linear/step_mini/1__1000/val_acc.pdf}
    \caption{Validation acc. v/s Epochs}
  \end{subfigure}
  \caption{\textbf{Comparision with constant step sizes}: Mini-batch experiments on a single layer network with 1000 nodes in each layer, trained on MNIST.}
\label{fig:1__1000_mini_step}
\end{figure}
\begin{figure}[htbp]
  \centering
  \begin{subfigure}[b]{0.31\textwidth}
    \includegraphics[width=\textwidth]{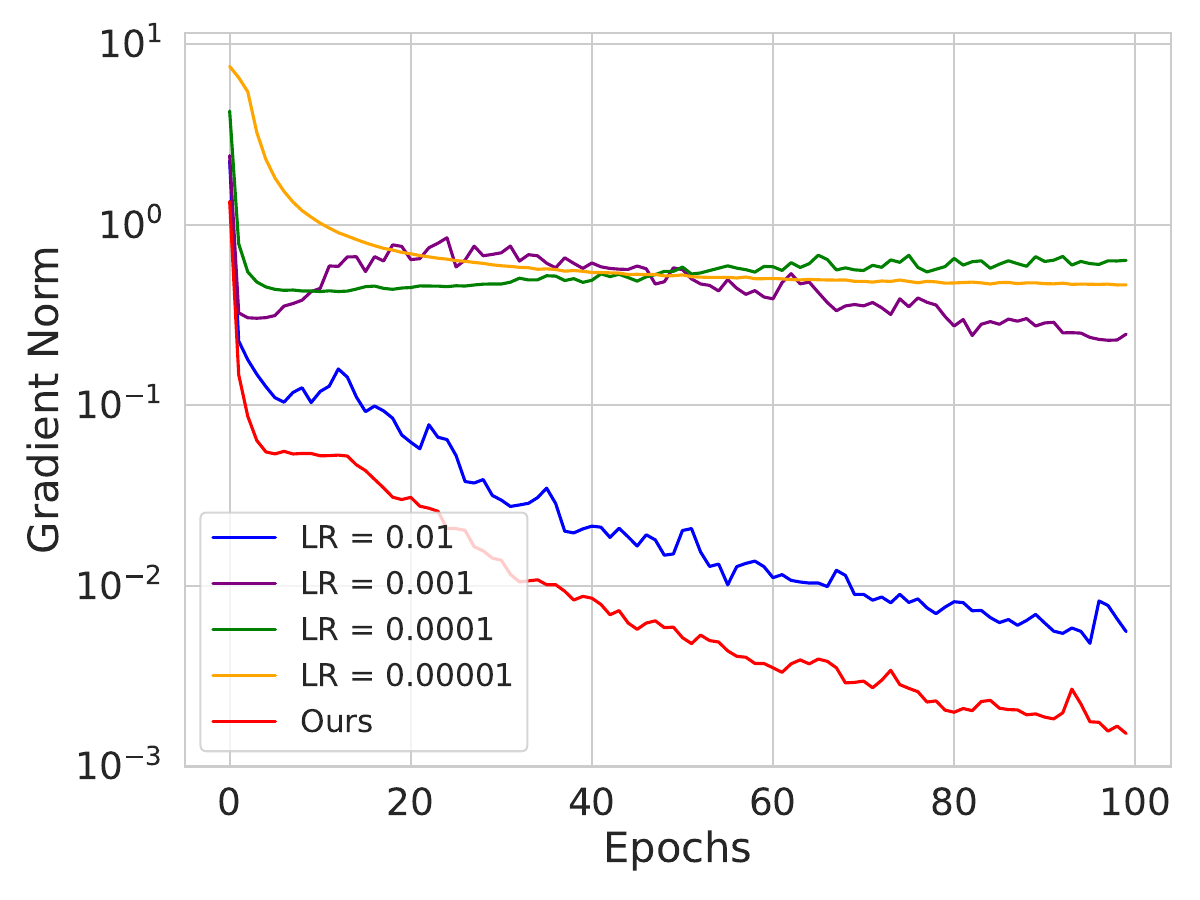}
    \caption{Gradient norm v/s Epochs}
  \end{subfigure}
  \hfill
  \begin{subfigure}[b]{0.31\textwidth}
    \includegraphics[width=\textwidth]{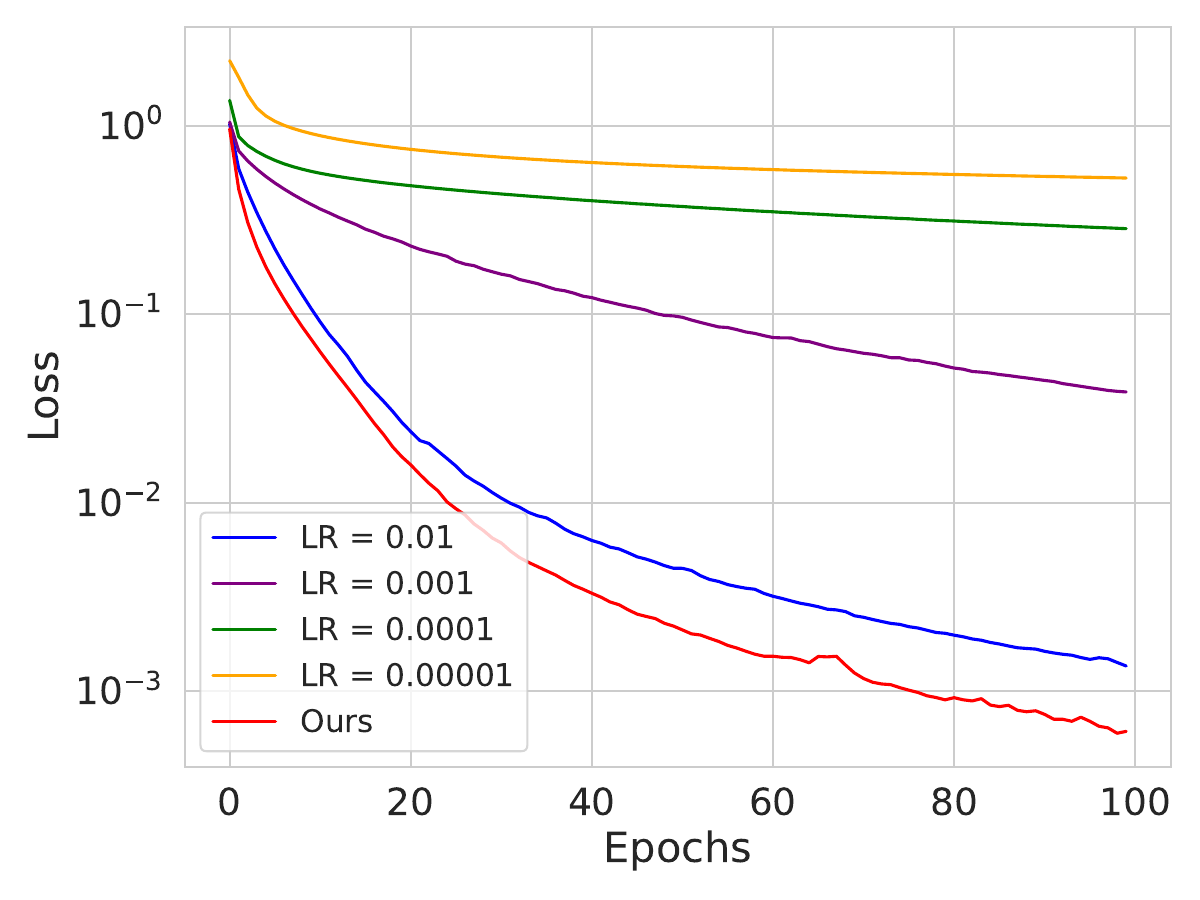}
    \caption{Training loss v/s Epochs}
  \end{subfigure}
  \hfill
  \begin{subfigure}[b]{0.31\textwidth}
    \includegraphics[width=\textwidth]{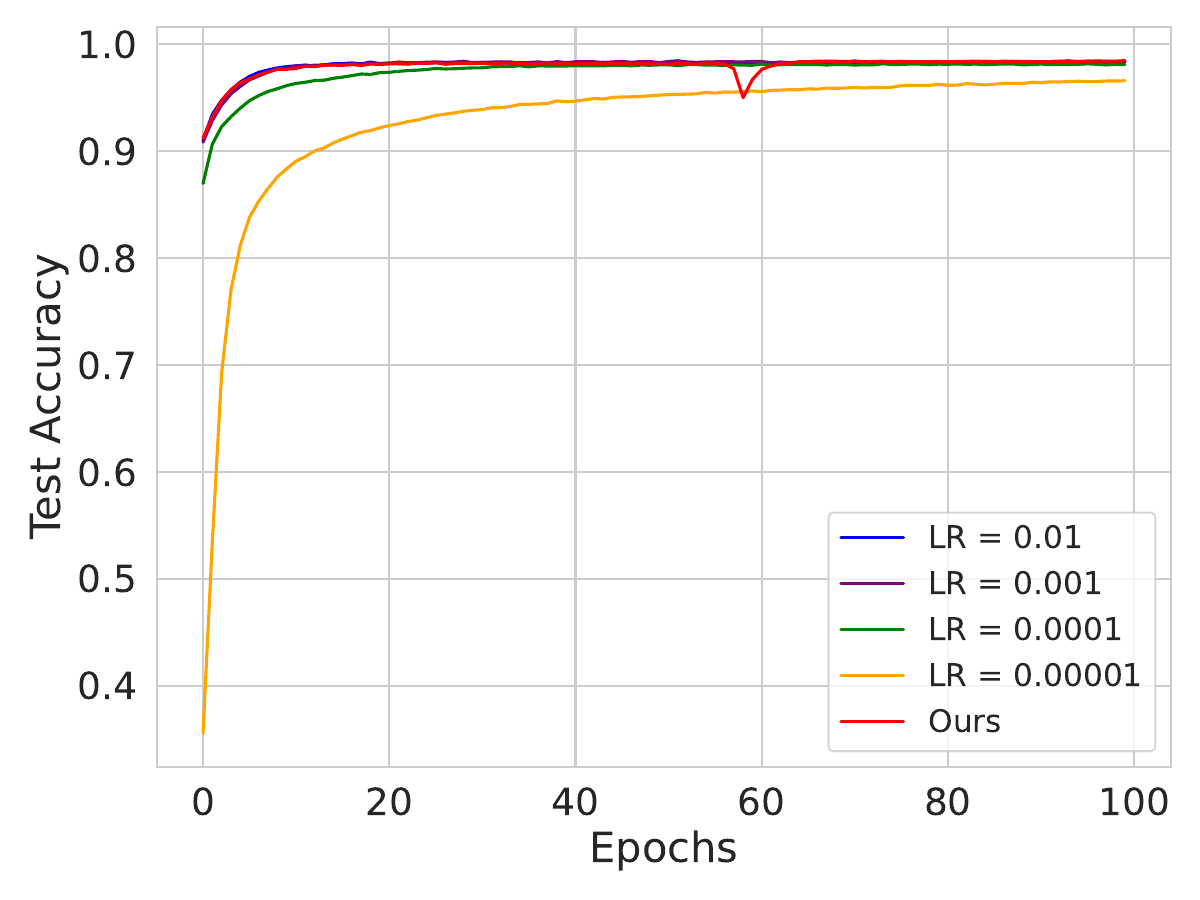}
    \caption{Validation acc. v/s Epochs}
  \end{subfigure}
  \caption{\textbf{Comparision with constant step sizes}: Mini-batch experiments on a single layer network with 3000 nodes in each layer, trained on MNIST.}
\label{fig:1__3000_mini_step}
\end{figure}
\begin{figure}[htbp]
  \centering
  \begin{subfigure}[b]{0.31\textwidth}
    \includegraphics[width=\textwidth]{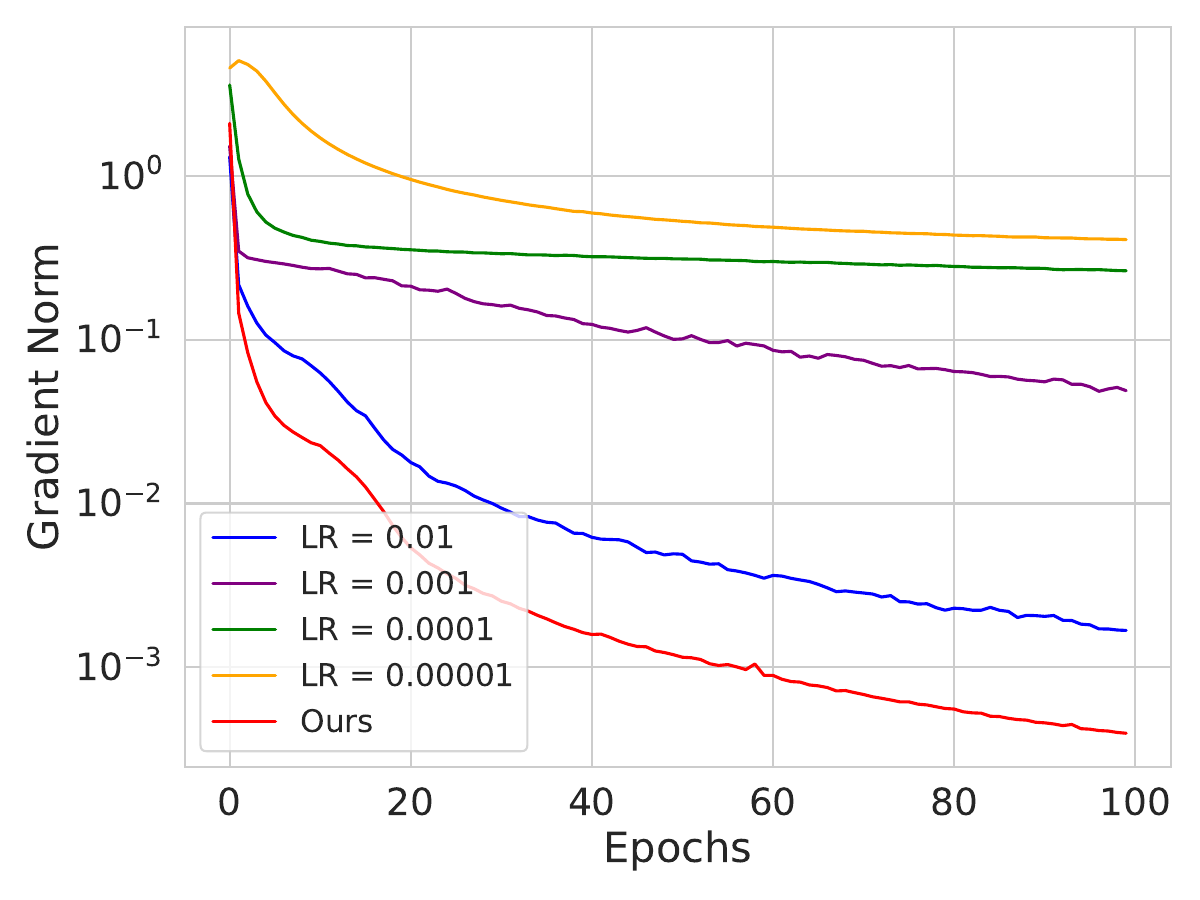}
    \caption{Gradient norm v/s Epochs}
  \end{subfigure}
  \hfill
  \begin{subfigure}[b]{0.31\textwidth}
    \includegraphics[width=\textwidth]{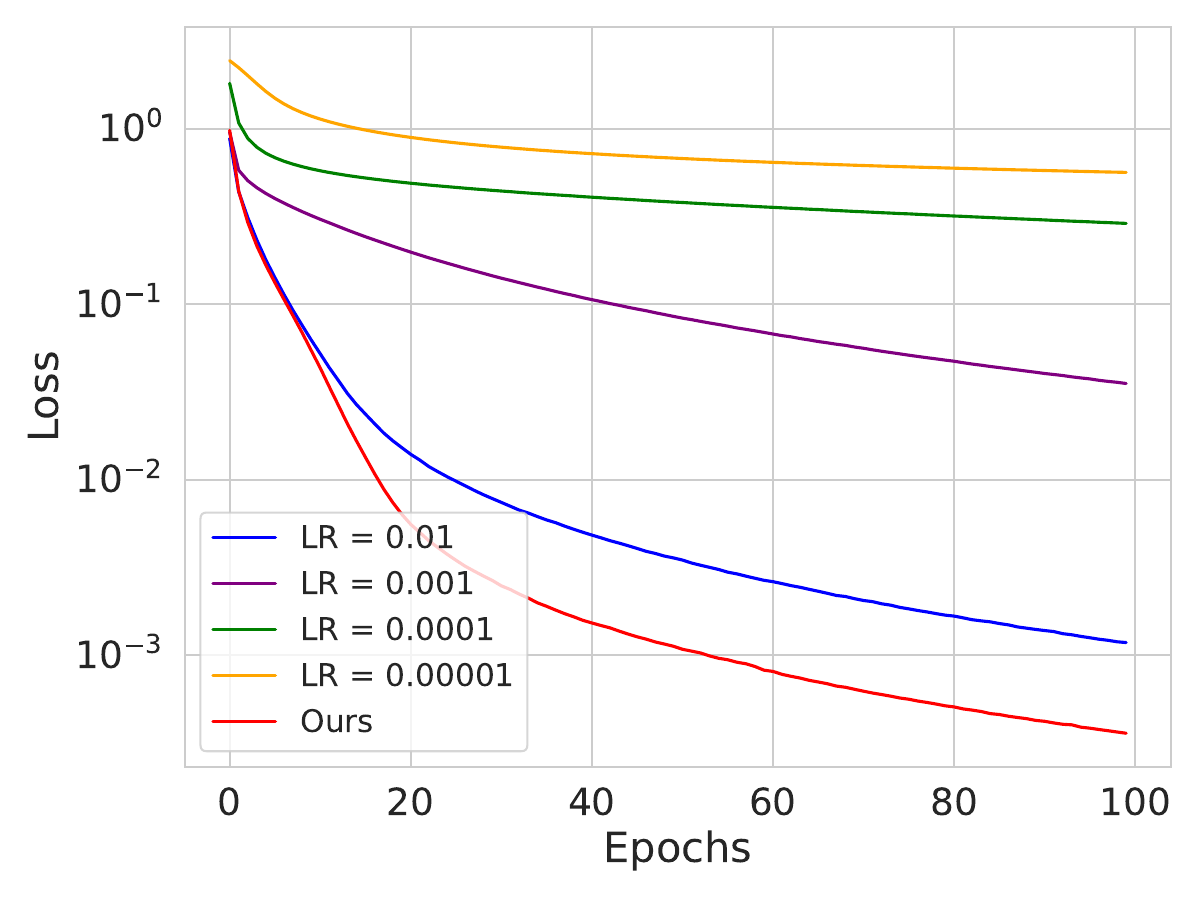}
    \caption{Training loss v/s Epochs}
  \end{subfigure}
  \hfill
  \begin{subfigure}[b]{0.31\textwidth}
    \includegraphics[width=\textwidth]{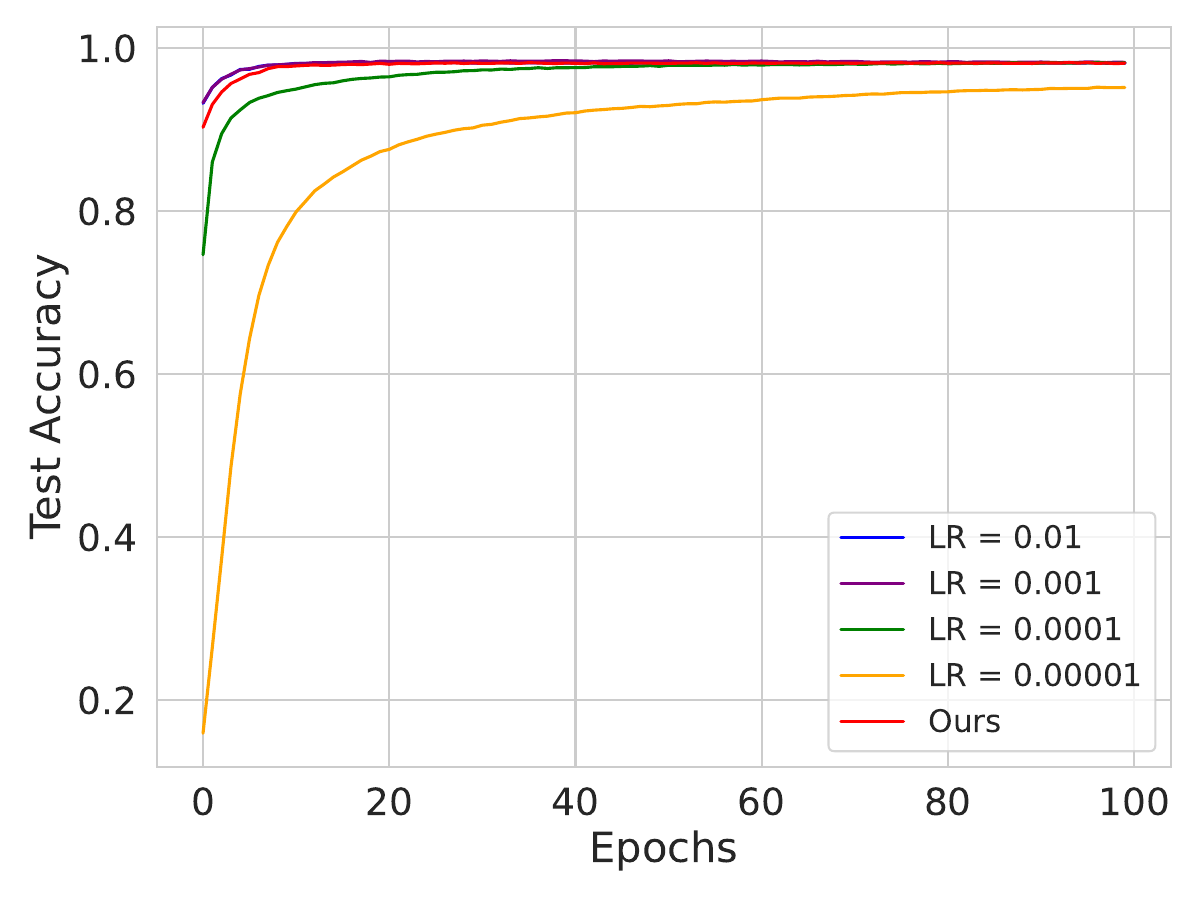}
    \caption{Validation acc. v/s Epochs}
  \end{subfigure}
  \caption{\textbf{Comparision with constant step sizes}: Mini-batch experiments on a 3 layer network with 300 nodes in each layer, trained on MNIST.}
\label{fig:3__300_mini_step}
\end{figure}
\begin{figure}[htbp]
  \centering
  \begin{subfigure}[b]{0.31\textwidth}
    \includegraphics[width=\textwidth]{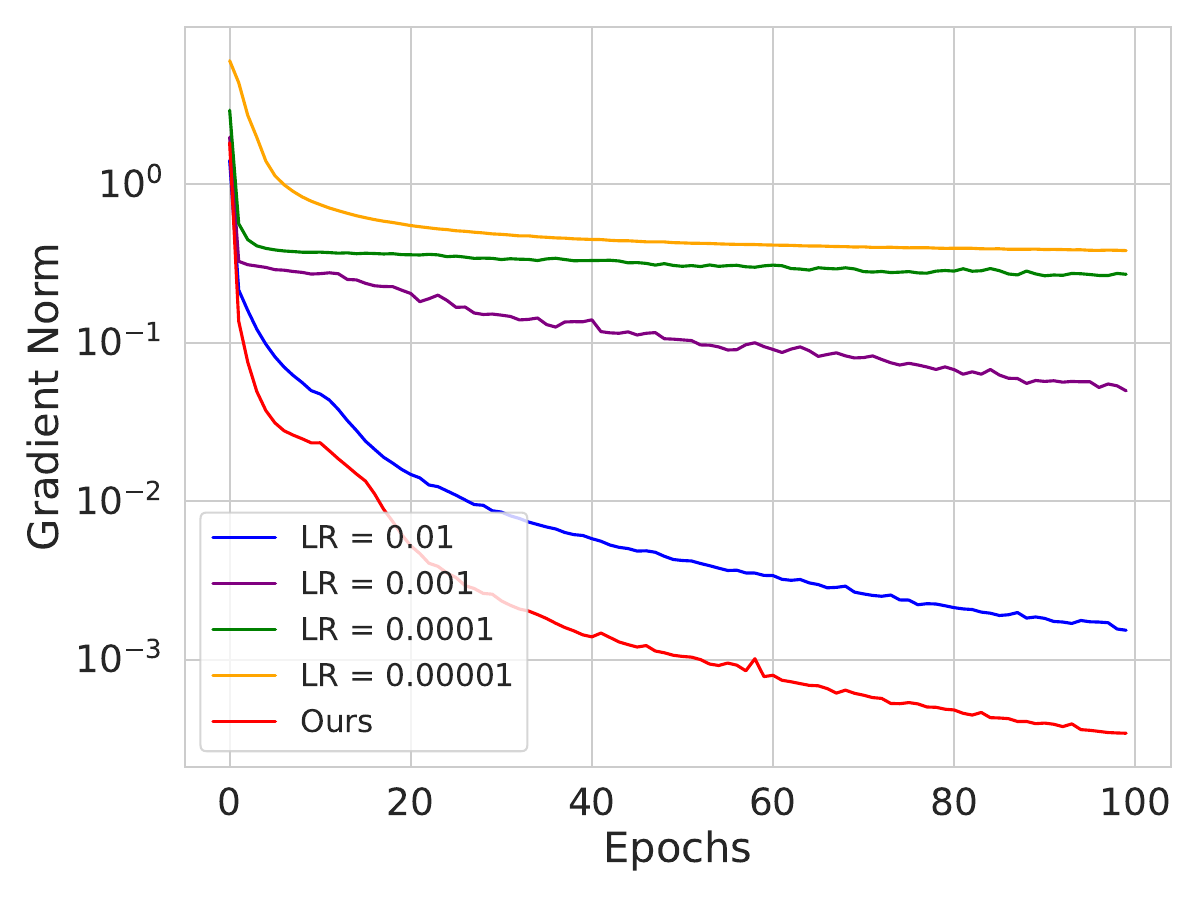}
    \caption{Gradient norm v/s Epochs}
  \end{subfigure}
  \hfill
  \begin{subfigure}[b]{0.31\textwidth}
    \includegraphics[width=\textwidth]{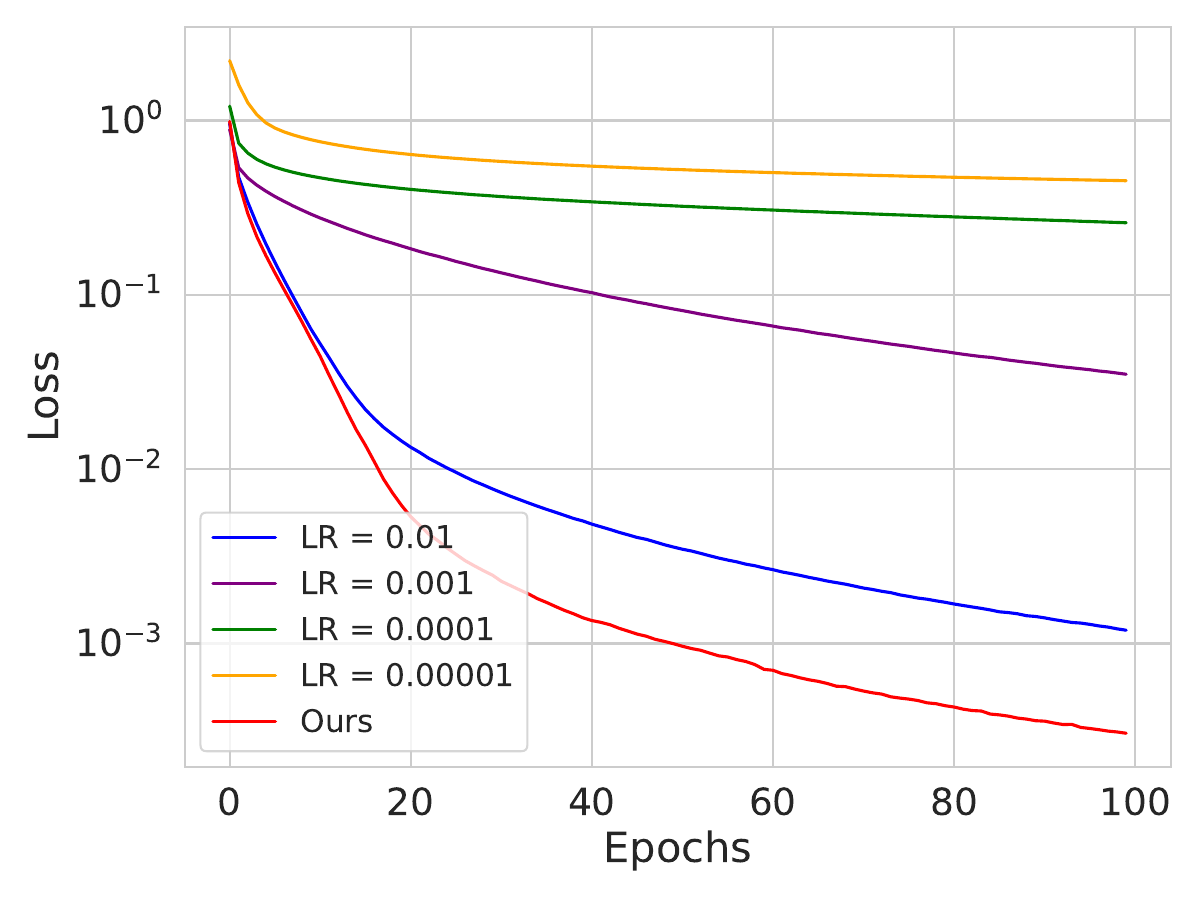}
    \caption{Training loss v/s Epochs}
  \end{subfigure}
  \hfill
  \begin{subfigure}[b]{0.31\textwidth}
    \includegraphics[width=\textwidth]{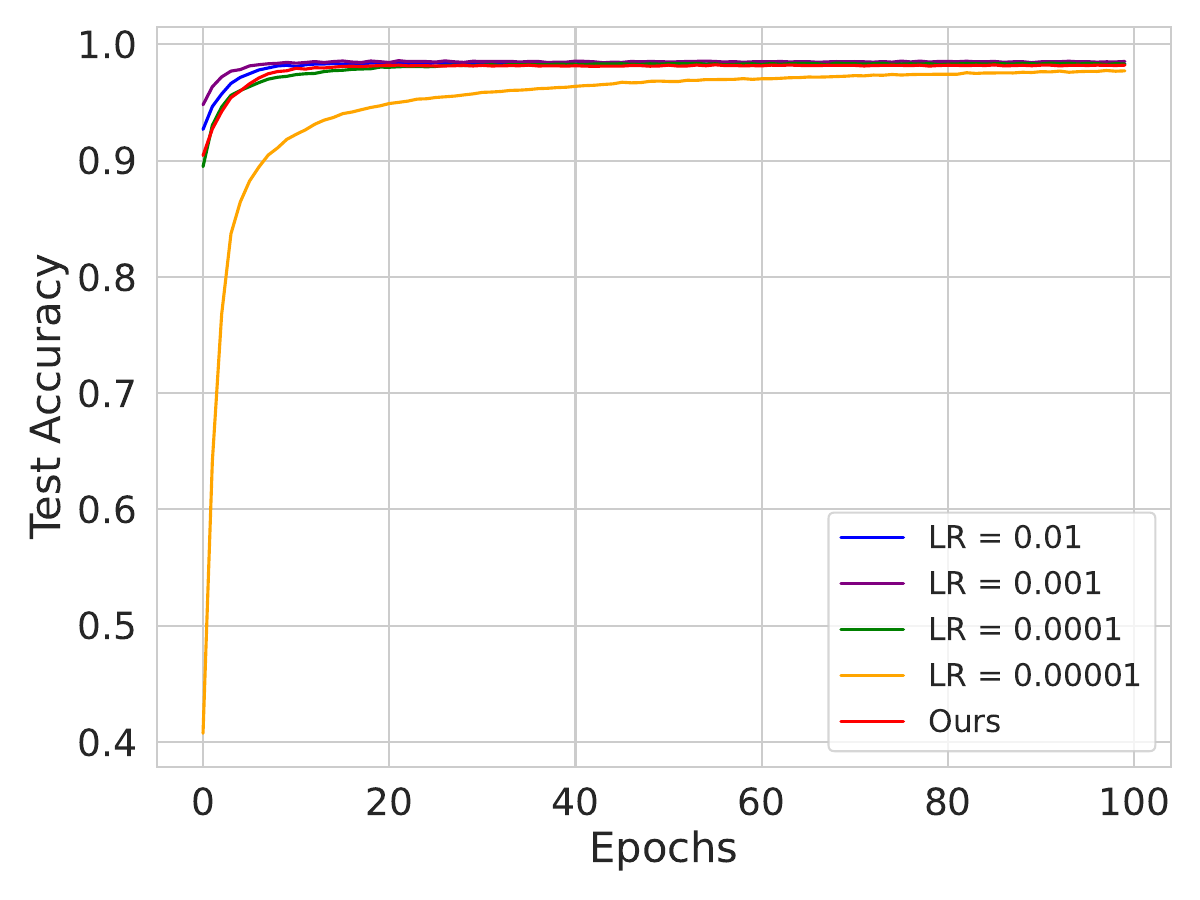}
    \caption{Validation acc. v/s Epochs}
  \end{subfigure}
  \caption{\textbf{Comparision with constant step sizes}: Mini-batch experiments on a 3 layer network with 1000 nodes in each layer, trained on MNIST.}
\label{fig:3__1000_mini_step}
\end{figure}
\begin{figure}[htbp]
  \centering
  \begin{subfigure}[b]{0.31\textwidth}
    \includegraphics[width=\textwidth]{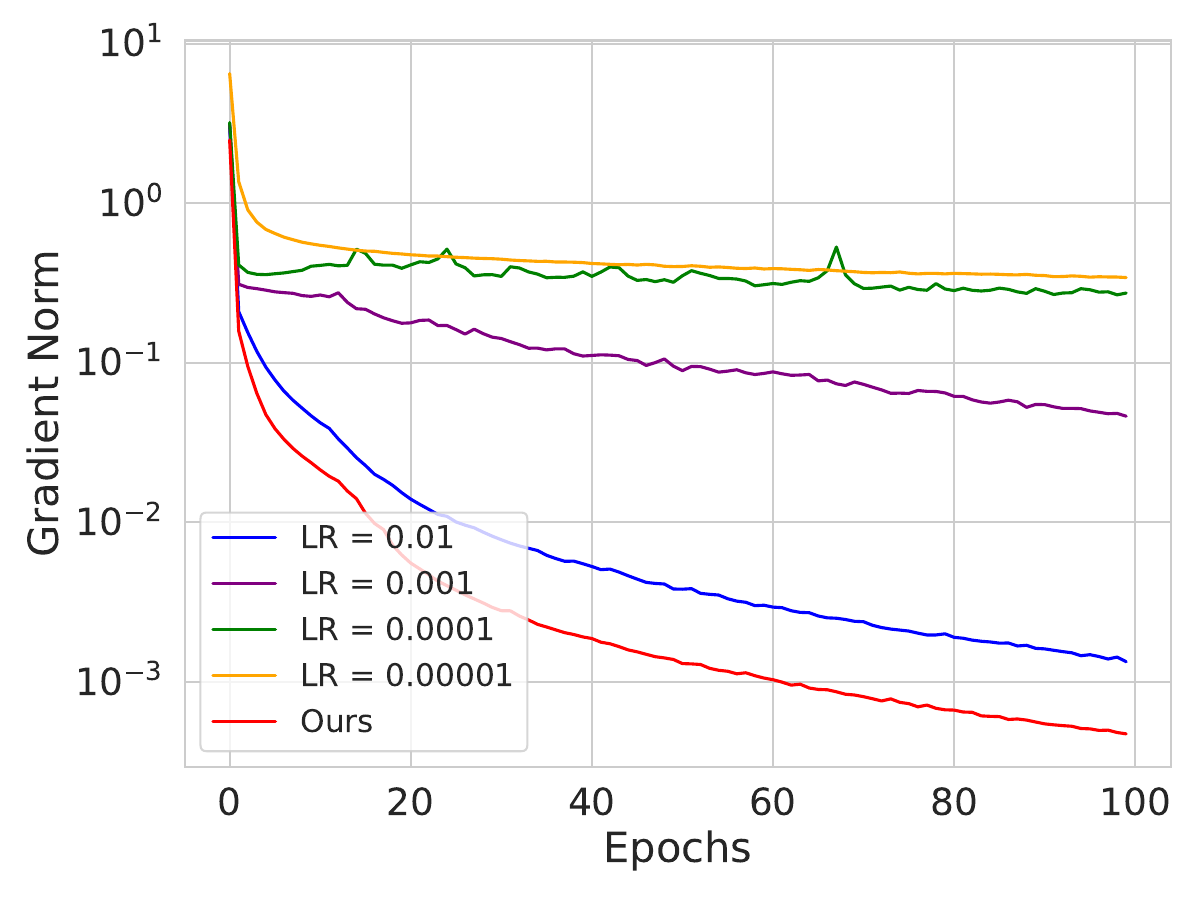}
    \caption{Gradient norm v/s Epochs}
  \end{subfigure}
  \hfill
  \begin{subfigure}[b]{0.31\textwidth}
    \includegraphics[width=\textwidth]{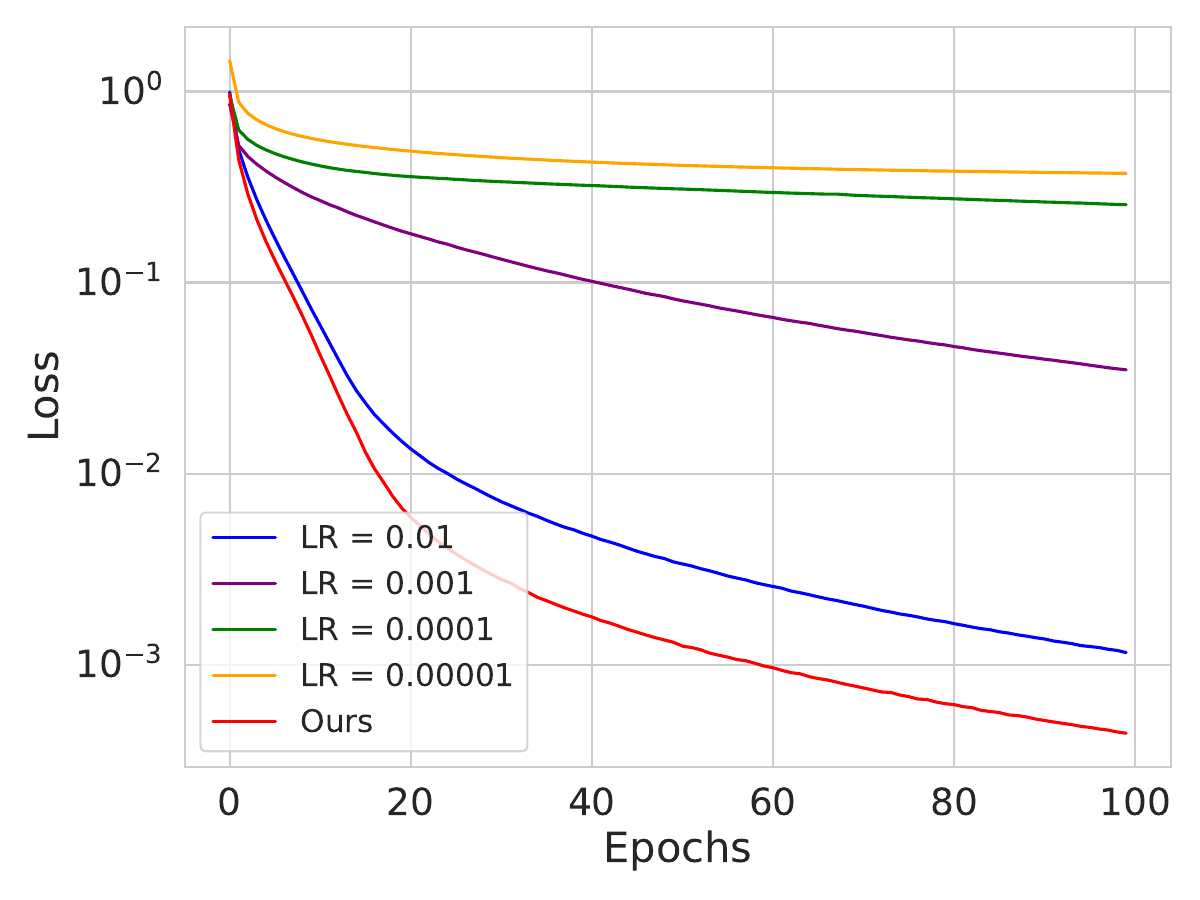}
    \caption{Training loss v/s Epochs}
  \end{subfigure}
  \hfill
  \begin{subfigure}[b]{0.31\textwidth}
    \includegraphics[width=\textwidth]{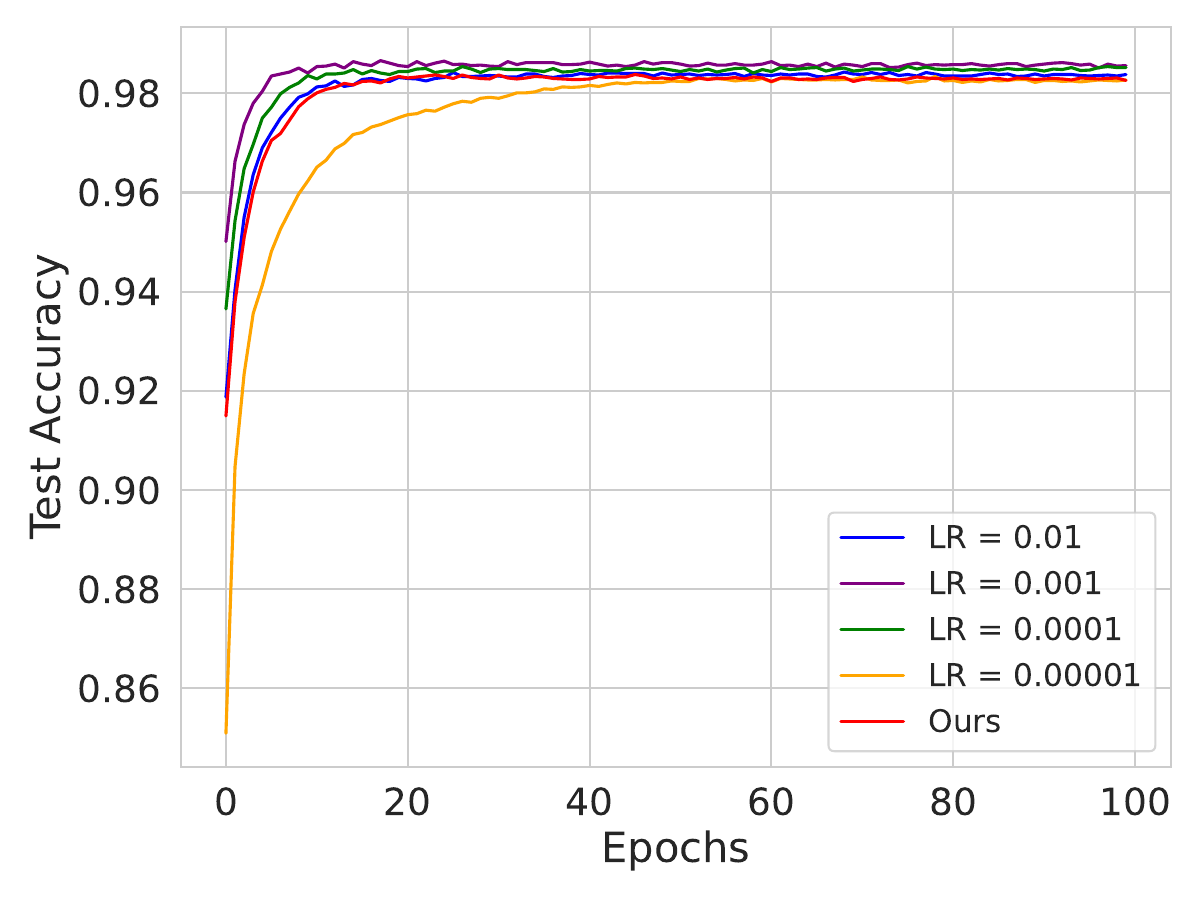}
    \caption{Validation acc. v/s Epochs}
  \end{subfigure}
  \caption{\textbf{Comparision with constant step sizes}: Mini-batch experiments on a 3 layer network with 3000 nodes in each layer, trained on MNIST.}
\label{fig:3__3000_mini_step}
\end{figure}
\begin{figure}[htbp]
  \centering
  \begin{subfigure}[b]{0.31\textwidth}
    \includegraphics[width=\textwidth]{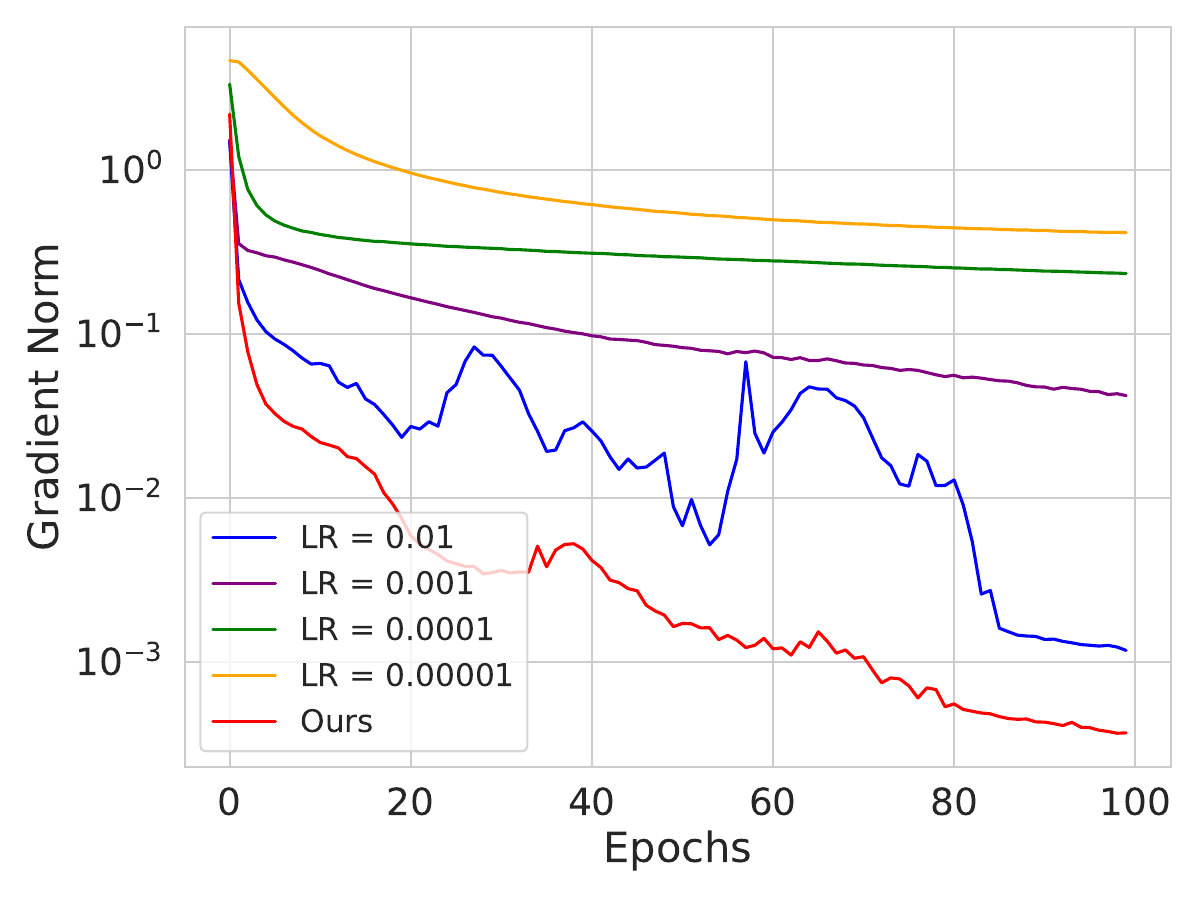}
    \caption{Gradient norm v/s Epochs}
  \end{subfigure}
  \hfill
  \begin{subfigure}[b]{0.31\textwidth}
    \includegraphics[width=\textwidth]{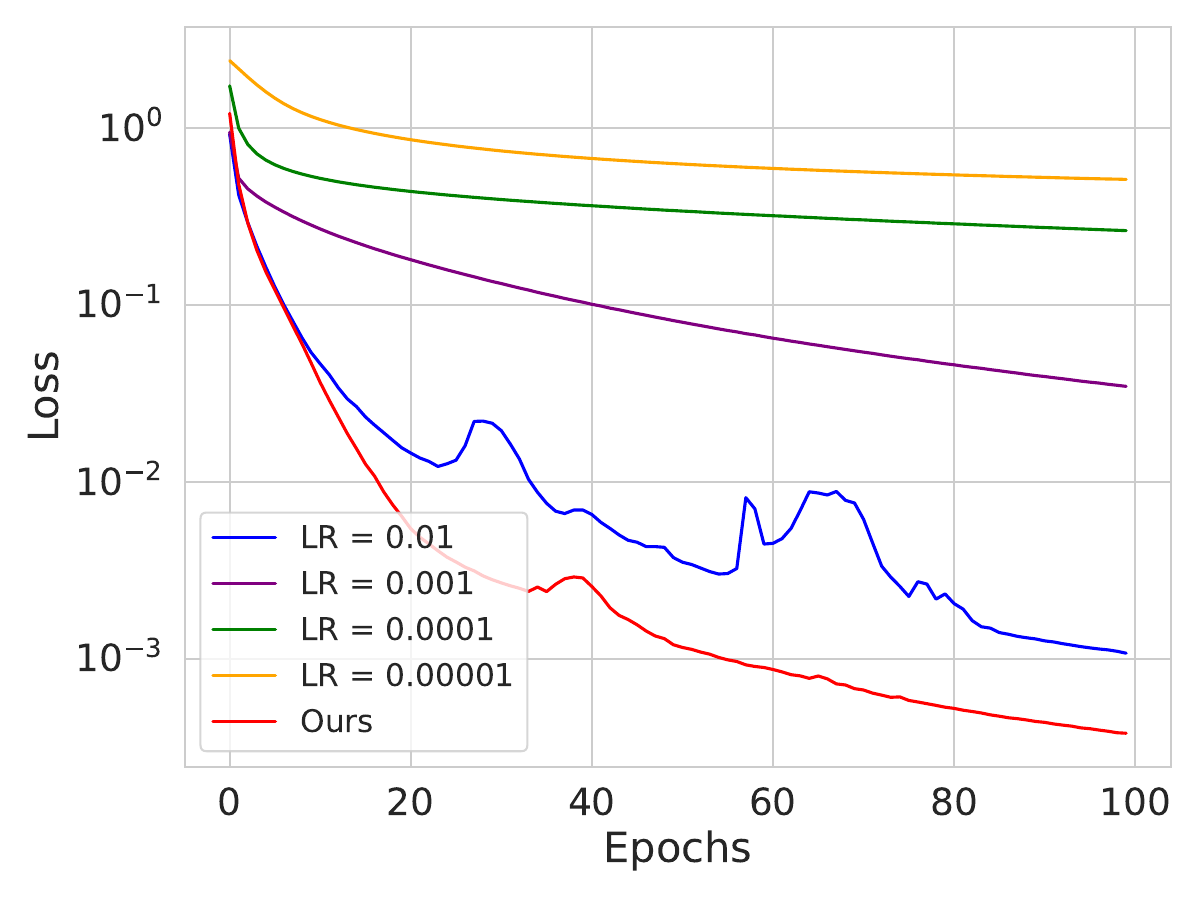}
    \caption{Training loss v/s Epochs}
  \end{subfigure}
  \hfill
  \begin{subfigure}[b]{0.31\textwidth}
    \includegraphics[width=\textwidth]{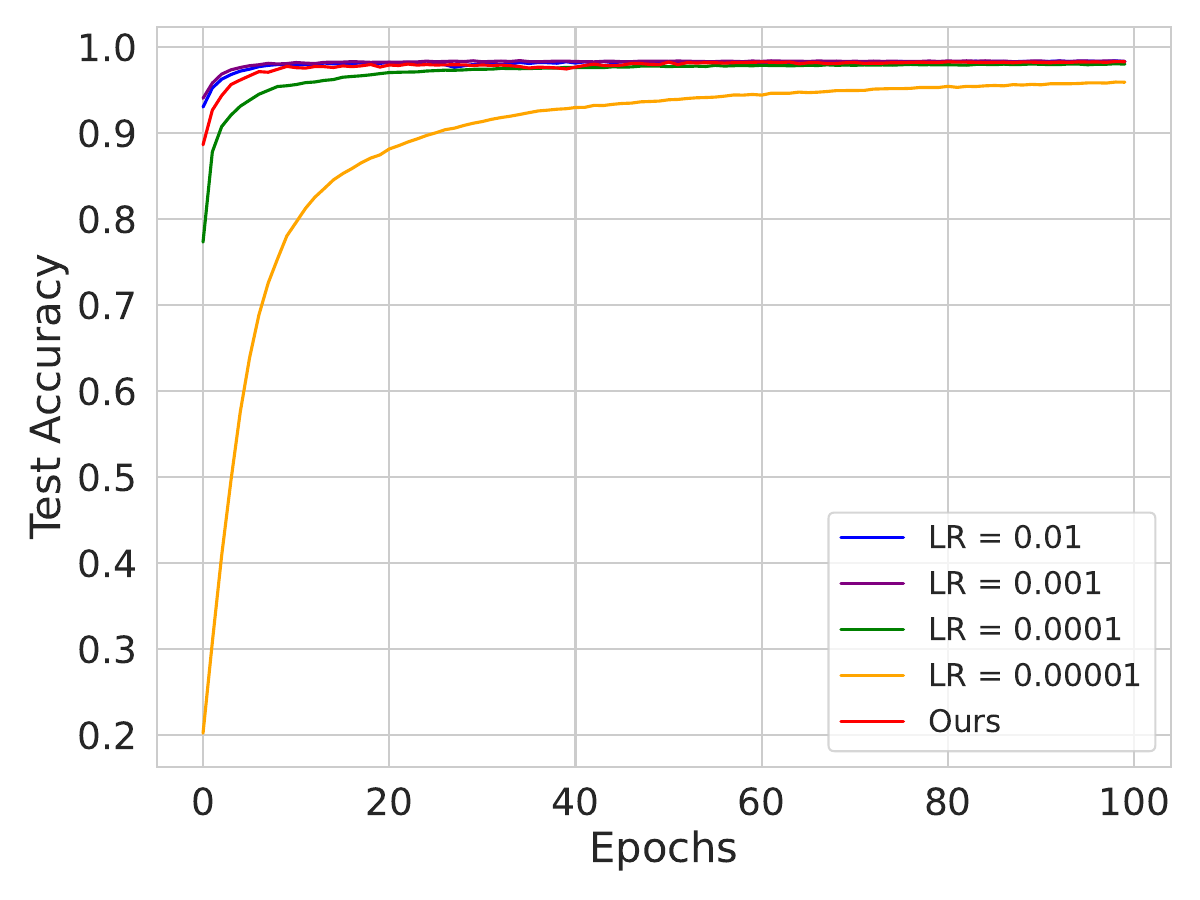}
    \caption{Validation acc. v/s Epochs}
  \end{subfigure}
  \caption{\textbf{Comparision with constant step sizes}: Mini-batch experiments on a 5 layer network with 300 nodes in each layer, trained on MNIST.}
\label{fig:5__300_mini_step}
\end{figure}
\begin{figure}[htbp]
  \centering
  \begin{subfigure}[b]{0.31\textwidth}
    \includegraphics[width=\textwidth]{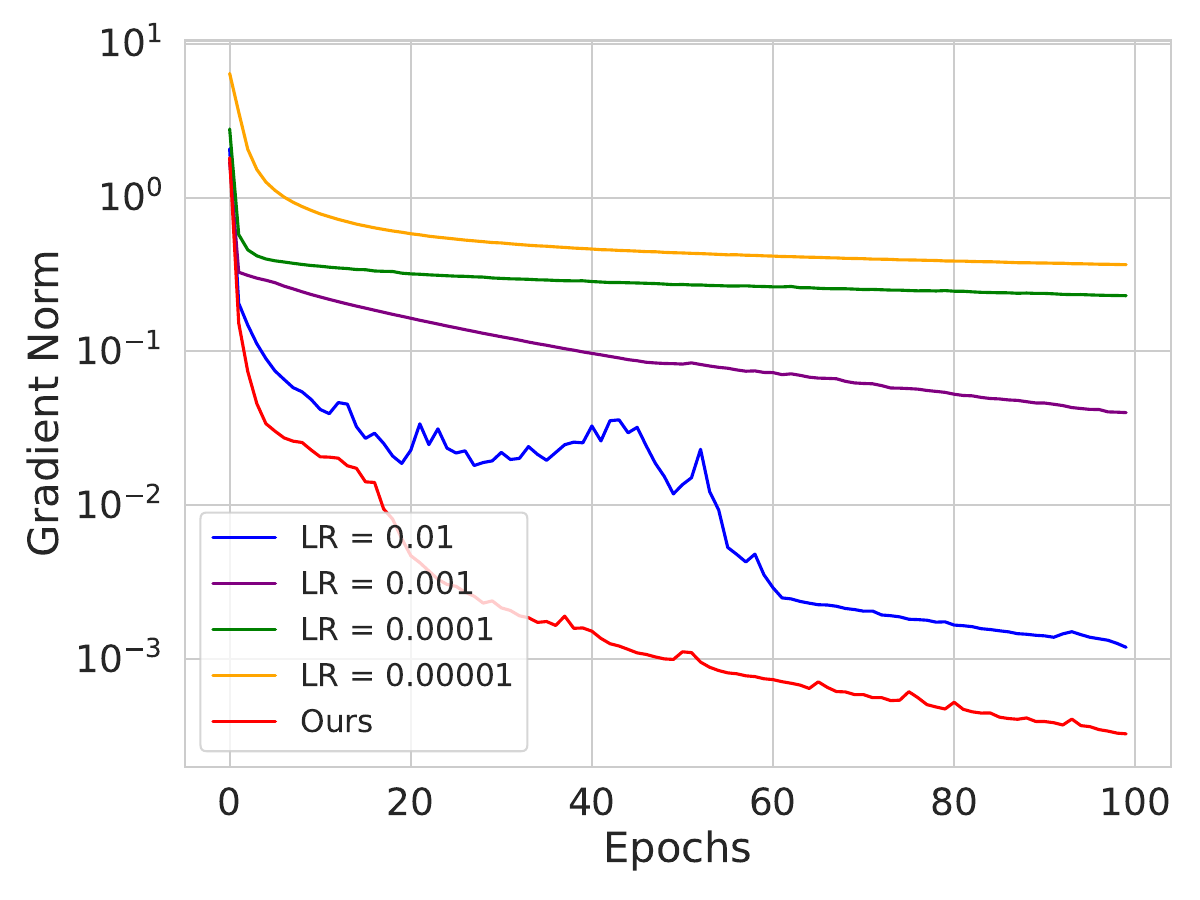}
    \caption{Gradient norm v/s Epochs}
  \end{subfigure}
  \hfill
  \begin{subfigure}[b]{0.31\textwidth}
    \includegraphics[width=\textwidth]{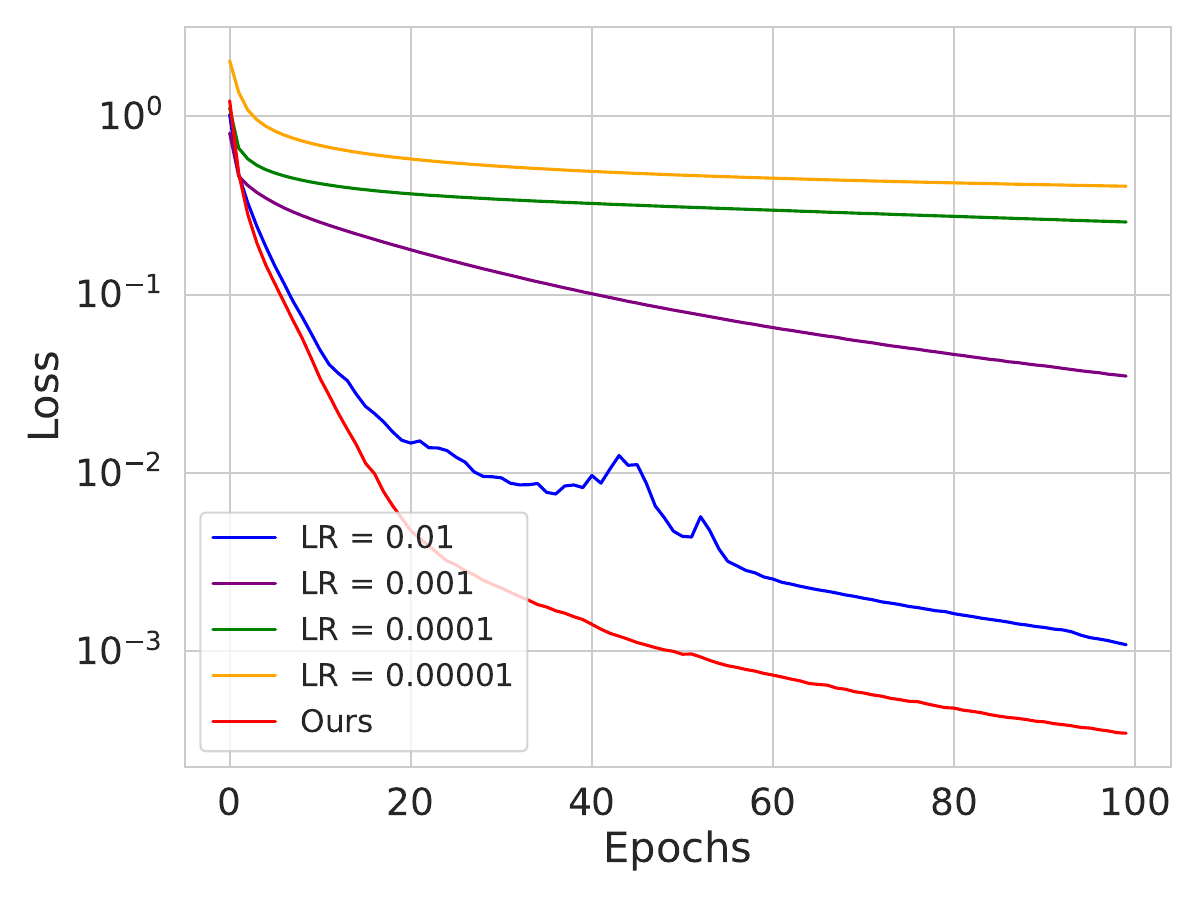}
    \caption{Training loss v/s Epochs}
  \end{subfigure}
  \hfill
  \begin{subfigure}[b]{0.31\textwidth}
    \includegraphics[width=\textwidth]{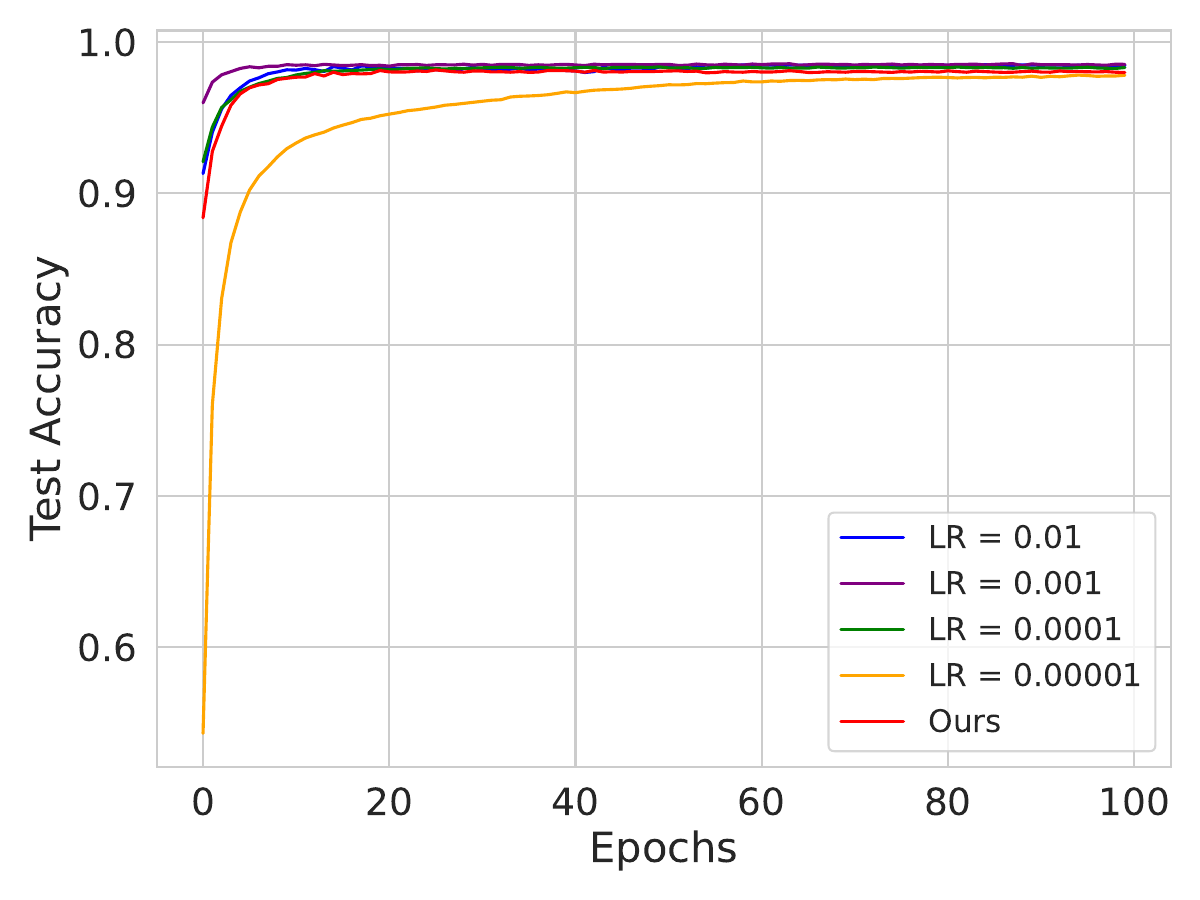}
    \caption{Validation acc. v/s Epochs}
  \end{subfigure}
  \caption{\textbf{Comparision with constant step sizes}: Mini-batch experiments on a 5 layer network with 1000 nodes in each layer, trained on MNIST.}
\label{fig:5__1000_mini_step}
\end{figure}
\begin{figure}[htbp]
  \centering
  \begin{subfigure}[b]{0.31\textwidth}
    \includegraphics[width=\textwidth]{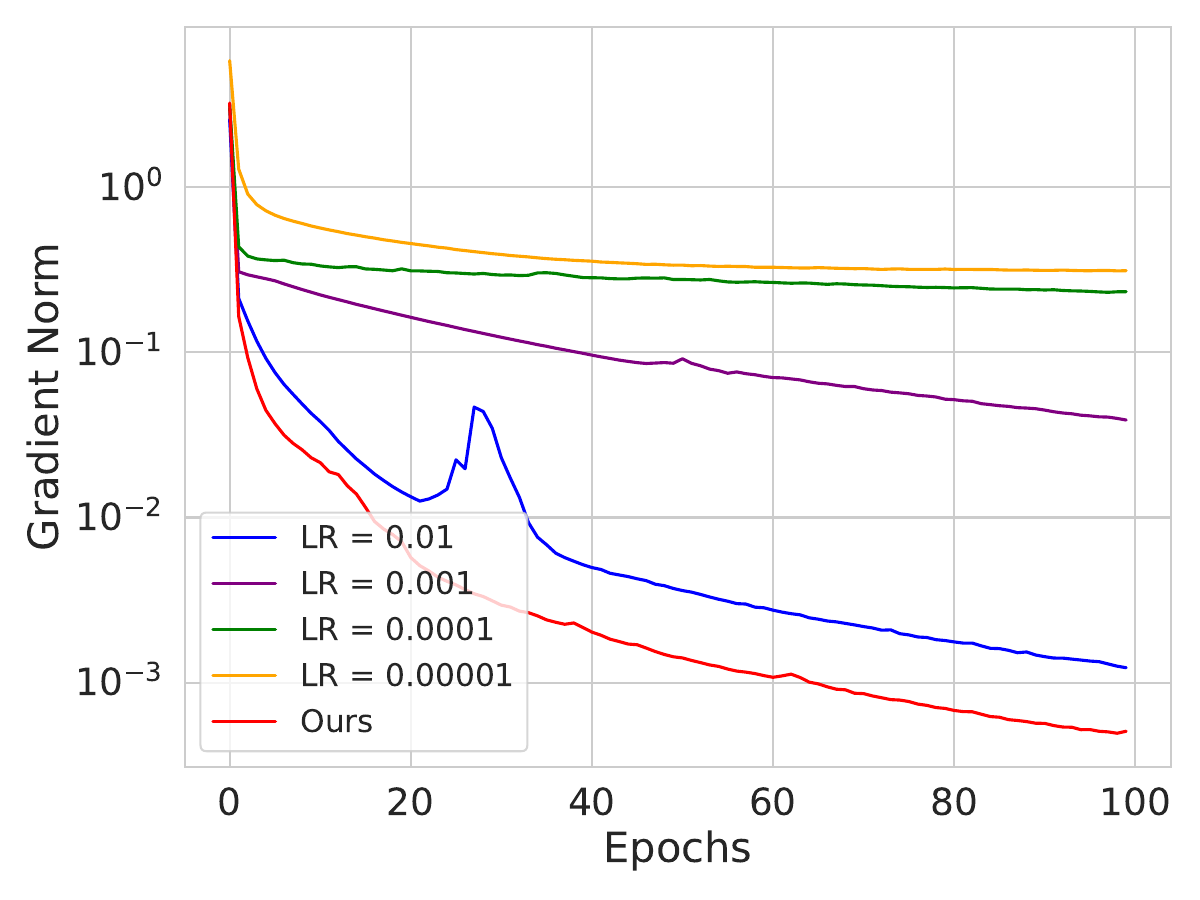}
    \caption{Gradient norm v/s Epochs}
  \end{subfigure}
  \hfill
  \begin{subfigure}[b]{0.31\textwidth}
    \includegraphics[width=\textwidth]{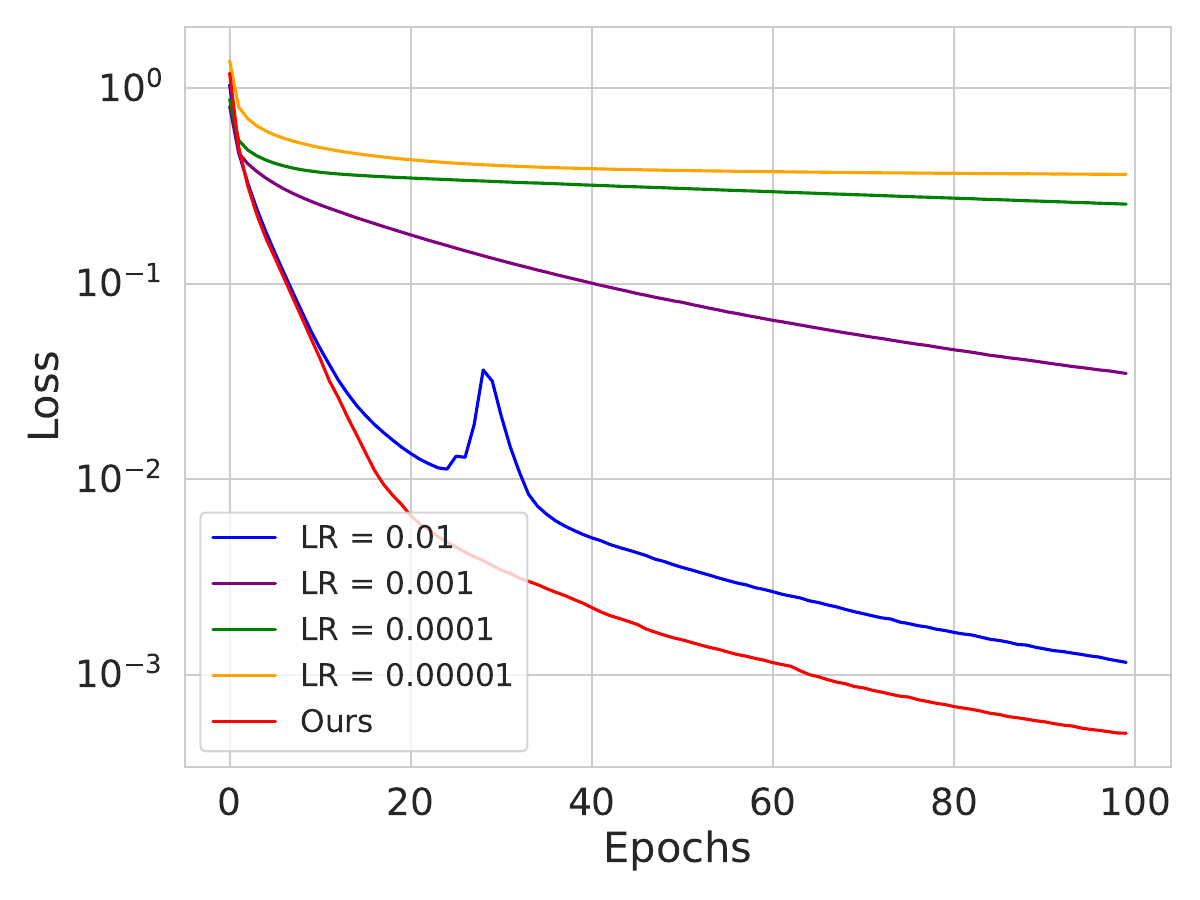}
    \caption{Training loss v/s Epochs}
  \end{subfigure}
  \hfill
  \begin{subfigure}[b]{0.31\textwidth}
    \includegraphics[width=\textwidth]{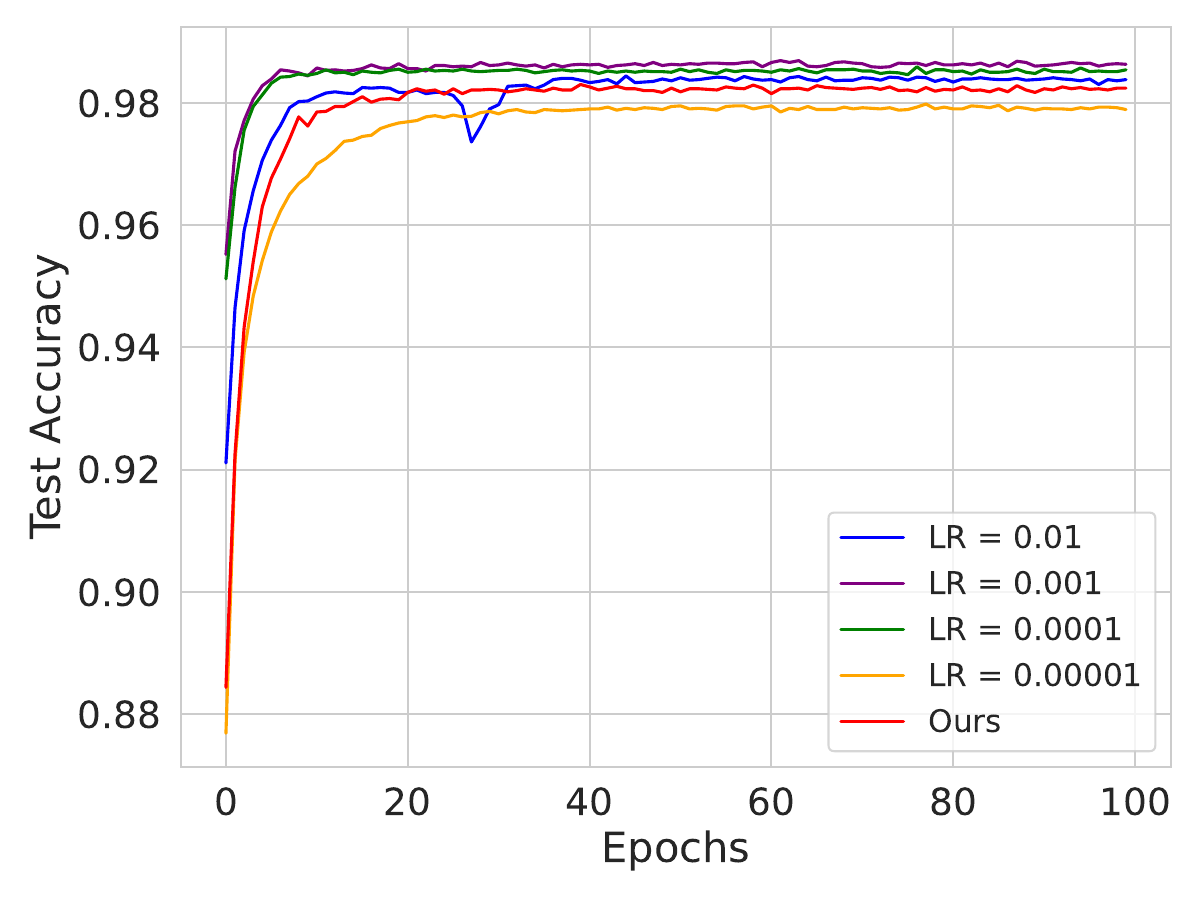}
    \caption{Validation acc. v/s Epochs}
  \end{subfigure}
  \caption{\textbf{Comparision with constant step sizes}: Mini-batch experiments on a 5 layer network with 3000 nodes in each layer, trained on MNIST.}
\label{fig:5__3000_mini_step}
\end{figure}
\newpage

Based on the preceding figures, it's evident that our chosen learning rate effectively reduces the gradient norm as compared to the series of constant learning rates and other learning rate schedulers in mini-batch setup, leading to a commendable validation accuracy. The outcomes remain consistent across various fully connected layer architectures. It achieves at least $10^{-1}$ less value for gradient norm with the best-performing learning rate scheduler and constant learning rate.
\\

\textbf{C.4 Increasing batch-size experiments.}\\

In this section, we conduct a series of experiments with various batch-size $\{256, 512, 1024, 2048\}$ on different depths and widths of linear layers using our learning rate. We also perform this experiment with CNN's like VGG-9 network on CIFAR-10, LeNet on MNIST and MobileNet on CIFAR-10. This experiment aims to show that, our learning rate with a larger batch size reduces the gradient norm more effectively. This section serves as a continuation for Section~\ref{sec:mb_exp}.

\begin{figure}[htbp]
  \centering
  \begin{subfigure}[b]{0.31\textwidth}
    \includegraphics[width=\textwidth]{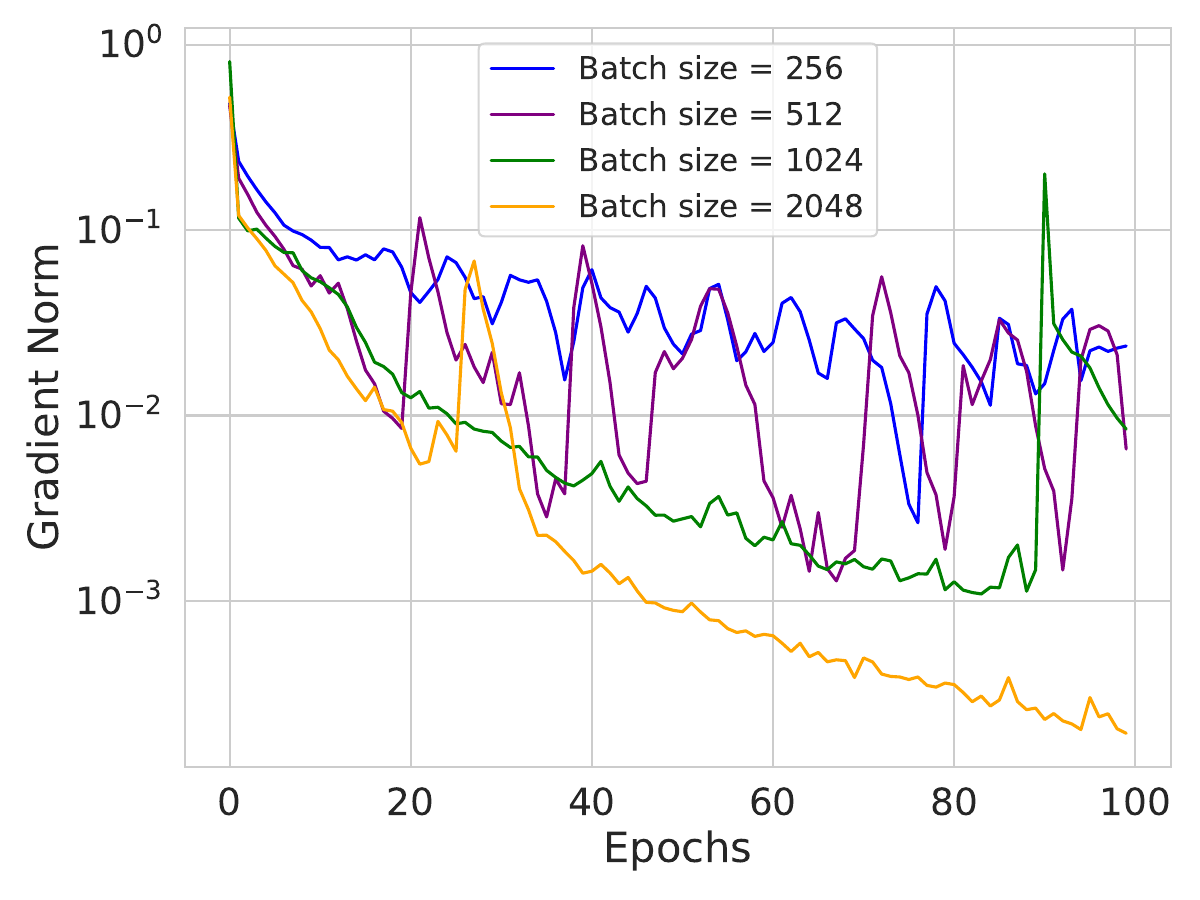}
    \caption{1 layer and 300 nodes.}
  \end{subfigure}
  \hfill
  \begin{subfigure}[b]{0.31\textwidth}
    \includegraphics[width=\textwidth]{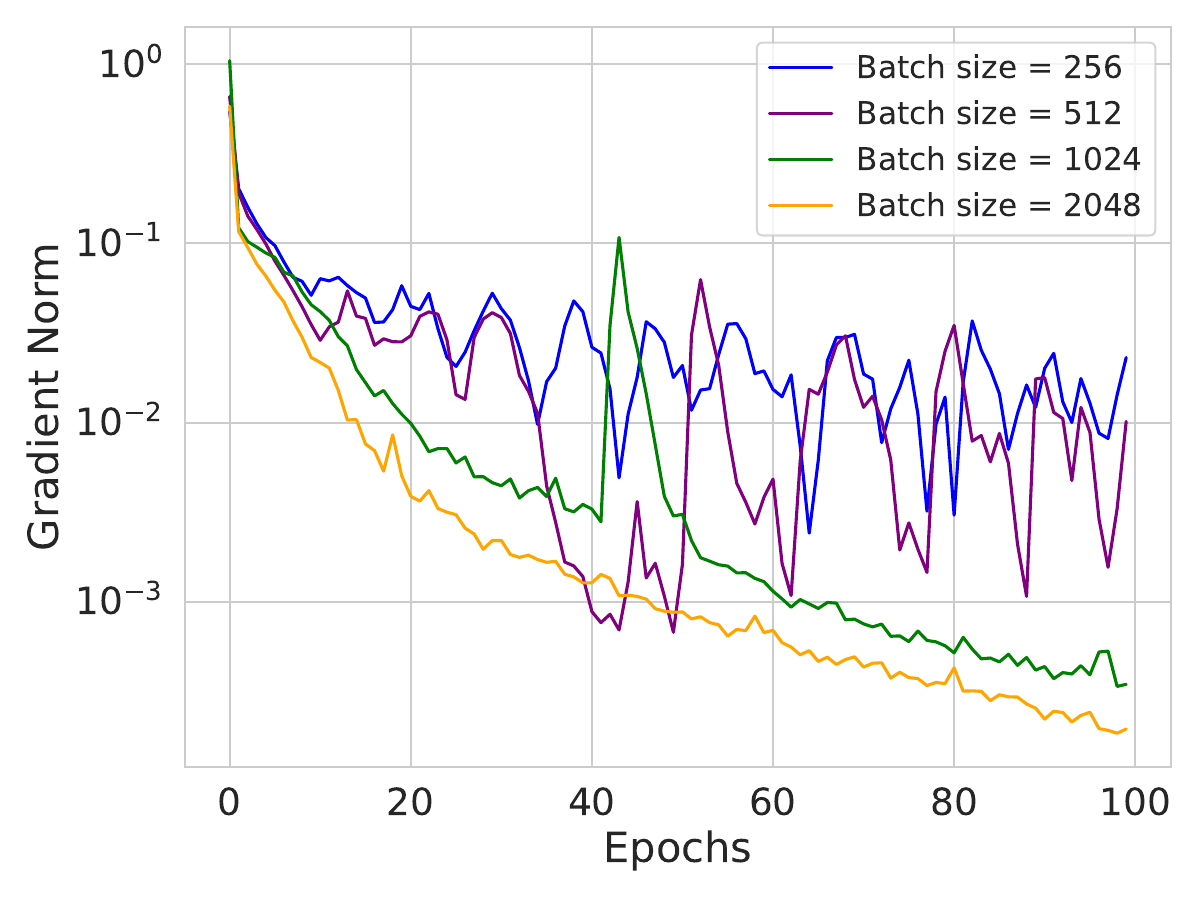}
    \caption{1 layer and 1000 nodes.}
  \end{subfigure}
  \hfill
  \begin{subfigure}[b]{0.31\textwidth}
    \includegraphics[width=\textwidth]{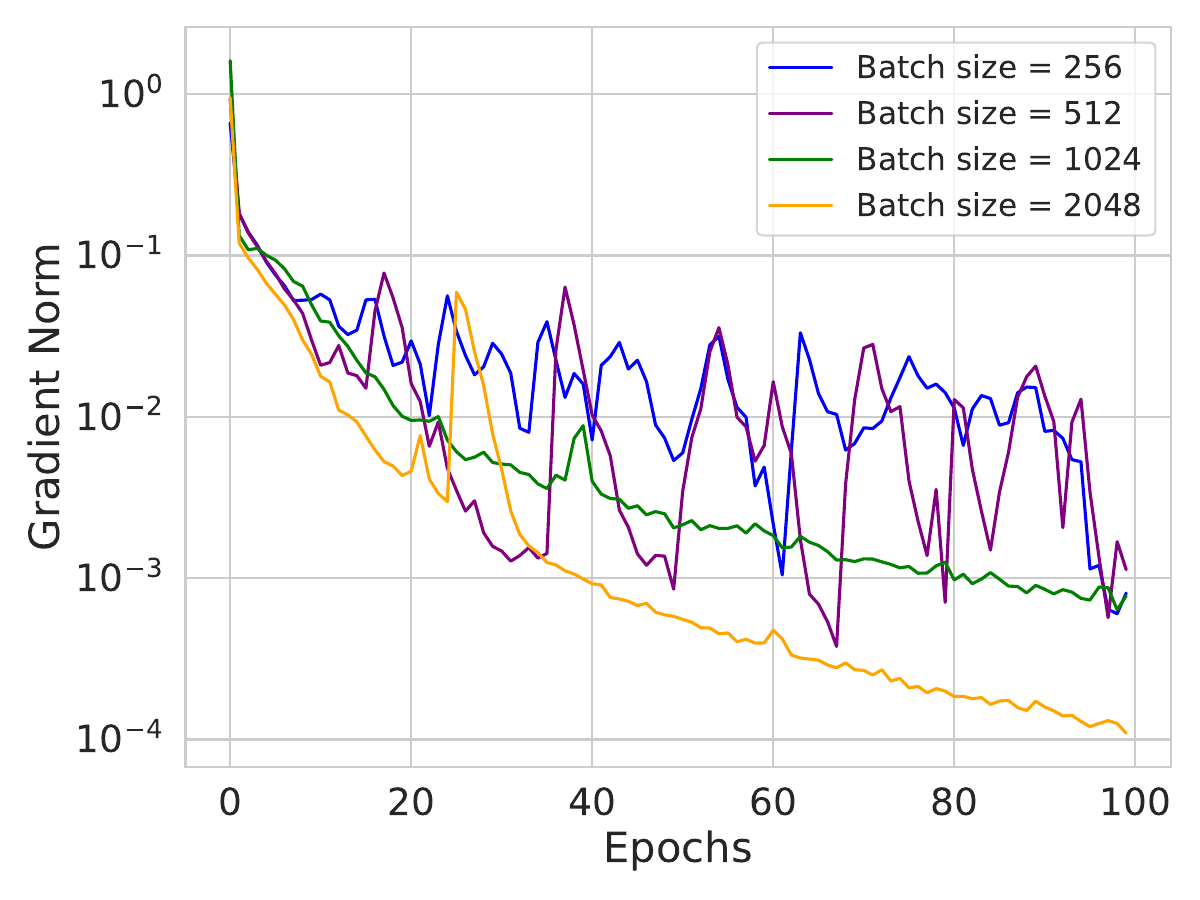}
    \caption{1 layer and 3000 nodes.}
  \end{subfigure}
  \caption{Gradient norm v/s epochs plot with various batch sizes on a fully connected network with a single layer, trained on MNIST.}
\label{fig:1__layer_batch_size}
\end{figure}
\begin{figure}[htbp]
  \centering
  \begin{subfigure}[b]{0.31\textwidth}
    \includegraphics[width=\textwidth]{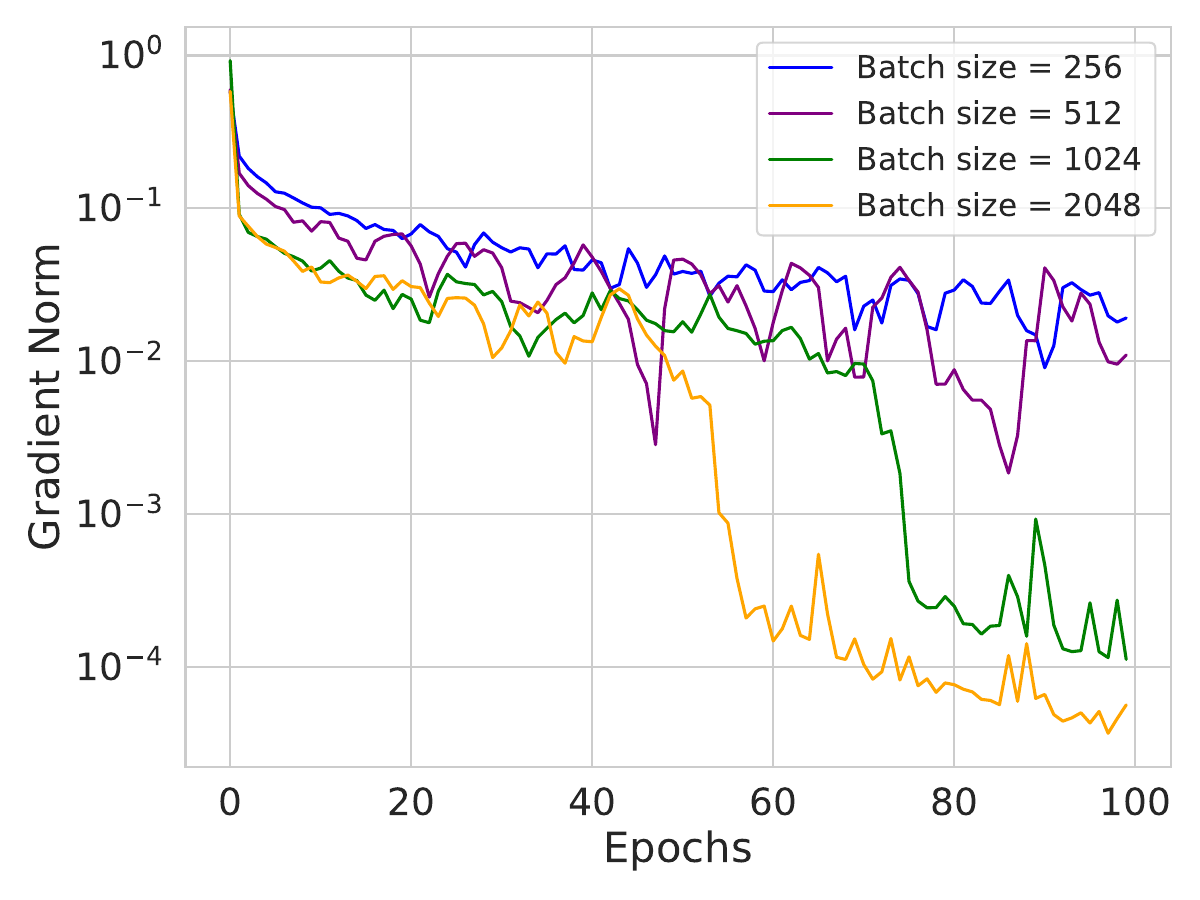}
    \caption{3 layer and 300 nodes.}
  \end{subfigure}
  \hfill
  \begin{subfigure}[b]{0.31\textwidth}
    \includegraphics[width=\textwidth]{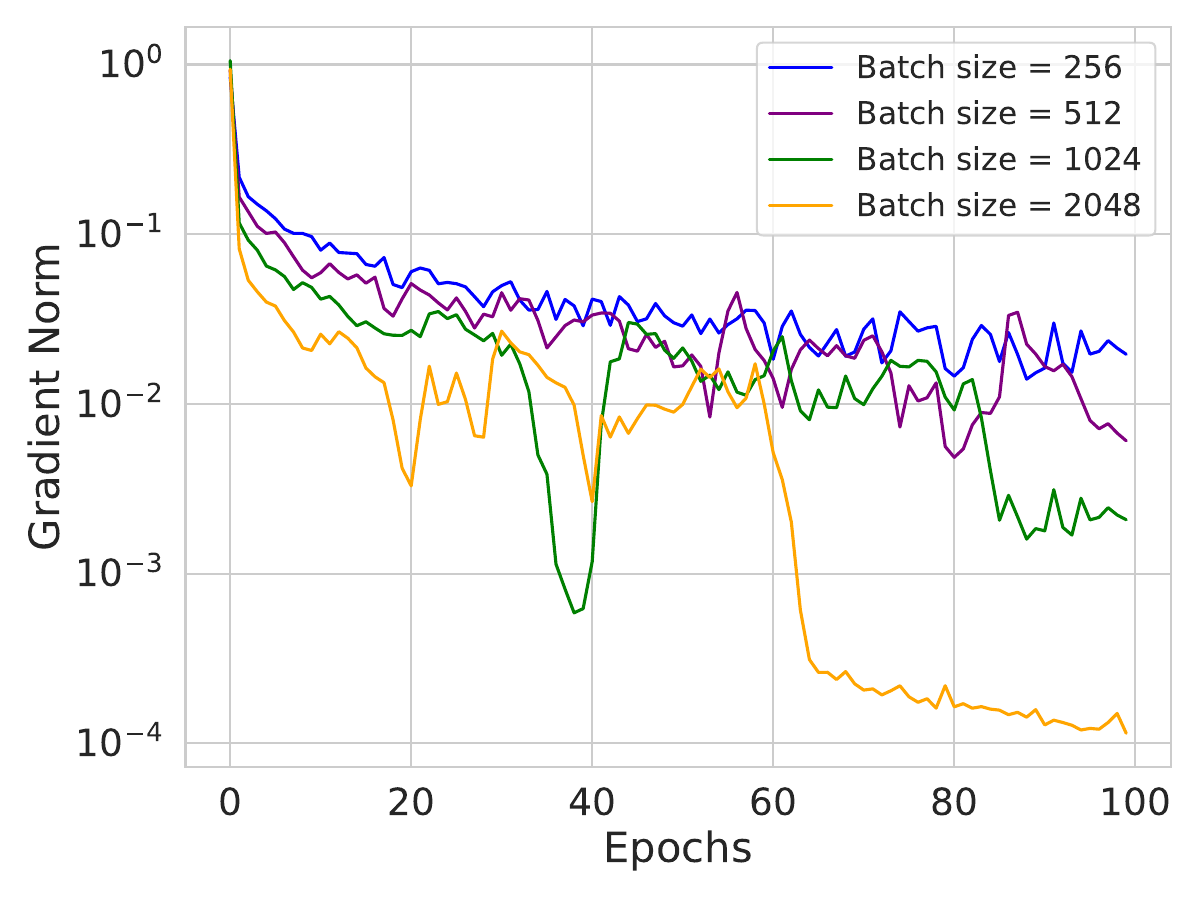}
    \caption{3 layer and 1000 nodes.}
  \end{subfigure}
  \hfill
  \begin{subfigure}[b]{0.31\textwidth}
    \includegraphics[width=\textwidth]{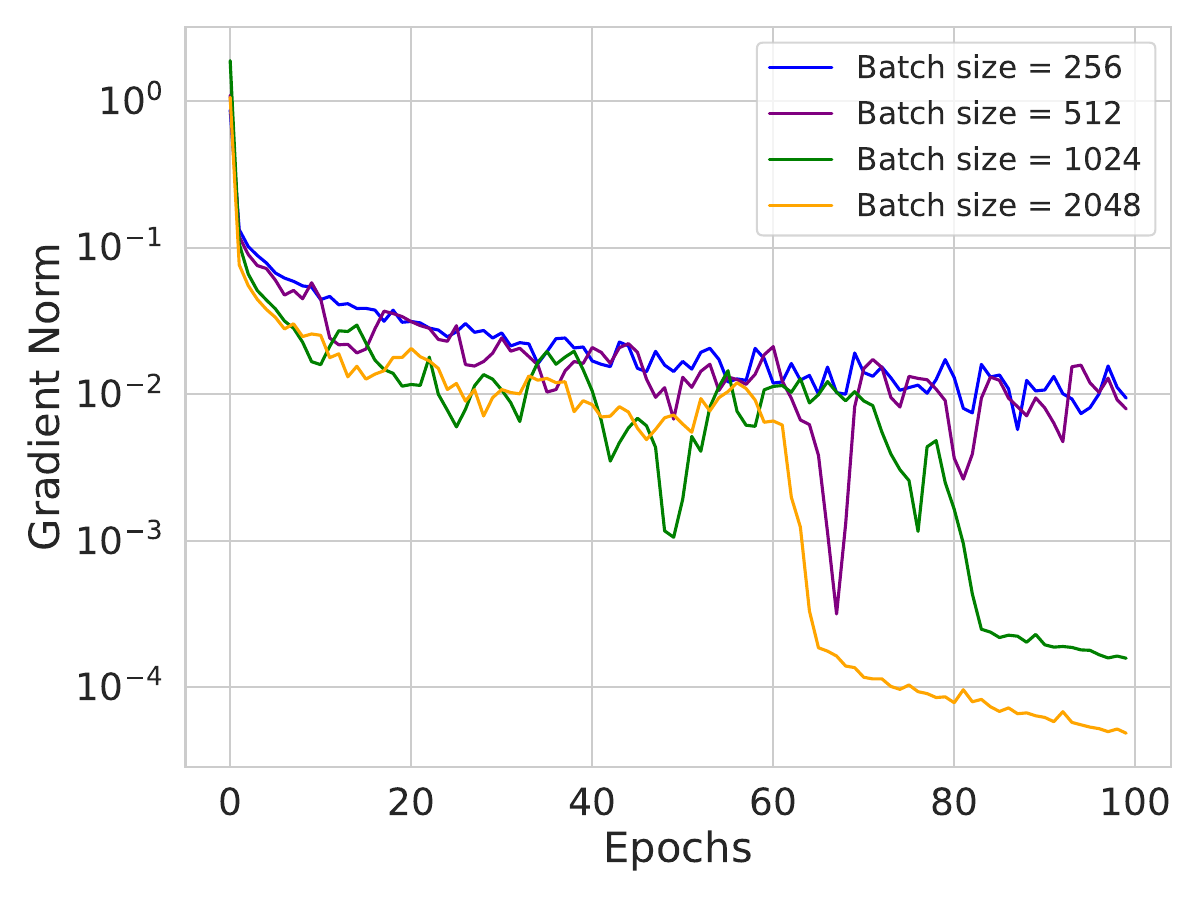}
    \caption{3 layer and 3000 nodes.}
  \end{subfigure}
  \caption{Gradient norm v/s epochs plot with various batch sizes on a fully connected with a 3 layers, trained on MNIST.}
\label{fig:3__layer_batch_size}
\end{figure}
\begin{figure}[htbp]
  \centering
  \begin{subfigure}[b]{0.31\textwidth}
    \includegraphics[width=\textwidth]{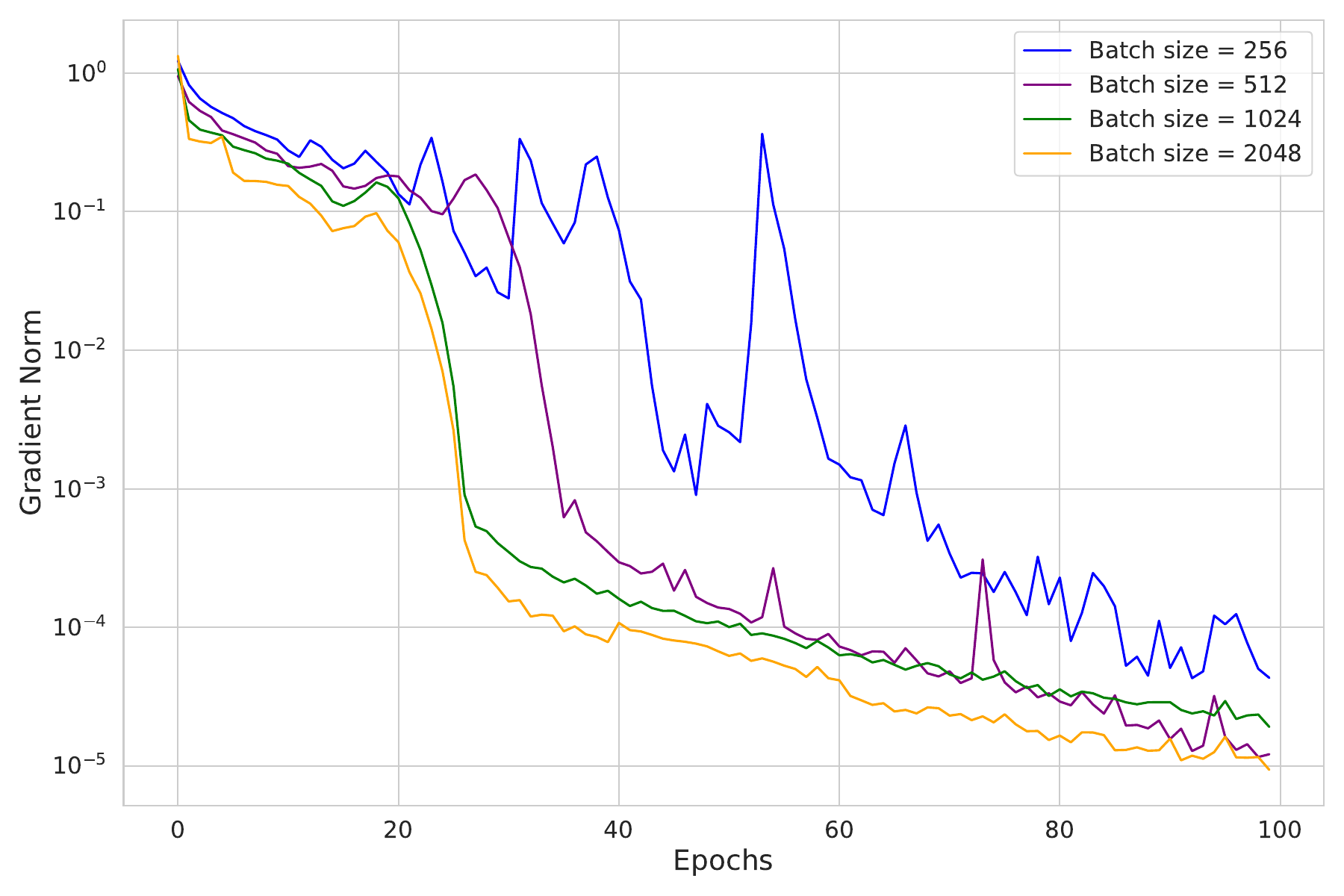}
    \caption{VGG-9 on CIFAR-10.}
  \end{subfigure}
  \hfill
  \begin{subfigure}[b]{0.31\textwidth}
    \includegraphics[width=\textwidth]{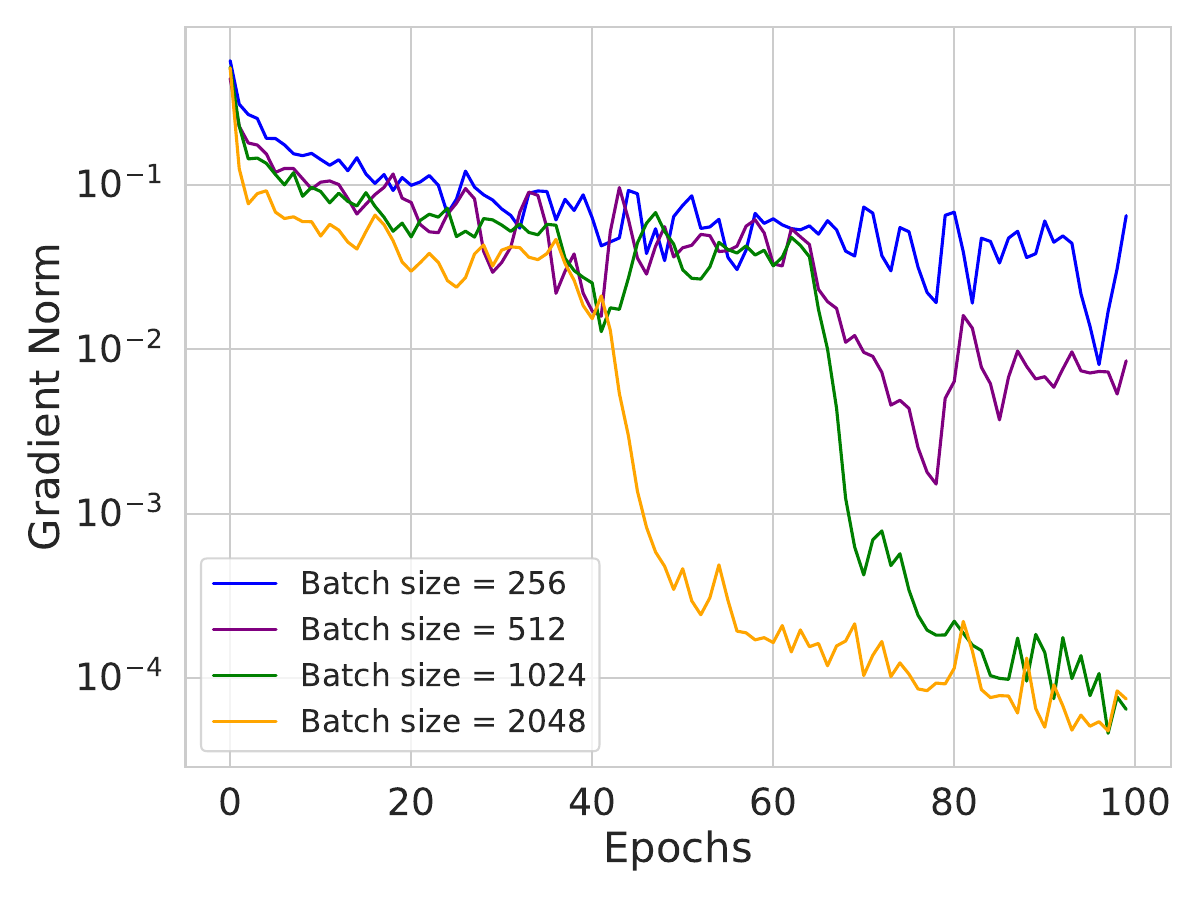}
    \caption{LeNet on MNIST.}
  \end{subfigure}
  \hfill
  \begin{subfigure}[b]{0.31\textwidth}
    \includegraphics[width=\textwidth]{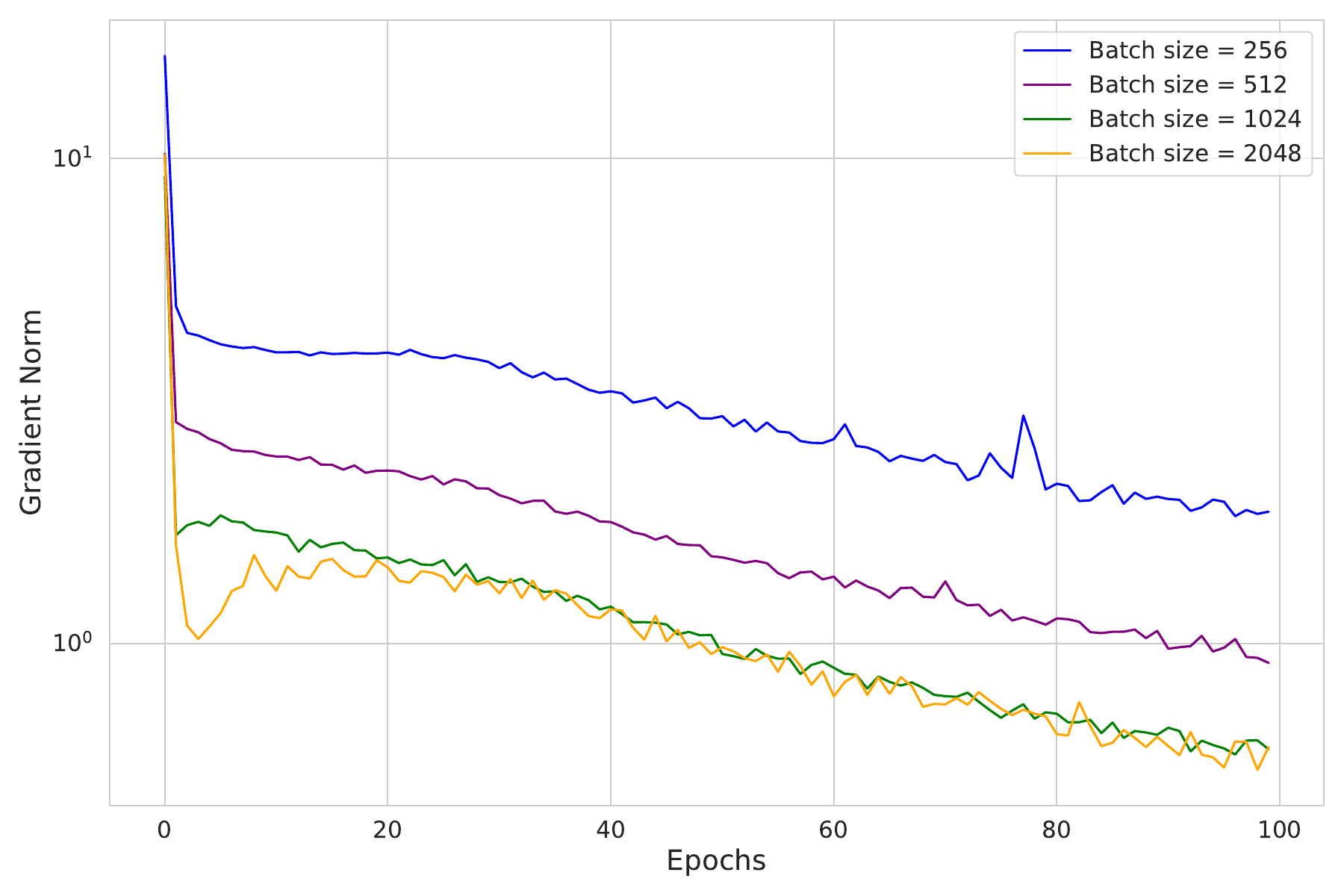}
    \caption{MobileNet on CIFAR-10.}
  \end{subfigure}
  \caption{Gradient norm v/s epochs plot with various batch sizes on VGG-9, LeNet and MobileNET.}
\label{fig:cnn__layer_batch_size}
\end{figure}

It is evident that as the batch size increases, our chosen learning rate effectively diminishes the gradient norm towards zero. This empirical evidence demonstrates that increasing the batch size leads to improved convergence.
\newpage
\textbf{C.5 Effect of Initialization on Our Learning Rate}
\\

In this section, we empirically demonstrated that our learning is not sensitive to various types of initialization strategies in both full-batch and mini-batch setups. Mini batch size of 5,000 is used for MNIST in all experiments in this section. This is a continuation of Section~\ref{sec:dis_exp}.

\begin{figure}
  \centering
  \begin{subfigure}[b]{0.31\textwidth}
    \includegraphics[width=\textwidth]{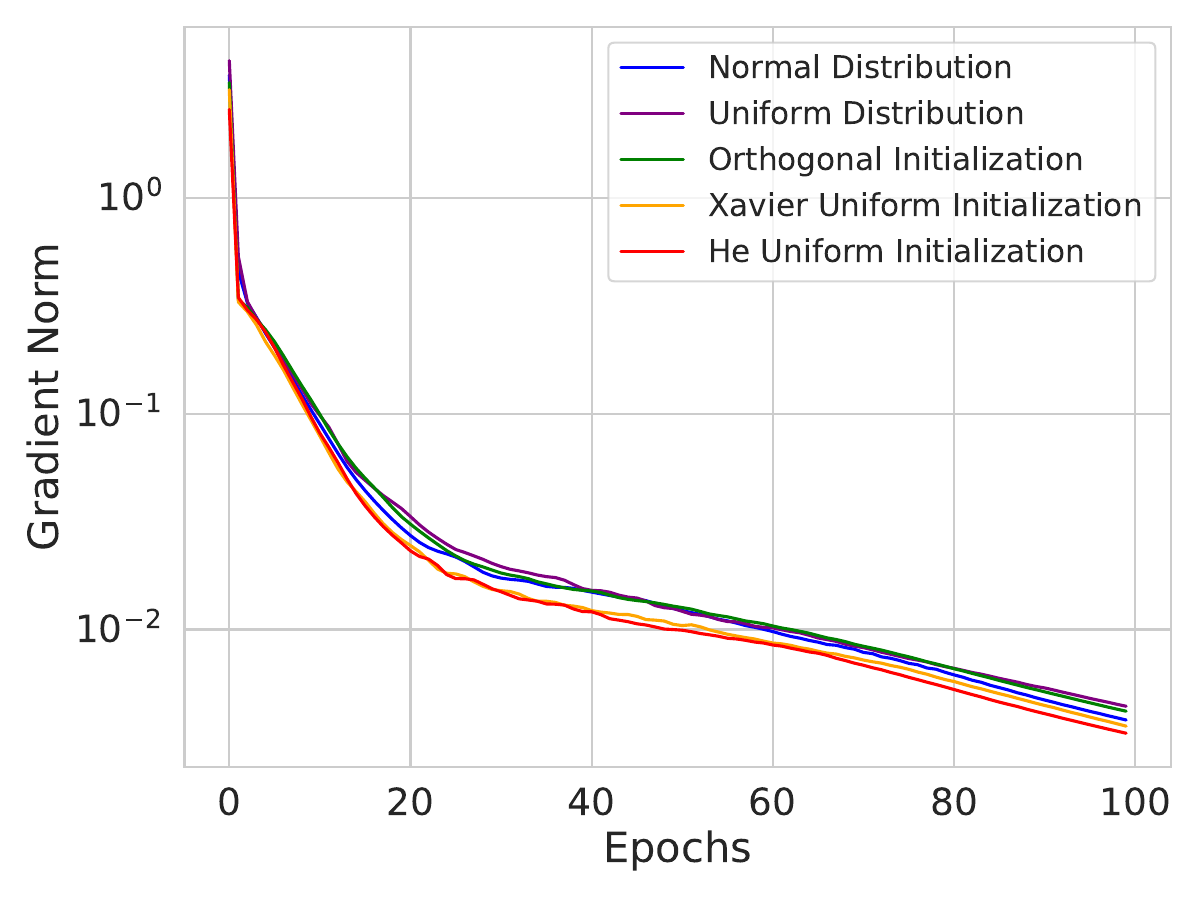}
    \caption{Gradient norm v/s Epochs}
  \end{subfigure}
  \hfill
  \begin{subfigure}[b]{0.31\textwidth}
    \includegraphics[width=\textwidth]{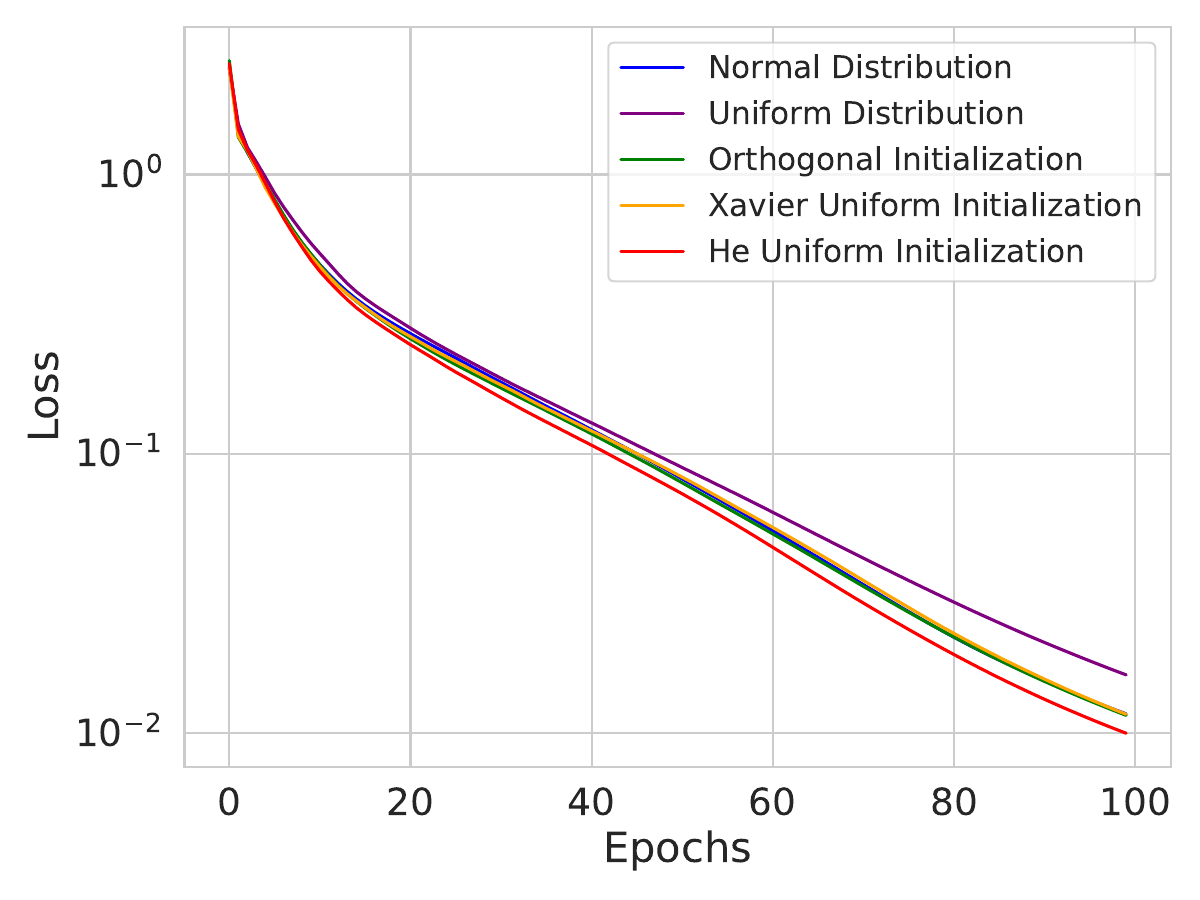}
    \caption{Training loss v/s Epochs}
  \end{subfigure}
  \hfill
  \begin{subfigure}[b]{0.31\textwidth}
    \includegraphics[width=\textwidth]{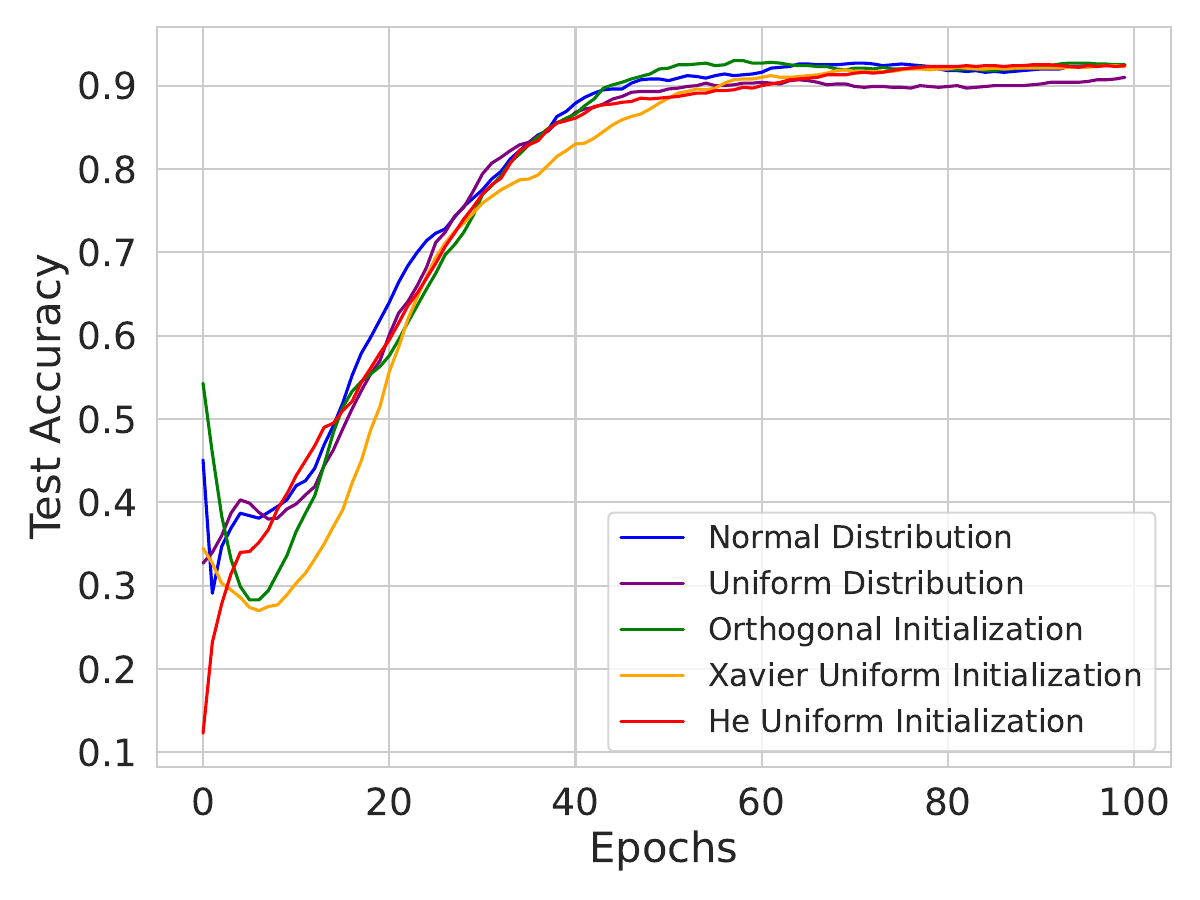}
    \caption{Validation acc. v/s Epochs}
  \end{subfigure}
  \caption{Effect of initialization on our learning rate and model performance. \textbf{Full batch} experiments conducted on a single layer network with 300 nodes on the MNIST.}
\label{fig:fb_distribution_1__300}
\end{figure}
\begin{figure}
  \centering
  \begin{subfigure}[b]{0.31\textwidth}
    \includegraphics[width=\textwidth]{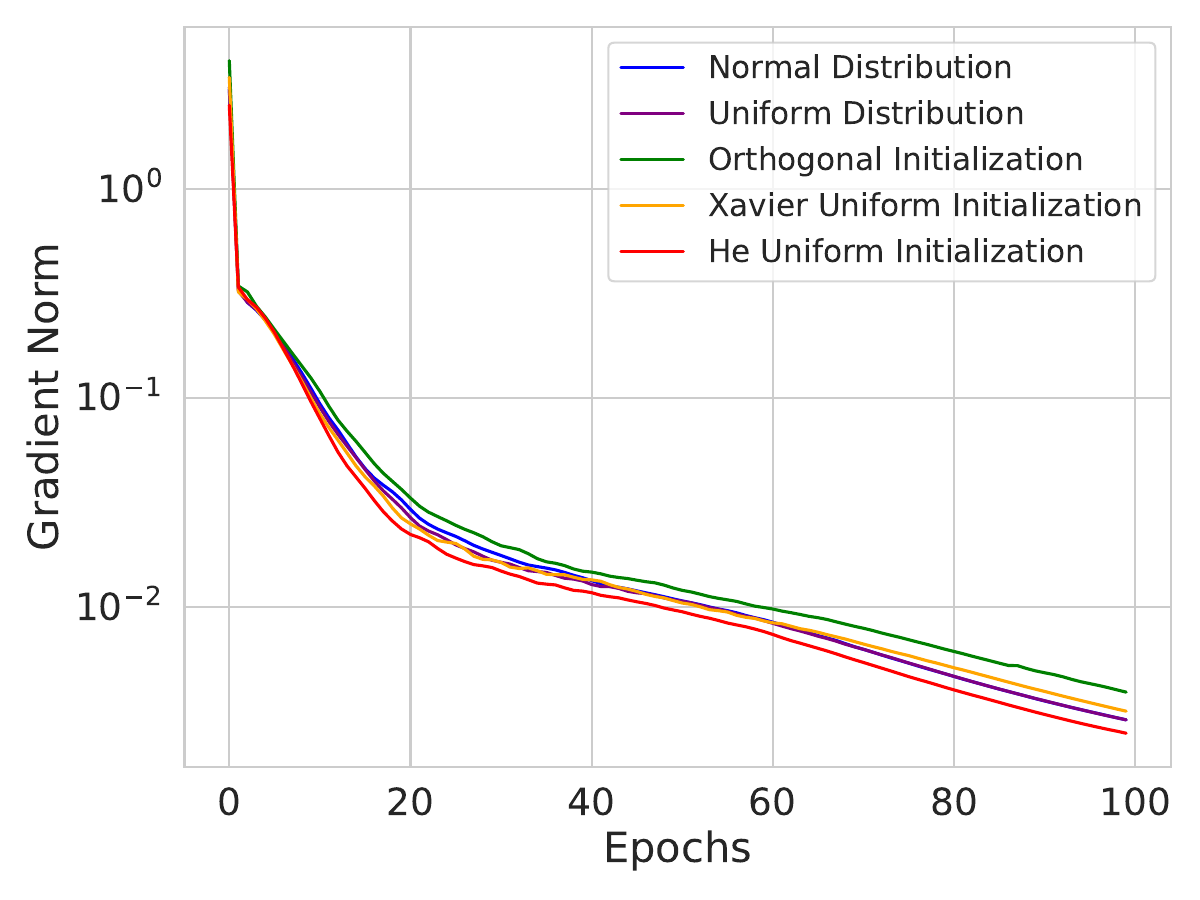}
    \caption{Gradient norm v/s Epochs}
  \end{subfigure}
  \hfill
  \begin{subfigure}[b]{0.31\textwidth}
    \includegraphics[width=\textwidth]{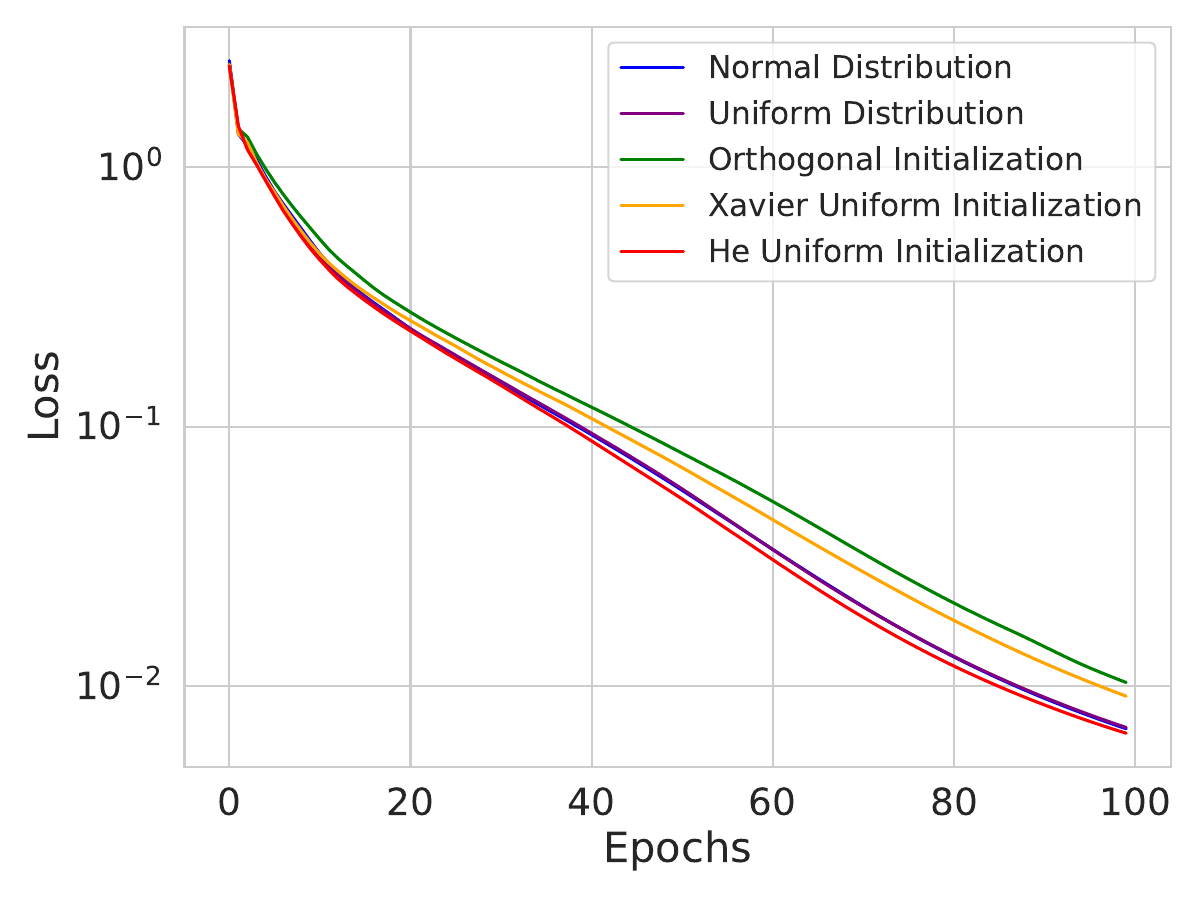}
    \caption{Training loss v/s Epochs}
  \end{subfigure}
  \hfill
  \begin{subfigure}[b]{0.31\textwidth}
    \includegraphics[width=\textwidth]{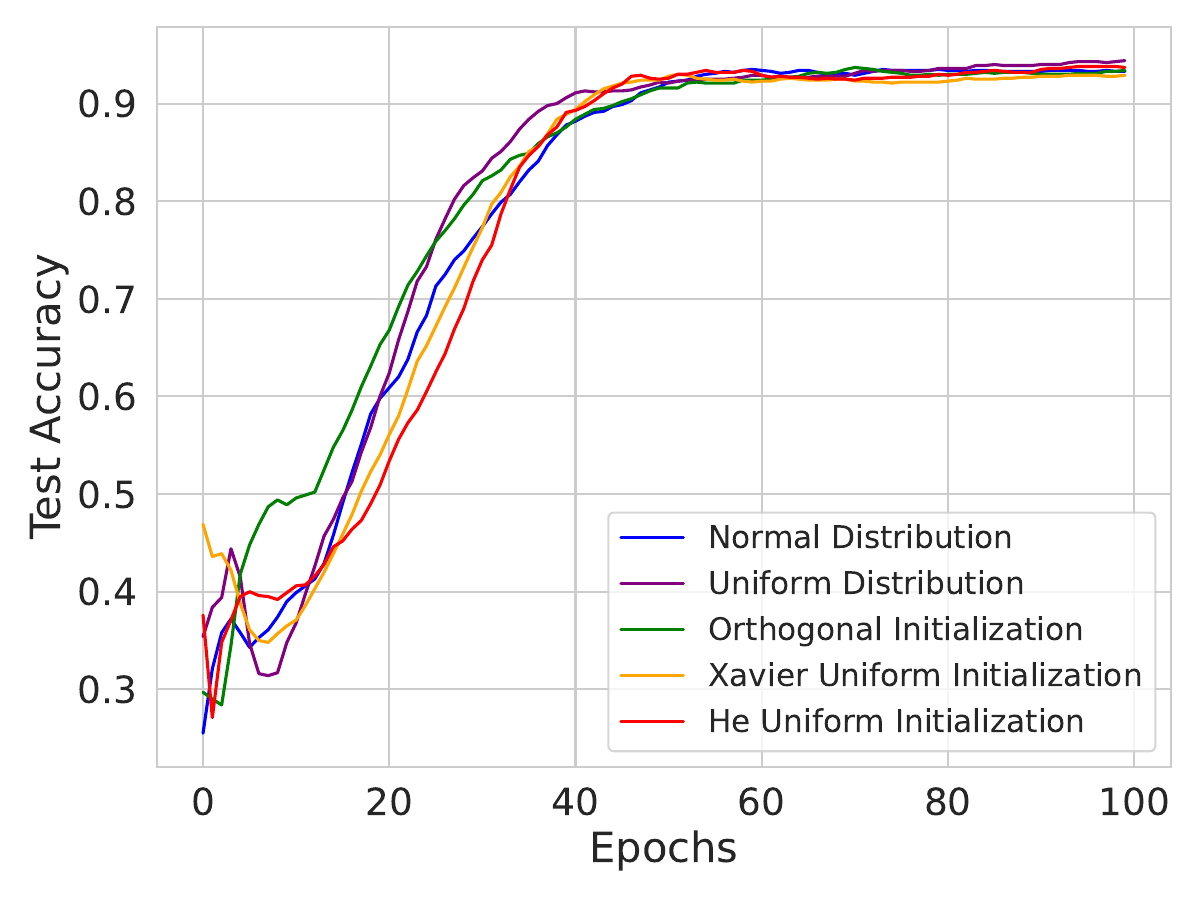}
    \caption{Validation acc. v/s Epochs}
  \end{subfigure}
  \caption{Effect of initialization on our learning rate and model performance. \textbf{Full batch} experiments conducted on a single layer network with 1000 nodes on the MNIST.}
\label{fig:fb_distribution_1_1000}
\end{figure}
\begin{figure}
  \centering
  \begin{subfigure}[b]{0.31\textwidth}
    \includegraphics[width=\textwidth]{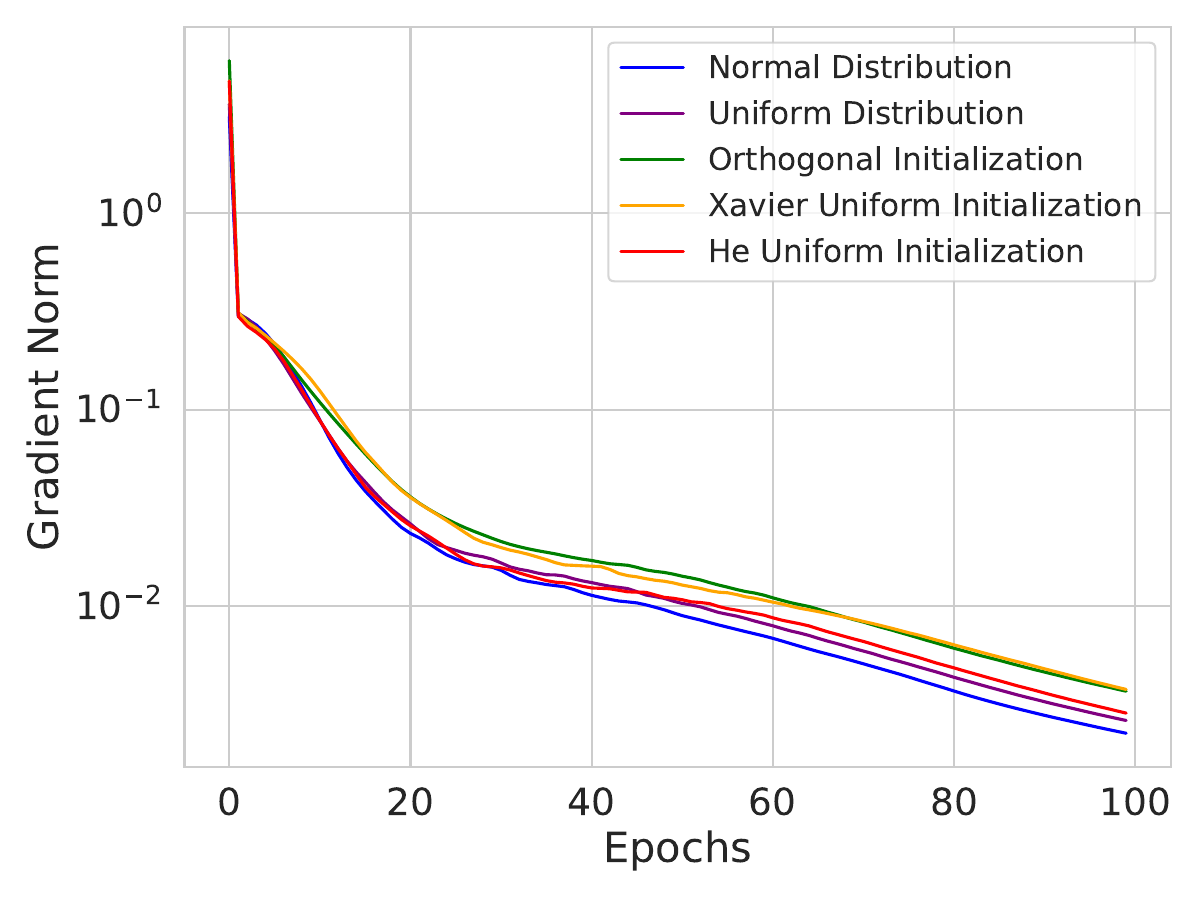}
    \caption{Gradient norm v/s Epochs}
  \end{subfigure}
  \hfill
  \begin{subfigure}[b]{0.31\textwidth}
    \includegraphics[width=\textwidth]{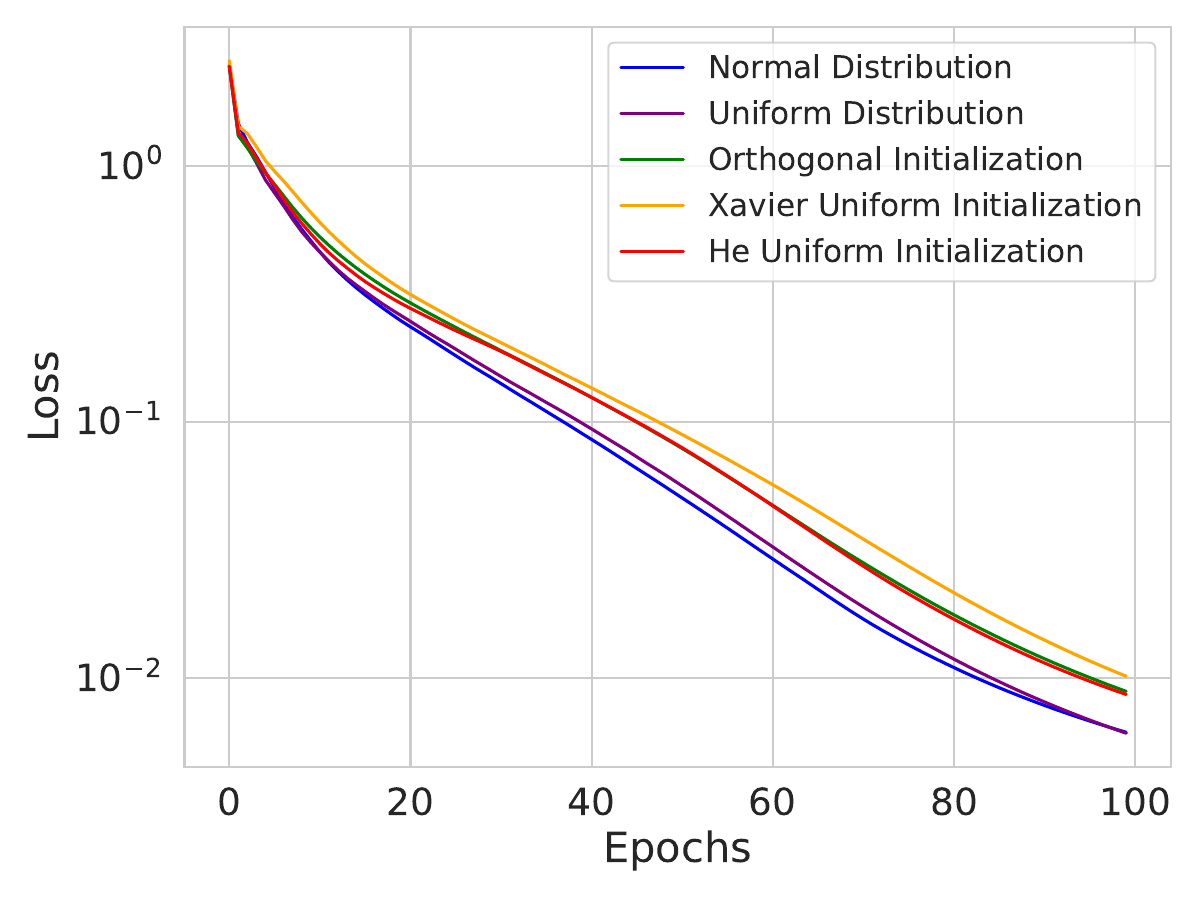}
    \caption{Training loss v/s Epochs}
  \end{subfigure}
  \hfill
  \begin{subfigure}[b]{0.31\textwidth}
    \includegraphics[width=\textwidth]{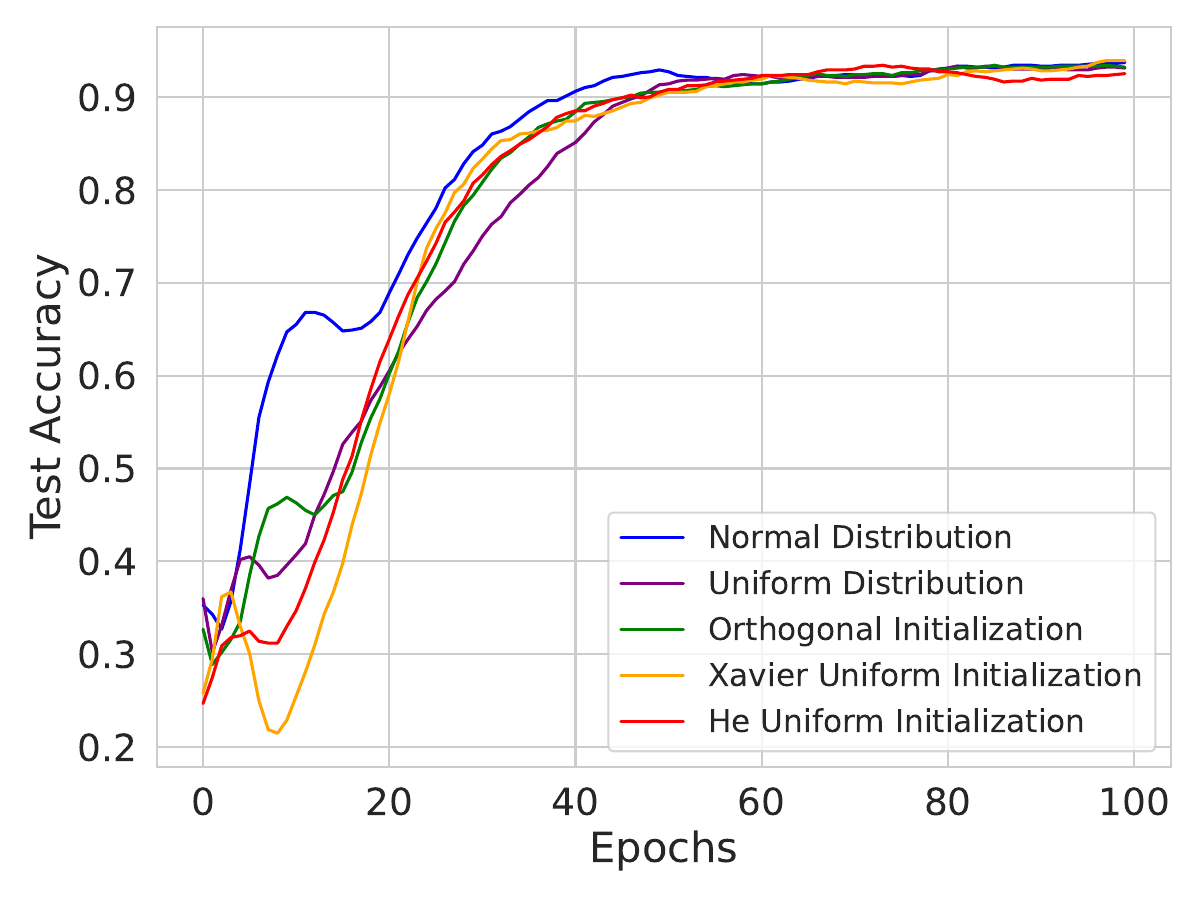}
    \caption{Validation acc. v/s Epochs}
  \end{subfigure}
  \caption{Effect of initialization on our learning rate and model performance. \textbf{Full batch} experiments conducted on a single layer network with 3000 nodes on the MNIST.}
\label{fig:fb_distribution_1__3000}
\end{figure}
\begin{figure}
  \centering
  \begin{subfigure}[b]{0.31\textwidth}
    \includegraphics[width=\textwidth]{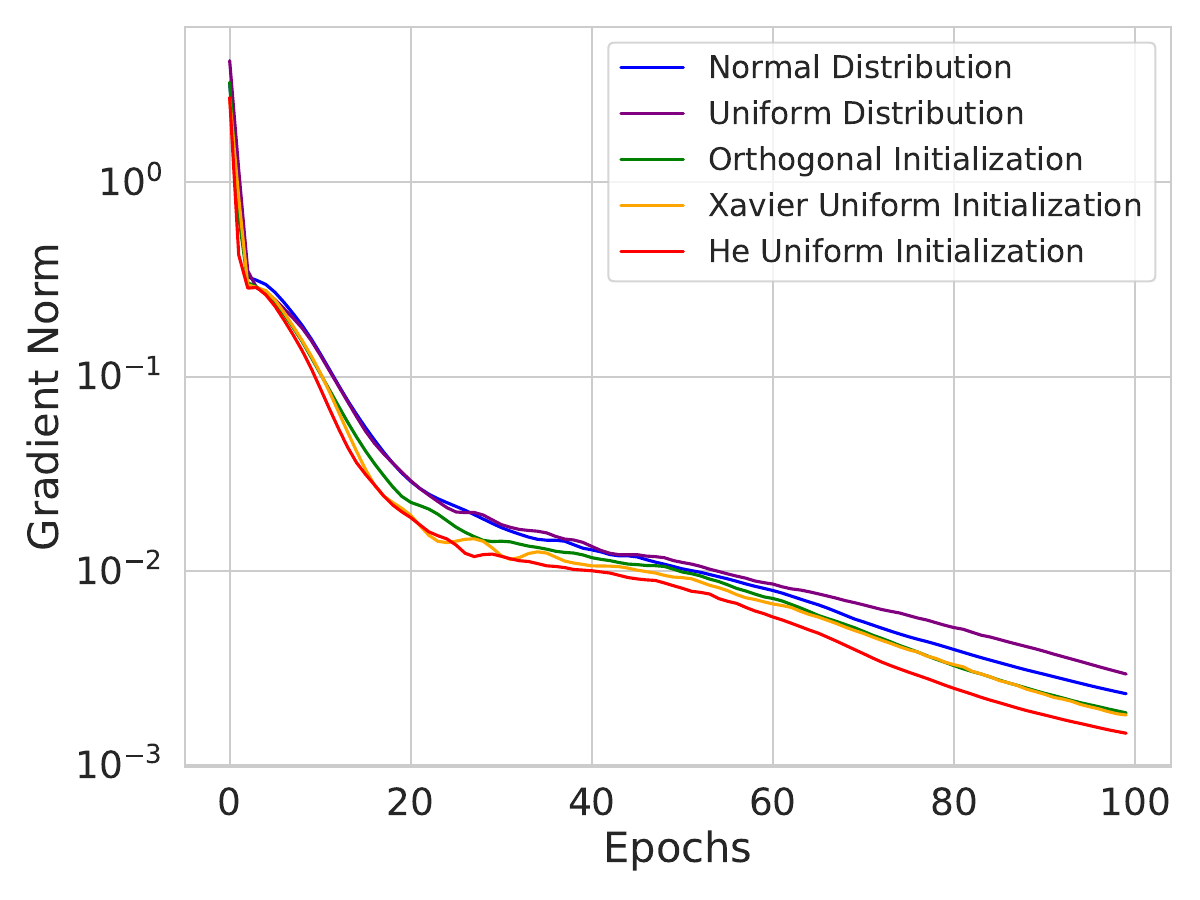}
    \caption{Gradient norm v/s Epochs}
  \end{subfigure}
  \hfill
  \begin{subfigure}[b]{0.31\textwidth}
    \includegraphics[width=\textwidth]{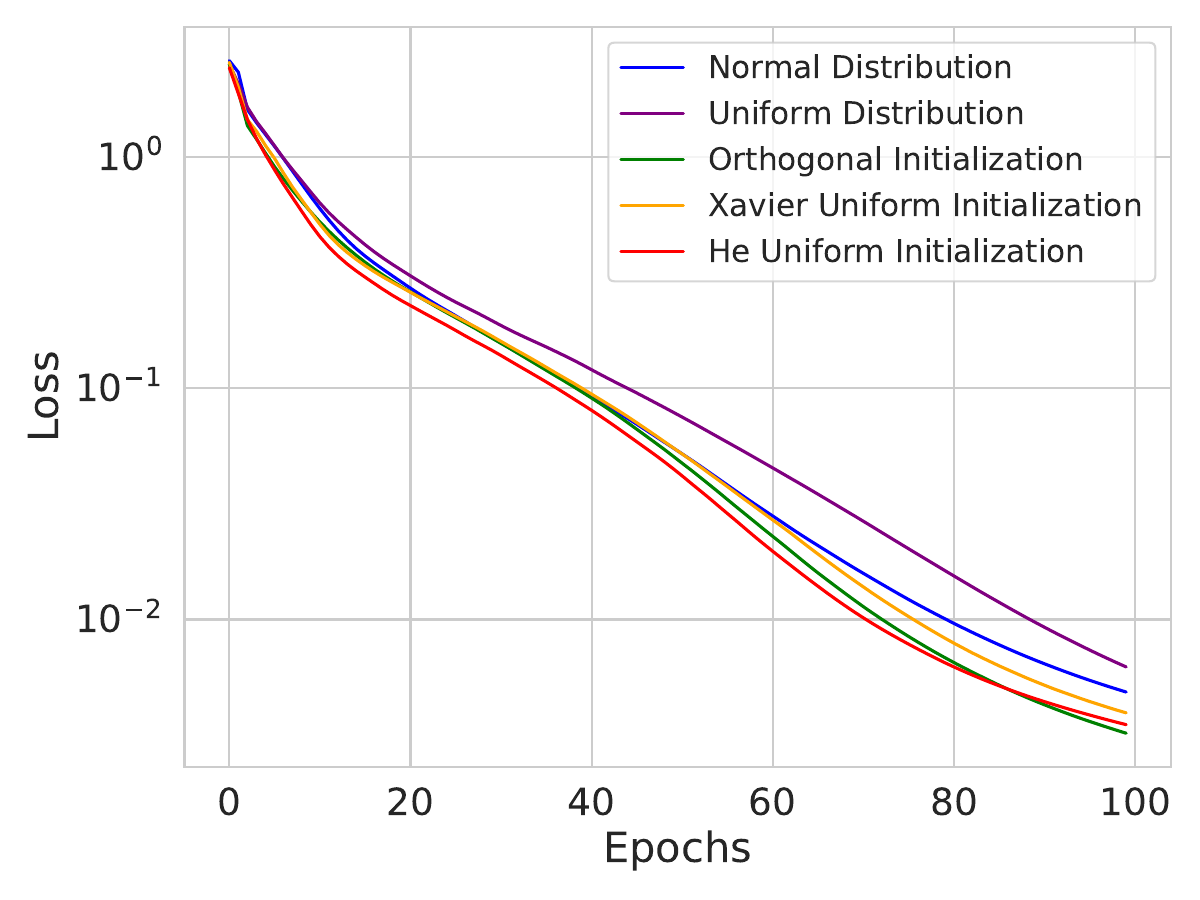}
    \caption{Training loss v/s Epochs}
  \end{subfigure}
  \hfill
  \begin{subfigure}[b]{0.31\textwidth}
    \includegraphics[width=\textwidth]{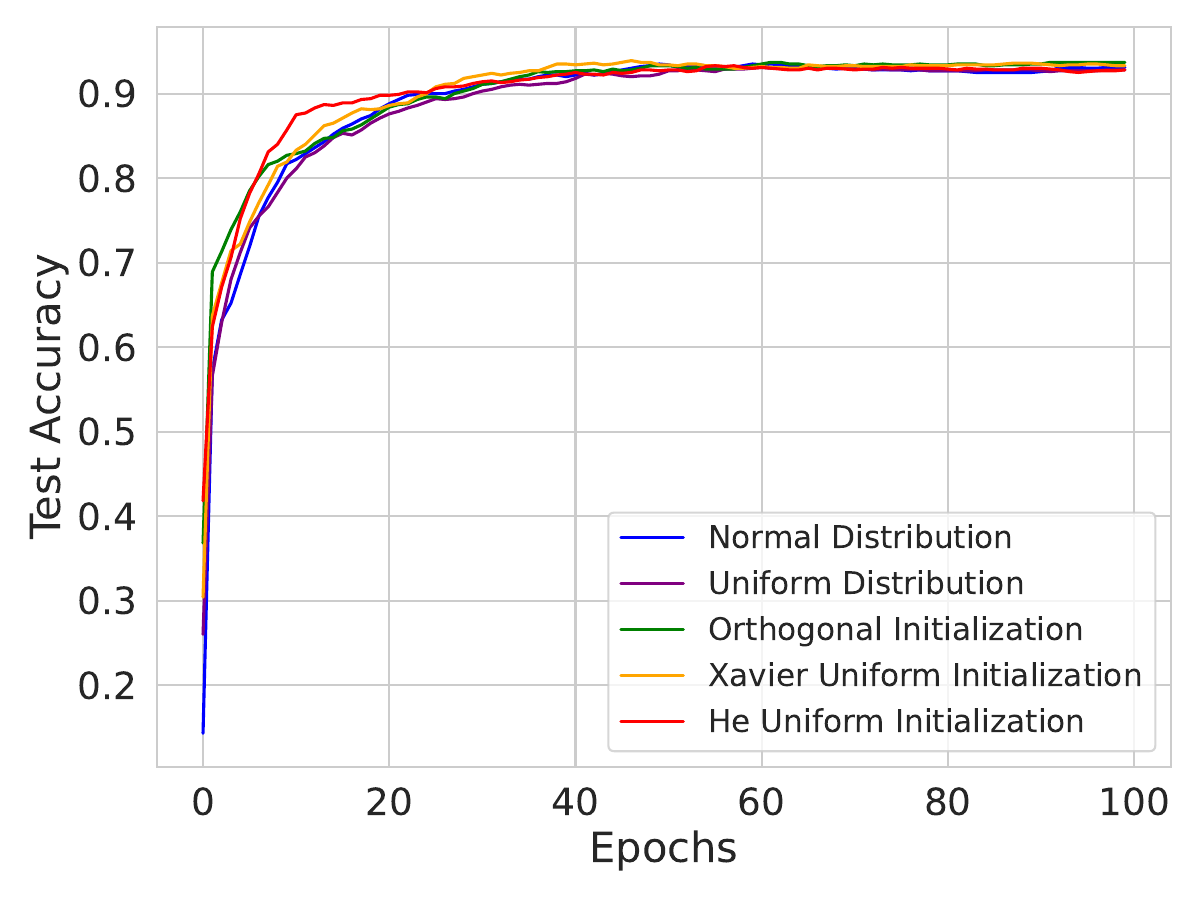}
    \caption{Validation acc. v/s Epochs}
  \end{subfigure}
  \caption{Effect of initialization on our learning rate and model performance. \textbf{Full batch} experiments conducted on a 3 layer network with 100 nodes on the MNIST.}
\label{fig:fb_distribution_3__100}
\end{figure}
\begin{figure}
  \centering
  \begin{subfigure}[b]{0.31\textwidth}
    \includegraphics[width=\textwidth]{results/linear/distributions_exp/full_batch/3__1000/grad_norm_new.pdf}
    \caption{Gradient norm v/s Epochs}
  \end{subfigure}
  \hfill
  \begin{subfigure}[b]{0.31\textwidth}
    \includegraphics[width=\textwidth]{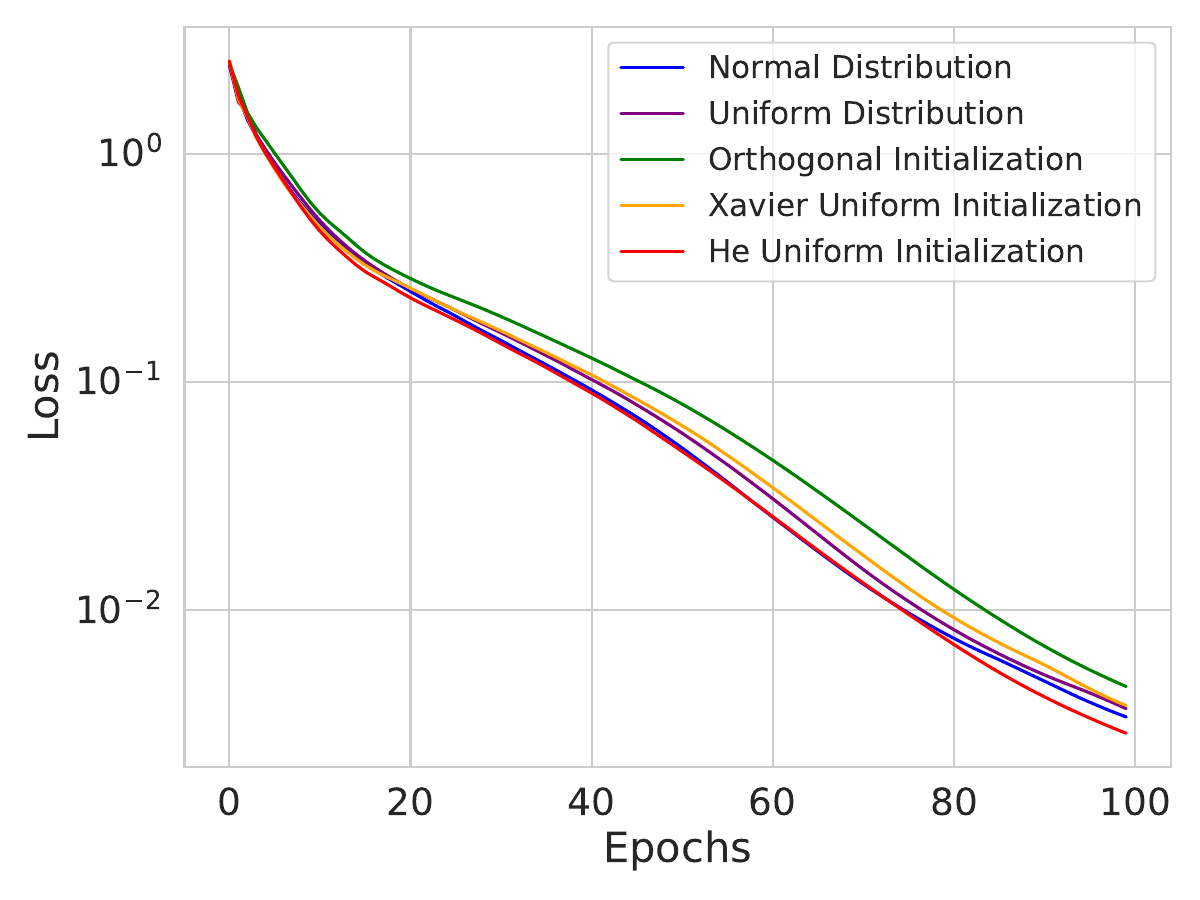}
    \caption{Training loss v/s Epochs}
  \end{subfigure}
  \hfill
  \begin{subfigure}[b]{0.31\textwidth}
    \includegraphics[width=\textwidth]{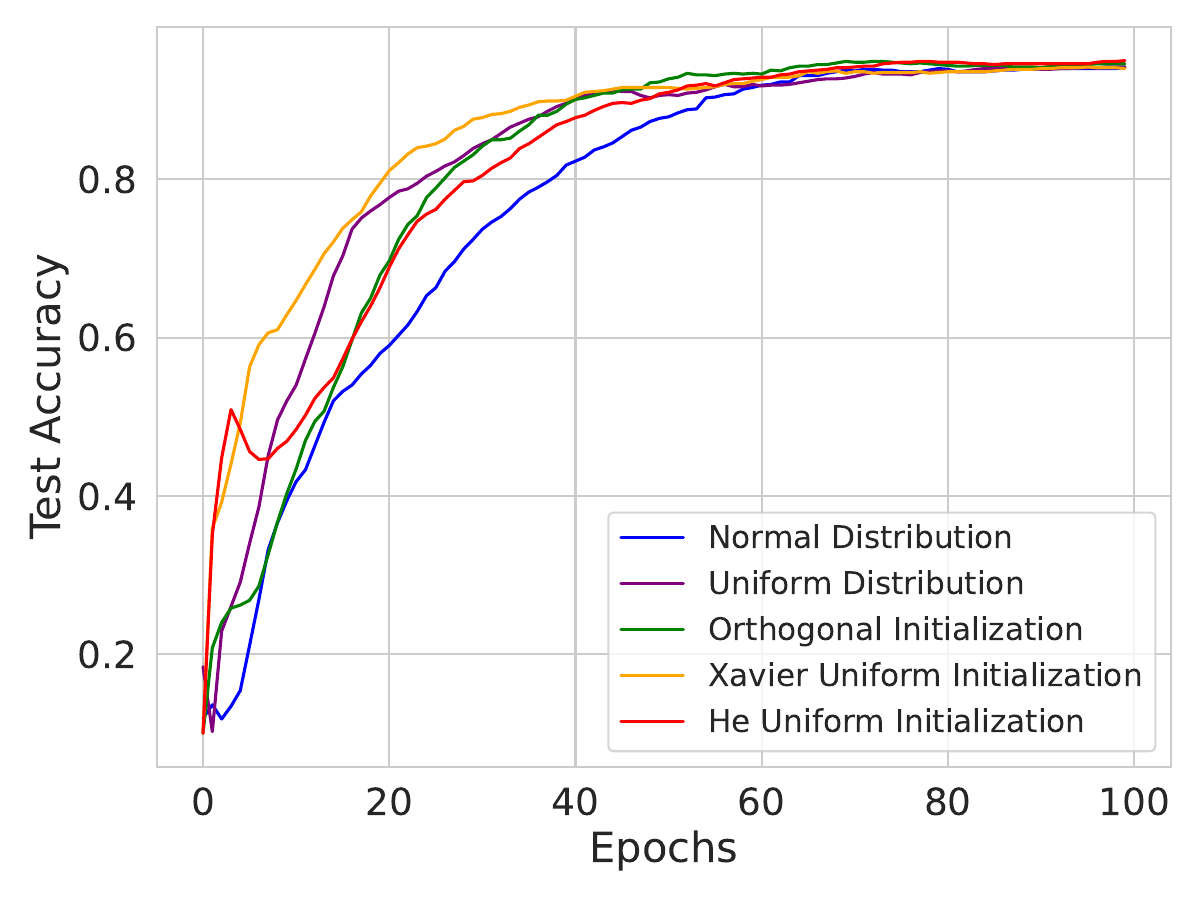}
    \caption{Validation acc. v/s Epochs}
  \end{subfigure}
  \caption{Effect of initialization on our learning rate and model performance. \textbf{Full batch} experiments conducted on a 3 layer network with 1000 nodes on the MNIST.}
\label{fig:fb_distribution_3__1000}
\end{figure}
\begin{figure}
  \centering
  \begin{subfigure}[b]{0.31\textwidth}
    \includegraphics[width=\textwidth]{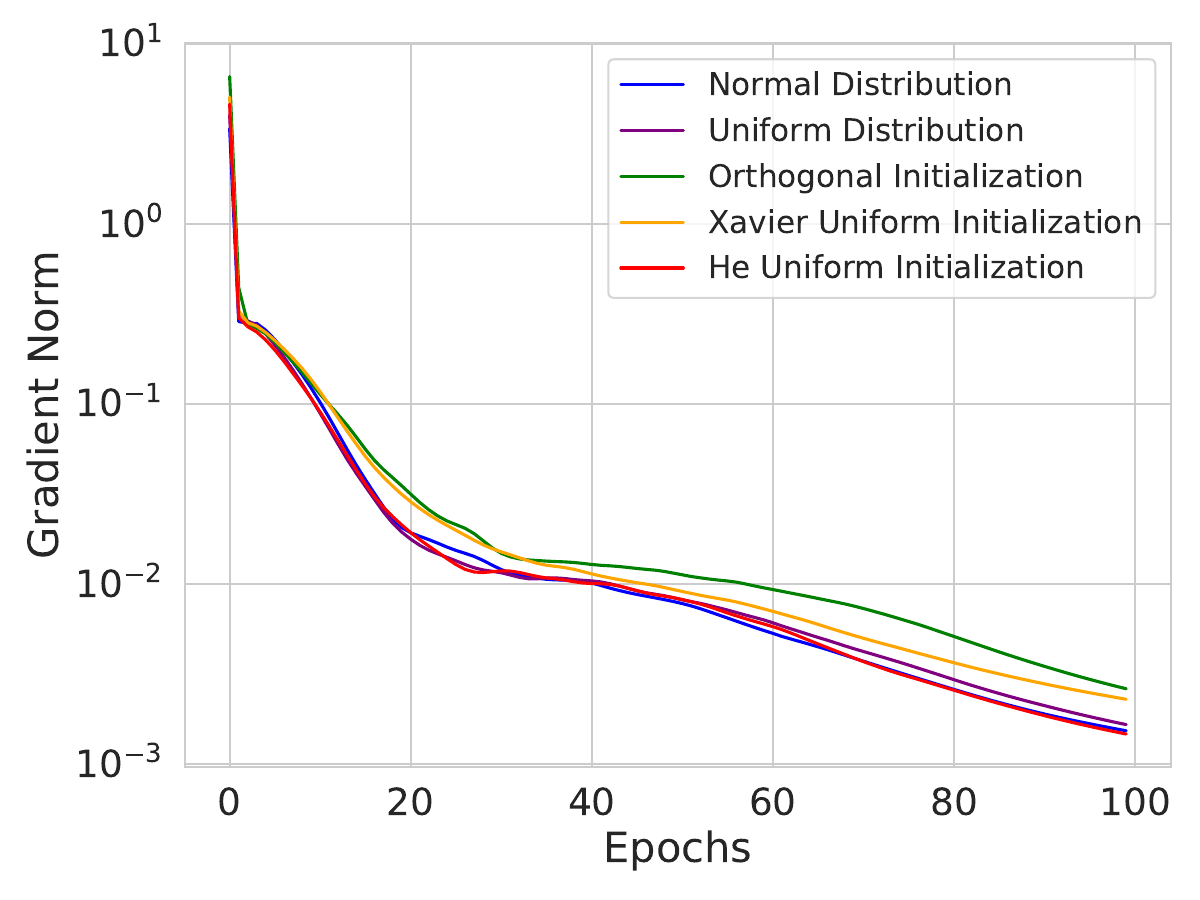}
    \caption{Gradient norm v/s Epochs}
  \end{subfigure}
  \hfill
  \begin{subfigure}[b]{0.31\textwidth}
    \includegraphics[width=\textwidth]{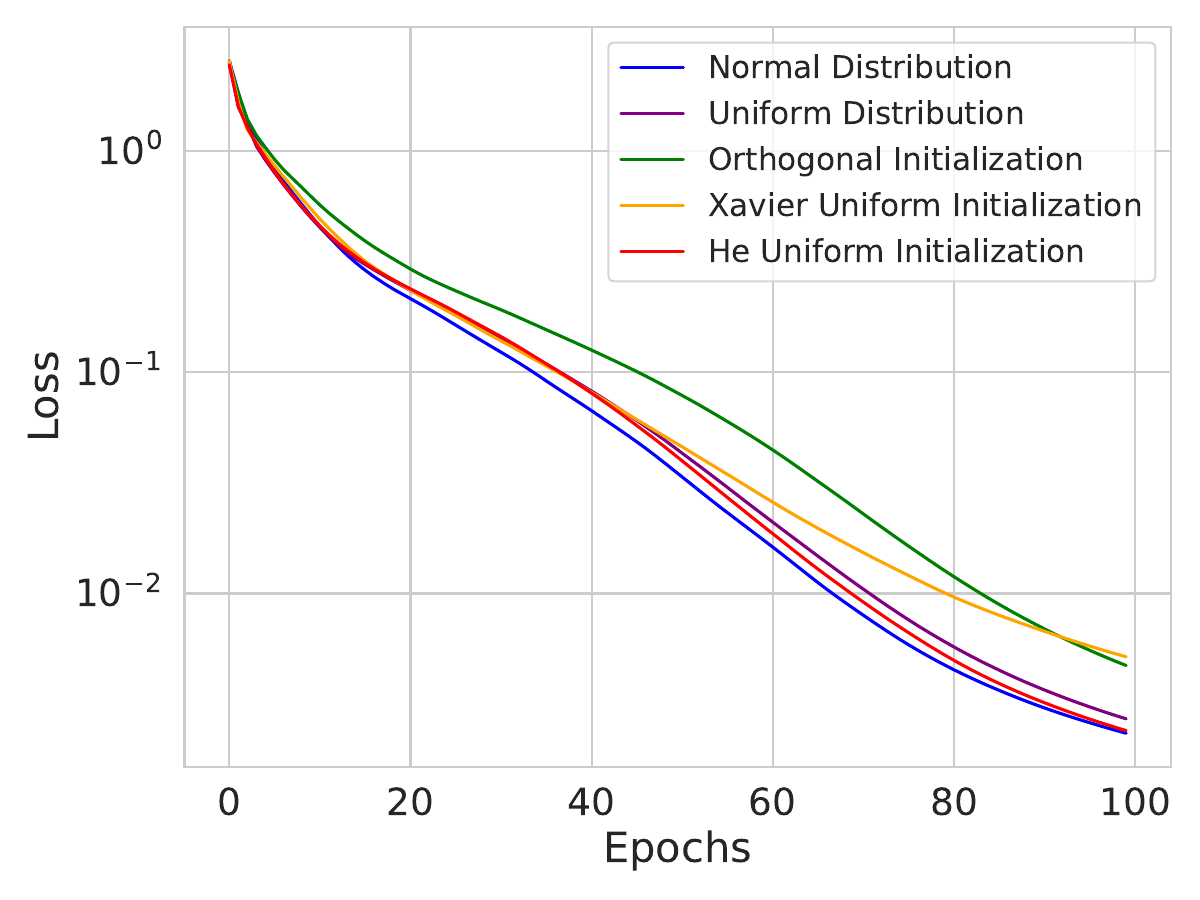}
    \caption{Training loss v/s Epochs}
  \end{subfigure}
  \hfill
  \begin{subfigure}[b]{0.31\textwidth}
    \includegraphics[width=\textwidth]{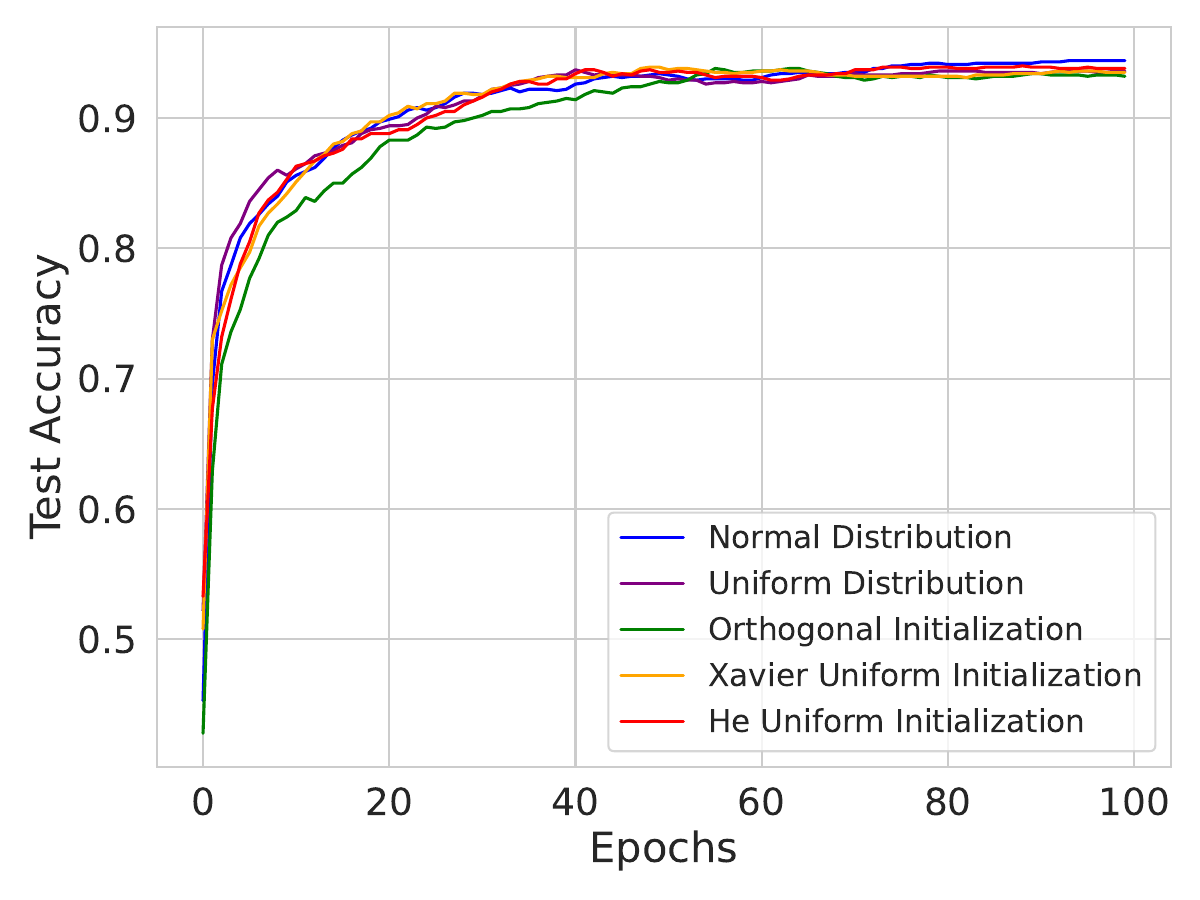}
    \caption{Validation acc. v/s Epochs}
  \end{subfigure}
  \caption{Effect of initialization on our learning rate and model performance. \textbf{Full batch} experiments conducted on a 3 layer network with 3000 nodes on the MNIST.}
\label{fig:fb_distribution_3__3000}
\end{figure}
\begin{figure}
  \centering
  \begin{subfigure}[b]{0.31\textwidth}
    \includegraphics[width=\textwidth]{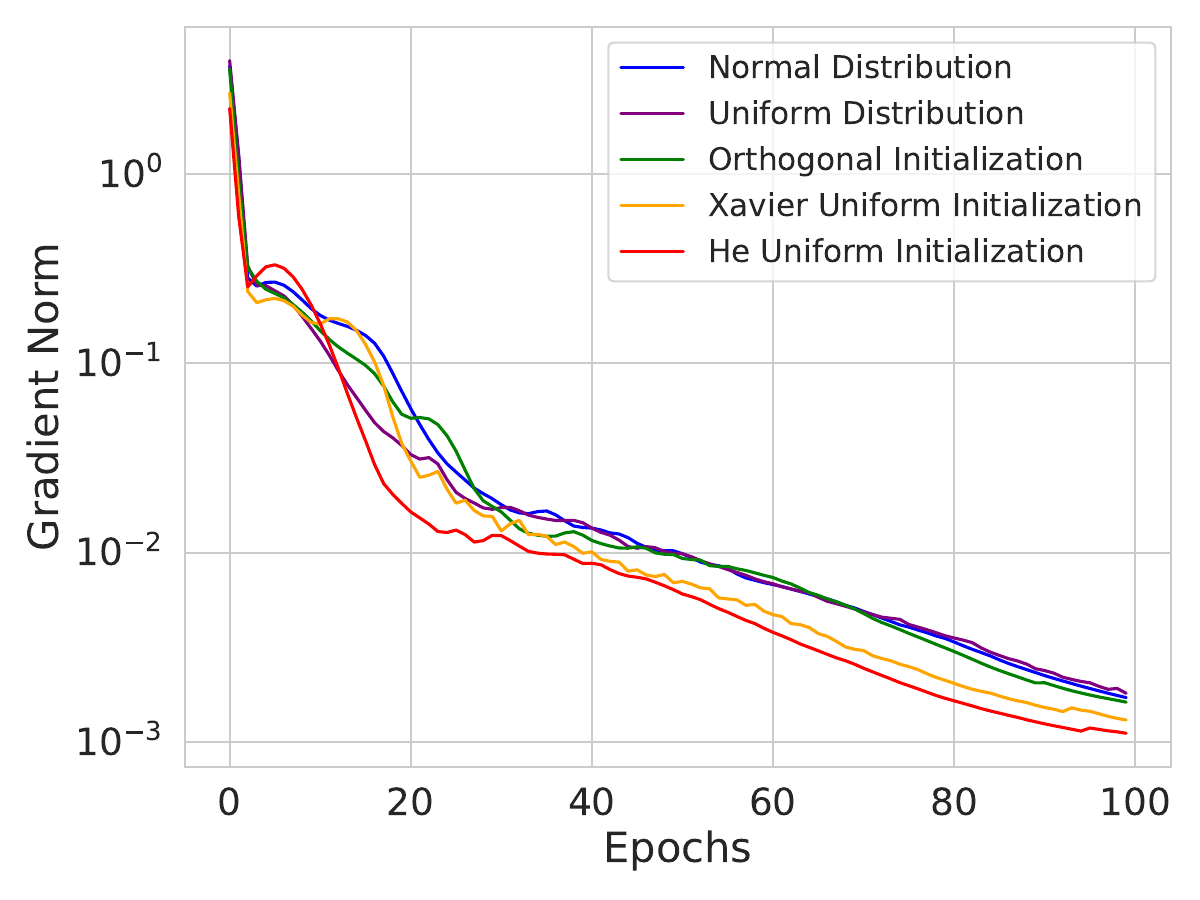}
    \caption{Gradient norm v/s Epochs}
  \end{subfigure}
  \hfill
  \begin{subfigure}[b]{0.31\textwidth}
    \includegraphics[width=\textwidth]{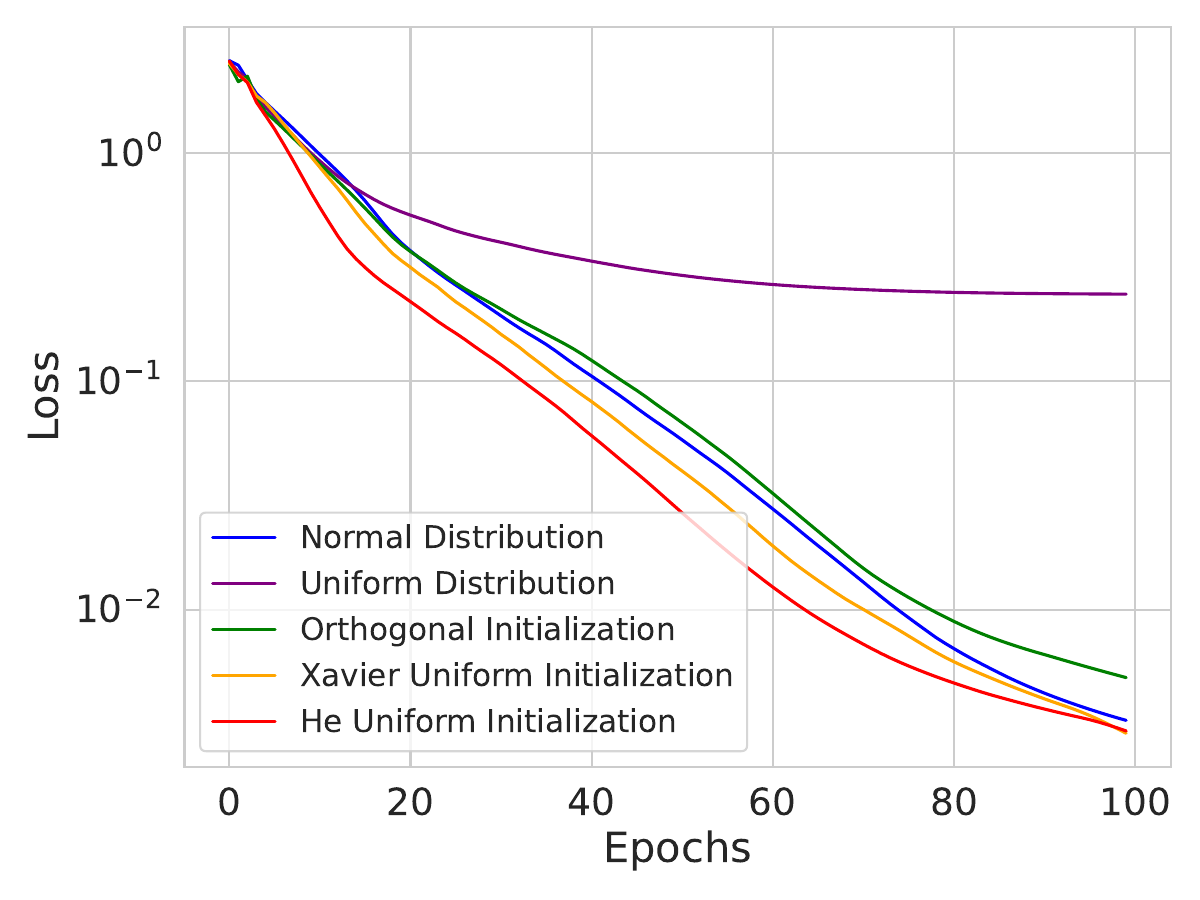}
    \caption{Training loss v/s Epochs}
  \end{subfigure}
  \hfill
  \begin{subfigure}[b]{0.31\textwidth}
    \includegraphics[width=\textwidth]{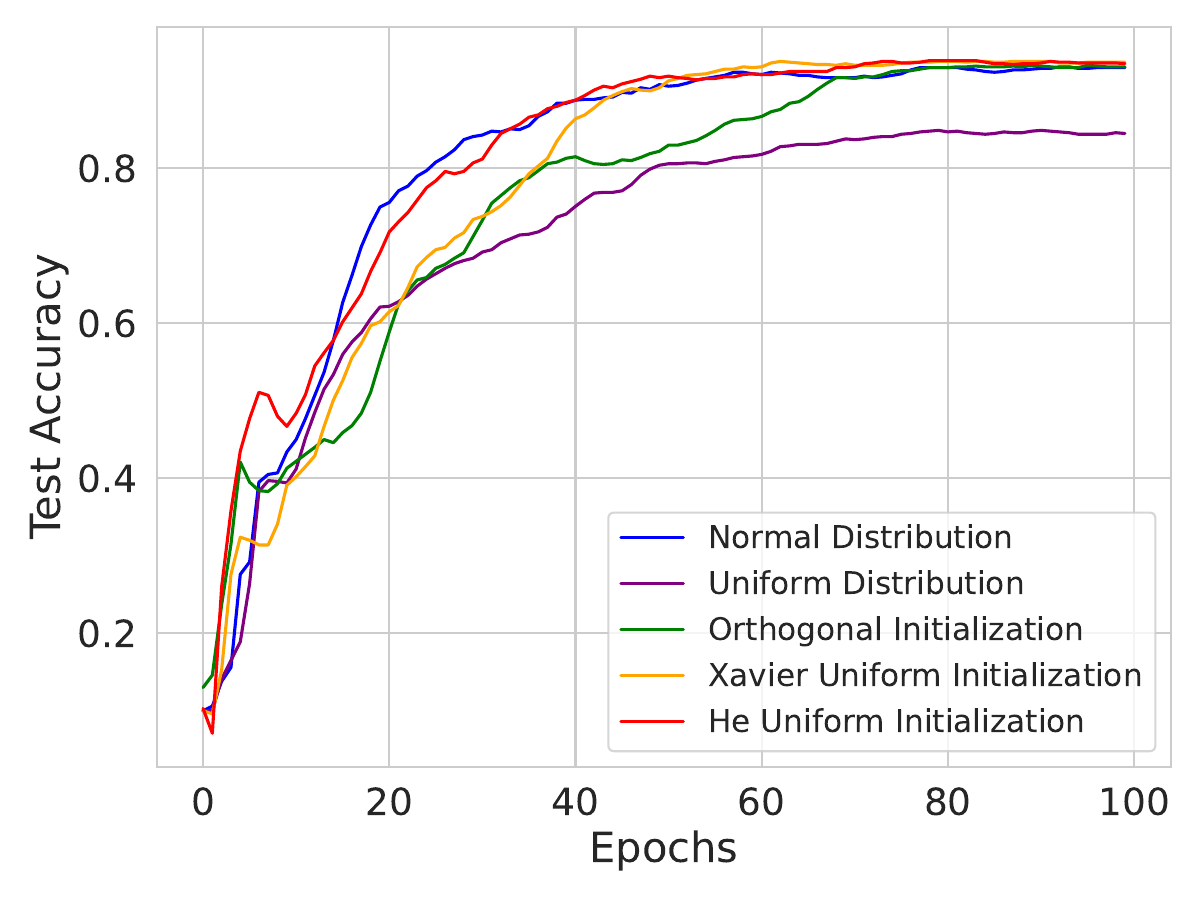}
    \caption{Validation acc. v/s Epochs}
  \end{subfigure}
  \caption{Effect of initialization on our learning rate and model performance. \textbf{Full batch} experiments conducted on a 5 layer network with 300 nodes on the MNIST.}
\label{fig:fb_distribution_5__300}
\end{figure}
\begin{figure}
  \centering
  \begin{subfigure}[b]{0.31\textwidth}
    \includegraphics[width=\textwidth]{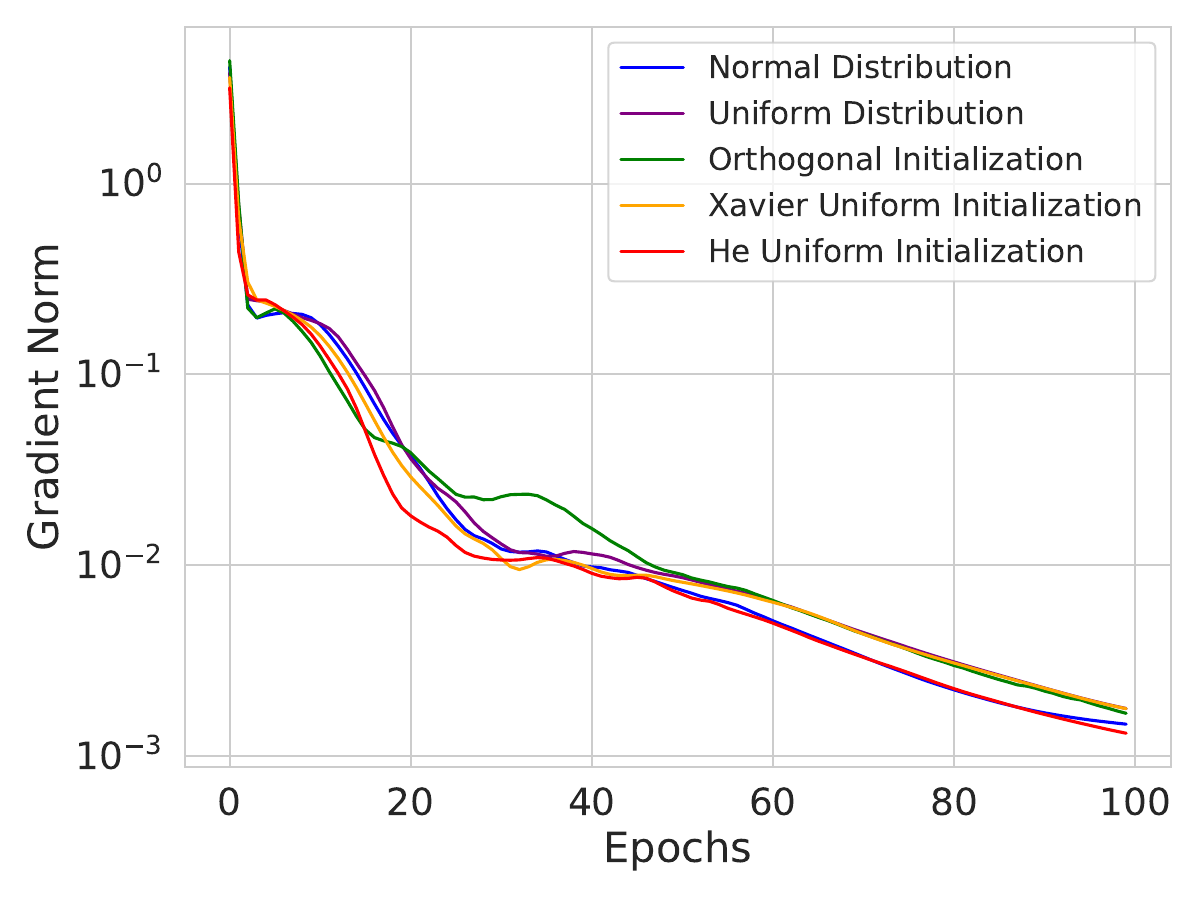}
    \caption{Gradient norm v/s Epochs}
  \end{subfigure}
  \hfill
  \begin{subfigure}[b]{0.31\textwidth}
    \includegraphics[width=\textwidth]{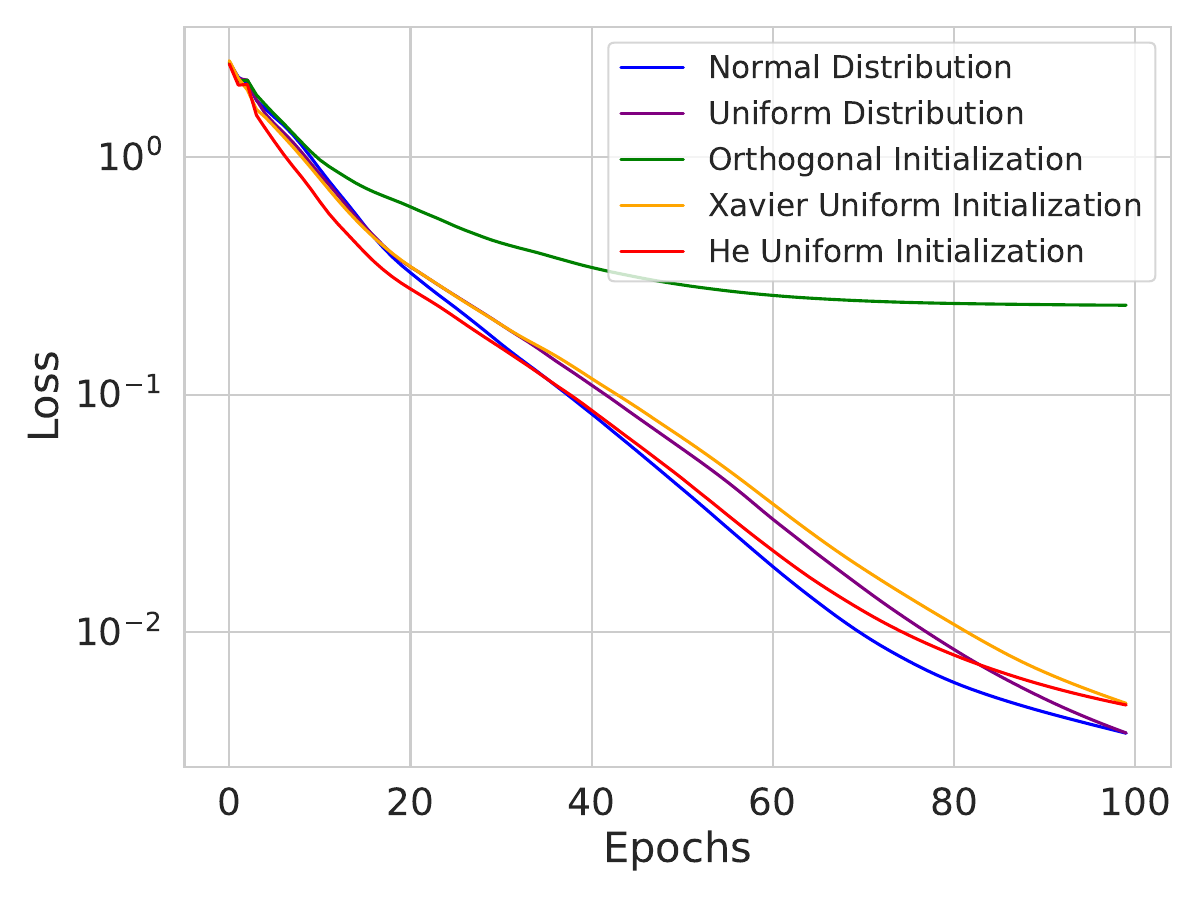}
    \caption{Training loss v/s Epochs}
  \end{subfigure}
  \hfill
  \begin{subfigure}[b]{0.31\textwidth}
    \includegraphics[width=\textwidth]{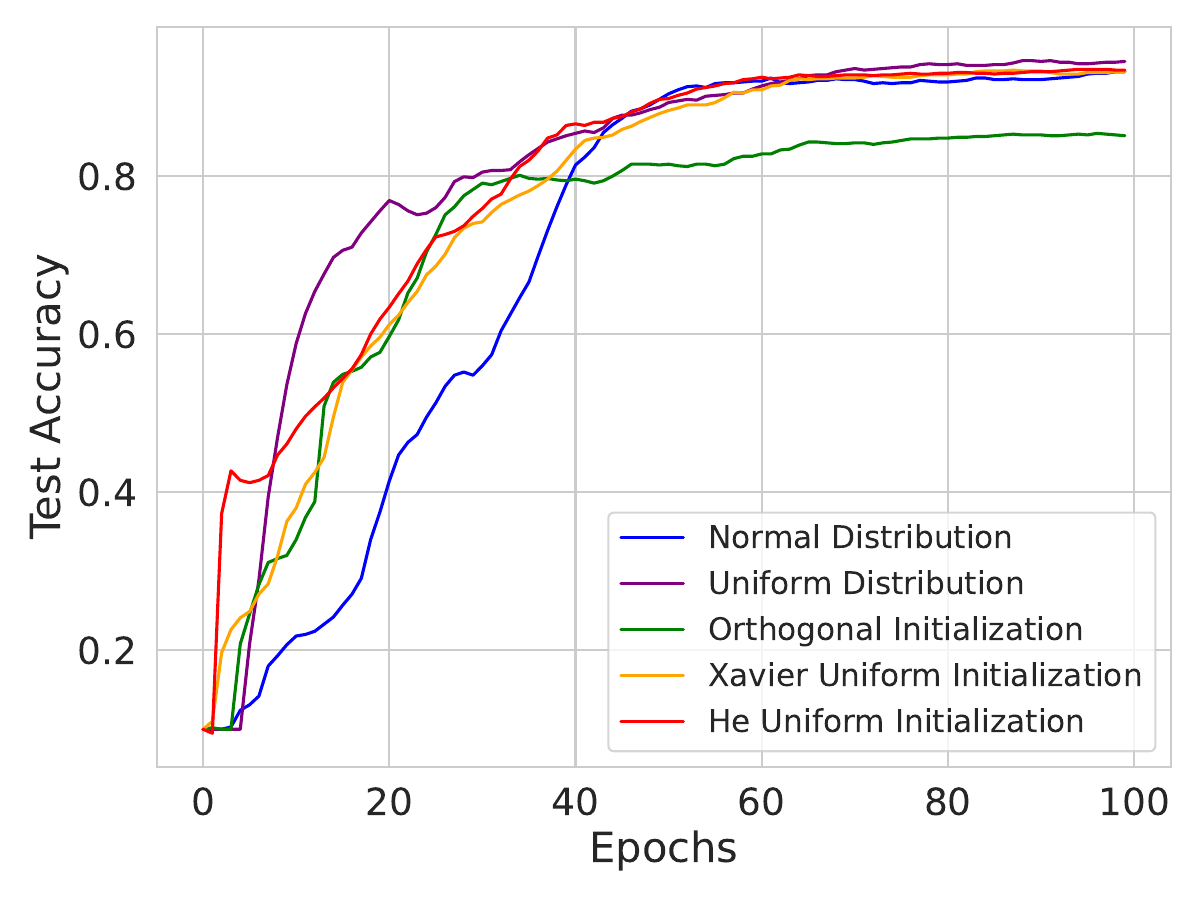}
    \caption{Validation acc. v/s Epochs}
  \end{subfigure}
  \caption{Effect of initialization on our learning rate and model performance. \textbf{Full batch} experiments conducted on a 5 layer network with 1000 nodes on the MNIST.}
\label{fig:fb_distribution_5__1000}
\end{figure}
\begin{figure}
  \centering
  \begin{subfigure}[b]{0.31\textwidth}
    \includegraphics[width=\textwidth]{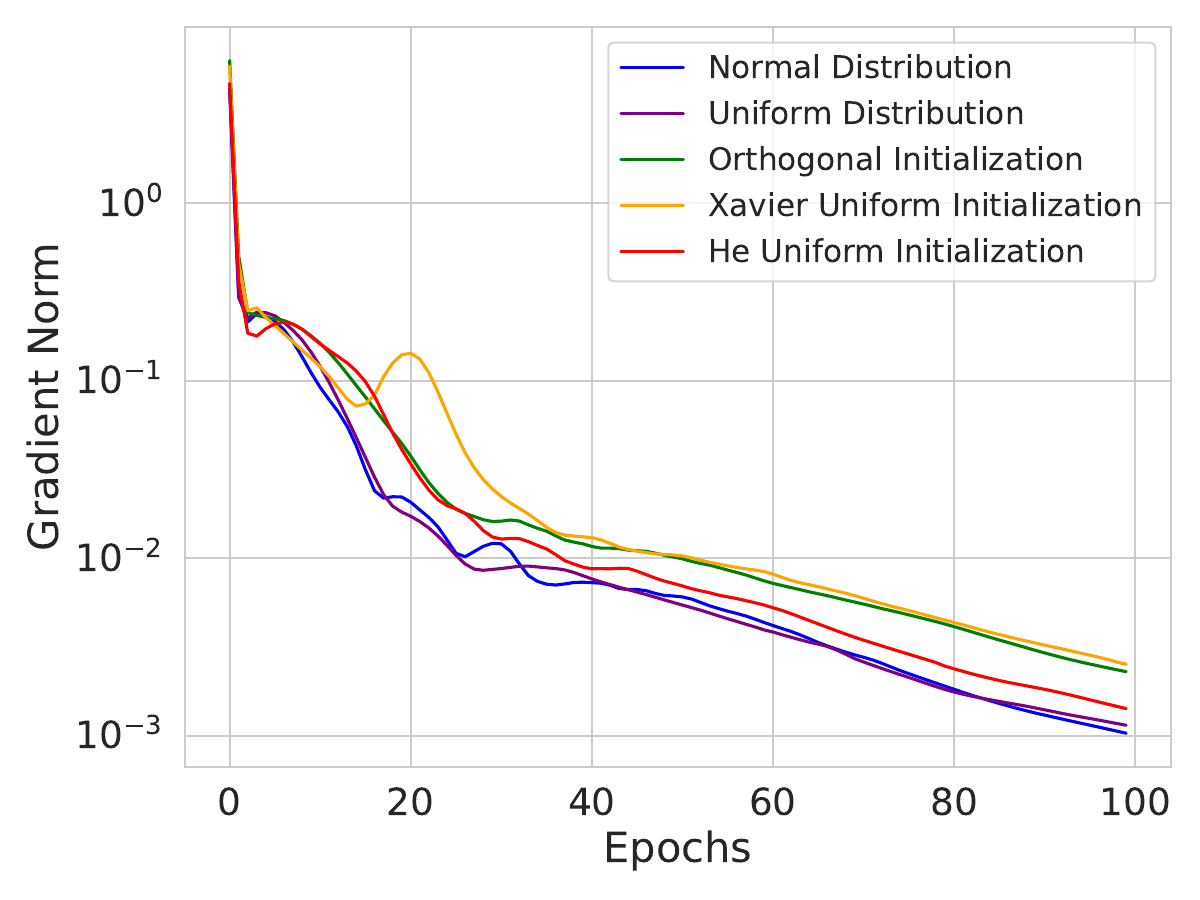}
    \caption{Gradient norm v/s Epochs}
  \end{subfigure}
  \hfill
  \begin{subfigure}[b]{0.31\textwidth}
    \includegraphics[width=\textwidth]{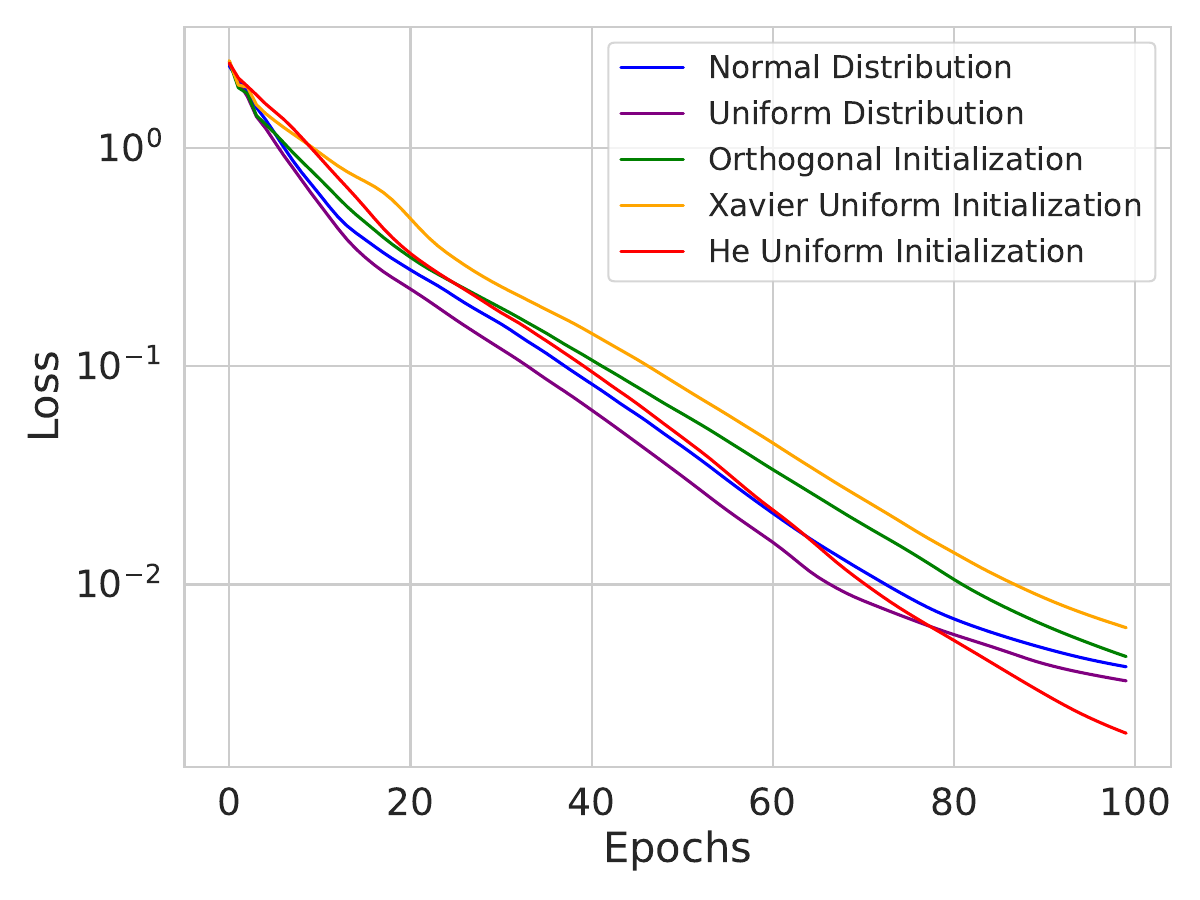}
    \caption{Training loss v/s Epochs}
  \end{subfigure}
  \hfill
  \begin{subfigure}[b]{0.31\textwidth}
    \includegraphics[width=\textwidth]{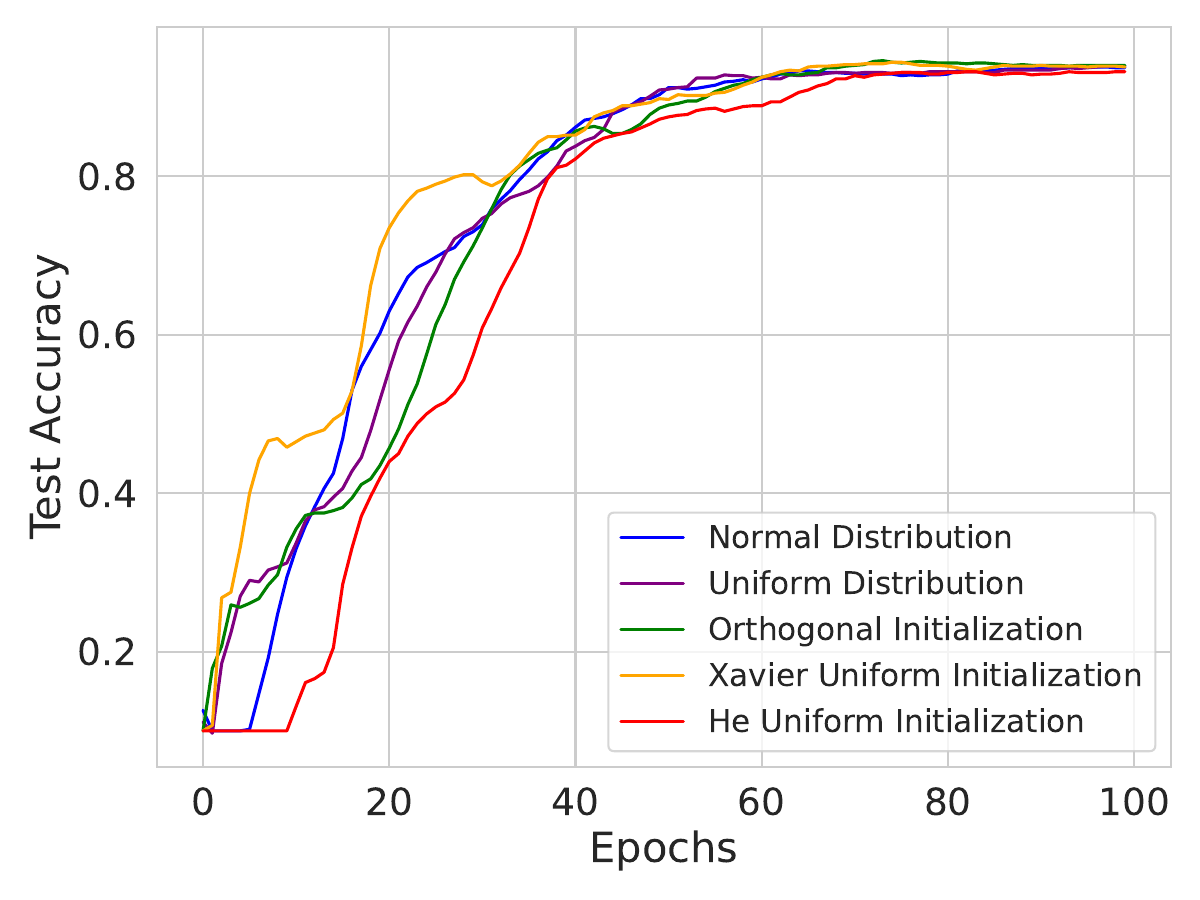}
    \caption{Validation acc. v/s Epochs}
  \end{subfigure}
  \caption{Effect of initialization on our learning rate and model performance. \textbf{Full batch} experiments conducted on a 5 layer network with 3000 nodes on the MNIST.}
\label{fig:fb_distribution_5__3000}
\end{figure}
\begin{figure}
  \centering
  \begin{subfigure}[b]{0.31\textwidth}
    \includegraphics[width=\textwidth]{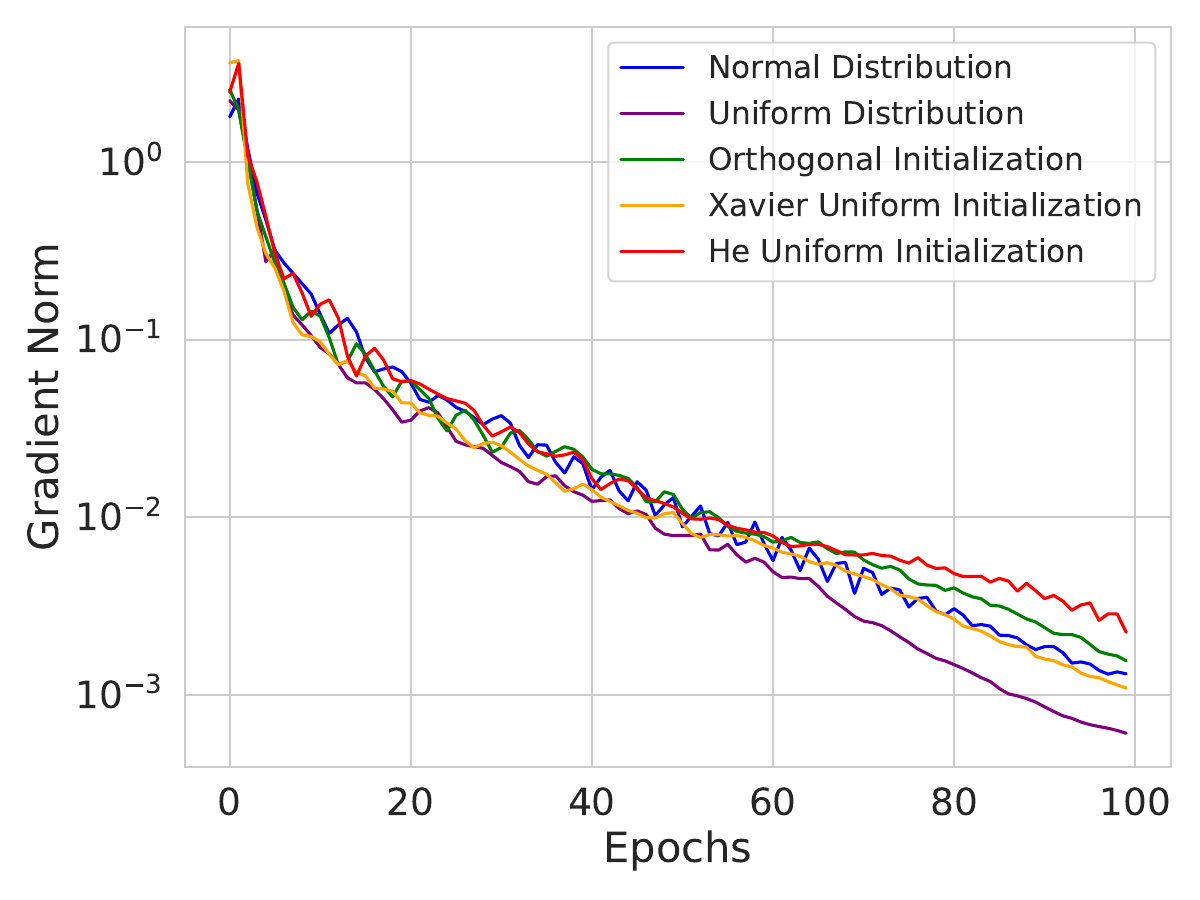}
    \caption{Gradient norm v/s Epochs}
  \end{subfigure}
  \hfill
  \begin{subfigure}[b]{0.31\textwidth}
    \includegraphics[width=\textwidth]{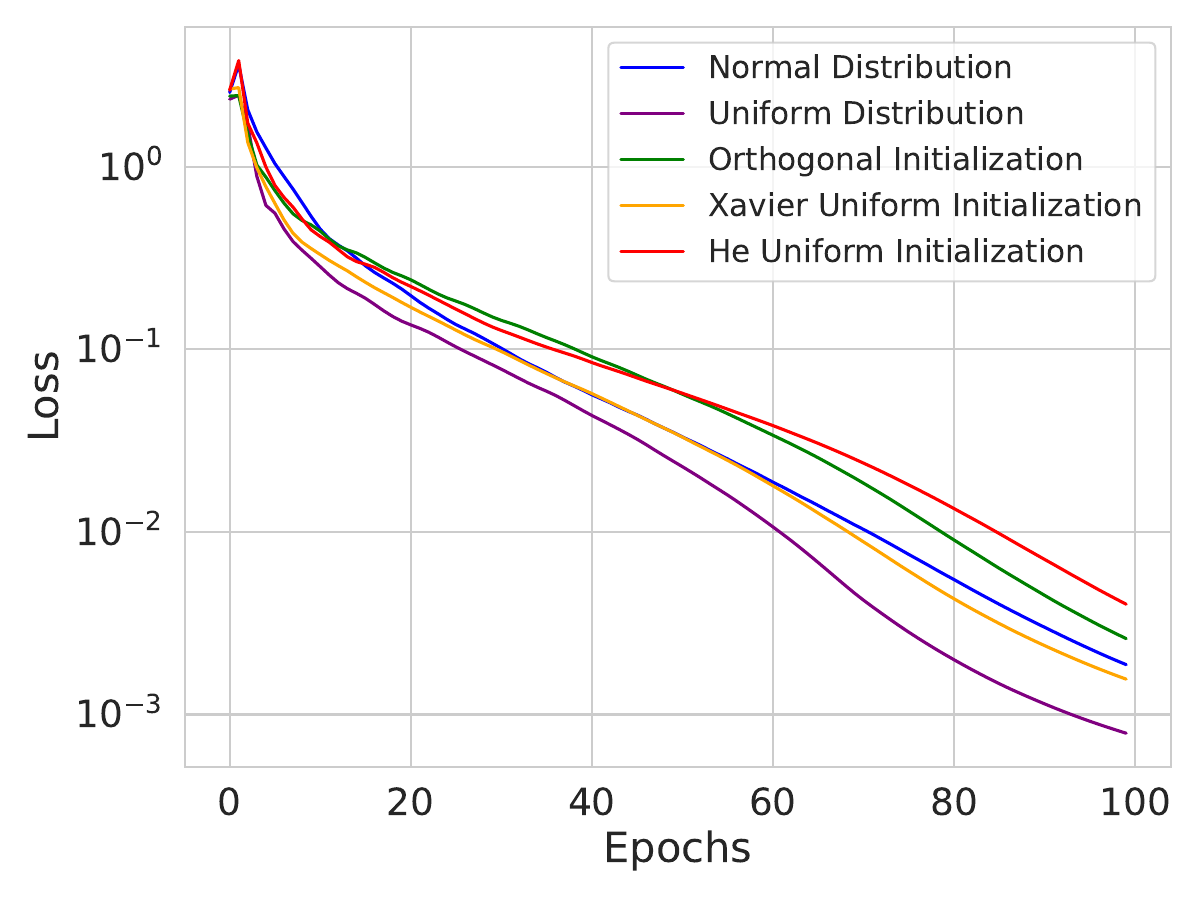}
    \caption{Training loss v/s Epochs}
  \end{subfigure}
  \hfill
  \begin{subfigure}[b]{0.31\textwidth}
    \includegraphics[width=\textwidth]{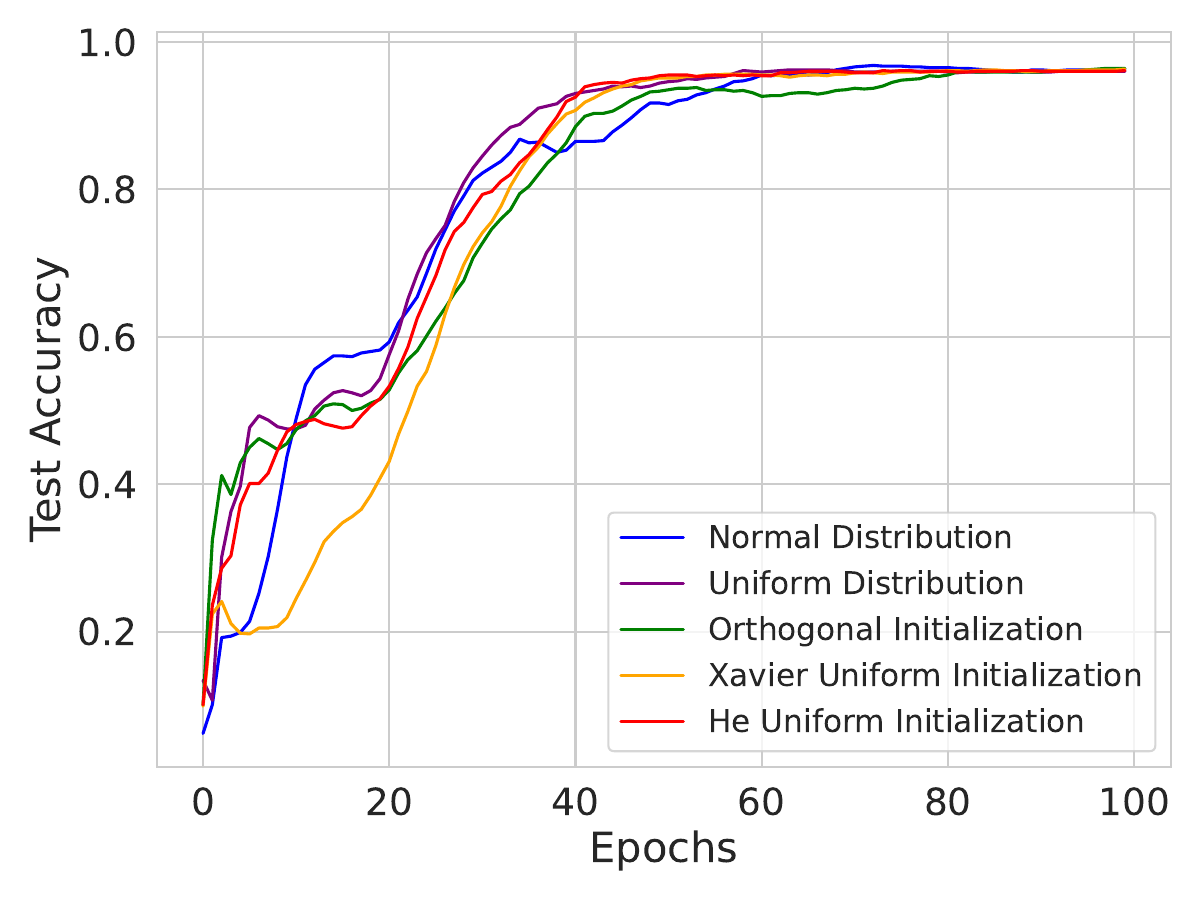}
    \caption{Validation acc. v/s Epochs}
  \end{subfigure}
  \caption{Effect of initialization on our learning rate and model performance. \textbf{Full batch} experiments conducted on LeNet with MNIST.}
\label{fig:fb_distribution_lenet}
\end{figure}
\begin{figure}
  \centering
  \begin{subfigure}[b]{0.31\textwidth}
    \includegraphics[width=\textwidth]{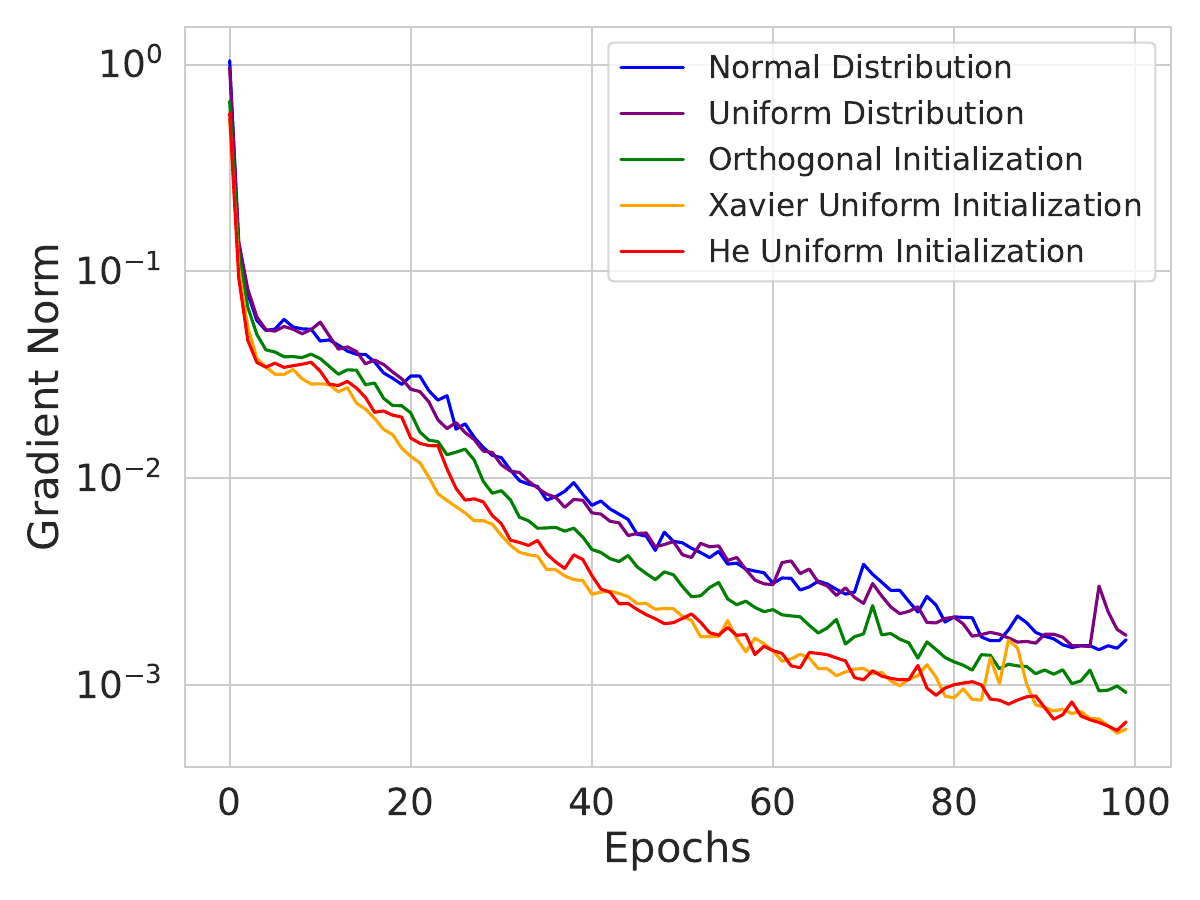}
    \caption{Gradient norm v/s Epochs}
  \end{subfigure}
  \hfill
  \begin{subfigure}[b]{0.31\textwidth}
    \includegraphics[width=\textwidth]{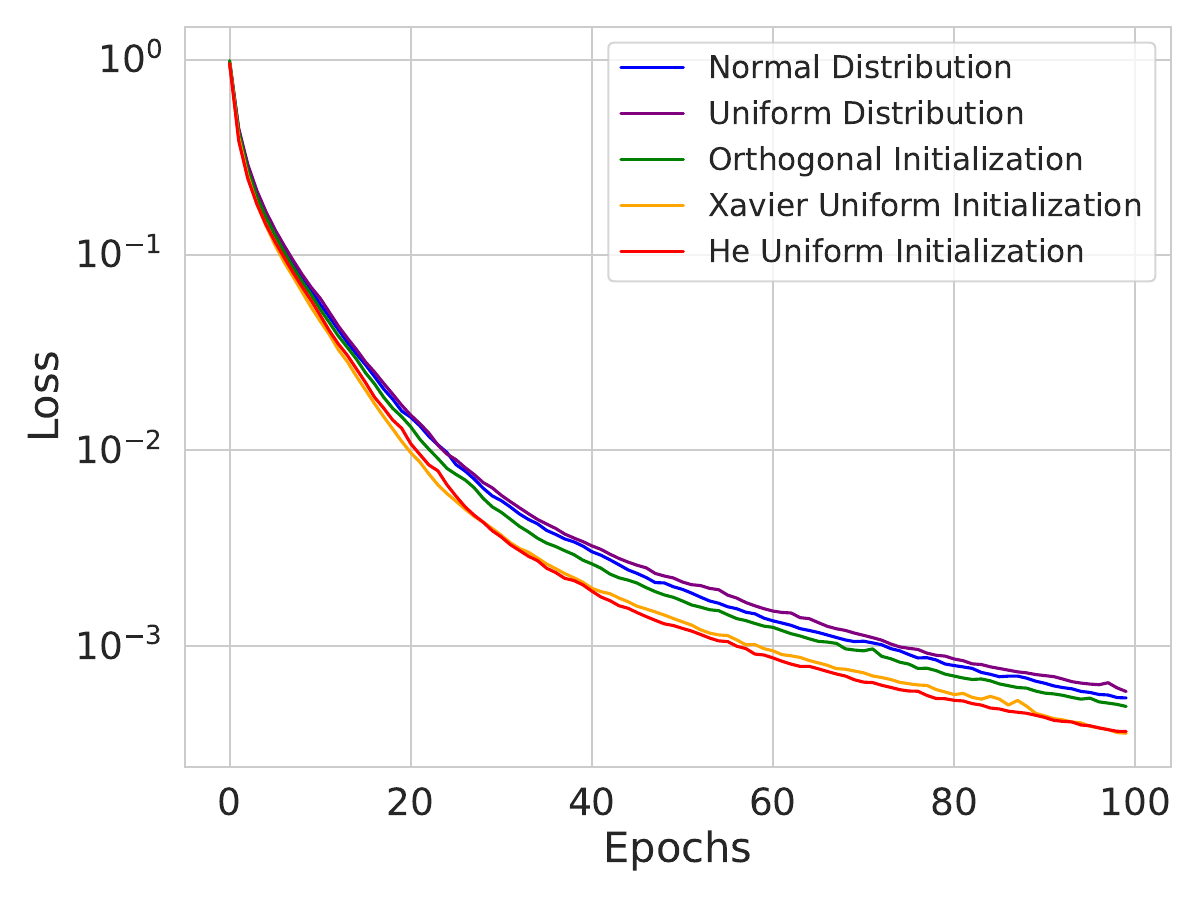}
    \caption{Training loss v/s Epochs}
  \end{subfigure}
  \hfill
  \begin{subfigure}[b]{0.31\textwidth}
    \includegraphics[width=\textwidth]{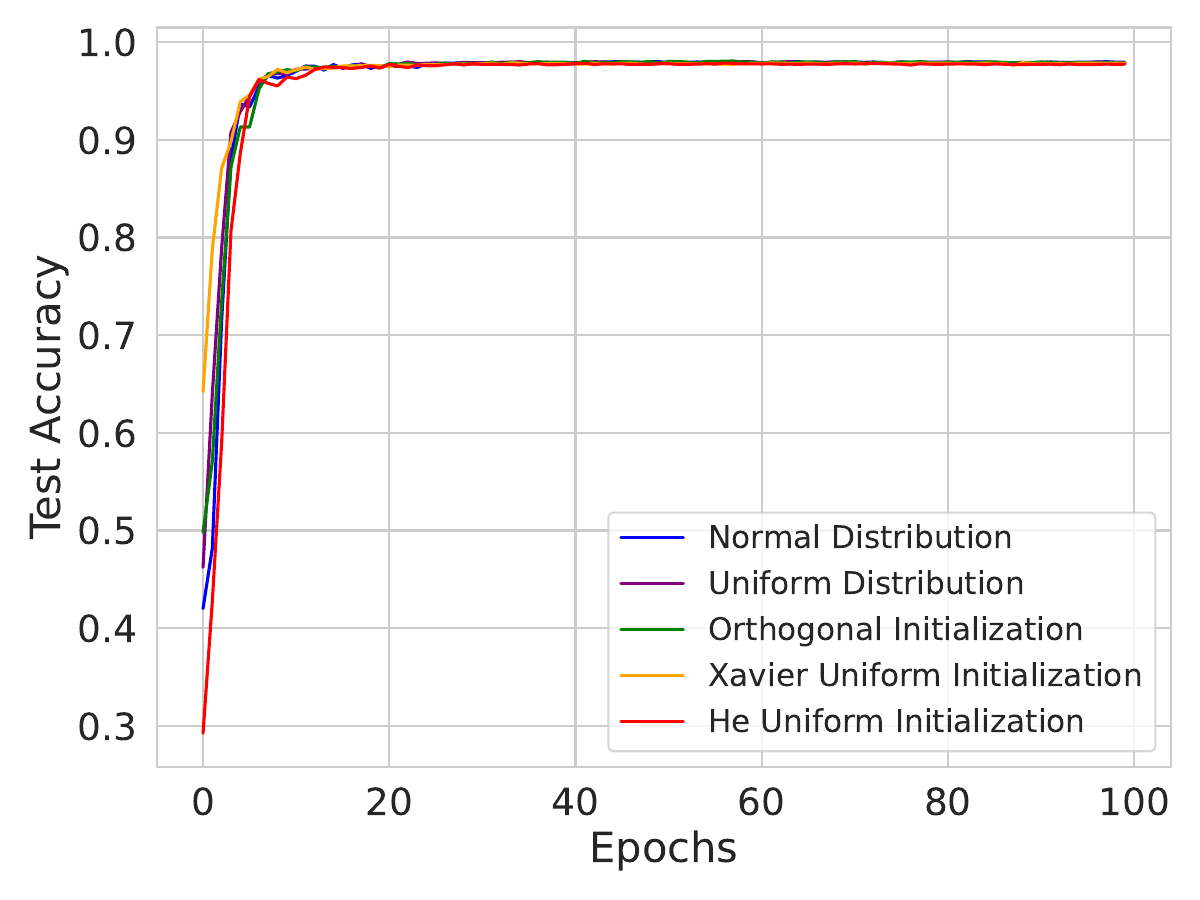}
    \caption{Validation acc. v/s Epochs}
  \end{subfigure}
  \caption{Effect of initialization on our learning rate and model performance. \textbf{Mini batch} experiments conducted on a single layer network with 300 nodes on the MNIST.}
\label{fig:mb_distribution_1__300}
\end{figure}
\begin{figure}
  \centering
  \begin{subfigure}[b]{0.31\textwidth}
    \includegraphics[width=\textwidth]{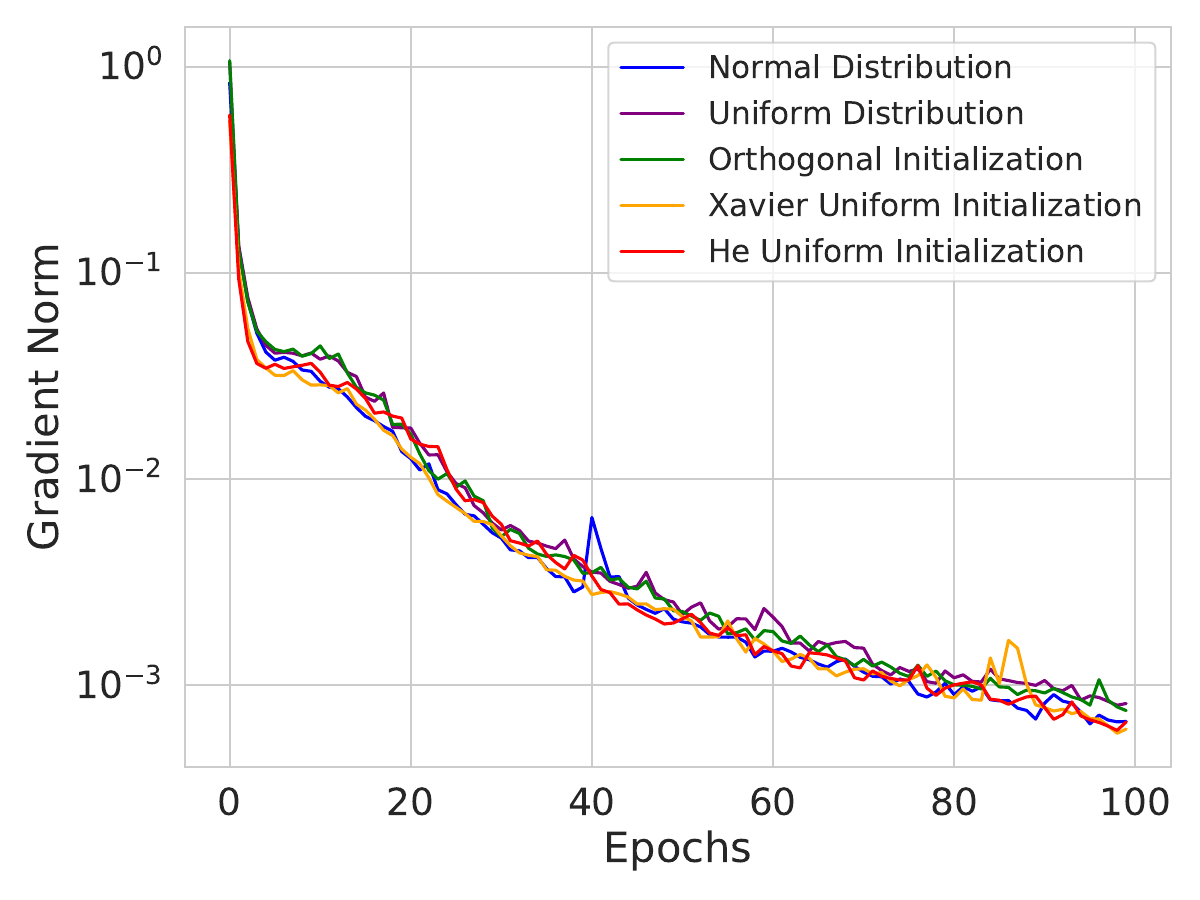}
    \caption{Gradient norm v/s Epochs}
  \end{subfigure}
  \hfill
  \begin{subfigure}[b]{0.31\textwidth}
    \includegraphics[width=\textwidth]{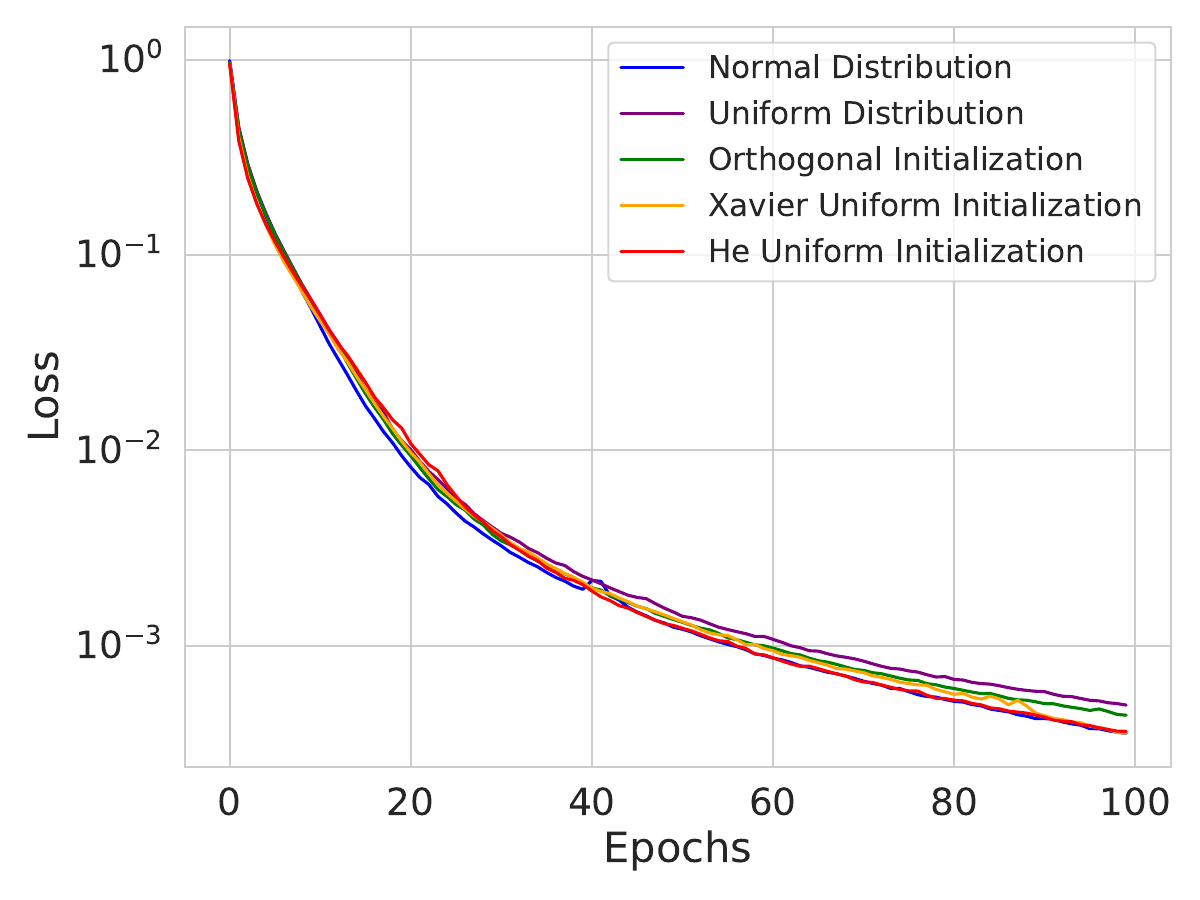}
    \caption{Training loss v/s Epochs}
  \end{subfigure}
  \hfill
  \begin{subfigure}[b]{0.31\textwidth}
    \includegraphics[width=\textwidth]{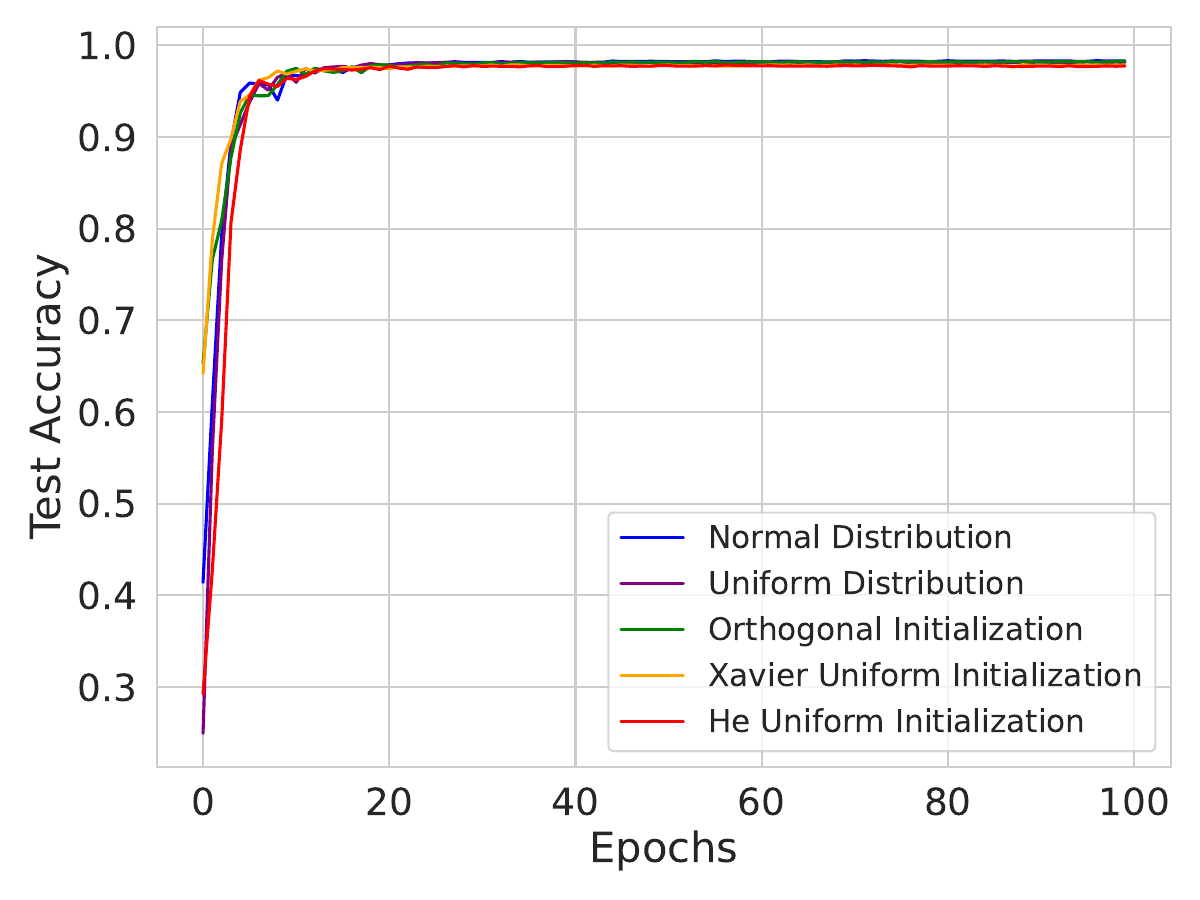}
    \caption{Validation acc. v/s Epochs}
  \end{subfigure}
  \caption{Effect of initialization on our learning rate and model performance. \textbf{Mini batch} experiments conducted on a single layer network with 1000 nodes on the MNIST.}
\label{fig:mb_distribution_1_1000}
\end{figure}
\begin{figure}
  \centering
  \begin{subfigure}[b]{0.31\textwidth}
    \includegraphics[width=\textwidth]{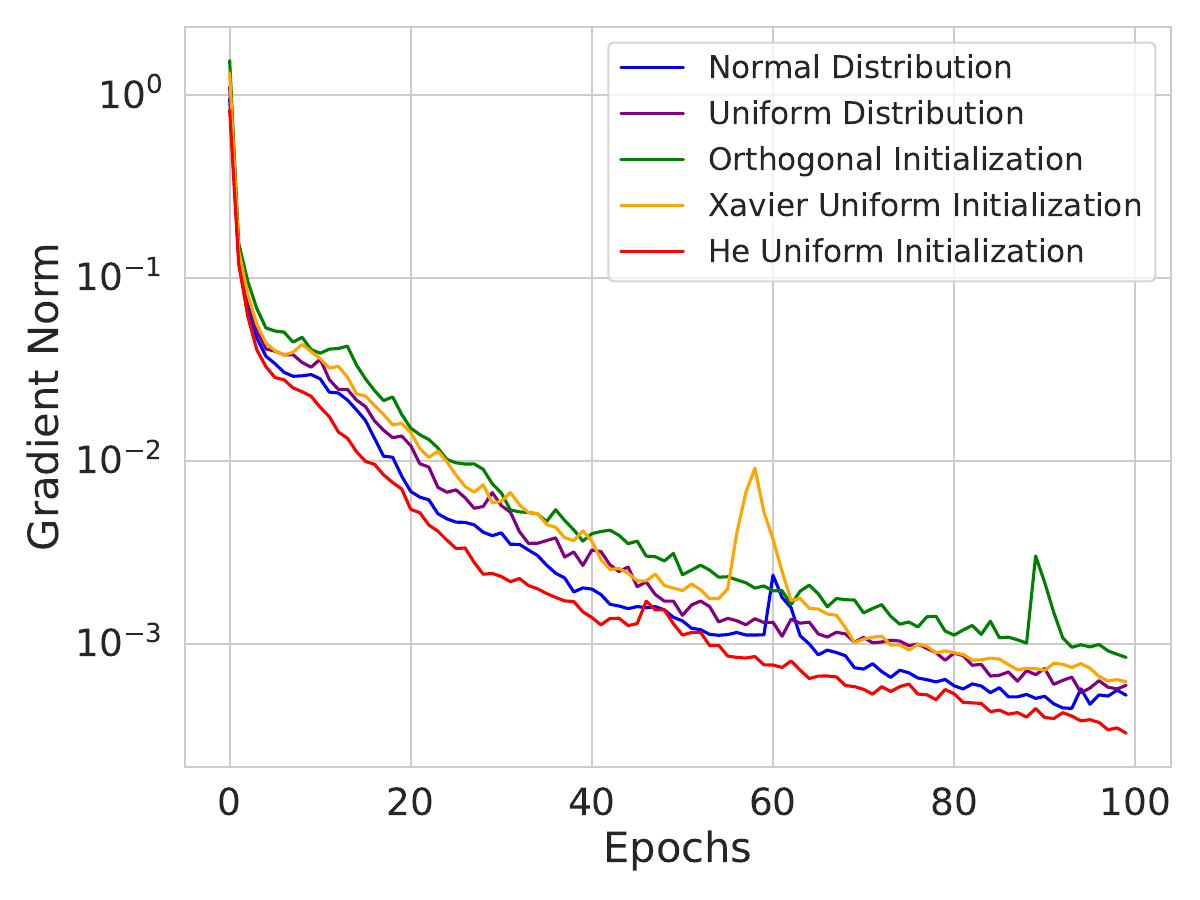}
    \caption{Gradient norm v/s Epochs}
  \end{subfigure}
  \hfill
  \begin{subfigure}[b]{0.31\textwidth}
    \includegraphics[width=\textwidth]{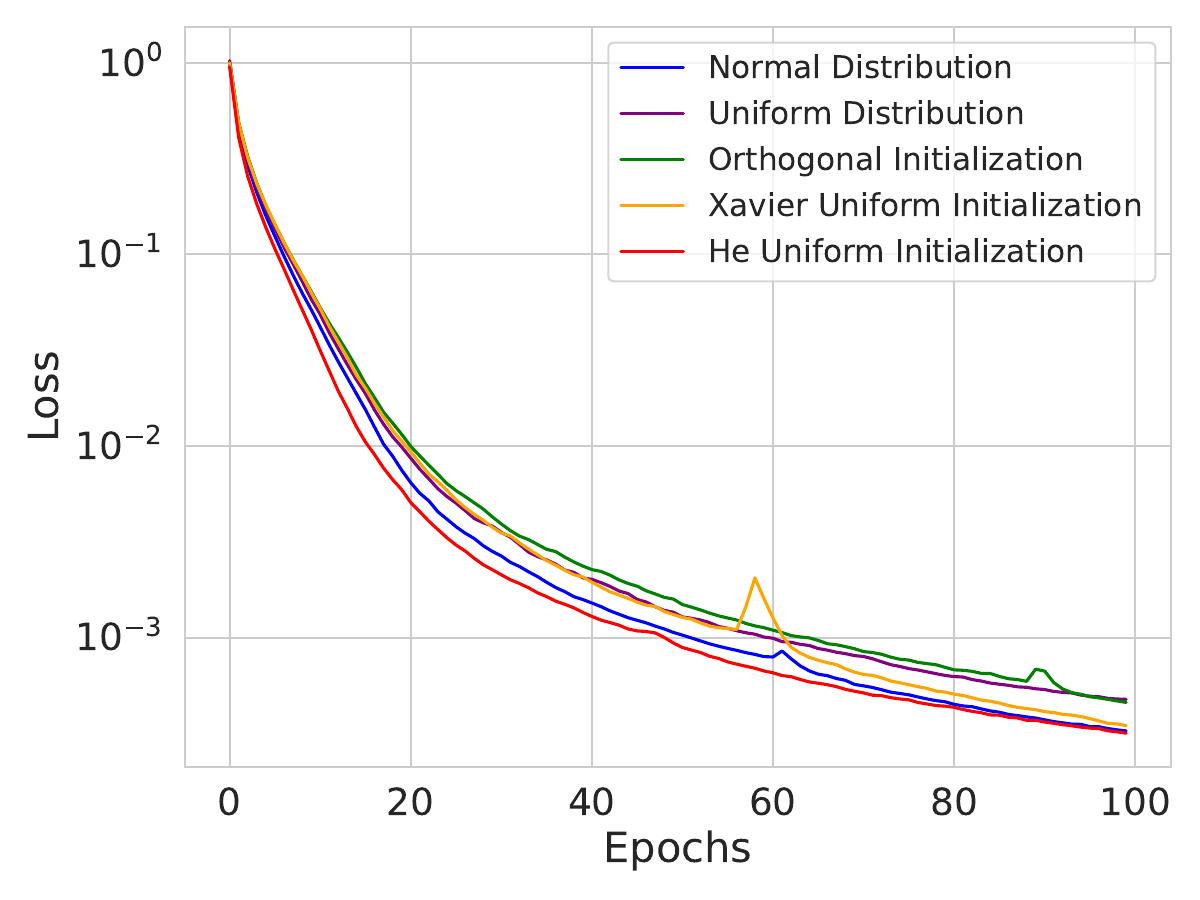}
    \caption{Training loss v/s Epochs}
  \end{subfigure}
  \hfill
  \begin{subfigure}[b]{0.31\textwidth}
    \includegraphics[width=\textwidth]{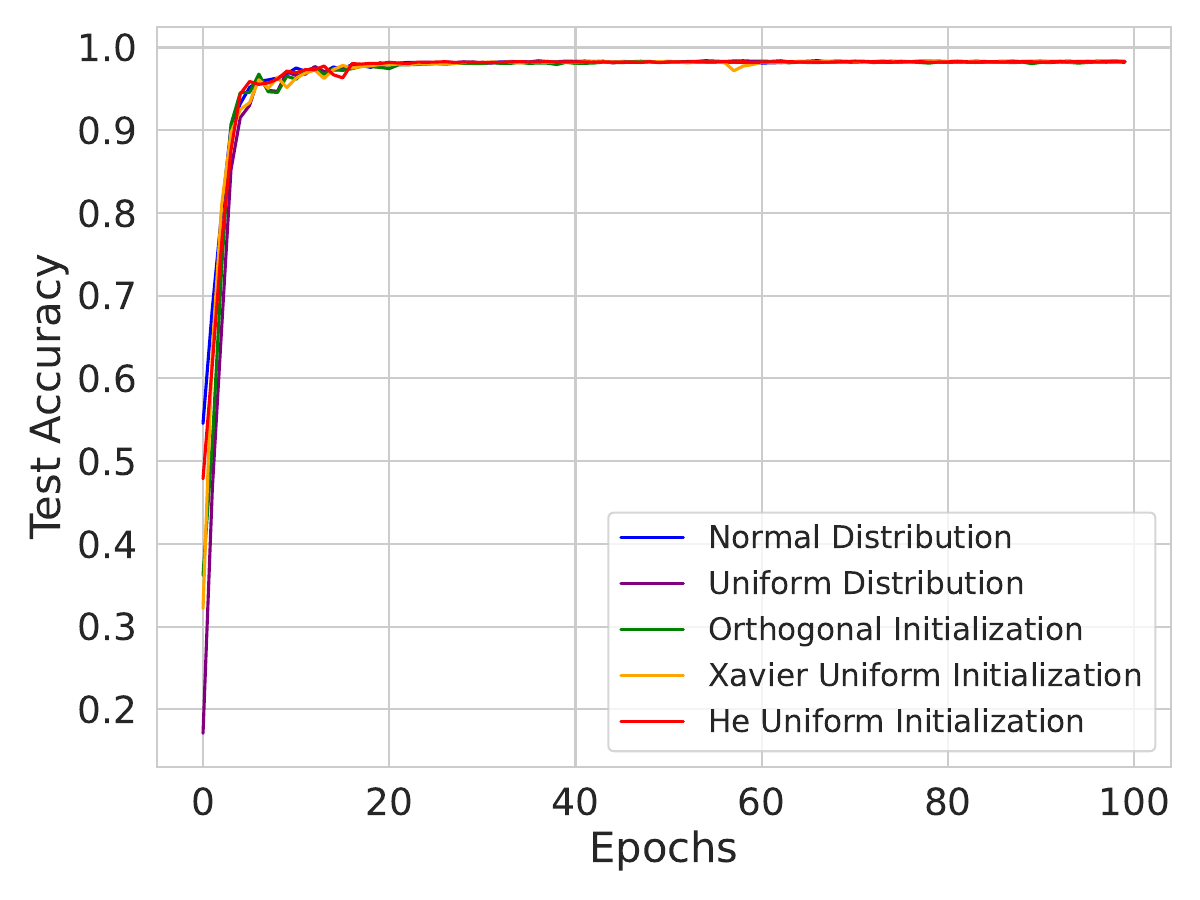}
    \caption{Validation acc. v/s Epochs}
  \end{subfigure}
  \caption{Effect of initialization on our learning rate and model performance. \textbf{Mini batch} experiments conducted on a single layer network with 3000 nodes on the MNIST.}
\label{fig:mb_distribution_1__3000}
\end{figure}
\begin{figure}
  \centering
  \begin{subfigure}[b]{0.31\textwidth}
    \includegraphics[width=\textwidth]{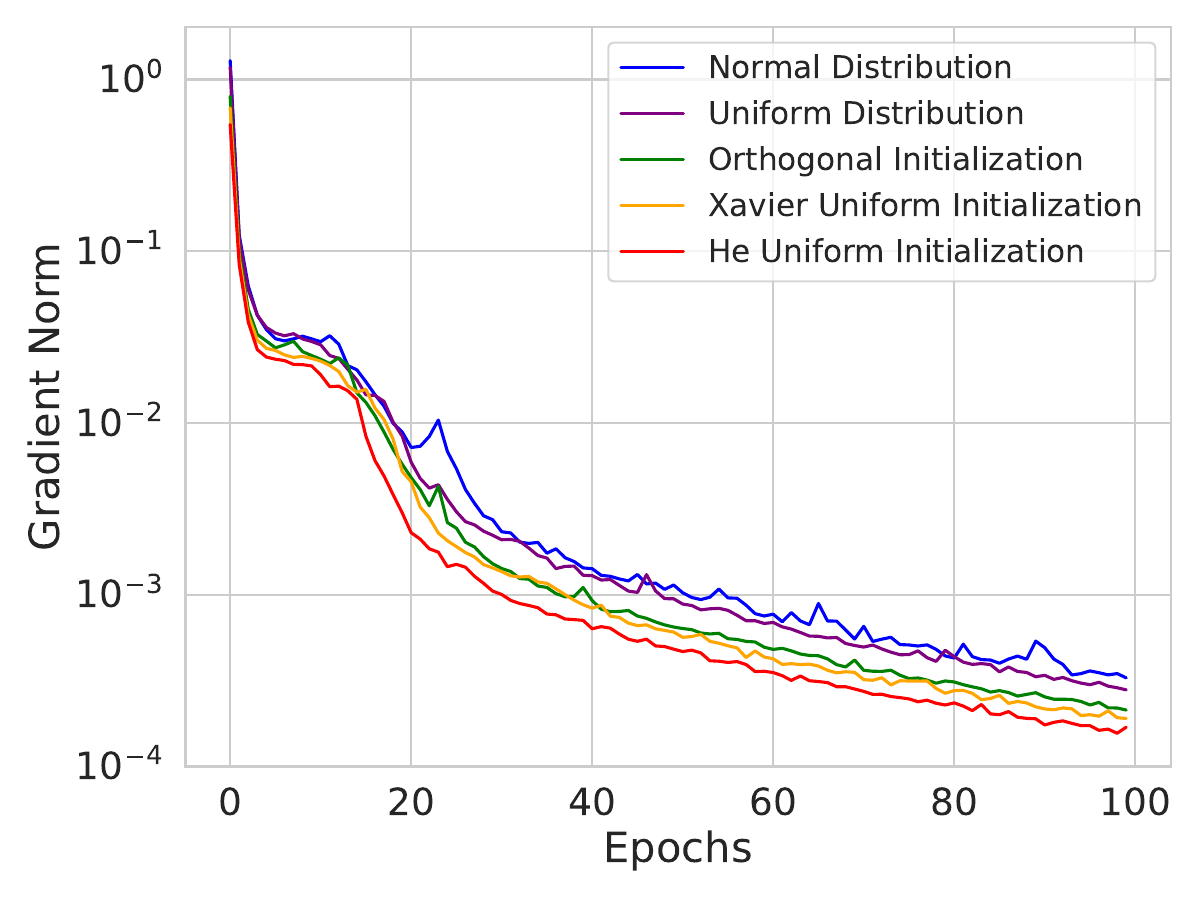}
    \caption{Gradient norm v/s Epochs}
  \end{subfigure}
  \hfill
  \begin{subfigure}[b]{0.31\textwidth}
    \includegraphics[width=\textwidth]{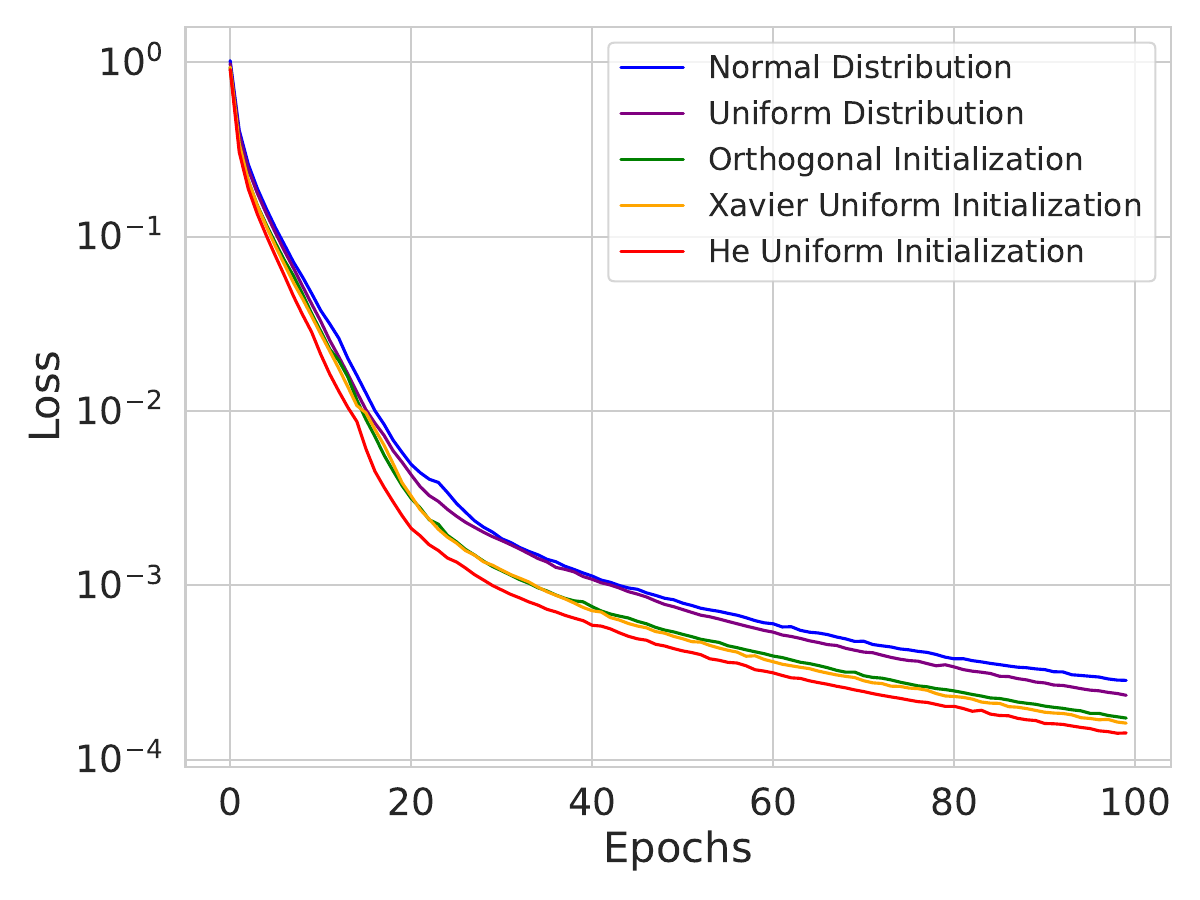}
    \caption{Training loss v/s Epochs}
  \end{subfigure}
  \hfill
  \begin{subfigure}[b]{0.31\textwidth}
    \includegraphics[width=\textwidth]{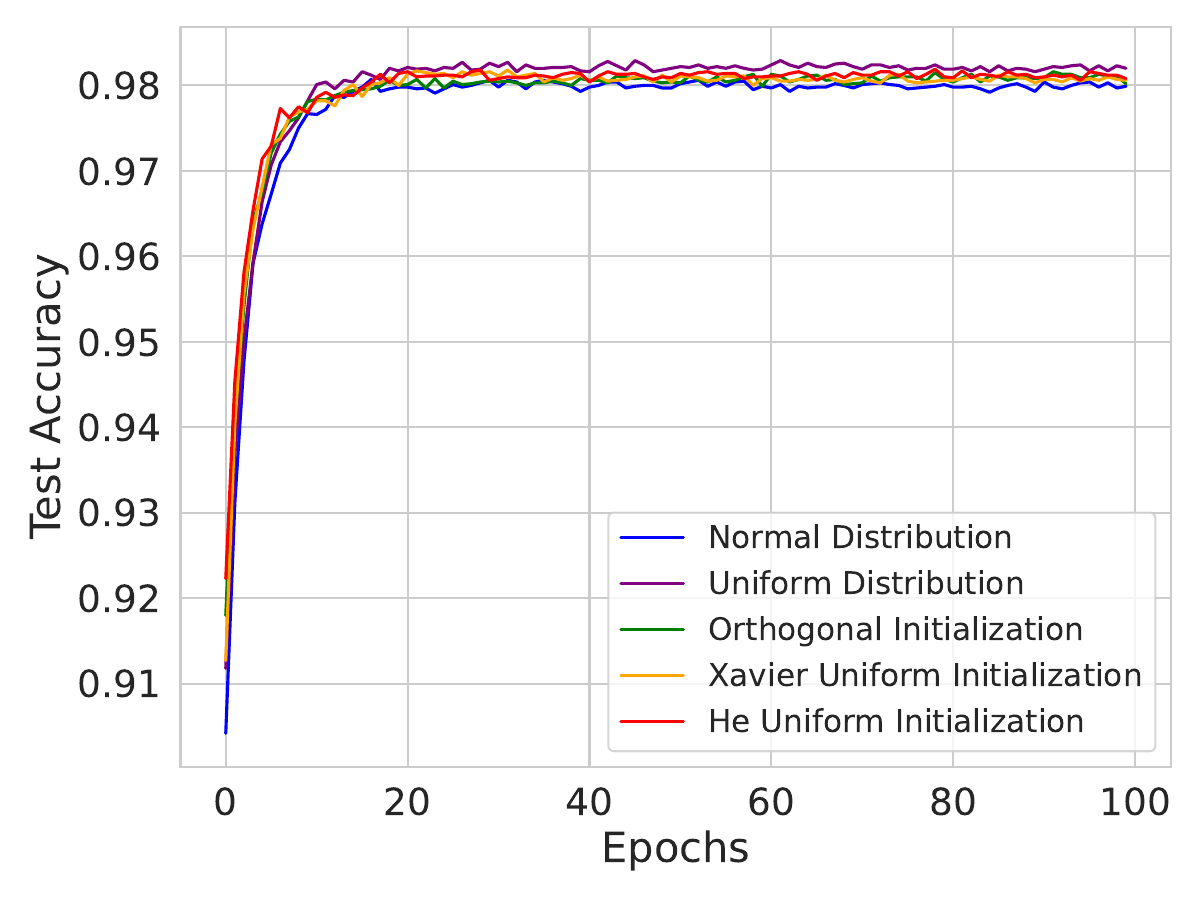}
    \caption{Validation acc. v/s Epochs}
  \end{subfigure}
  \caption{Effect of initialization on our learning rate and model performance. \textbf{Mini batch} experiments conducted on a 3 layer network with 100 nodes on the MNIST.}
\label{fig:mb_distribution_3__100}
\end{figure}
\begin{figure}
  \centering
  \begin{subfigure}[b]{0.31\textwidth}
    \includegraphics[width=\textwidth]{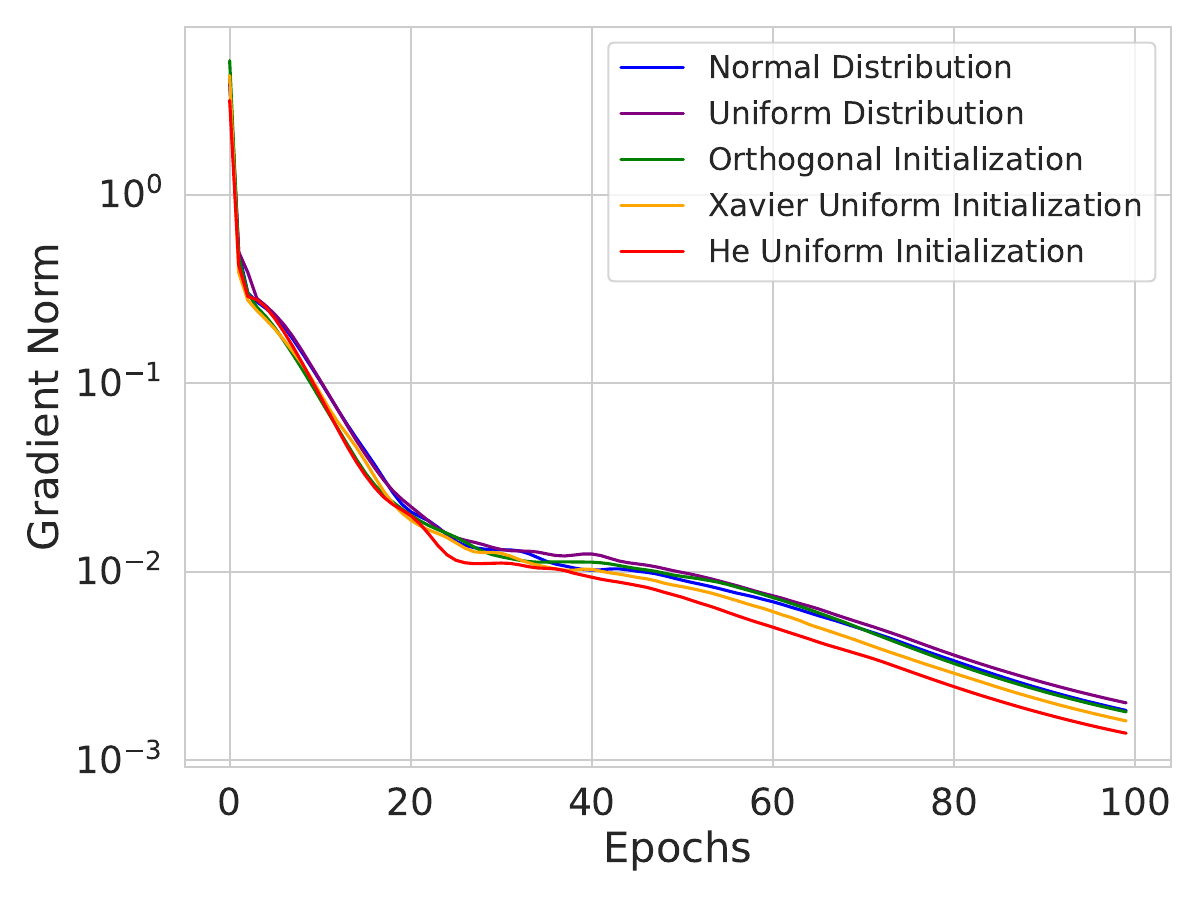}
    \caption{Gradient norm v/s Epochs}
  \end{subfigure}
  \hfill
  \begin{subfigure}[b]{0.31\textwidth}
    \includegraphics[width=\textwidth]{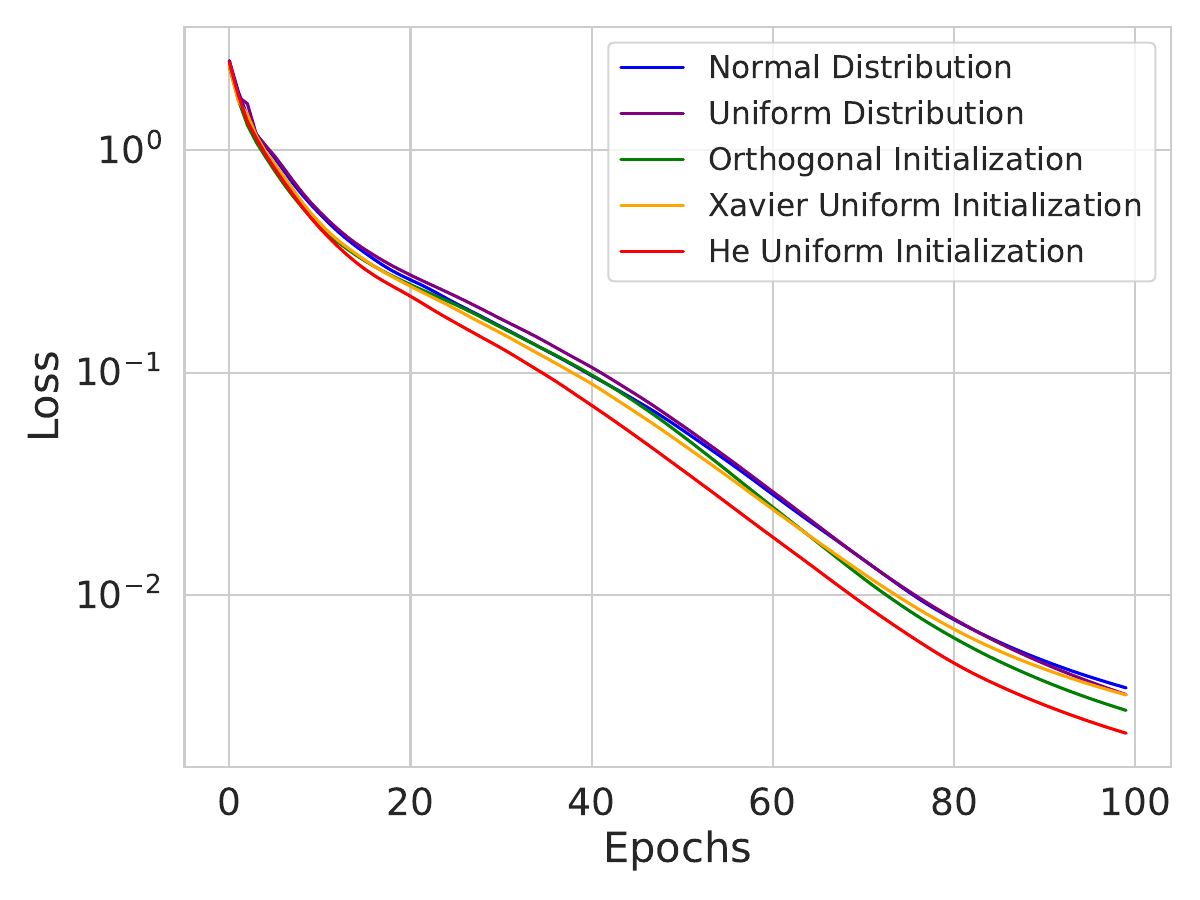}
    \caption{Training loss v/s Epochs}
  \end{subfigure}
  \hfill
  \begin{subfigure}[b]{0.31\textwidth}
    \includegraphics[width=\textwidth]{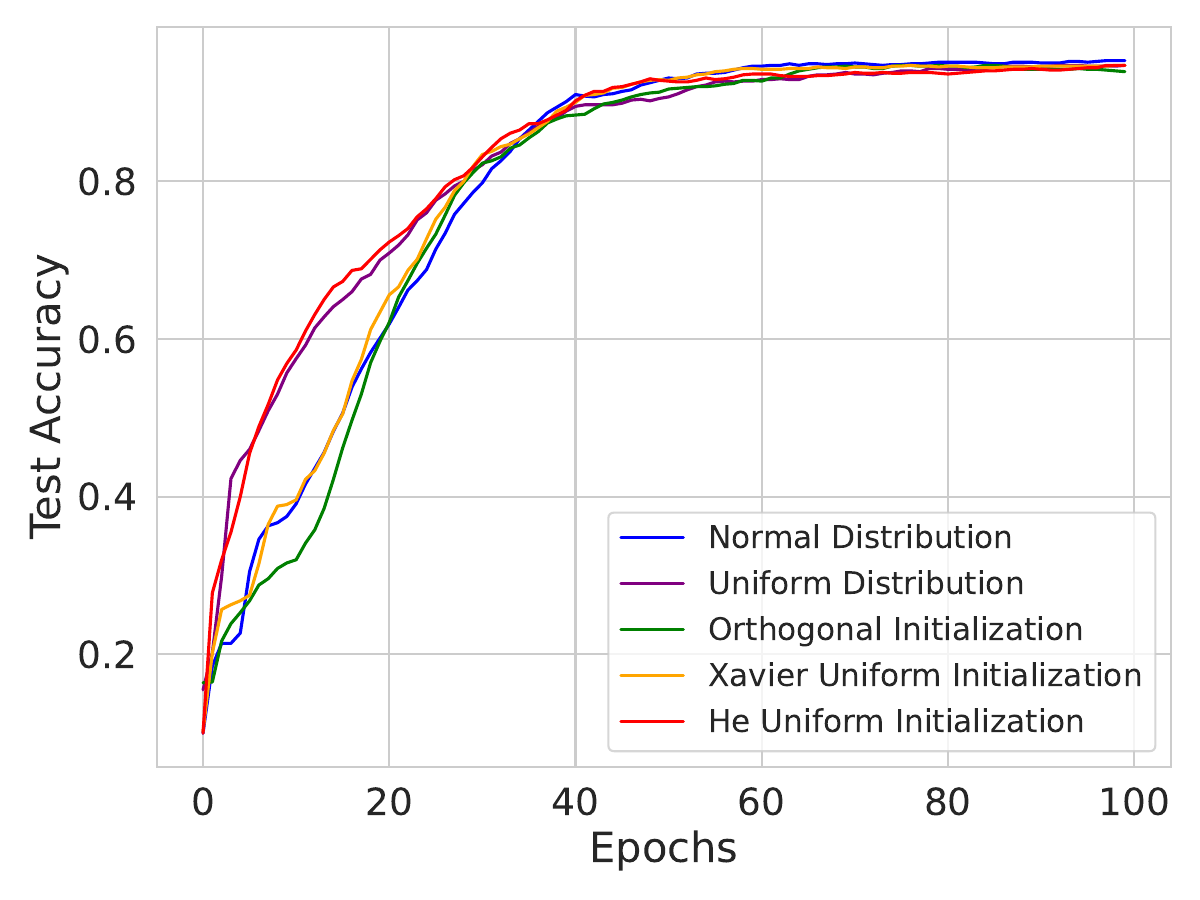}
    \caption{Validation acc. v/s Epochs}
  \end{subfigure}
  \caption{Effect of initialization on our learning rate and model performance. \textbf{Mini batch} experiments conducted on a 3 layer network with 1000 nodes on the MNIST.}
\label{fig:mb_distribution_3__1000}
\end{figure}
\begin{figure}
  \centering
  \begin{subfigure}[b]{0.31\textwidth}
    \includegraphics[width=\textwidth]{results/linear/distributions_exp/mini_batch/3__3000/grad_norm_new.pdf}
    \caption{Gradient norm v/s Epochs}
  \end{subfigure}
  \hfill
  \begin{subfigure}[b]{0.31\textwidth}
    \includegraphics[width=\textwidth]{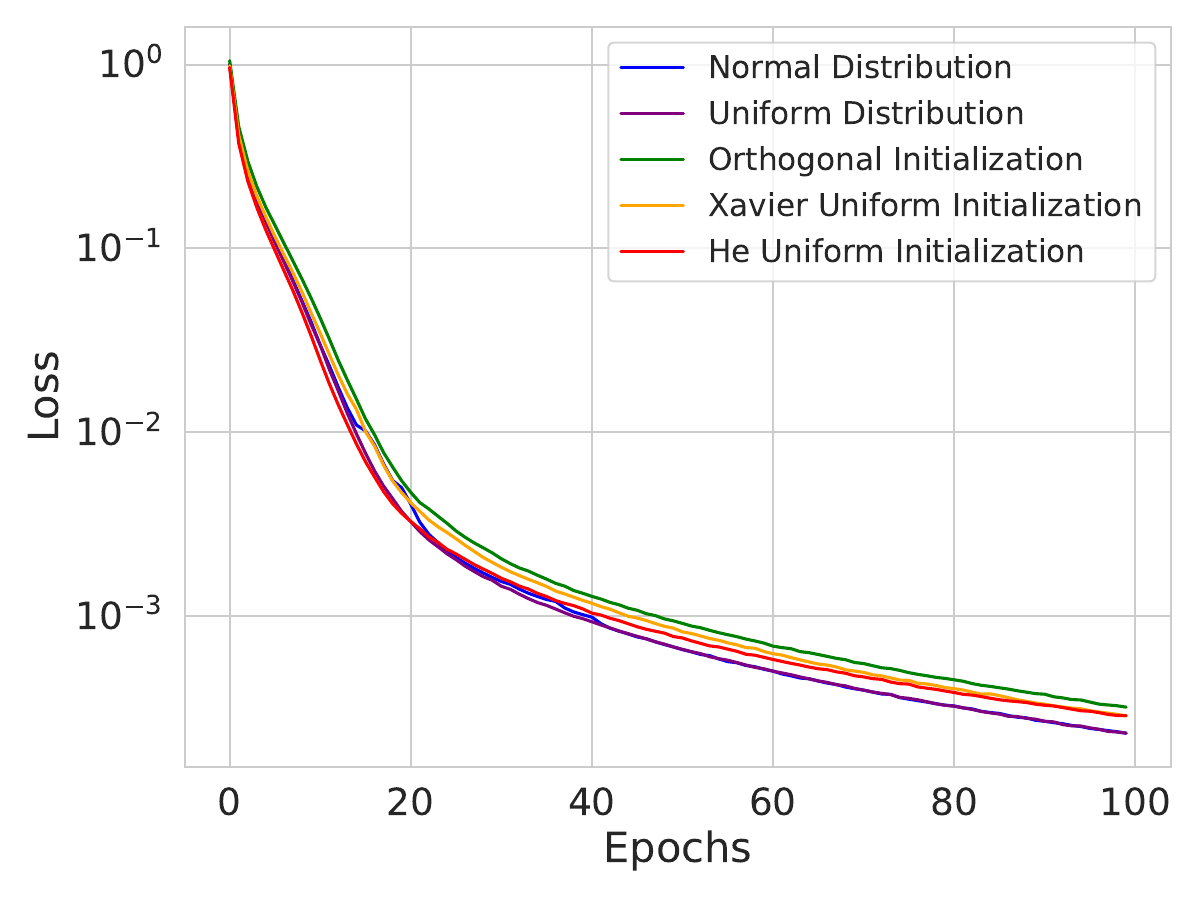}
    \caption{Training loss v/s Epochs}
  \end{subfigure}
  \hfill
  \begin{subfigure}[b]{0.31\textwidth}
    \includegraphics[width=\textwidth]{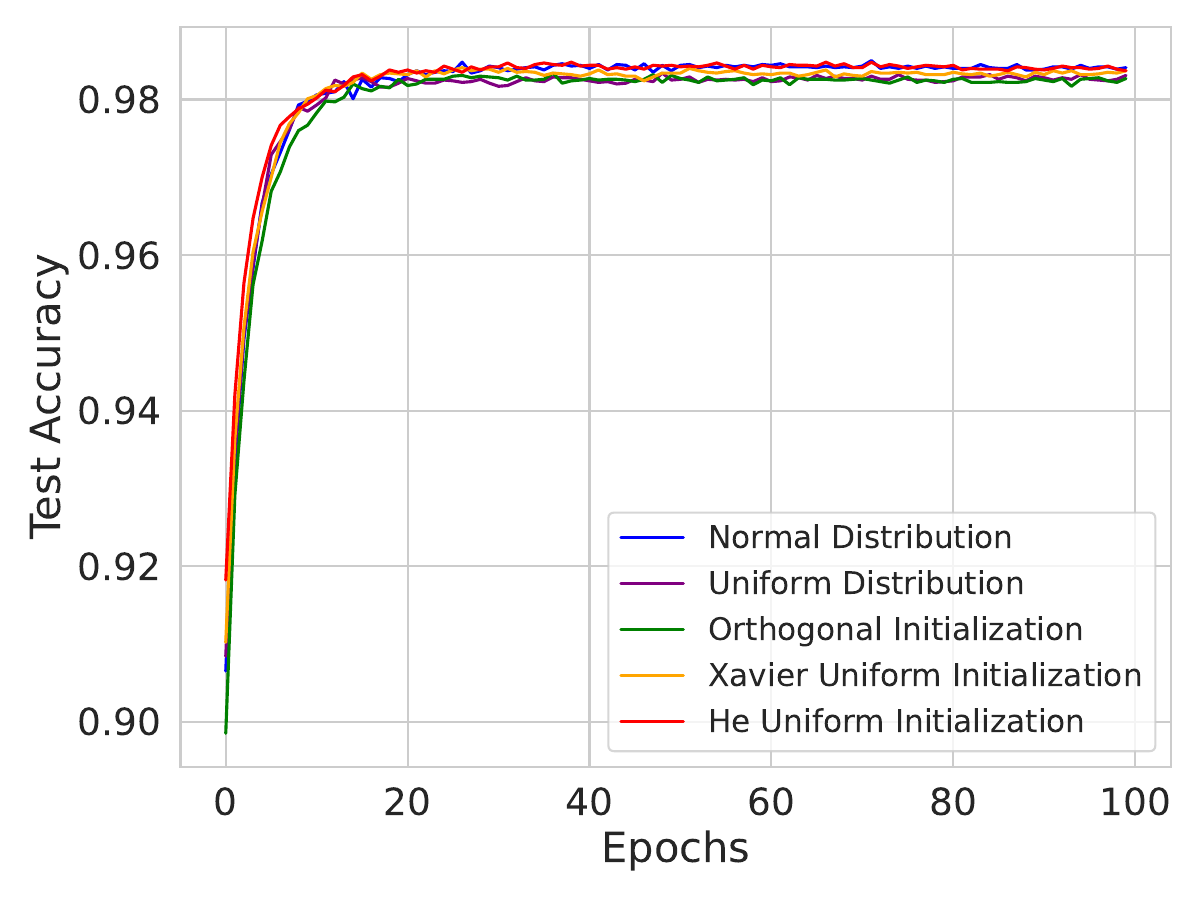}
    \caption{Validation acc. v/s Epochs}
  \end{subfigure}
  \caption{Effect of initialization on our learning rate and model performance. \textbf{Mini batch} experiments conducted on a 3 layer network with 3000 nodes on the MNIST.}
\label{fig:mb_distribution_3__3000}
\end{figure}
\begin{figure}
  \centering
  \begin{subfigure}[b]{0.31\textwidth}
    \includegraphics[width=\textwidth]{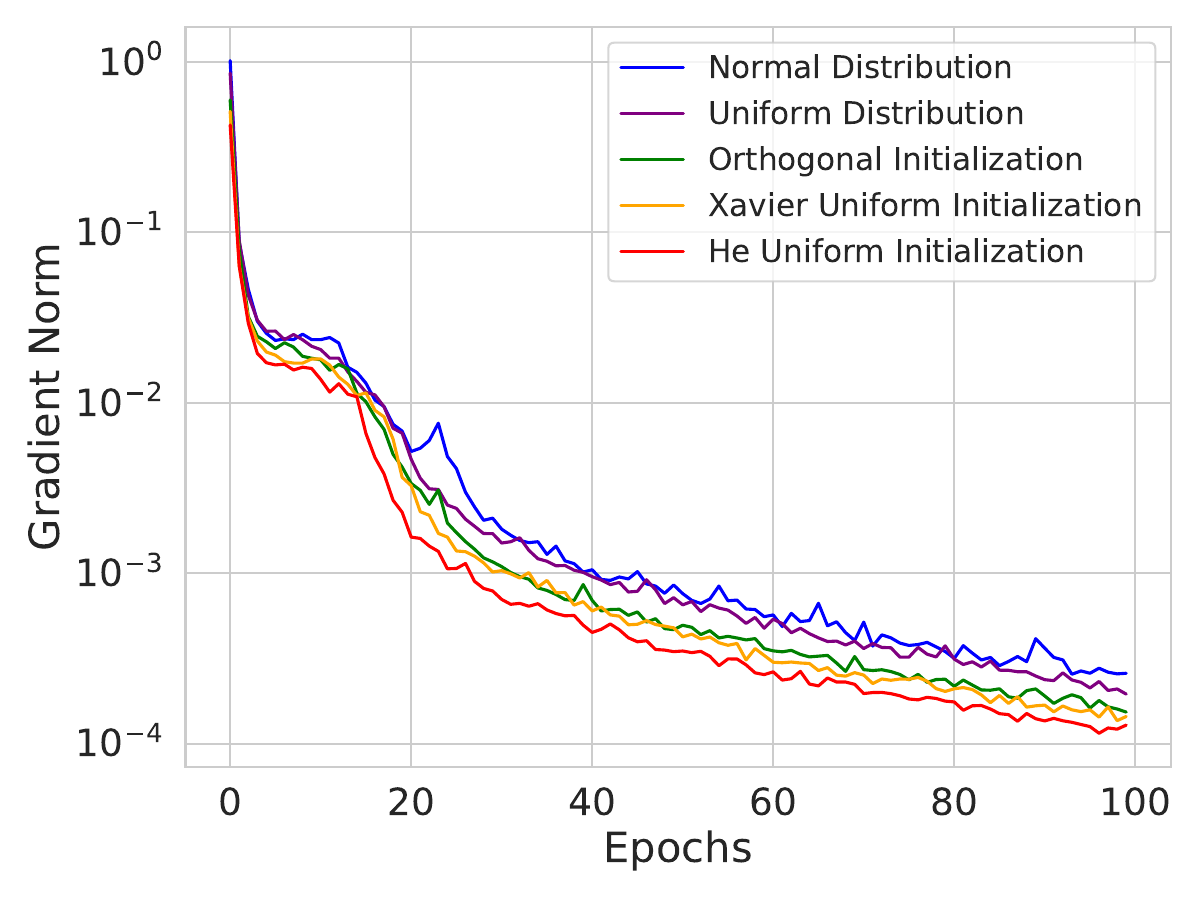}
    \caption{Gradient norm v/s Epochs}
  \end{subfigure}
  \hfill
  \begin{subfigure}[b]{0.31\textwidth}
    \includegraphics[width=\textwidth]{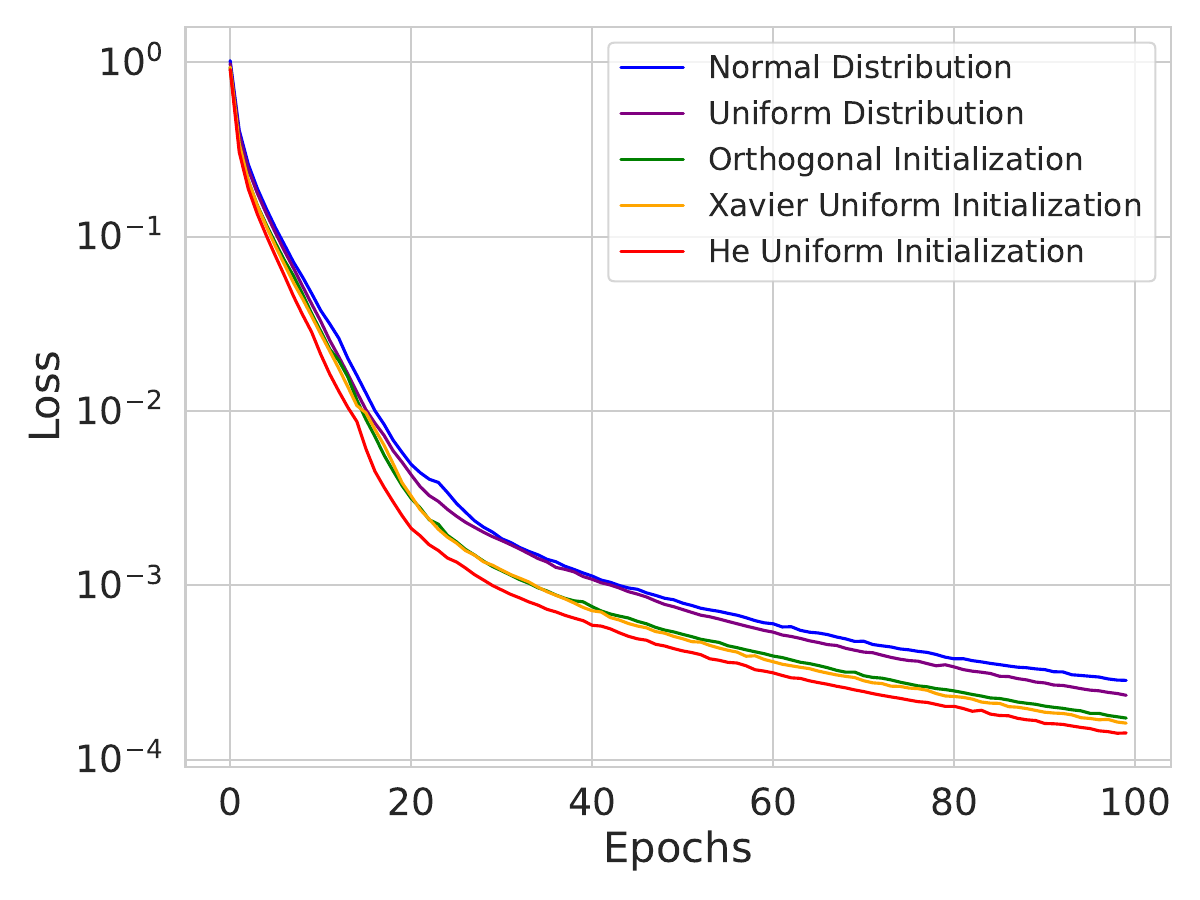}
    \caption{Training loss v/s Epochs}
  \end{subfigure}
  \hfill
  \begin{subfigure}[b]{0.31\textwidth}
    \includegraphics[width=\textwidth]{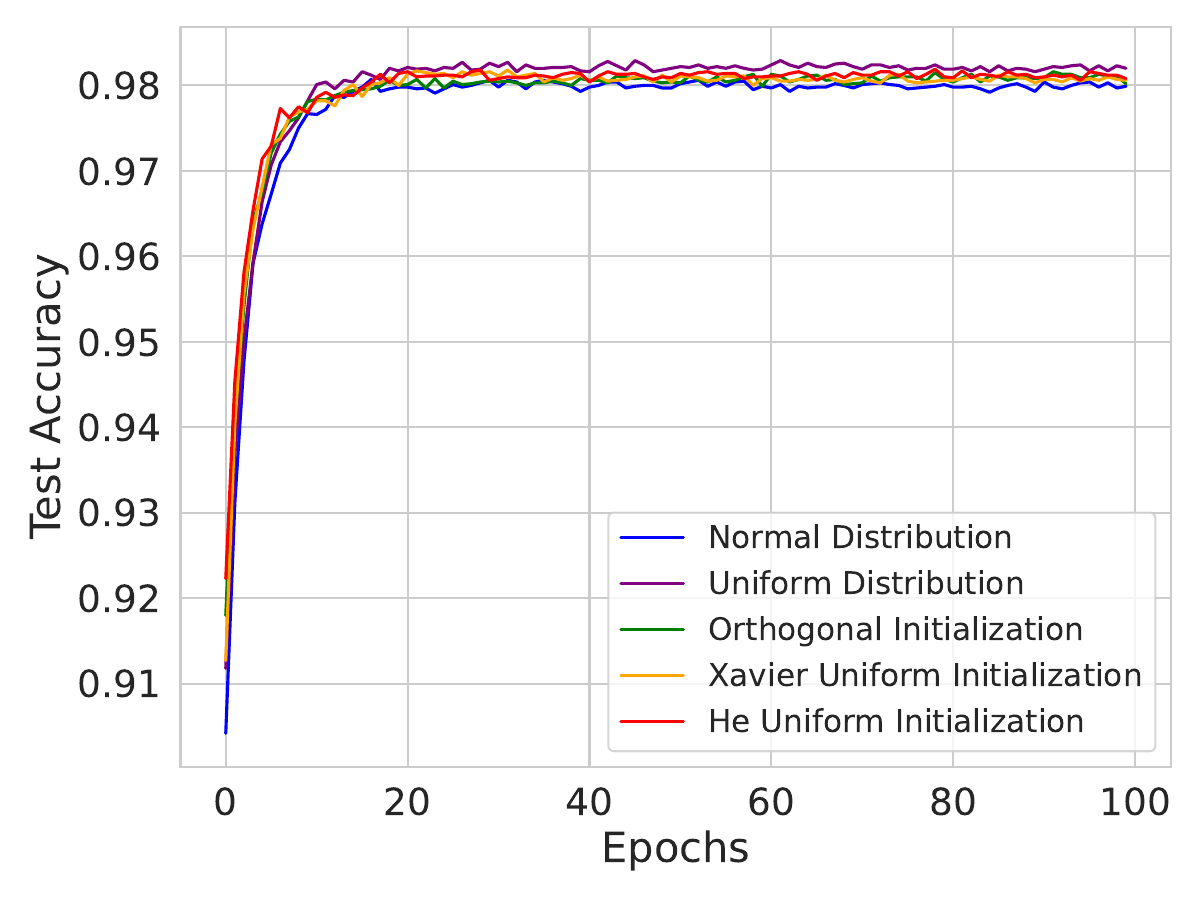}
    \caption{Validation acc. v/s Epochs}
  \end{subfigure}
  \caption{Effect of initialization on our learning rate and model performance. \textbf{Mini batch} experiments conducted on a 5 layer network with 300 nodes on the MNIST.}
\label{fig:mb_distribution_5__300}
\end{figure}
\begin{figure}
  \centering
  \begin{subfigure}[b]{0.31\textwidth}
    \includegraphics[width=\textwidth]{results/linear/distributions_exp/full_batch/5__1000/grad_norm_new.pdf}
    \caption{Gradient norm v/s Epochs}
  \end{subfigure}
  \hfill
  \begin{subfigure}[b]{0.31\textwidth}
    \includegraphics[width=\textwidth]{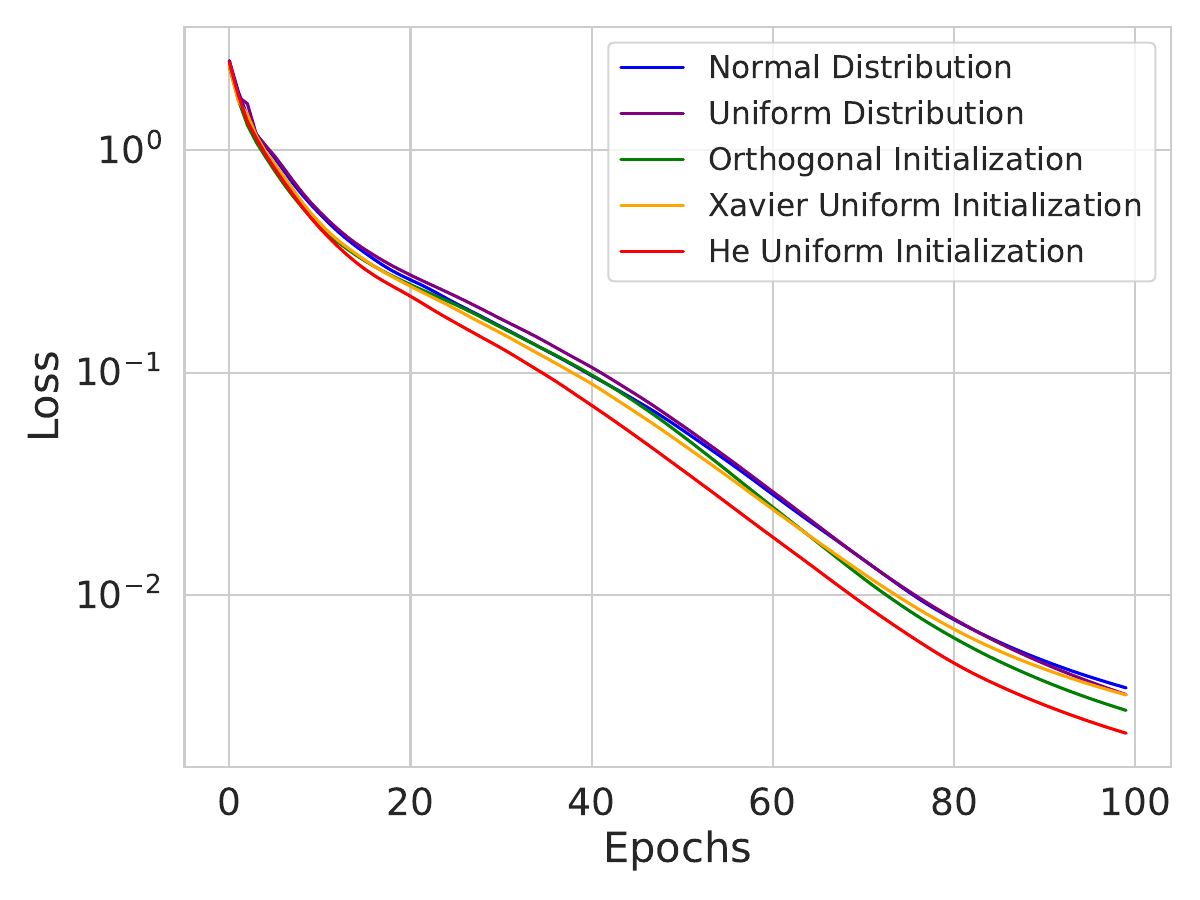}
    \caption{Training loss v/s Epochs}
  \end{subfigure}
  \hfill
  \begin{subfigure}[b]{0.31\textwidth}
    \includegraphics[width=\textwidth]{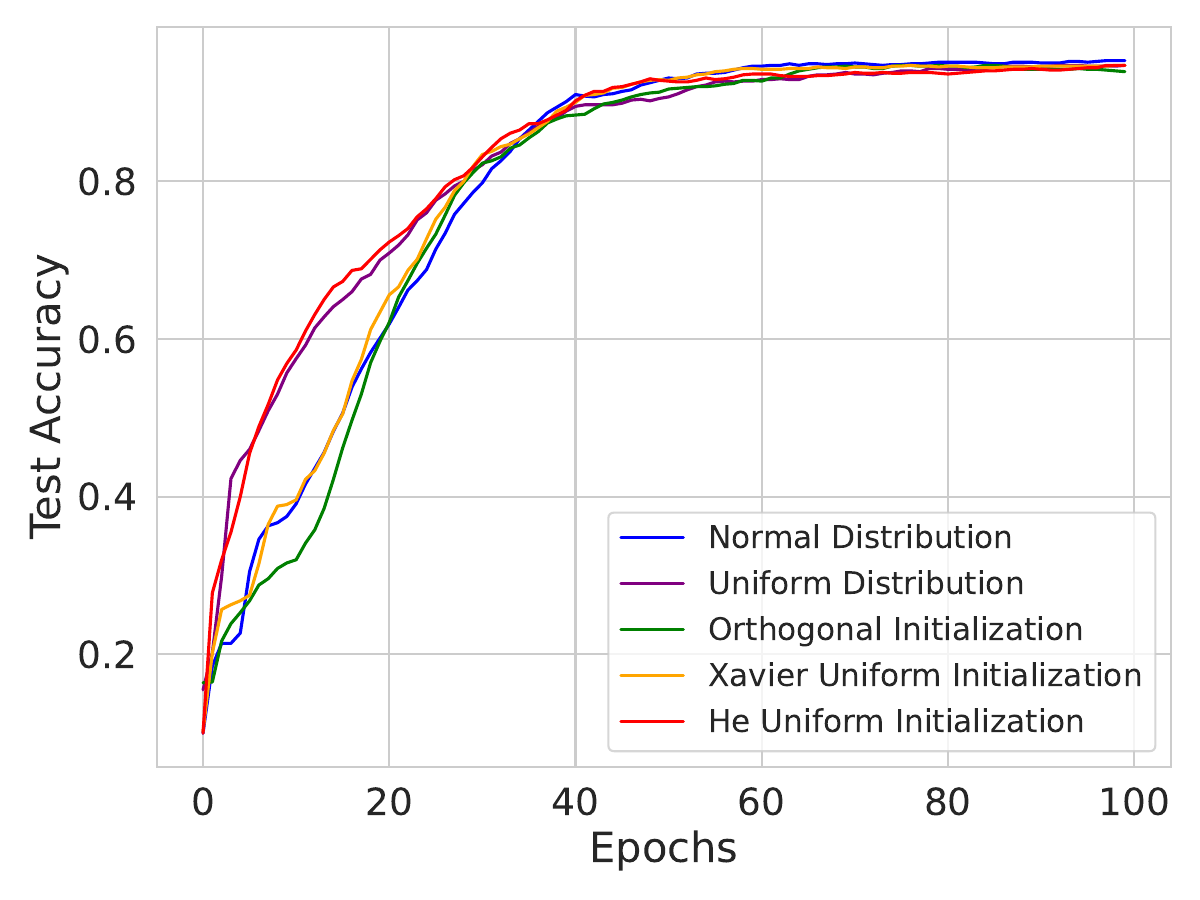}
    \caption{Validation acc. v/s Epochs}
  \end{subfigure}
  \caption{Effect of initialization on our learning rate and model performance. \textbf{Mini batch} experiments conducted on a 5 layer network with 1000 nodes on the MNIST.}
\label{fig:mb_distribution_5__1000}
\end{figure}
\begin{figure}
  \centering
  \begin{subfigure}[b]{0.31\textwidth}
    \includegraphics[width=\textwidth]{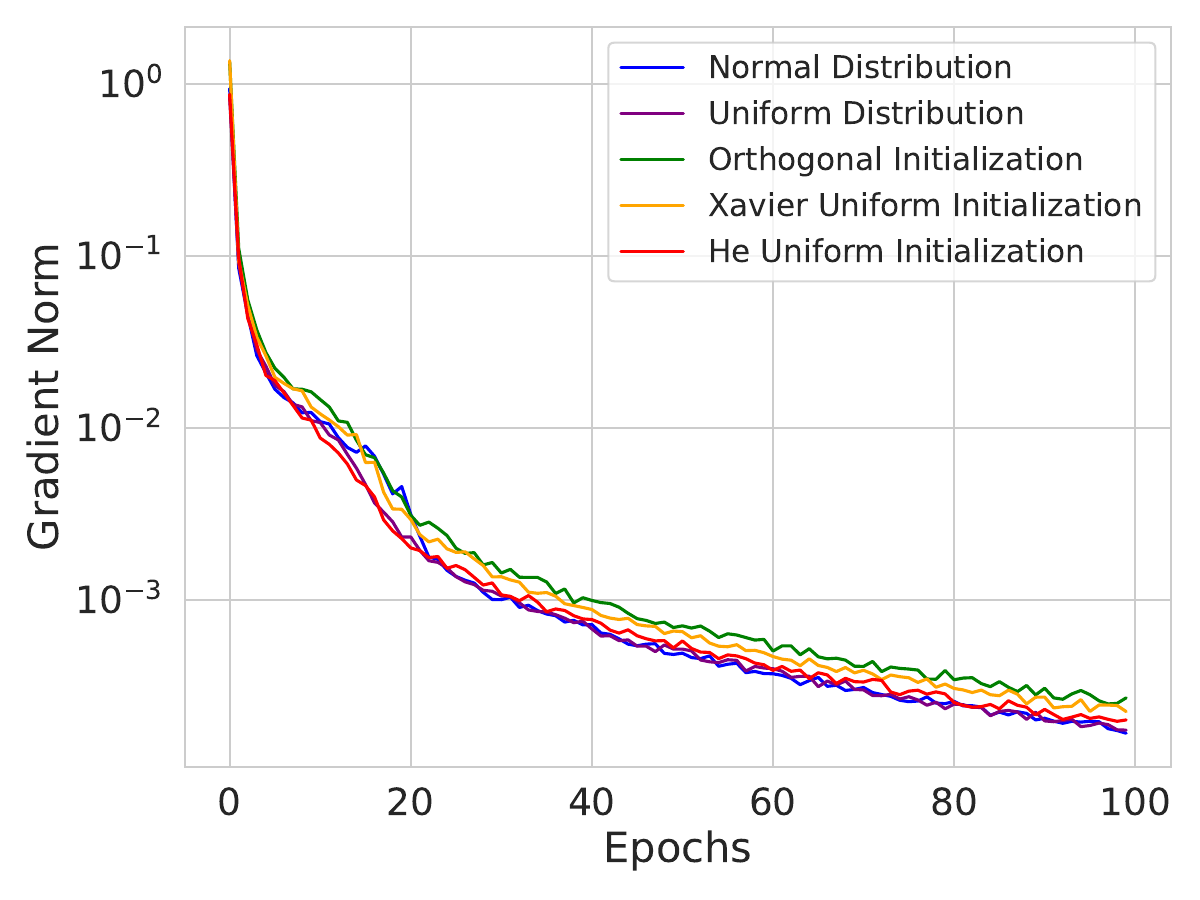}
    \caption{Gradient norm v/s Epochs}
  \end{subfigure}
  \hfill
  \begin{subfigure}[b]{0.31\textwidth}
    \includegraphics[width=\textwidth]{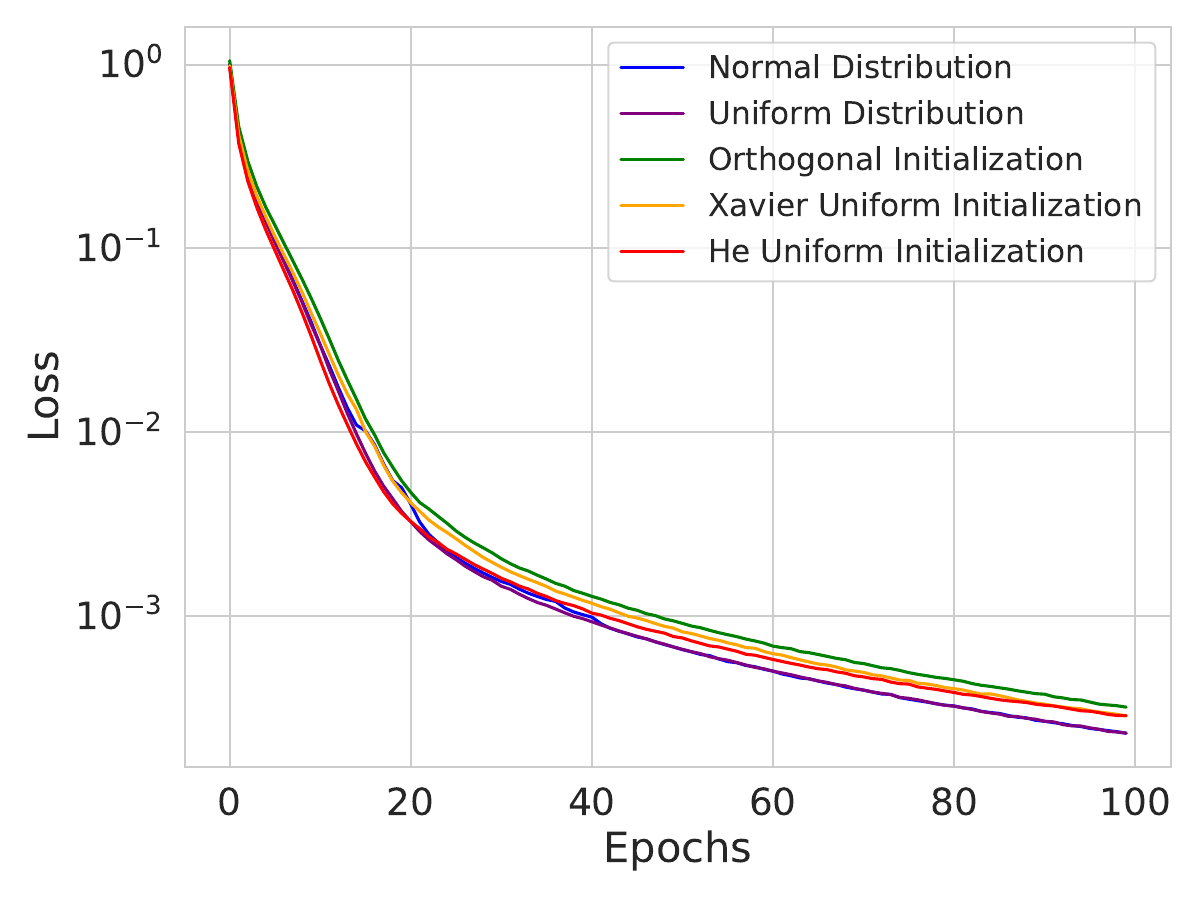}
    \caption{Training loss v/s Epochs}
  \end{subfigure}
  \hfill
  \begin{subfigure}[b]{0.31\textwidth}
    \includegraphics[width=\textwidth]{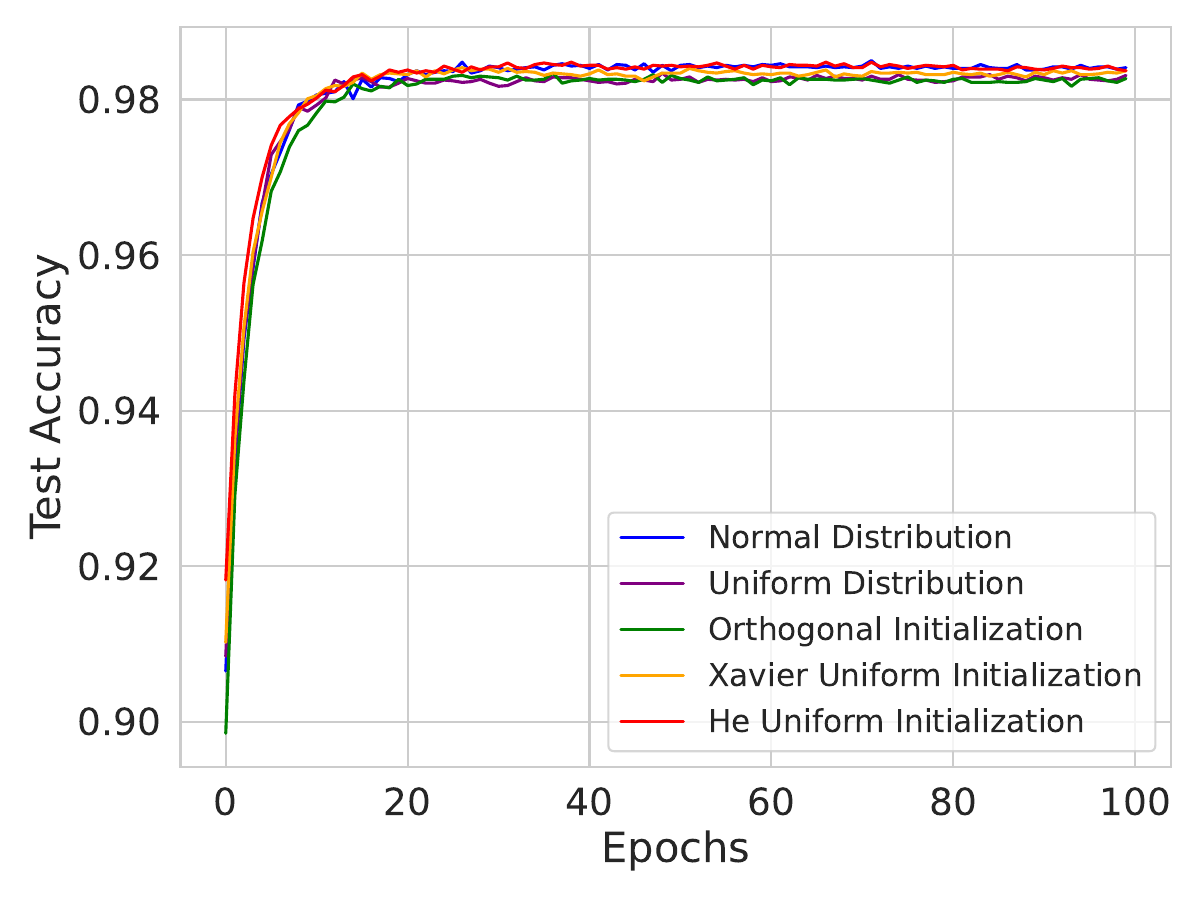}
    \caption{Validation acc. v/s Epochs}
  \end{subfigure}
  \caption{Effect of initialization on our learning rate and model performance. \textbf{Mini batch} experiments conducted on a 5 layer network with 3000 nodes on the MNIST.}
\label{fig:mb_distribution_5__3000}
\end{figure}
\begin{figure}
  \centering
  \begin{subfigure}[b]{0.31\textwidth}
    \includegraphics[width=\textwidth]{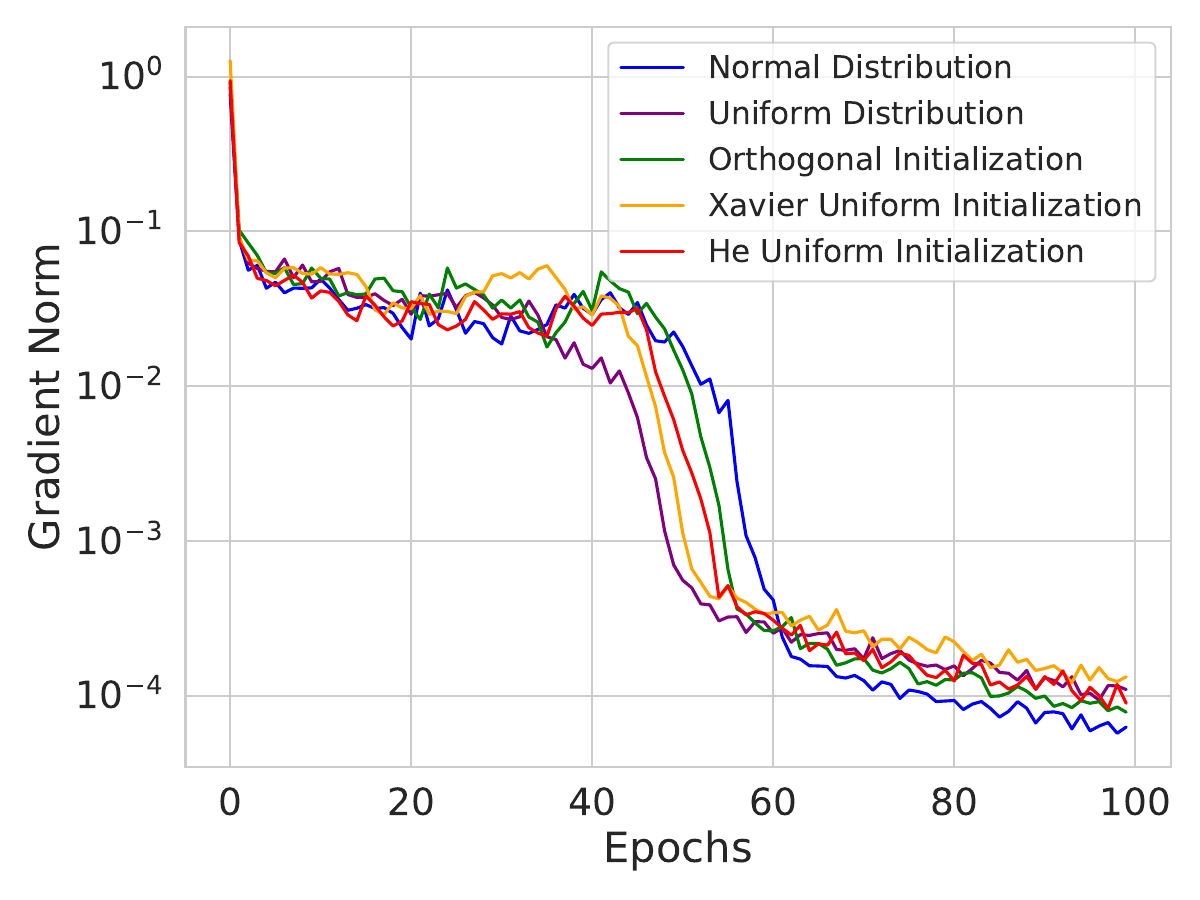}
    \caption{Gradient norm v/s Epochs}
  \end{subfigure}
  \hfill
  \begin{subfigure}[b]{0.31\textwidth}
    \includegraphics[width=\textwidth]{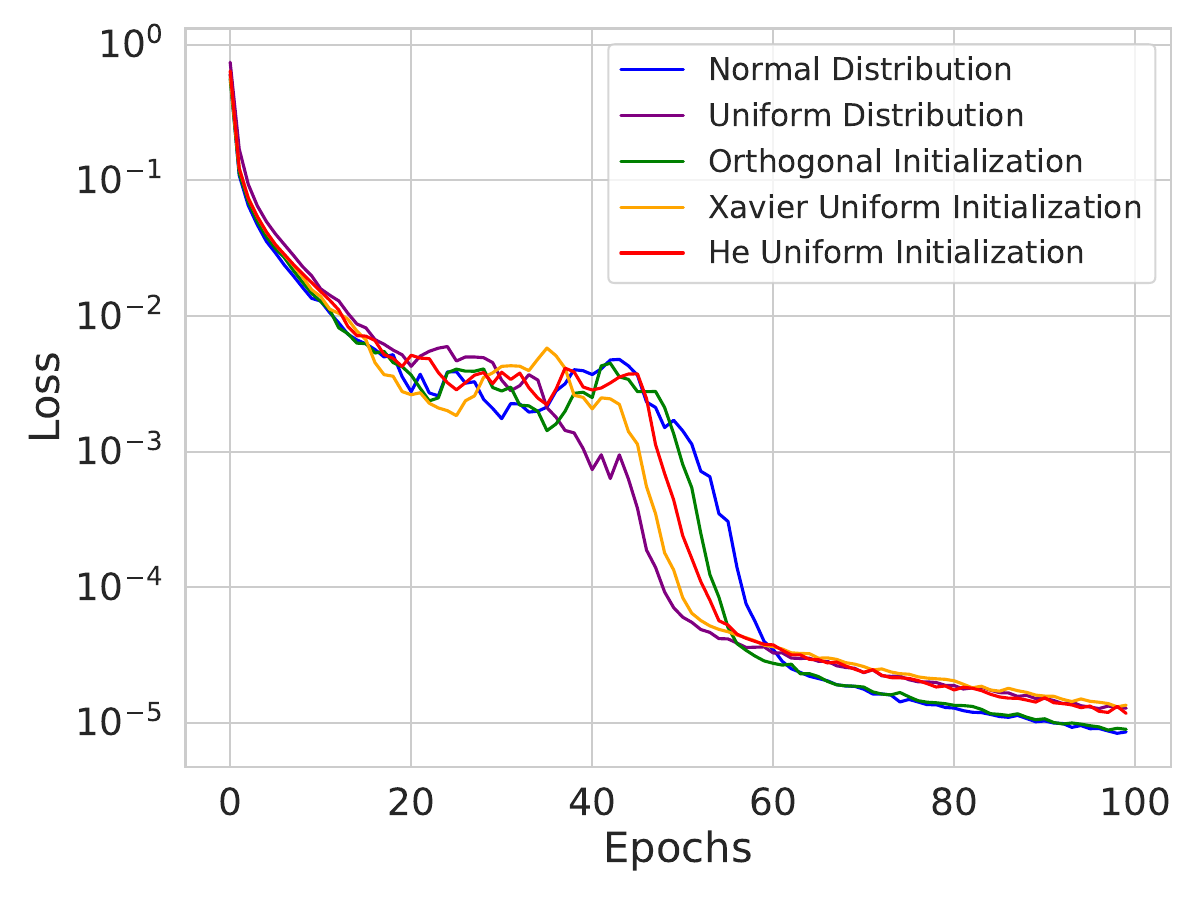}
    \caption{Training loss v/s Epochs}
  \end{subfigure}
  \hfill
  \begin{subfigure}[b]{0.31\textwidth}
    \includegraphics[width=\textwidth]{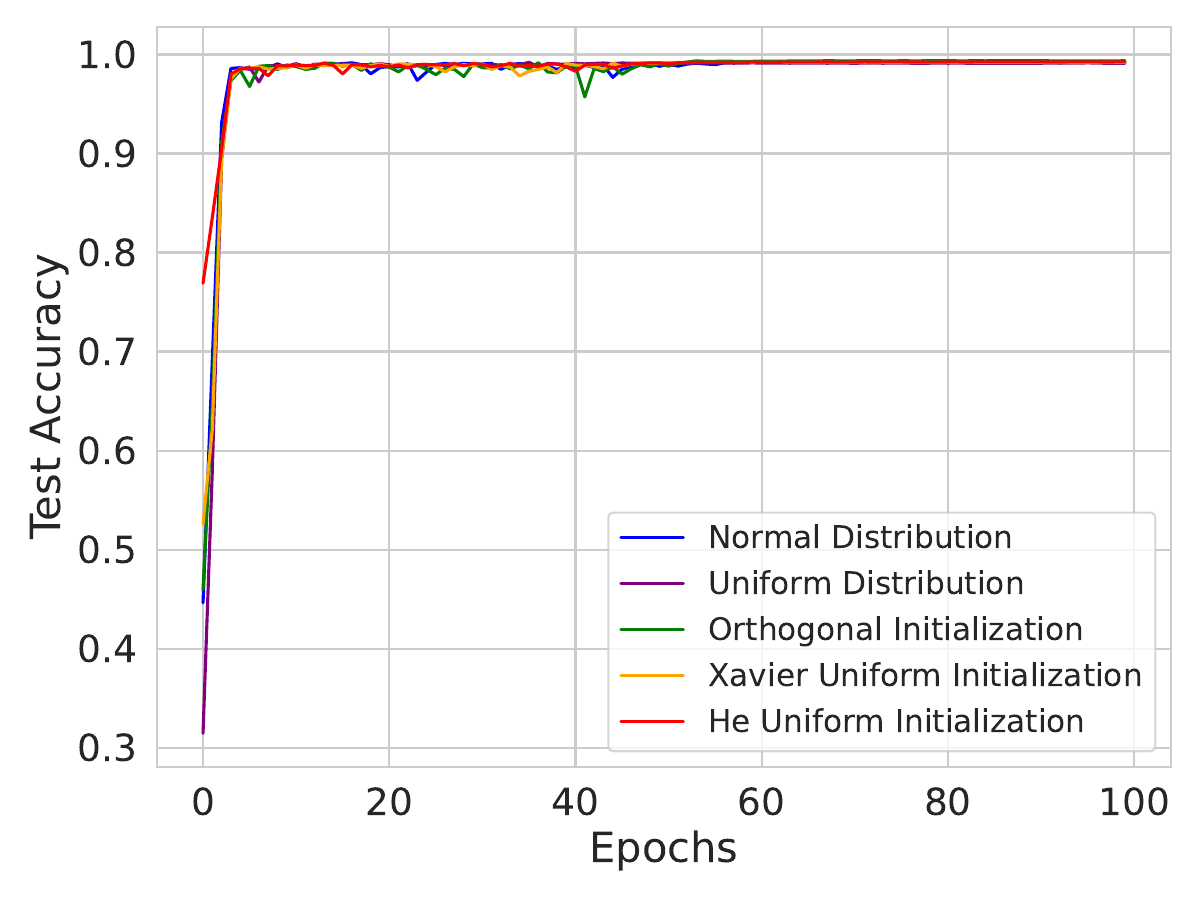}
    \caption{Validation acc. v/s Epochs}
  \end{subfigure}
  \caption{Effect of initialization on our learning rate and model performance. \textbf{Mini batch} experiments conducted on LeNet with MNIST.}
\label{fig:mb_distribution_lenet}
\end{figure}
\newpage

\textbf{C.6 VGG-9 on CIFAR-10 and LeNet on MNIST}\\

We compared the performance of our learning rate on VGG-9 and LeNet against an array of constant step sizes. VGG-9 is trained on CIFAR-10 with a batch size of 2,500 and LeNet is trained on MNIST with a batch size of 5,000. The training lasted for 100 epochs in each case. This section serves as a continuation for Section~\ref{sec:cnn_exp}.

\begin{figure}
  \centering
  \begin{subfigure}[b]{0.31\textwidth}
    \includegraphics[width=\textwidth]{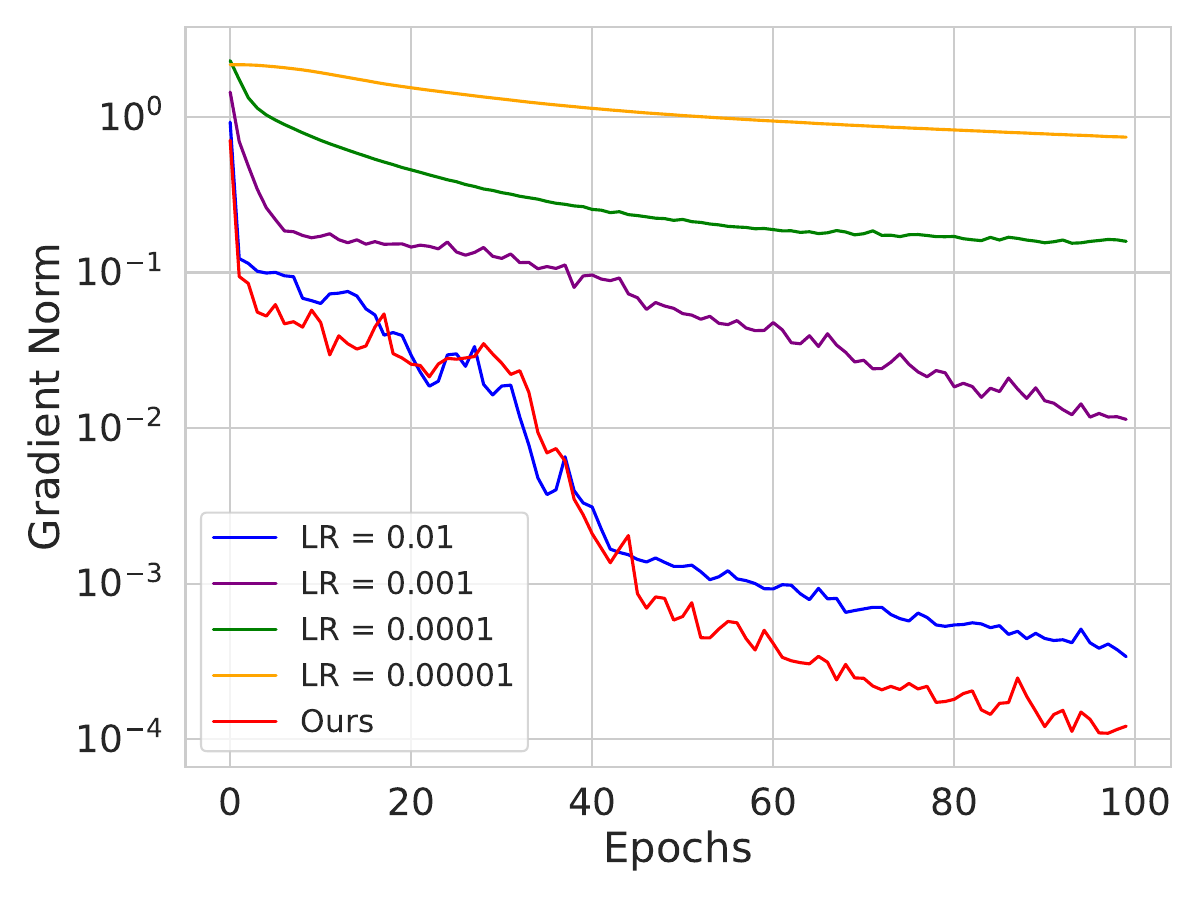}
    \caption{Gradient norm v/s Epochs}
  \end{subfigure}
  \hfill
  \begin{subfigure}[b]{0.31\textwidth}
    \includegraphics[width=\textwidth]{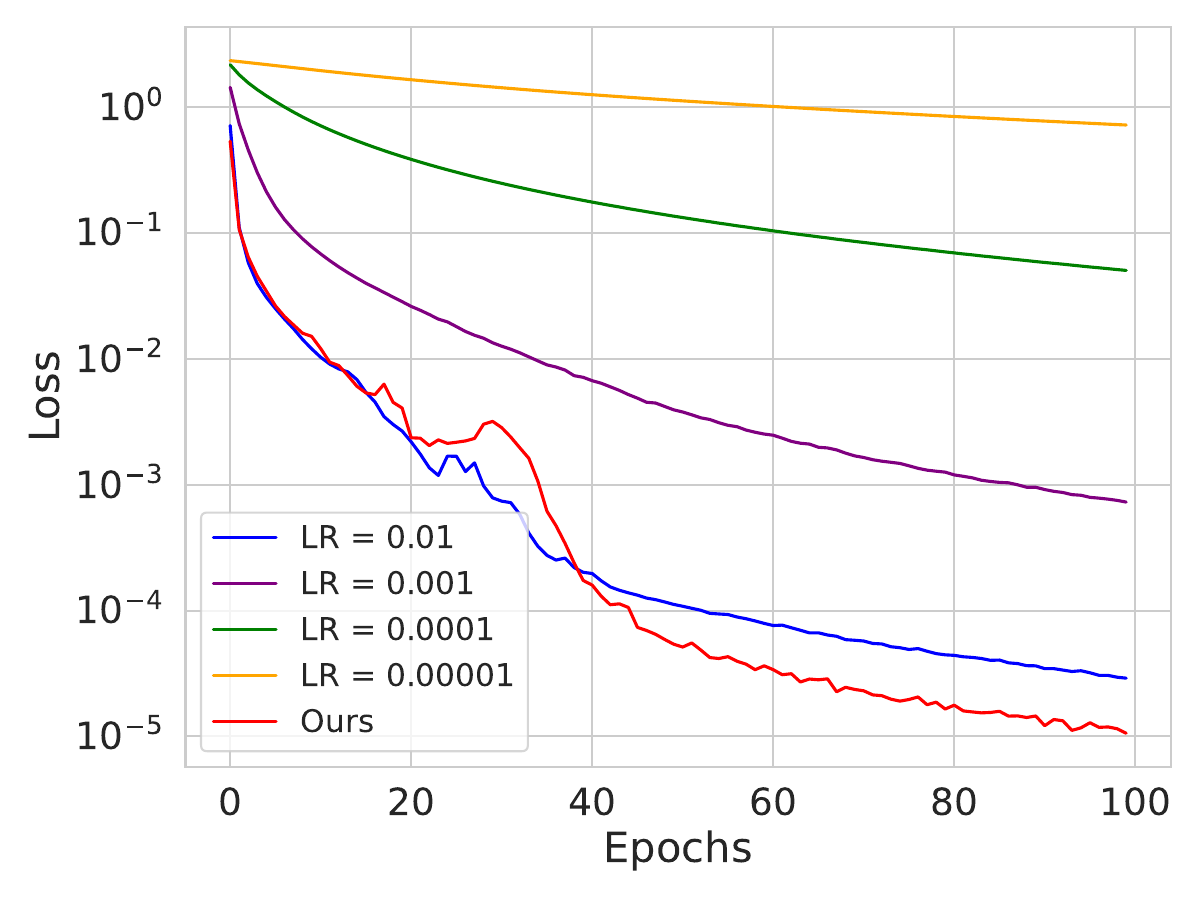}
    \caption{Training loss v/s Epochs}
  \end{subfigure}
  \hfill
  \begin{subfigure}[b]{0.31\textwidth}
    \includegraphics[width=\textwidth]{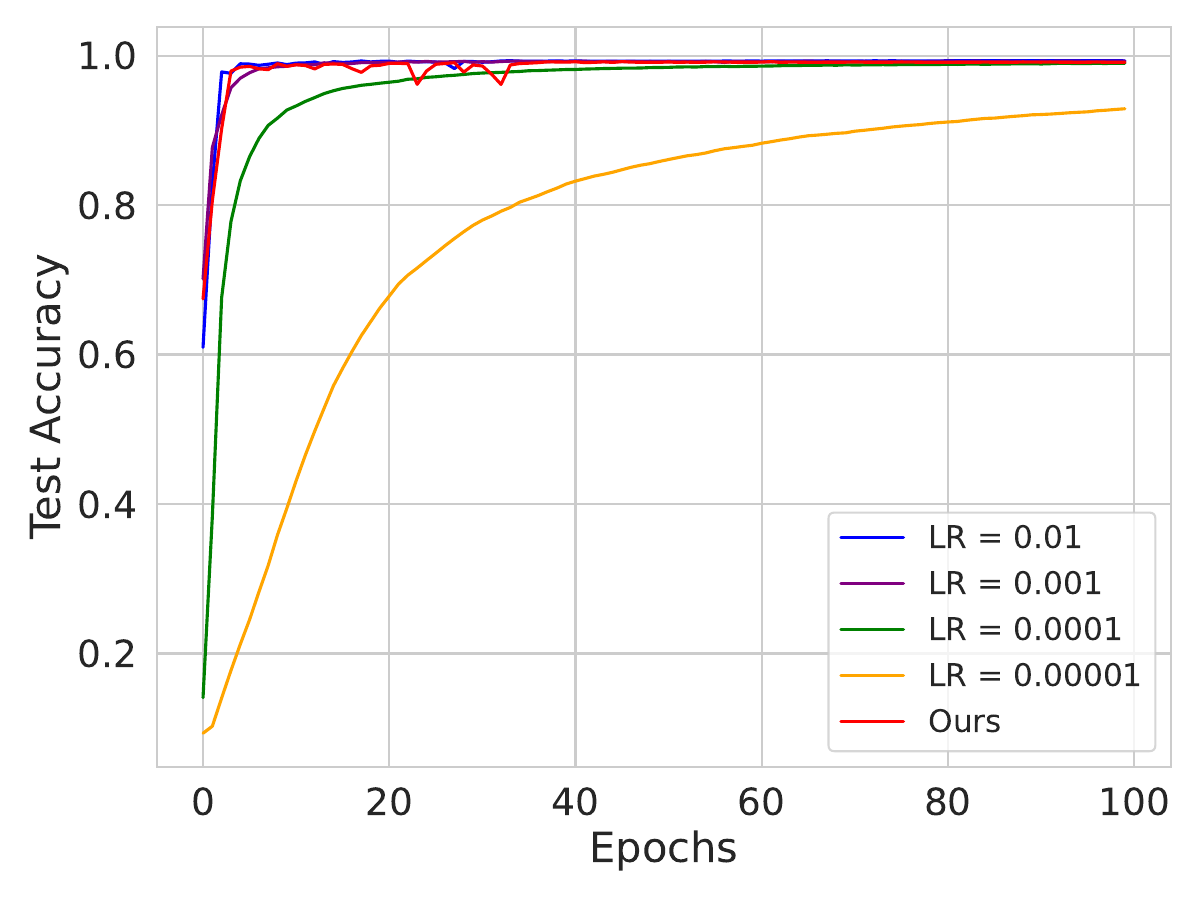}
    \caption{Validation acc. v/s Epochs}
  \end{subfigure}
  \caption{Mini-batch experiments on LeNet architecture with MNIST data.}
\label{fig:lenet_step_app}
\end{figure}
\begin{figure}
  \centering
  \begin{subfigure}[b]{0.31\textwidth}
    \includegraphics[width=\textwidth]{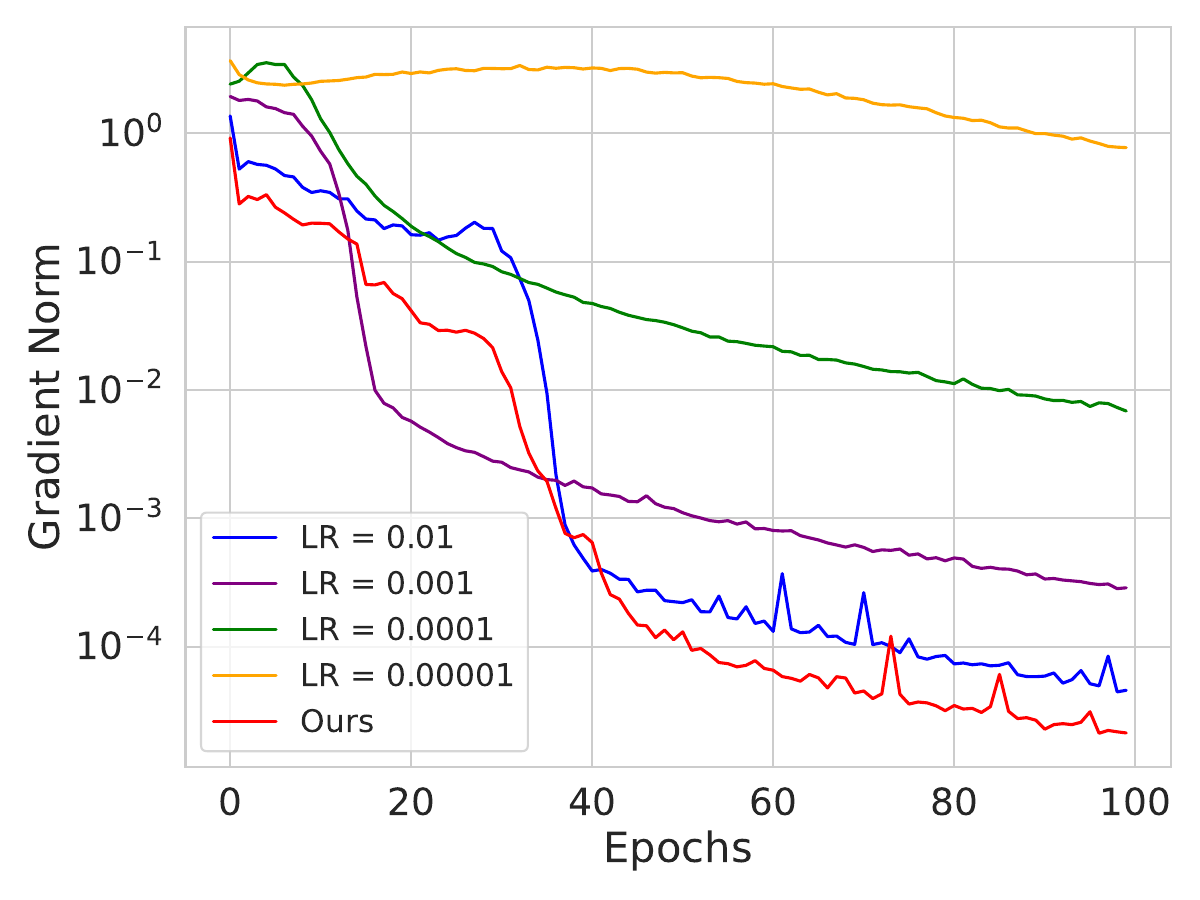}
    \caption{Gradient norm v/s Epochs}
  \end{subfigure}
  \hfill
  \begin{subfigure}[b]{0.31\textwidth}
    \includegraphics[width=\textwidth]{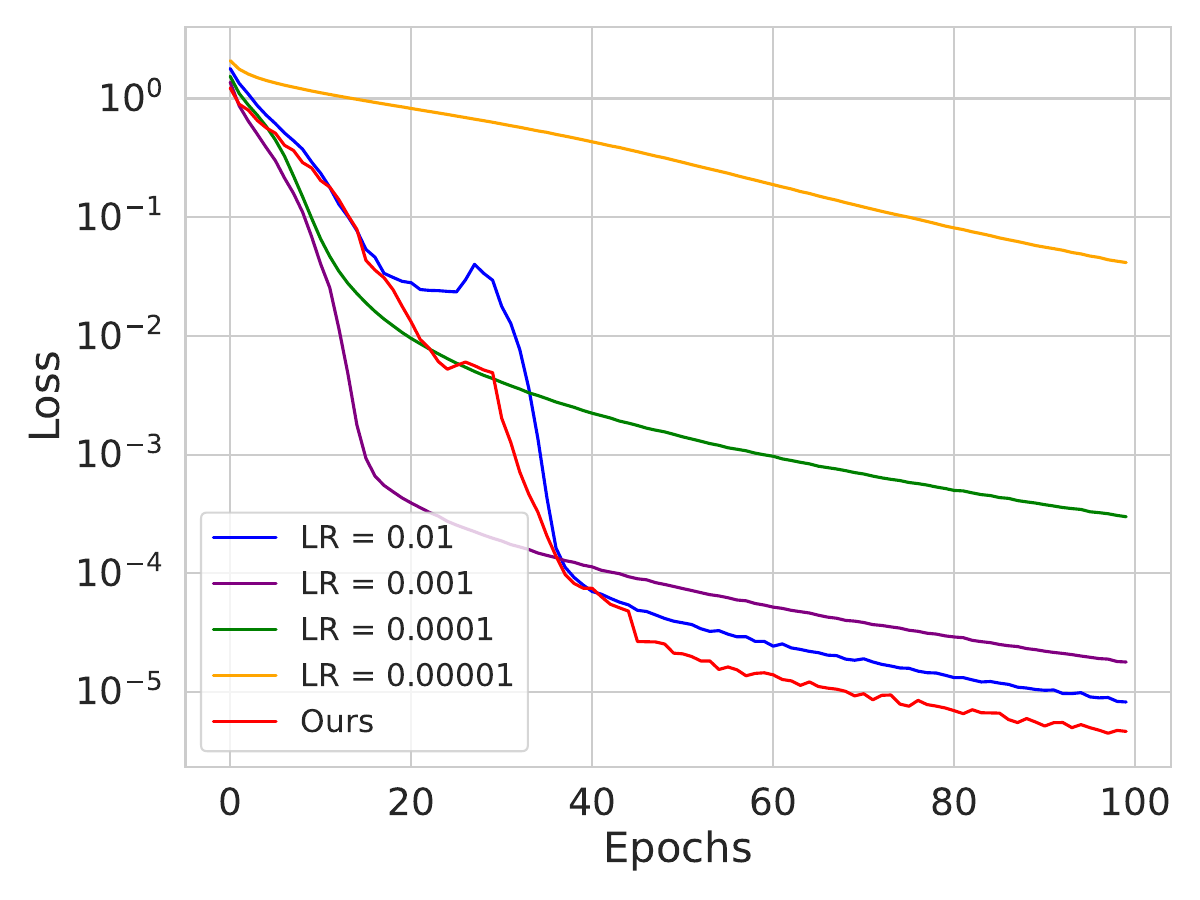}
    \caption{Training loss v/s Epochs}
  \end{subfigure}
  \hfill
  \begin{subfigure}[b]{0.31\textwidth}
    \includegraphics[width=\textwidth]{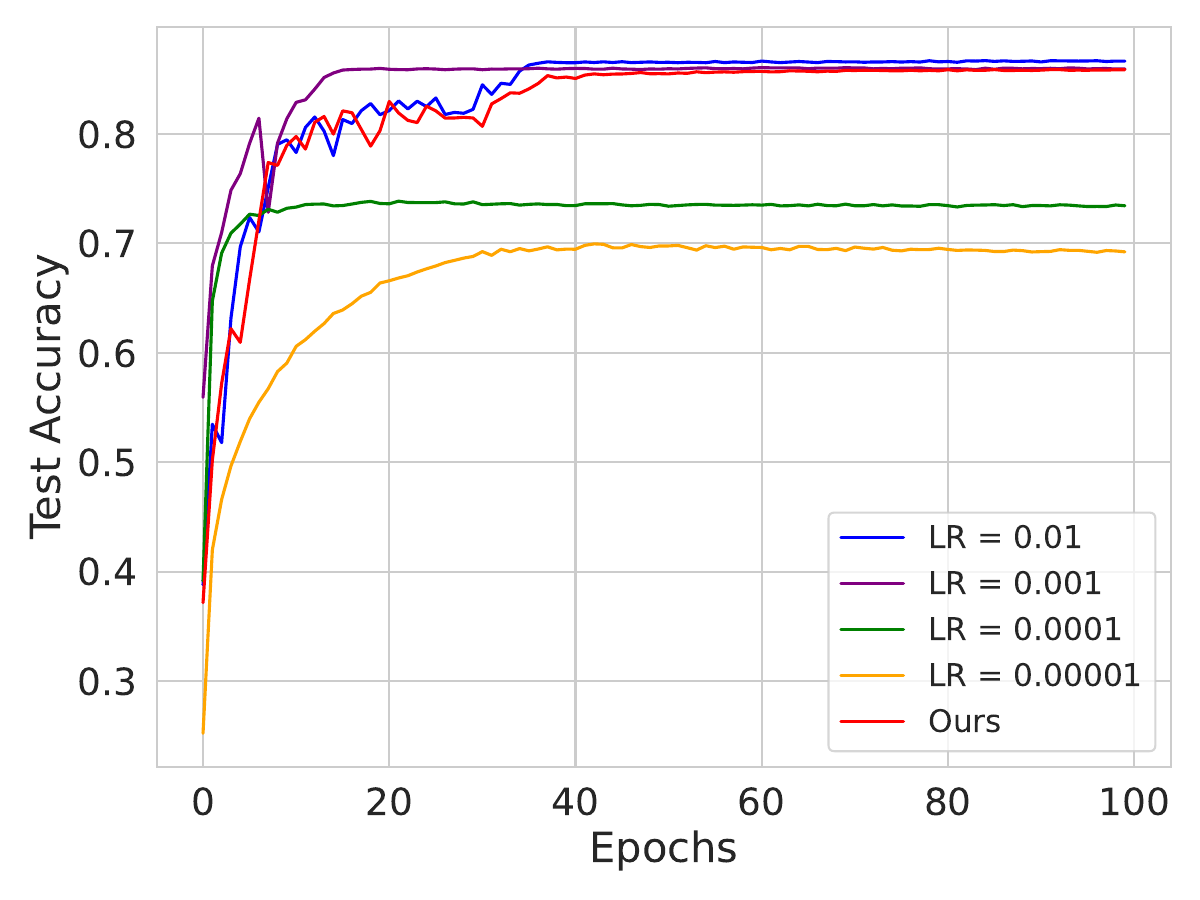}
    \caption{Validation acc. v/s Epochs}
  \end{subfigure}
  \caption{Mini-batch experiments on VGG-9 architecture with CIFAR-10 data.}
\label{fig:vgg_step_app}
\end{figure}
\end{document}